\newcommand{\bibpath}{bib_cont_submodular}
\definecolor{cite_color}{RGB}{0, 0, 255}
\definecolor{link_color}{RGB}{153, 0,0}  %
\definecolor{url_color}{RGB}{153, 102,  0}
\definecolor{emp_color}{RGB}{0,0,255}
	\renewcommand*{\bm}[1]{#1}%
\let \oldtextcircled \textcircled
\renewcommand{\textcircled}[1]{\oldtextcircled{\footnotesize #1}}
\setlist[itemize]{leftmargin=9mm}
\newcommand{\appendixtitle}[1]{
	\begin{center}
		\LARGE \bf #1
	\end{center}
}
\def\E{{\mathbb E}}
\def\X{{\cal X}}
\def\M{{\cal M}}
\def\optcont{\ensuremath{\x^*}}
\def \c{\mathbf{c}}
\def \v{\mathbf{v}}
\def \r{\mathbf{r}}
\def \a{\mathbf{a}}
\def \b{\mathbf{b}}
\def \d{\mathbf{d}}
\def \x{\mathbf{x}}
\def \y{\mathbf{y}}
\def \s{\mathbf{s}}
\def \u{\mathbf{u}}
\def \z{\mathbf{z}}
\def \h{\mathbf{h}}
\def \bh{\mathbf{h}}
\def \bu{\mathbf{u}}
\def \p{\mathbf{p}}
\def \q{\mathbf{q}}
\def \BA{\mathbf{A}}
\def \BI{\mathbf{I}}
\def \BC{\mathbf{C}}
\def \BD{\mathbf{D}}
\def \BH{\mathbf{H}}
\def \BL{\mathbf{L}}
\def \BR{\mathbf{R}}
\def \BU{\mathbf{U}}
\def \BW{\mathbf{W}}
\def \bmA{\mathbf{A}}
\def \bmH{\mathbf{H}}
\def \bmL{\mathbf{L}}
\def \bmI{\mathbf{I}}
\def \bmtheta{\bm{\theta}}
\def \P{{\cal{P}}}
\def \Q{{\cal{Q}}}
\def \R{{\mathbb{R}}}
\def \trans{\top}
\newcommand{\tr}[1]{\text{tr}(#1)}
\newcommand{\pare}[1]{{#1}}  %
\def \D{{\cal{D}}}
\def \E{{\mathbb{E}}} %
\newcommand{\maxcut}{{\textsc{MaxCut}}}
\newcommand{\argmin}{{\arg\min}}
\newcommand{\diag}{{\text{diag}}}
\newcommand{\argmax}{{\arg\max}}
\newcommand{\algname}[1]{{\texttt{#1}}}
\newcommand{\spt}[1]{{\texttt{supp}}(#1)}
\newcommand{\dtp}[2]{\langle #1, #2\rangle}%
\newcommand{\de}[1]{\text{det}\left(#1\right)}%
\newcommand{\fracpartial}[2]{\frac{\partial #1}{\partial #2}}
\newcommand{\fracppartial}[3]{\frac{\partial^2 #1}{\partial #2 \partial #3}}
\newcommand{\bas}{\mathbf{e}} %
\def \chara{\mathbf{e}} %
\def \cg{L}
\def\A{{\mathscr{A}}}
\newcommand{\groundset}{\ensuremath{\mathcal{V}}}
\newcommand{\NEWDR}{\texttt{weak DR}}
\newcommand{\sete}[3]{\mathbf #1|_{#2} (#3)}
\crefname{section}{Section}{Sections}
\crefname{theorem}{Theorem}{Theorems}
\crefname{lemma}{Lemma}{Lemmas}
\crefname{equation}{Equation}{Equations}
\crefname{proposition}{Proposition}{Propositions}
\Crefname{proposition}{Proposition}{Propositions}
\crefname{claim}{Claim}{Claims}
\crefname{appendix}{Appendix}{Appendices}
\crefname{algorithm}{Algorithm}{Algorithms}
\crefname{figure}{Figure}{Figures}
\crefname{table}{Table}{Tables}
\crefname{remark}{Remark}{Remarks}
\crefname{definition}{Definition}{Definitions}
\crefname{equation}{Equation}{Equations}
\crefname{corollary}{Corollary}{Corollaries}
\Crefname{corollary}{Corollary}{Corollaries}
\newtheorem{theorem}{Theorem}
\newtheorem{lemma}[theorem]{Lemma} 
\newtheorem{proposition}[theorem]{Proposition} 
\newtheorem{remark}[theorem]{Remark}
\newtheorem{corollary}[theorem]{Corollary}
\newtheorem{definition}[theorem]{Definition}
\newtheorem{observation}[theorem]{Observation}
\DeclarePairedDelimiter\floor{\lfloor}{\rfloor}
\newcommand{\parti}{\text{Z}} %
\newcommand{\zero}{\mathbf{0}} %
\newcommand{\one}{\mathbf{1}} %
\newcommand{\ele}{v} %
\newcommand{\multi}{f_{\text{mt}}} %
\newcommand{\bigo}[1]{\mathcal {O}\! \left(#1\right)}
\newcommand{\epe}[2][]{\underset{#1}{\mathbb E}\left[#2\right]}  %
\def \bu{\mathbf{u}}
\def \m{\mathbf{m}}
\def \flweights {R} %
\newcommand{\stepsize}{step size\xspace}
\newcommand{\stepsizes}{step sizes\xspace}
\newcommand{\pga}{\algname{PGA}\xspace}
\newcommand{\nonconvexfw}{\algname{Non-convex FW}\xspace}
\newcommand{\submodularfw}{\algname{Submodular FW}\xspace}
\newcommand{\shrunkenfw}{\algname{Shrunken FW}\xspace}
\newcommand{\twophasefw}{\algname{Two-Phase FW}\xspace}
\newcommand{\twophase}{\algname{Two-Phase}\xspace}
\newcommand{\irsuper}{{IR-supermodular}\xspace}
\newcommand{\ir}{{IR}\xspace}
\def \bmA{\mathbf{A}}
\def \bmI{\mathbf{I}}
\def \bmL{\mathbf{L}}
\def \bmtheta{\bm{\theta}}
\newcommand{\geqco}{\ensuremath{\geq}}
\newcommand{\leqco}{\ensuremath{\leq}}
\newcommand{\ith}[1]{\ensuremath{#1^\text{th}}}
\newcommand{\nondec}{nondecreasing\xspace}
\newcommand{\set}[1]{\{#1\}} %
\newcommand{\lovasz}{Lov{\'a}sz \xspace}
\begin{document}
\title{Continuous Submodular Function Maximization
}

\author{\name Yatao Bian\thanks{Corresponding author. Most of the work was conducted while Y. Bian was at ETH Zurich. Y. Bian's ORCID id is: \href{https://orcid.org/0000-0002-2368-4084}{orcid.org/0000-0002-2368-4084}} \email yatao.bian@gmail.com \\
\addr Tencent AI Lab\\
Shenzhen, China 518057
\AND
\name Joachim M. Buhmann \email jbuhmann@inf.ethz.ch\\
\addr Department of Computer Science\\
ETH Zurich\\
8092 Zurich, Switzerland
\AND
\name Andreas Krause \email krausea@ethz.ch \\
\addr Department of Computer Science\\
ETH Zurich\\
8092 Zurich, Switzerland
}

\editor{xx and xx}

\maketitle

\begin{abstract}%

Continuous submodular functions are a category of generally non-convex/non-concave functions with a wide spectrum of applications. The celebrated property of this class of functions -- continuous submodularity
-- enables both exact minimization and approximate maximization in polynomial time.
Continuous submodularity is obtained by generalizing the notion of submodularity from discrete domains  to continuous domains. It intuitively captures a repulsive effect amongst different dimensions of the defined multivariate function.

In this paper, we systematically study continuous submodularity and  a class of non-convex optimization problems: {\em continuous submodular function maximization}.
We start by a thorough characterization of the class of continuous submodular functions, and show that continuous submodularity is equivalent to a weak version of the diminishing returns (DR) property. Thus we also derive  a subclass of continuous submodular functions, termed {\em continuous DR-submodular functions}, which enjoys the full DR property. Then we present operations that preserve continuous (DR-)submodularity, thus yielding general rules for composing new submodular functions. We establish intriguing properties for the problem of
constrained DR-submodular maximization, such as the {\em local-global
relation}, which captures the relationship of locally (approximate) stationary points and global optima. We identify several applications of continuous submodular optimization, ranging from
influence  maximization with general marketing strategies, MAP inference
for DPPs to mean field inference for probabilistic log-submodular
models. For these applications, continuous submodularity formalizes valuable domain knowledge relevant for optimizing
this class of objectives.
We present inapproximability results and provable algorithms for two problem settings: constrained monotone DR-submodular maximization and constrained non-monotone DR-submodular maximization.  Finally, we extensively evaluate the effectiveness of the proposed algorithms on different problem instances, such as influence maximization with marketing strategies and revenue maximization with continuous assignments.

\end{abstract}

\begin{keywords}
  Continuous submodularity, Continuous DR-submodularity,  Submodular function maximization, Provable non-convex optimization, Revenue maximization
\end{keywords}

\section{Introduction}

Submodularity  is  a  structural   property  usually  associated  with
\emph{set  functions}, with  important  implications for  optimization
\citep{nemhauser1978analysis}.  The general  setup
requires a ground set $\groundset$ containing $n$ items, which could
be, for instance, the set of all features in a given supervised learning problem \citep{das2011submodular}, or the set of
all users in the influence maximization problem \citep{kempe2003maximizing}. Usually, we have an objective
function that maps a subset of $\groundset$ to a real value:
$F(X): 2^\groundset \rightarrow \R_+$. This function often quantifies utility,
coverage, relevance, diversity etc.
Equivalently, one can express any subset $X$ as a binary vector
$\x\in \{0, 1\}^n$. Hereby, for component $i$ of $\x,\;x_i=1$ means that item $i$
is inside $X$, otherwise item $i$ is outside of $X$. This binary
representation associates the powerset of $\groundset$
with all vertices of an $n$-dimensional hypercube.  Because of this, we
also call submodularity of set functions ``submodularity over binary
domains'' or ``binary submodularity''.

Over binary domains, there are two  well-known definitions of
submodularity: the lattice definition and the diminishing
returns (DR) definition.

\begin{definition}[Lattice  definition]
	A set function $F: 2^\groundset \mapsto \R_+$ is {\em submodular} iff
	$\forall X, Y \subseteq \groundset$, it holds:
	\begin{align}
	F(X) + F(Y) \geq F(X\cup Y) + F(X \cap Y).
	\end{align}
\end{definition}
One can easily show that it is equivalent to the following DR definition:
\begin{definition}[DR definition]
	A set function $F(X): 2^\groundset \mapsto \R_+$ is {\em submodular} iff
	$\forall A \subseteq B \subseteq \groundset$ and
	$\forall v \in \groundset \setminus B$, it holds:
	\begin{equation}
	F(A \cup \set{v}) - F(A) \geq F(B\cup \set{v}) - F(B).
	\end{equation}
\end{definition}

Optimizing submodular set functions has found numerous applications in
machine learning, including variable selection
\citep{DBLP:conf/uai/KrauseG05}, dictionary learning
\citep{krause2010submodular,das2011submodular}, sparsity inducing
regularizers \citep{bach2010structured}, summarization
\citep{gomes2010budgeted,lin2011class,mirzasoleiman2013distributed} and variational
inference \citep{djolonga2014map}. Submodular set functions can be
efficiently minimized \citep{iwata2001combinatorial}, and there are
strong guarantees for approximate maximization
\citep{nemhauser1978analysis,krause2012submodular}.

Even though submodularity is most widely considered in the discrete
setting, the notion can be generalized to {\em arbitrary lattices}
\citep{fujishige2005submodular}.
Of particular interest  are lattices over real vectors, which can be used to define submodularity over continuous domains \citep{topkis1978minimizing,bach2015submodular,bian2017guaranteed}. But one may wonder: \textit{why do we need continuous submodularity?}

In summary, there are two
motivations for studying continuous submodularity: {\em i)} It is an
important modeling ingredient for many real-world applications; {\em ii)} It
captures a subclass of well-behaved non-convex optimization problems,
which admits guaranteed  optimization with algorithms running in
polynomial time. In the following, we will informally illustrate these two aspects.

\paragraph{Natural Prior Knowledge for Modeling.}

In order to illustrate the first motivation, let us consider a stylized
scenario. Suppose you got stuck in the desert one day, and became
extremely thirsty. After two days of exploration you found a bottle of
water. What is even better is that you also found a bottle of soda.

We will use a two-dimensional function
$f([x_1; x_2])$ to quantize the ``happiness'' gained by having $x_1$
quantity of water and $x_2$ quantity of soda. Let
$\delta = [50 \text{ml water} ; 50 \text{ml soda}]$. Now it is natural
to see that the following inequality shall hold:
$f([1ml; 1ml] + \delta) - f([1ml; 1ml]) \geq f([100ml;
100ml] + \delta) - f([100ml; 100ml])$.
The LHS of the inequality measures the marginal
gain of happiness by having $\delta$ more [water, soda] based on a
\emph{small} context ([1ml; 1ml]), while the RHS means the marginal
gain based on a \emph{large} context ([100ml; 100ml]), this is a typical example of the well-known diminishing returns (DR) phenomenon, which will formally defined in \cref{sec_dr_dr_submodular}.
The DR  property models the context sensitive
expectation that adding one more unit of resource
contributes more in the small context than in a large context.

This example illustrates that diminishing returns effects naturally occur in continuous domains, not only discrete ones. While related to concavity, we will see that continuous submodularity yields complementary means of modeling diminishing returns effects over continuous domains.
Real-world examples comprise user preferences in recommender
systems, customer satisfaction, influence in social advertisements
etc.

\paragraph{Non-Convex Structure enabling Provable Optimization.}

\looseness -1 Non-convex optimization is a core challenge in machine
learning, and arises in numerous learning tasks from training
deep neural networks \citep{bottou2018optimization} to latent variable models
\citep{anandkumar2014tensor}.  A fundamental problem in non-convex
optimization is to reach a stationary point assuming smoothness of the
objective for unconstrained optimization
\citep{sra2012scalable,li2015accelerated,reddi2016fast,allen2016variance}
or constrained optimization problems
\citep{ghadimi2016mini,lacoste2016convergence}.
However, without further assumptions, a stationary point may in general be of arbitrary poor objective value.  It thus remains a challenging
problem to understand which classes of non-convex objectives can be
tractably optimized.

In pursuit of solving this challenging problem, we show that
continuous submodularity provides a natural structure for
provable non-convex optimization.
It arises in various important non-convex objectives. Let us look at
a simple example by considering a classical quadratic program (QP):
$f(\x) = \frac{1}{2}\x^\trans \BH \x + \bh^\trans \x + c$. When $\BH$
is symmetric, we know that the Hessian matrix is $\nabla^2 f =
\BH$.
Let us consider a specific two dimensional example, where
$\BH = [-1, -2; -2, -1]$. One can verify that its eigenvalues are
$[1; -3]$. So it is an indefinite quadratic program, which is
neither convex, nor concave. However, it will soon be clear that $f$ is a
DR-submodular function (see definitions in
\cref{sec_defs_continuous_submodurity}). In this paper, we propose polynomial-time
solvers for optimizing such objectives with strong approximation guarantees.
Further examples of submodular objectives include the  \lovasz \citep{lovasz1983submodular} and
multilinear extensions \citep{calinescu2007maximizing} of submodular
set functions, or to the softmax extensions \citep{gillenwater2012near} for
DPP (determinantal point process) MAP inference.

\paragraph{Organization of the Paper.}

We will present a brief background of submodular optimization, the classical Frank-Wolfe algorithm and existing structures for non-convex optimization  in \cref{sec_background}.  In \cref{sec_defs_continuous_submodurity} we  give a thorough
characterization of the class of continuous submodular and
DR-submodular\footnote{A DR-submodular function is a submodular
	function with the additional diminishing returns (DR) property, which
	will be formally defined in \cref{sec_defs_continuous_submodurity}.}
functions. \cref{sec_ops_preserving_submodularity} presents general composition rules that preserve continuous (DR-)submodularity, along with exemplary applications of these rules, such as for designing deep submodular functions.
\cref{sec_structures} discusses intriguing properties for the problem of
constrained DR-submodular maximization in both monotone and non-monotone settings, such as the local-global
relation.
In \cref{sec_app} we illustrate representative
applications of continuous submodular optimization.
In the next two sections we discuss hardness results and algorithmic
techniques for constrained DR-submodular maximization in different
settings: \cref{sec_mono_dr_fun} illustrates how to maximize
monotone continuous DR-submodular functions,
and \cref{sec_algs}
provides techniques for maximizing non-monotone DR-submodular functions
with a down-closed convex constraint.
We present experimental results on three representative problems in \cref{sec_exp}.
Lastly, \cref{sec_discu} discusses and concludes the paper.

\section{Background and Related Work}
\label{sec_background}

We give a brief introduction of the background of submodular optimization in this section.

\paragraph{Notation.}
Throughout this work we assume
$ \groundset =\{\ele_1, \ele_2,..., \ele_n\}$ being the ground set of
$n$ elements, and $\chara_i\in\R^n$ is the characteristic vector for
element $\ele_i$ (also the standard $i^\text{th}$ basis vector).
We use boldface letters $\x\in \R^\groundset$ and $\x\in \R^n$
interchangebly to indicate an $n$-dimensional vector, where $x_i$ is
the $i^\text{th}$ entry of $\x$. We use a boldface capital letter
$\BA\in\R^{m\times n}$ to denote an $m$ by $n$ matrix and use $A_{ij}$
to denote its ${ij}^\text{th}$ entry.
By default, $f(\cdot)$ is used to denote a continuous function, and
$F(\cdot)$ to represent a set function.
For a differentiable function $f(\cdot)$, $\nabla f(\cdot)$ denotes its gradient, and for a twice differentiable function $f(\cdot)$, $\nabla^2 f(\cdot)$ denotes its Hessian.
$[n]:= \{1,...,n\}$ for an
integer $n \geq 1$.  $\|\cdot\|$ means the Euclidean norm by default.
Given two vectors $\x,\y$, $\x\leqco \y$ means
$x_i\leq y_i, \forall i$.  $\x\vee \y$ and $\x \wedge \y $ denote
coordinate-wise maximum and coordinate-wise minimum, respectively.
$\sete{x}{i}{k}$ is the operation of setting the
$i^\text{th}$ element of $\x$ to $k$, while keeping all other elements
unchanged, i.e., $\sete{x}{i}{k}=\x-x_i \bas_i + k\bas_i$.

\subsection{Submodularity over Discrete Domains}

As a discrete analogue of convexity, submodularity
provides computationally effective structure so that many discrete
problems with this property can be  efficiently solved or approximated.
Of particular interest is a $(1-1/e)$-approximation for maximizing a
monotone submodular set function subject to a cardinality, a matroid,
or a knapsack constraint \citep{nemhauser1978analysis,
	DBLP:conf/stoc/Vondrak08, sviridenko2004note}.  For maximizing  non-monotone
submodular functions, a 0.325-approximation under cardinality and
matroid constraints \citep{gharan2011submodular}, and a
0.2-approximation under a knapsack constraint have been shown
\citep{lee2009non}.  Another result pertains to unconstrained maximization of
non-monotone submodular set functions, for which
\citet{buchbinder2012tight} propose the deterministic double greedy
algorithm with a 1/3 approximation guarantee, and the randomized double
greedy algorithm that  achieves the tight 1/2 approximation guarantee.

Although most commonly associated with set functions, in many
practical scenarios, it is natural to consider generalizations of
submodular set functions, including \textit{bisubmodular} functions,
\textit{$k$-submodular} functions, \textit{tree-submodular} functions,
\textit{adaptive submodular} functions, as well as submodular
functions defined over integer lattices.

\citet{golovin2011adaptive} introduce the notion of adaptive submodularity to
generalize submodular set functions to adaptive policies.
\citet{kolmogorov2011submodularity} studies tree-submodular functions
and presents a polynomial-time algorithm for minimizing them.
For distributive lattices, it is well-known that the combinatorial
polynomial-time algorithms for minimizing a submodular set function
can be adopted to minimize a submodular function over a bounded
integer lattice \citep{fujishige2005submodular}.

 Approximation algorithms for maximizing
bisubmodular functions and $k$-submodular functions have  been
proposed by \citet{singh2012bisubmodular,ward2014maximizing}.
Recently, maximizing a submodular function over integer lattices has
attracted considerable attention. In particular,
\citet{soma2014optimal} develop a $(1-1/e)$-approximation algorithm
for maximizing a monotone DR-submodular integer function under a
knapsack constraint. For non-monotone submodular functions over the
bounded integer lattice, \citet{gottschalk2015submodular} provide a
1/3-approximation algorithm.
Recently, \citet{soma2018maximizing} present a continuous non-smooth extension
for maximizing monotone integer submodular functions.

\subsection{Submodularity over Continuous Domains}

Even though submodularity is most widely considered in the discrete
realm, the notion can be generalized to arbitrary lattices
\citep{fujishige2005submodular}.
\citet{DBLP:journals/mor/Wolsey82} considers maximizing a special
class of continuous submodular functions subject to one knapsack
constraint, in the context of solving location problems. That class of
functions are additionally required to be monotone, piecewise linear
and \textit{concave}.
\citet{calinescu2007maximizing} and \citet{DBLP:conf/stoc/Vondrak08}
discuss a subclass of continuous submodular functions, which is termed
smooth submodular functions\footnote{A function
	$f: [0,1]^n \rightarrow \R$ is smooth submodular if it has second
	partial derivatives everywhere and all entries of its Hessian matrix
	are non-positive.}, to describe the multilinear extension of a
submodular set function.  They propose the continuous greedy
algorithm, which has a $(1-1/e)$ approximation guarantee for maximizing
a smooth submodular function under a down-closed polytope
constraint. \citet{bach2015submodular} considers the problem of
{\em minimizing continuous submodular functions}, and proves that
efficient techniques from convex optimization may be used for
minimization \citep{fujishige2005submodular}.

\citet{ene2016reduction} provide an approach for reducing
integer DR-submodular function maximization problems to  submodular
set function maximization problem. This approach suggests a way to approximately optimize
continuous submodular functions over \textit{simple} continuous
constraints: Discretize the continuous function and constraint to be
an integer instance, and then optimize it using the
reduction. However, for monotone DR-submodular function maximization,
this method can not handle the general continuous constraints
discussed in this work, i.e., arbitrary down-closed convex sets. Moreover,
for general submodular function maximization, this method cannot be
applied, since the reduction needs the additional diminishing returns
property.  Therefore we focus on explicitly continuous methods in this work.

Recently, \citet{niazadeh2018optimal}\footnote{Appeared later than when the paper \citet{bian2019optimalmeanfield} was released.} present optimal algorithms for
non-monotone submodular maximization with a box constraint.
Continuous submodular maximization is also well studied in the
stochastic setting \citep{karimi2017stochastic,hassani2017gradient,mokhtari2018stochastic},
online setting \citep{chen2018online}, bandit setting
\citep{durr2019non} and decentralized setting
\citep{mokhtari2018decentralized}.

\subsection{Classical Frank-Wolfe Algorithm}

Since the workhorse algorithms for continuous  DR-submodular maximization are
Frank-Wolfe style algorithms, we give a brief introduction of
classical Frank-Wolfe algorithms in this section.
The Frank-Wolfe algorithm \citep{frank1956algorithm} (also known as
Conditional Gradient algorithm or the Projection-Free algorithm) is
one of the
classical algorithms for constrained convex
optimization. It has received renewed interest in recent years due to its
projection free nature and its ability to exploit structured
constraints \citep{jaggi13}.

The Frank-Wolfe algorithm solves the following constrained
optimization problem:
\begin{align}
\min_{\x\in \R^n, \; \x\in \D} f(\x),
\end{align}
where $f$ is differentiable with $L$-Lipschitz gradients and the
constraint $\D$ is convex and compact.

A sketch of the Frank-Wolfe algorithm is presented in
\cref{alg_classical_fw}.  It needs an initializer $\x^\pare{0}\in \D$.
Then it runs for $T$ iterations. In each iteration it does the following: in
Step \labelcref{alg_fw_lmo} it solves a linear minimization problem
whose objective is defined by the current gradient $\nabla f(\x^{t})$.
This step is often called the linear minimization/maximization oracle
(LMO). In Step
\labelcref{alg_fw_stepsize} a \stepsize $\gamma$ is chosen. Then it
updates the solution $\x$ to be a convex combination of the current
solution and the LMO output $\s$.

\begin{algorithm}[ t]
	\caption{Classical Frank-Wolfe algorithm for constrained convex
		optimization \citep{frank1956algorithm}}\label{alg_classical_fw}

	\KwIn{$\min_{\x\in \R^n, \x\in \D} f(\x)$; $\x^\pare{0}\in \D$ }
	\For{$t=0\dots T$}{
		{Compute
			$\s^\pare{t} := \argmin_{\s\in \D} \left\langle \s, \nabla
			f(\x^{t}) \right\rangle$ \label{alg_fw_lmo} \tcp*{LMO}}

		{Choose \stepsize $\gamma \in (0, 1]$\;  \label{alg_fw_stepsize}}

		{	Update $\x^\pare{t+1}:= (1-\gamma)\x^{t}+\gamma\s^\pare{t}$\;}
	}
	\KwOut{$\x^\pare{T}$\;}
\end{algorithm}

There are several popular rules to choose the \stepsize in Step
\labelcref{alg_fw_stepsize}. For a short summary: i)
$\gamma_t := \frac2{t+2}$, which is often called the ``oblivious''
rule since it does not depend on any information of the optimization
problem; ii)
$\gamma_t = \min\{1, \frac{g_\pare{t}}{L \|\s^\pare{t} -
	\x^\pare{t}\|} \}$,
where
$g_\pare{t} := -\dtp{\nabla f(\x^{t})}{\s^\pare{t} - \x^\pare{t}} $ is
the so-called Frank-Wolfe gap, which is an upper bound of the
suboptimality if $f$ is convex; iii) Line search rule:
$\gamma_t := \argmin_{\gamma\in [0, 1]} f(\x^\pare{t} + \gamma
(\s^\pare{t} - \x^\pare{t}) )$.

\paragraph{Frank-Wolfe Algorithm for Non-Convex Optimization.}

Recently, Frank-Wolfe algorithms have been extended for smooth
non-convex optimization problems with
constraints. \citet{lacoste2016convergence} analyzes the Frank-Wolfe
method for general constrained non-convex optimization problems, where
he uses the Frank-Wolfe gap as the non-stationarity measure.
\citet{reddi2016stochastic} study Frank-Wolfe methods for non-convex
stochastic and finite-sum optimization problems. They also used the
Frank-Wolfe gap as the non-stationarity measure.

\subsection{Structures for Non-Convex Optimization}

Optimizing  non-convex continuous functions has received considerable
interest in the last decades.
There are two widespread structures for non-convex optimization: {\em quasi-convexity} and {\em geodesic convexity}, both of them are based on relaxations of the classical convexity definition.

\paragraph{Quasi-Convexity.}

A function $f: \D \mapsto \R$ defined on a convex subset $\D$ of a
real vector space is {\em quasi-convex} if for all $\x, \y\in \D$ and
$\lambda\in [0, 1]$ it holds,
\begin{align}
f(\lambda \x + (1- \lambda)\y) \leq \max\{ f(\x), f(\y) \}.
\end{align}
Quasi-convex optimization problems appear in different areas, such as
industrial organization \citep{wolfstetter1999topics} and computer
vision \citep{ke2007quasiconvex}.
Quasi-convex optimization problems can be solved by a series of convex
feasibility problems \citep{boyd2004convex}.
\citet{hazan2015beyond} study stochastic quasi-convex optimization,
where they proved that a stochastic version of the normalized gradient
descent can converge to a global minimium for quasi-convex functions
that are locally Lipschitz.

\paragraph{Geodesic Convexity.}

Geodesic convex functions are a class of generally
non-convex functions in Euclidean space. However, they
still enjoy the nice property that local optimality implies global optimality.
\citet{sra2016geometric} provide an introduction to
geodesic convex  optimization with machine learning applications.
Recently, \citet{vishnoi2018geodesic}
study various aspects of
geodesic convex optimization.

\begin{definition}[Geodesically convex functions]
	Let $(\M, g)$ be a Riemannian manifold and $K\subseteq \M$ be a totally convex set with respect to $g$. A function $f: K \rightarrow \R$ is a geodesically convex function with respect to $g$ if $\forall \p, \q \in K$, and for all geodesic $\gamma_{\p \q}: [0, 1]\rightarrow K$ that joins $\p$ to $\q$, it holds,
	\begin{align}
	\forall t\in [0, 1], f(\gamma_{\p \q}(t)) \leq (1- t) f(\p) + t f(\q).
	\end{align}
\end{definition}

Various applications with non-convex objectives in Euclidean space can be resolved with geodesic convex optimization methods, such as Gaussian mixture models \citep{hosseini2015matrix},
metric learning \citep{zadeh2016geometric} and matrix square root \citep{sra2015matrix}.
By deriving explicit expressions for the smooth manifold structure, such as  inner products, gradients, vector transport and Hessian, various optimization methods have been developed. \citet{jeuris2012survey} present conjugate gradient, BFGS and trust-region methods. \citet{qi2010riemannian} propose the Riemannian BFGS (RBFGS) algorithm for general retraction and vector transport. \citet{ring2012optimization} prove its local superlinear rate of convergence. \citet{sra2015conic} present a limited memory version of RBFGS.

\paragraph{Other Non-convex Structures.}

Tensor methods have been used in various non-convex problems,
e.g.,
learning latent variable models \citep{anandkumar2014tensor} and training neural networks \citep{janzamin2015beating}.
A fundamental problem in non-convex optimization is to reach a stationary point assuming the smoothness of the objective \citep{sra2012scalable,li2015accelerated,reddi2016fast,allen2016variance}.
With extra  assumptions, certain global
convergence results can be obtained. For example, for functions with Lipschitz continuous
Hessians, the regularized Newton scheme of \cite{nesterov2006cubic}
achieves global convergence results for functions
with an additional star-convexity property or with an additional gradient-dominance
property \citep{polyak1963gradient}.  \cite{hazan2015graduated} introduce
the family of $\sigma$-nice functions and propose a graduated optimization-based algorithm, that
provably converges to a global optimum for this family of non-convex functions.
However, it is typically difficult to verify whether these assumptions hold in real-world problems.

\subsection{Our Contributions}

To the best of our knowledge, this work is the \emph{first}\footnote{This journal paper is partially based on the previous conference papers \cite{bian2017guaranteed}, \cite{biannips2017nonmonotone} also the thesis \cite{bian2019provable}.} to systematically study continuous submodularity and its maximization algorithms.
Our main contributions are:

\paragraph{Thorough characterizations of submodularity.}
By lifting the notion of submodularity to continuous domains, we
identify a subclass of tractable non-convex optimization problems:
continuous submodular optimization. We provide a thorough
characterization of continuous submodularity, which results in
$0^{\text{th}}$ order, $1^{\text{st}}$ order and $2^{\text{nd}}$
order definitions.

\paragraph{Continuous submodularity preserving operations.} We study general principles for maintaining continuous (DR-)submodularity. These enable: {\em i)} Convenient ways of recognizing new continuous submodular objectives; {\em ii)} Generic rules for designing new continuous or discrete submodular objectives, such as deep submodular functions.

\paragraph{Properties of constrained DR-submodular maximization.}
We discover intriguing properties of the general constrained DR-submodular maximization problem, such as the local-global relation (in
\cref{local_global}), which  relates (approximately) stationary
points and the global optimum, thus allowing to incorporate
progress in the area of non-convex optimization research.

\paragraph{Provable algorithms for DR-submodular maximization.}
We establish hardness results and propose provable algorithms for
constrained DR-submodular maximization in two settings: {\em i)} Maximizing
monotone functions with down-closed convex constraints; {\em ii)}
Maximizing non-monotone functions with down-closed convex
constraints.

\paragraph{Applications with (DR)-submodular objectives.}
We formulate representative applications with (DR)-submodular objectives from various areas,  such as machine learning, data mining and combinatorial optimization.

\paragraph{Extensive experimental evaluations.}
We present representative applications with the studied
continuous submodular objectives, and extensively evaluate the
proposed algorithms on these applications.

\section{Characterizations of  Continuous Submodular  Functions}
\label{sec_defs_continuous_submodurity}

Continuous submodular functions are defined on subsets of $\R^n$:
$\X = \prod_{i=1}^n \X_i$, where each $\X_i$ is a compact subset of
$\mathbb{R}$ \citep{topkis1978minimizing, bach2015submodular}.  A
function $f: \X \rightarrow \R$ is submodular \textit{iff} for all
$(\x, \y)\in \X \times \X$,

\begin{align}\label{eq1}
f(\x) + f(\y) \geq f(\x \vee \y) + f(\x \wedge \y),  \quad
(\emph{submodularity})
\end{align}

\noindent where $\wedge$ and $\vee$ are the coordinate-wise minimum and maximum
operations, respectively.  Specifically, $\X_i$ could be a finite set,
such as $\{0, 1 \}$ (in which case $f(\cdot)$ is called a \textit{set}
function), or $\{0, ..., k_i-1 \}$ (called \textit{integer} function),
where the notion of continuity is vacuous; $\X_i$ can also be an
interval, which is referred to as a continuous domain. In this
section, we consider the interval by default, but it is worth noting
that the properties introduced in this section can be applied to
$\X_i$ being a general compact subset of $\R$.

When twice-differentiable, $f(\cdot)$ is submodular iff all
off-diagonal entries of its Hessian matrix are non-positive\footnote{Notice
	that an equivalent definition of (\ref{eq1}) is that
	$\forall \x\in \X$, $\forall i \neq j$ and $a_i, a_j\geq 0$
	s.t. $x_i +a_i\in \X_i, x_j+a_j\in \X_j$, it holds
	$f(\x+a_i\bas_i) + f(\x+a_j\bas_j) \geq f(\x) + f(\x+a_i\bas_i +
	a_j\bas_j)$.
	With $a_i$ and $a_j$ approaching zero, one gets (\ref{eq2}).}
\citep{topkis1978minimizing,bach2015submodular},
\begin{equation}\label{eq2}
\forall \x\in \X, \;\; \frac{\partial^2 f(\x)}{\partial x_i \partial x_j}
\leq 0, \;\; \forall i \neq j.
\end{equation}

The class of continuous submodular functions contains a subset of both
convex and concave functions, and shares some useful properties with
them (illustrated in \cref{fig_sub}).  Examples include submodular and
convex functions of the form $\phi_{ij}(x_i - x_j)$ for $\phi_{ij}$
convex;
submodular and concave functions of the form $\x
\mapsto g(\sum_{i=1}^{n} \lambda_i x_i)$ for
$g$
concave and $\lambda_i$
non-negative.  Lastly, indefinite quadratic functions of the form
$f(\x)
= \frac{1}{2} \x^\trans \BH \x + \bh^\trans \x +
c$ with all off-diagonal entries of
$\bmH$
non-positive are examples of submodular but non-convex/non-concave
functions.
Interestingly, characterizations of continuous submodular functions
are in correspondence to those of convex functions, which are
summarized in \cref{tab_comparison}.

\begin{table}[t]
	\begin{center}
		\caption{Comparison of definitions of continuous  submodular and convex functions}
		\label{tab_comparison}

		\begin{tabularx}{\textwidth}{|l|X|X|}
			\hline
			Definitions &   Continuous submodular  function $f(\cdot)$ & Convex function $g(\cdot)$, $\forall \lambda\in [0,1]$ \\
			\hline \hline
			$0^{\text{th}}$ order  &  $f(\x) + f(\y) \geq f( \x \vee \y) + f(\x \wedge \y)$ &  $\lambda g(\x) + (1-\lambda)g(\y) \geq g(\lambda \x + (1-\lambda) \y)$   \\
			\hline
			$1^{\text{st}}$ order & {\texttt{weak DR} property (\cref{def_weakdr}), or $\nabla f(\cdot)$ {is a \textcolor{link_color}{weak antitone} mapping} (\cref{lemma_weak_antitone})} & $g(\y) \geq g(\x) +   \dtp{\nabla g(\x)}{\y-\x}$   \\
			\hline
			$2^{\text{nd}}$ order & $\frac{\partial^2 f(\x)}{\partial x_i \partial x_j} \leq 0$, \textcolor{link_color}{$\forall i \neq j$}   & $\nabla^2 g(\x)  \succeq 0$ (symmetric positive semidefinite)   \\
			\hline
		\end{tabularx}
	\end{center}
\end{table}

\subsection{The DR Property and DR-Submodular Functions}
\label{sec_dr_dr_submodular}

The Diminishing Returns (DR) property  was introduced
when studying set and integer functions.
We generalize the {DR} property to general functions defined
over $\X$. It will soon be clear that the {DR} property
defines a subclass of submodular functions.  All of the proofs can be
found in \cref{app_proof}.

\begin{figure}
	\centering
	\includegraphics[width=0.5\textwidth]{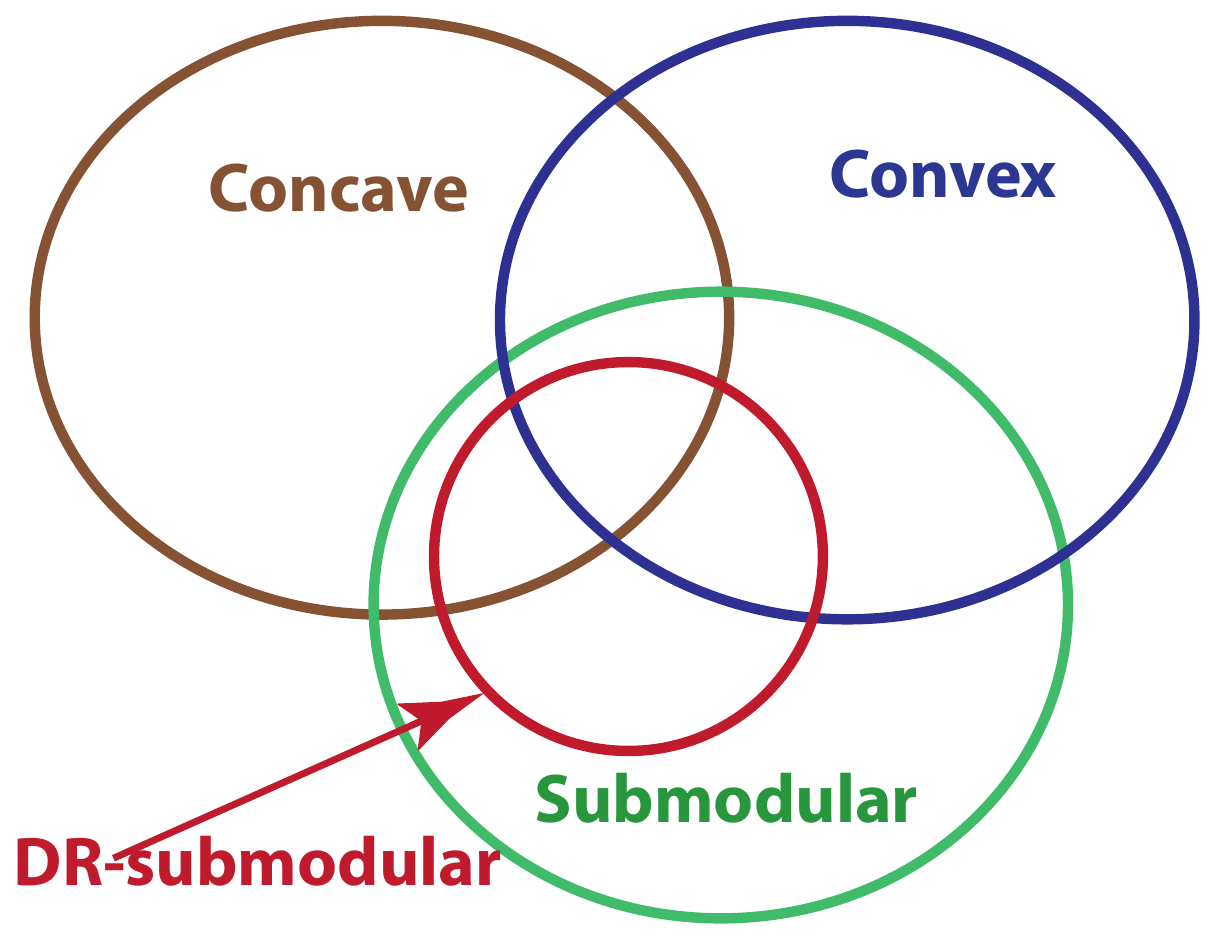}
	\caption{Venn diagram for concavity, convexity, submodularity and DR-submodularity.}
	\label{fig_sub}
\end{figure}

\begin{definition}[{DR}/IR property,   DR-submodular/\irsuper functions]\label{def_dr}
	A function $f(\cdot)$ defined over $\X$ satisfies the
	\emph{diminishing returns (DR)} property if
	$\forall \a \leqco {\b} \in \X$, $\forall i\in [n]$,
	$\forall k\in \R_+$ such that $(k\chara_i+ \a)$ and $(k\bas_i + \b)$
	are still in $\X$, it holds,
	\begin{equation}
	f(k\chara_i+ \a) - f(\a) \geq f(k\chara_i + \b) - f(\b).
	\end{equation}

	This function $f(\cdot)$ is called a DR-submodular\footnote{Note that the
		DR property implies submodularity and thus the name
		``DR-submodular'' contains redundant information about submodularity
		of a function, but we keep this terminology to be consistent with
		previous literature on integer submodular functions.} function.
	If $- f(\cdot)$ is DR-submodular, we call $f(\cdot)$ an
	\textcolor{link_color}{\irsuper} function, where \ir stands for ``Increasing Returns''.
\end{definition}

One immediate observation is that for a differentiable DR-submodular
function $f(\cdot)$, we have that $\forall \a\leqco \b\in \X$,
$\nabla f(\a)\geqco \nabla f(\b)$, i.e., the gradient
$\nabla f(\cdot)$ is an \emph{antitone} mapping from $\R^n$ to
$\R^n$. This observation can be formalized below:

\begin{lemma}[Antitone mapping]\label{lemma_dr_antitone}
	If $f(\cdot)$ is continuously differentiable, then $f(\cdot)$ is
	DR-submodular iff $\nabla f(\cdot)$ is an \emph{antitone mapping}
	from $\R^n$ to $\R^n$, i.e., $\forall \a\leqco \b\in \X$,
	$\nabla f(\a)\geqco \nabla f(\b)$.
\end{lemma}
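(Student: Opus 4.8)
The plan is to prove both directions by reducing the statement to a one-dimensional claim along each coordinate axis, and then passing between finite differences and partial derivatives via differentiation (one direction) and the fundamental theorem of calculus (the other). Throughout I fix a pair $\a \leqco \b \in \X$ and argue coordinate by coordinate, since antitonicity of $\nabla f$ is exactly the conjunction of $\frac{\partial f}{\partial x_i}(\a) \geqco \frac{\partial f}{\partial x_i}(\b)$ over all $i\in[n]$.

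For the direction DR $\Rightarrow$ antitone, I would fix an index $i$ and any admissible $k>0$, i.e. $k\chara_i+\a,\,k\chara_i+\b \in \X$. The DR inequality of \cref{def_dr}, divided by $k$, reads $\tfrac{1}{k}\big(f(k\chara_i+\a)-f(\a)\big) \geq \tfrac{1}{k}\big(f(k\chara_i+\b)-f(\b)\big)$. Letting $k\downarrow 0$ and using that $f$ is continuously differentiable, the left-hand side converges to $\frac{\partial f}{\partial x_i}(\a)$ and the right-hand side to $\frac{\partial f}{\partial x_i}(\b)$, giving $\frac{\partial f}{\partial x_i}(\a) \geq \frac{\partial f}{\partial x_i}(\b)$. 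As $i$ is arbitrary, this yields $\nabla f(\a) \geqco \nabla f(\b)$.

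For the direction antitone $\Rightarrow$ DR, I would again fix $i$ and an admissible $k>0$. Since each $\X_i$ is an interval, the whole segment $\{t\chara_i+\a : t\in[0,k]\}$ (and likewise with $\b$) lies in $\X$, so I may write the two marginal gains as integrals of the $i$-th partial derivative,
\[
f(k\chara_i+\a)-f(\a) = \int_0^k \frac{\partial f}{\partial x_i}(t\chara_i+\a)\,dt, \qquad f(k\chara_i+\b)-f(\b) = \int_0^k \frac{\partial f}{\partial x_i}(t\chara_i+\b)\,dt.
\]
Because $\a\leqco\b$ implies $t\chara_i+\a \leqco t\chara_i+\b$ for every $t$, antitonicity gives $\frac{\partial f}{\partial x_i}(t\chara_i+\a) \geq \frac{\partial f}{\partial x_i}(t\chara_i+\b)$ pointwise in $t$; integrating over $[0,k]$ recovers the DR inequality.

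The main obstacle is purely the handling of the domain boundary in the forward direction: if $a_i$ or $b_i$ sits at the top of $\X_i$, then no positive $k$ is admissible, so the limiting argument only establishes the inequality on the interior of $\X$. I would close this gap by invoking continuity of $\nabla f$ to extend $\nabla f(\a)\geqco\nabla f(\b)$ from interior points to the closure. I would also flag explicitly that the interval (hence convex) structure of each $\X_i$ is what legitimizes the integral representation in the backward direction, since it guarantees the connecting segment stays inside $\X$.
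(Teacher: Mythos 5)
Your proposal is correct and follows essentially the same route as the paper: the forward direction is the identical limit-of-difference-quotients argument, and your backward direction (writing each marginal gain as $\int_0^k \partial_i f(t\bas_i+\cdot)\,dt$ and comparing integrands via antitonicity) is just a direct-integration rephrasing of the paper's argument that $g(\x) := f(k\bas_i + \x) - f(\x)$ has non-positive gradient and is hence non-increasing, so $g(\a) \geq g(\b)$. Your additional care about boundary points of $\X$ (where no admissible $k>0$ exists, requiring a continuity-of-$\nabla f$ limiting step) and about the interval structure of each $\X_i$ legitimizing the segment integrals addresses details that the paper's proof silently glosses over, and you handle them correctly.
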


Recently, the {DR} property is explored by \citet{NIPS2016_6073} to
achieve the worst-case competitive ratio for an online concave
maximization problem.  The {DR} property is also closely related to a
sufficient condition on a concave function $g(\cdot)$ \citep[Section
5.2]{bilmes2017deep}, to ensure submodularity of the corresponding set
function generated by giving $g(\cdot)$ boolean input vectors.

\subsection{The Weak DR Property and Its Equivalence to Submodularity}

It is well known that for set functions, the {DR} property is
equivalent to submodularity, while for integer functions,
submodularity does not in general imply the {DR} property
\citep{soma2014optimal,DBLP:conf/nips/SomaY15,soma2015maximizing}. However,
it was unclear whether there exists a diminishing-return-style
characterization that is equivalent to submodularity of
integer functions. In this work we give a positive answer to
this question by proposing the \textit{weak diminishing returns}
(\texttt{weak DR}) property for general functions defined over $\X$,
and prove that \texttt{weak DR} gives a sufficient and necessary
condition for a general function to be submodular.

\begin{definition}[{Weak DR} property]\label{def_weakdr}
	A function $f(\cdot)$ defined over $\X$ has the \textit{weak
		diminishing returns} property \emph{(weak DR)} if
	$\forall \a\leqco \b\in \X$,
	$\color{blue}\forall i\in \groundset \text{ such that } a_{i} =
	b_{i}$,
	$\forall k\in \R_+$ such that $(k\chara_i+\a)$ and $(k\chara_i+\b)$
	are still in $\X$, it holds,
	\begin{equation} \label{def_supp_dr2}
	f(k\chara_i+\a) - f(\a) \geq
	f(k\chara_i + \b) - f(\b).
	\end{equation}
\end{definition}

The following proposition shows that for all set functions, as well as
integer and continuous functions, submodularity is equivalent
to the \NEWDR\ property. All the proofs can be found in \cref{app_proof}.
\begin{proposition}[\texttt{submodularity}) $\Leftrightarrow$
	(\NEWDR]\label{lemma_support_dr} A function $f(\cdot)$ defined over
	$\X$ is submodular \emph{iff} it satisfies the \textit{weak DR}
	property.
\end{proposition}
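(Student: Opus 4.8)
The plan is to prove both implications, dispatching the easy direction first and then devoting most of the effort to the converse via a telescoping argument.

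\textbf{Submodularity $\Rightarrow$ weak DR.} This direction is a one-line specialization of the $0^{\text{th}}$-order inequality \eqref{eq1}. Given $\a \leqco \b$ with $a_i = b_i$ and $k \in \R_+$, I would set $\x := k\chara_i + \a$ and $\y := \b$. Since $a_i = b_i$, a coordinate-wise check shows $\x \vee \y = k\chara_i + \b$ and $\x \wedge \y = \a$: on coordinate $i$ the maximum is $a_i + k$ and the minimum is $a_i$, while on every other coordinate $j$ one has $x_j = a_j \leq b_j = y_j$. Plugging into $f(\x) + f(\y) \geq f(\x\vee\y) + f(\x\wedge\y)$ and rearranging yields exactly \eqref{def_supp_dr2}.

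\textbf{Weak DR $\Rightarrow$ submodularity.} Here I must recover the full inequality \eqref{eq1} for arbitrary $\x, \y$ from the coordinate-restricted weak DR increments. Write $\m := \x\wedge\y$ and let $\{i_1, \dots, i_r\}$ be the coordinates where $x_i < y_i$; these are exactly the coordinates on which $\x$ differs from $\x\vee\y$, and also exactly those on which $\m$ differs from $\y$, and on them one has $m_i = x_i$ and $(\x\vee\y)_i = y_i$. I would then build two parallel chains that raise these coordinates one at a time by the common increments $k_j := y_{i_j} - x_{i_j} > 0$: let $\u^{(0)} := \m$ and $\u^{(j)} := \u^{(j-1)} + k_j\chara_{i_j}$, so $\u^{(r)} = \y$; and let $\w^{(0)} := \x$ and $\w^{(j)} := \w^{(j-1)} + k_j\chara_{i_j}$, so $\w^{(r)} = \x\vee\y$.

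The crux is that weak DR applies at each step with $\a = \u^{(j)}$, $\b = \w^{(j)}$, $i = i_{j+1}$. Indeed, $\u^{(j)}$ and $\w^{(j)}$ agree (at the $\y$-values) on the already-raised coordinates $i_1,\dots,i_j$, while on the remaining coordinates $\u^{(j)}$ carries the $\m$-values and $\w^{(j)}$ the $\x$-values; since $\m \leqco \x$ this gives $\u^{(j)} \leqco \w^{(j)}$, and on the active coordinate $i_{j+1}$ both equal $x_{i_{j+1}} = m_{i_{j+1}}$, so the required equality $a_{i_{j+1}} = b_{i_{j+1}}$ holds. Weak DR then yields $f(\u^{(j+1)}) - f(\u^{(j)}) \geq f(\w^{(j+1)}) - f(\w^{(j)})$ for each $j$. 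Summing this telescoping chain over $j = 0, \dots, r-1$ collapses to $f(\y) - f(\m) \geq f(\x\vee\y) - f(\x)$, which rearranges to \eqref{eq1}.

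The main obstacle — really the only thing demanding care — is the bookkeeping that keeps the two chains synchronized: I must verify at every step both that $\u^{(j)} \leqco \w^{(j)}$ and that the chains coincide on the coordinate being raised, which is precisely what makes weak DR (rather than the full submodularity it is meant to imply) sufficient to drive the step. I would also confirm that all intermediate vectors stay in $\X$; this is immediate since every coordinate of $\u^{(j)}$ and $\w^{(j)}$ equals either $x_i$ or $y_i$, both lying in $\X_i$, and $\X = \prod_i \X_i$ is a product set. Note the argument uses the full increments $k_j$ in one shot, so it applies verbatim to set, integer, and continuous domains with no limiting or discretization step.
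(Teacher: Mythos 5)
Your proof is correct, and for the harder direction (weak DR $\Rightarrow$ submodularity) it coincides with the paper's own argument: the paper builds exactly the same two chains, starting from $\a^0 := \x\wedge\y$ and $\b^0 := \x$, raises the coordinates where $y_e > x_e$ by their full increments one at a time, and telescopes the weak DR inequalities to obtain $f(\y) - f(\x\wedge\y) \geq f(\x\vee\y) - f(\x)$. Where you genuinely deviate is the forward direction. The paper does not prove submodularity $\Rightarrow$ weak DR (\cref{def_supp_dr2}) directly; it first establishes a separate lemma showing that weak DR is equivalent to an alternative ``Formulation II'' pinned at the coordinate lower bound ($a_i = b_i = \underline{u}_i$, with two offsets $k' \geq l' \geq 0$), proves that submodularity implies Formulation II via the instantiation $\x = l'\chara_i + \b$, $\y = k'\chara_i + \a$, and then passes back through the equivalence lemma. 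Your instantiation $\x := k\chara_i + \a$, $\y := \b$ — valid because $a_i = b_i$ forces $\x\vee\y = k\chara_i + \b$ and $\x\wedge\y = \a$ — proves the weak DR definition itself in one step, so the alternative-formulation machinery becomes unnecessary for this proposition; the underlying lattice trick is the same, but your route is shorter and self-contained, at the cost only of not producing Formulation II as a reusable byproduct. Your explicit check that all intermediate chain points remain in $\X$ (every coordinate equals some $x_i$ or $y_i$, and $\X$ is a product) is a detail the paper leaves implicit, and it is right.
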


Given \cref{lemma_support_dr},
one can treat \NEWDR\ as the first order definition of submodularity:
Notice that for a continuously differentiable  function $f(\cdot)$ with the
\texttt{weak DR} property,   we have that $\forall  \a\leqco \b\in \X$, $\forall i\in \groundset \text{ s.t. } a_{i} = b_{i}$, it holds   $\nabla_i f(\a)\geq \nabla_i f(\b)$,  i.e., $\nabla f(\cdot)$ is a \textit{weak} antitone mapping.
Formally,

\begin{lemma}[Weak antitone mapping]\label{lemma_weak_antitone}
	If $f(\cdot)$ is continuously differentiable, then $f(\cdot)$ is
	submodular iff $\nabla f(\cdot)$ is a weak \emph{antitone mapping}
	from $\R^n$ to $\R^n$, i.e., $\forall \a\leqco \b\in \X$,
	$\forall i\in \groundset \text{ s.t. } a_{i} = b_{i}$,
	$\nabla_i f(\a)\geq \nabla_i f(\b)$.
\end{lemma}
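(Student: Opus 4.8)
The plan is to invoke \cref{lemma_support_dr}, which already identifies submodularity with the \NEWDR\ property, and then to show that for a continuously differentiable $f$ the \NEWDR\ property is equivalent to the weak antitone condition on $\nabla f$. The whole task thus reduces to passing between the zeroth-order inequality \labelcref{def_supp_dr2} and its first-order (gradient) counterpart, restricted to pairs $\a \leqco \b$ that agree in coordinate $i$.

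For the forward direction (\NEWDR\ $\Rightarrow$ weak antitone), I would fix $\a \leqco \b \in \X$ with $a_i = b_i$ and, for an admissible $k > 0$ as in \cref{def_weakdr}, divide the weak DR inequality $f(k\chara_i + \a) - f(\a) \geq f(k\chara_i + \b) - f(\b)$ by $k$ to get $\frac{f(k\chara_i + \a) - f(\a)}{k} \geq \frac{f(k\chara_i + \b) - f(\b)}{k}$. Letting $k \downarrow 0$ and using continuous differentiability, both difference quotients converge to the respective partial derivatives, yielding $\nabla_i f(\a) \geq \nabla_i f(\b)$.

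For the reverse direction (weak antitone $\Rightarrow$ \NEWDR), I would represent each marginal gain as a line integral of the $i$-th partial derivative, $f(k\chara_i + \a) - f(\a) = \int_0^k \nabla_i f(\a + t\chara_i)\,dt$ and likewise for $\b$. Since $\X$ is a product of intervals, $\a + t\chara_i$ and $\b + t\chara_i$ lie in $\X$ for every $t \in [0,k]$; moreover $\a + t\chara_i \leqco \b + t\chara_i$ and their $i$-th coordinates coincide (because $a_i = b_i$). Hence weak antitonicity gives $\nabla_i f(\a + t\chara_i) \geq \nabla_i f(\b + t\chara_i)$ for each $t$, and integrating over $[0,k]$ recovers \labelcref{def_supp_dr2}.

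The argument is mostly bookkeeping once \cref{lemma_support_dr} is available. The step I expect to require the most care is the forward direction at right-boundary-aligned pairs: if $a_i = b_i$ equals the right endpoint of $\X_i$, then no positive increment $k$ is admissible, so \cref{def_weakdr} is vacuous there and the right-hand difference quotient is unavailable. I would handle this by first establishing $\nabla_i f(\a) \geq \nabla_i f(\b)$ whenever $a_i = b_i$ lies strictly below the right endpoint (where right increments exist), and then extending to the boundary by approximating $\a, \b$ with points shifted slightly inward along coordinate $i$ and invoking continuity of $\nabla f$. The same attention guarantees that the intermediate points $\a + t\chara_i$, $\b + t\chara_i$ remain in $\X$ in the reverse direction, so that both the fundamental-theorem-of-calculus representation and the pointwise use of weak antitonicity are legitimate.
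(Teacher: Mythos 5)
Your proposal is correct, and its overall architecture matches the paper's: both reduce the claim to the equivalence between the \NEWDR\ property and weak antitonicity of $\nabla f$ (with submodularity $\Leftrightarrow$ \NEWDR\ supplied by \cref{lemma_support_dr}), and both prove the ``submodular $\Rightarrow$ weak antitone'' direction by dividing the weak DR inequality by $k$ and letting $k \downarrow 0$. Where you genuinely differ is the converse. The paper fixes the increment $k\bas_i$ and shows the marginal-gain function $g(\x) := f(k\bas_i + \x) - f(\x)$ is non-increasing, then compares $g(\a) \le g(\b)$; this amounts to integrating $\nabla g$ along the segment from $\a$ to $\b$, i.e., it invokes weak antitonicity at the pairs $(\x,\, \x + k\bas_i)$, which agree in the coordinates $j \neq i$. (The paper's displayed claim $\nabla g(\x) \leqco \zero$ is in fact slightly overstated -- the $i$-th component of $\nabla g$ is not controlled by weak antitonicity -- but this is harmless because $\b - \a$ has zero $i$-th component.) You instead integrate $\nabla_i f$ along coordinate $i$, using $\nabla_i f(\a + t\chara_i) \ge \nabla_i f(\b + t\chara_i)$ pointwise for $t \in [0,k]$; this invokes the hypothesis exactly at pairs agreeing in coordinate $i$, which is weak antitonicity in its stated form, so your route sidesteps the overstated gradient claim entirely. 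Your explicit treatment of the right-endpoint case in the forward direction (where no positive increment $k$ is admissible, handled by perturbing $\a, \b$ inward along coordinate $i$ and using continuity of $\nabla f$) is a genuine refinement over the paper, whose limit argument silently assumes admissible increments exist.
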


Now we show that the \texttt{DR} property is stronger than the
\texttt{weak DR} property, and the class of DR-submodular functions is
a proper subset of that of submodular functions, as indicated by
\cref{fig_sub}.

\begin{proposition}[\texttt{submodular/weak DR}) +
	(\texttt{coordinate-wise concave}) $\Leftrightarrow$
	(\texttt{DR}]\label{lemma_dr} A function $f(\cdot)$ defined over
	$\X$ satisfies the \texttt{DR} property iff $f(\cdot)$ is submodular
	and coordinate-wise concave, where the \texttt{coordinate-wise
		concave} property is defined as: $\forall \x\in \X$,
	$\forall i\in \groundset$, $\forall k, l\in \R_+$ s.t.
	$(k\chara_i+\x), (l\chara_i + \x), ((k+l)\chara_i + \x)$ are still
	in $\X$, it holds,
	\begin{align}\label{def_coordinatewise_concave}
	f(k\chara_i+\x) - f(\x) \geq f((k+l)\chara_i + \x) - f(l\chara_i + \x),
	\end{align}
	or equivalently (if twice differentiable)
	$\frac{\partial^2 f(\x)}{\partial x_i^2} \leq 0, \forall i\in
	\groundset$.
\end{proposition}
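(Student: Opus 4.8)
The plan is to prove the biconditional by showing that the DR property decomposes exactly into two pieces: the weak DR property (which by \cref{lemma_support_dr} is equivalent to submodularity) and coordinate-wise concavity. The key structural observation is that the DR inequality $f(k\chara_i + \a) - f(\a) \geq f(k\chara_i + \b) - f(\b)$ is required for \emph{all} $\a \leqco \b$, whereas weak DR only demands it when the relevant coordinate agrees, $a_i = b_i$. So DR is stronger precisely in that it additionally controls the case $a_i < b_i$, and I expect that extra control to be exactly coordinate-wise concavity.

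\medskip

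\textbf{($\Rightarrow$) DR implies submodular and coordinate-wise concave.} Submodularity is immediate: weak DR is a special case of DR (restrict to pairs with $a_i = b_i$), so DR $\Rightarrow$ weak DR $\Rightarrow$ submodular by \cref{lemma_support_dr}. For coordinate-wise concavity, fix $\x$, a coordinate $i$, and $k, l \in \R_+$. I would apply the DR inequality to the two points $\a := \x$ and $\b := \x + l\chara_i$, which satisfy $\a \leqco \b$, with increment $k$ in direction $i$. This yields $f(k\chara_i + \x) - f(\x) \geq f(k\chara_i + l\chara_i + \x) - f(l\chara_i + \x)$, which is exactly \cref{def_coordinatewise_concave}.

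\medskip

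\textbf{($\Leftarrow$) Submodular and coordinate-wise concave imply DR.} This is the direction I expect to be the main obstacle, since I must upgrade the ``$a_i = b_i$'' conclusion of weak DR to arbitrary $\a \leqco \b$, and the gap is bridged by coordinate-wise concavity. The natural strategy is a two-step interpolation through the auxiliary point $\b' := \b - (b_i - a_i)\chara_i$, i.e., $\b$ with its $i$th coordinate lowered to match $a_i$. Then $\a \leqco \b'$ with $a_i = b'_i$, so weak DR gives
\begin{align*}
f(k\chara_i + \a) - f(\a) \geq f(k\chara_i + \b') - f(\b').
\end{align*}
It then remains to show $f(k\chara_i + \b') - f(\b') \geq f(k\chara_i + \b) - f(\b)$, which compares the same increment $k$ at the two $i$-values $b'_i = a_i$ and $b_i$ on the common base point $\b'$; writing $l := b_i - a_i \geq 0$ so that $\b = l\chara_i + \b'$, this inequality is precisely coordinate-wise concavity \cref{def_coordinatewise_concave} applied at base $\b'$ with increments $k$ and $l$. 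Chaining the two inequalities gives the DR property. The equivalence of coordinate-wise concavity with $\partial^2 f / \partial x_i^2 \leq 0$ in the twice-differentiable case is the standard one-dimensional characterization of concavity along each coordinate axis, obtained by taking $k, l \to 0$ (as in the footnote following \cref{eq2}).
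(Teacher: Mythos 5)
Your proof is correct and follows essentially the same route as the paper: both directions decompose the DR property into submodularity (via the weak DR equivalence of \cref{lemma_support_dr}) plus coordinate-wise concavity, chained through an intermediate point whose $i$th coordinate is matched to one endpoint. The only cosmetic difference is that you interpolate by lowering $b_i$ to $a_i$ (weak DR first, then concavity at $\b'$), whereas the paper raises $a_i$ to $b_i$ (concavity at $\a$ first, then the lattice submodularity inequality applied to $\x:=\b$, $\y:=\a+(b_i-a_i+k)\chara_i$); the two chains are mirror images of one another.
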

Proposition \ref{lemma_dr} shows that a twice differentiable function $f(\cdot)$
is DR-submodular iff $\forall \x\in \X,  \frac{\partial^2 f(\x)}{\partial x_i \partial x_j}
\leq 0, \forall i, j\in \groundset$, which does not necessarily imply the concavity of $f(\cdot)$.
Given Proposition \ref{lemma_dr}, we also have the characterizations
of continuous DR-submodular functions, which are summarized in
\cref{tab_dr}.
\begin{table}[t]
	\begin{center}
		\caption{Summarization of definitions of continuous DR-submodular
			functions}
		\label{tab_dr}
		\begin{tabularx}{\textwidth}{|l|X|X|}
			\hline Definitions & Continuous DR-submodular function
			$f(\cdot)$, $\forall \x, \y\in \X$\\
			\hline \hline $0^{\text{th}}$ order &
			$f(\x) + f(\y) \geq f( \x \vee \y) +
			f(\x \wedge \y)$, and $f(\cdot)$ is coordinate-wise concave (see  \labelcref{def_coordinatewise_concave}) \\
			\hline $1^{\text{st}}$ order & {\texttt{DR} property
				(\cref{def_dr}), or $\nabla f(\cdot)$ {is an \textcolor{link_color}{antitone} mapping} (\cref{lemma_dr_antitone})}  \\
			\hline $2^{\text{nd}}$ order &
			$\frac{\partial^2 f(\x)}{\partial x_i \partial x_j} \leq 0$,
			\textcolor{link_color}{$\forall i , j$} (all entries of the
			Hessian matrix  being non-positive)  \\
			\hline
		\end{tabularx}
	\end{center}
\end{table}

\subsection{A Simple Visualization}

\cref{fig_2d_softmax} shows the contour of a 2-D continuous submodular
function
$[x_1; x_2]\mapsto 0.7 (x_1 - x_2)^2 + e^{-4(2x_1 - \frac{5}{3})^2} +
0.6e^{-4(2x_1 - \frac{1}{3})^2}+ e^{-4(2x_2 - \frac{5}{3})^2}+
e^{-4(2x_2 - \frac{1}{3})^2}$ and a 2-D DR-submodular function
\begin{align}
\x \mapsto \log\de{\diag(\x)(\bmL-\bmI) +\bmI }, \x\in [0,1]^2,
\end{align}
where $\bmL = [2.25, 3; 3, 4.25]$.
We can see that both of them are neither convex, nor concave. Notice
that along each  coordinate, continuous submodular functions
may behave arbitrarily. In contrast, DR-submodular functions are always concave along any single coordinate.

\begin{figure}[htbp]
	\centering
	\includegraphics[width=0.49\textwidth]{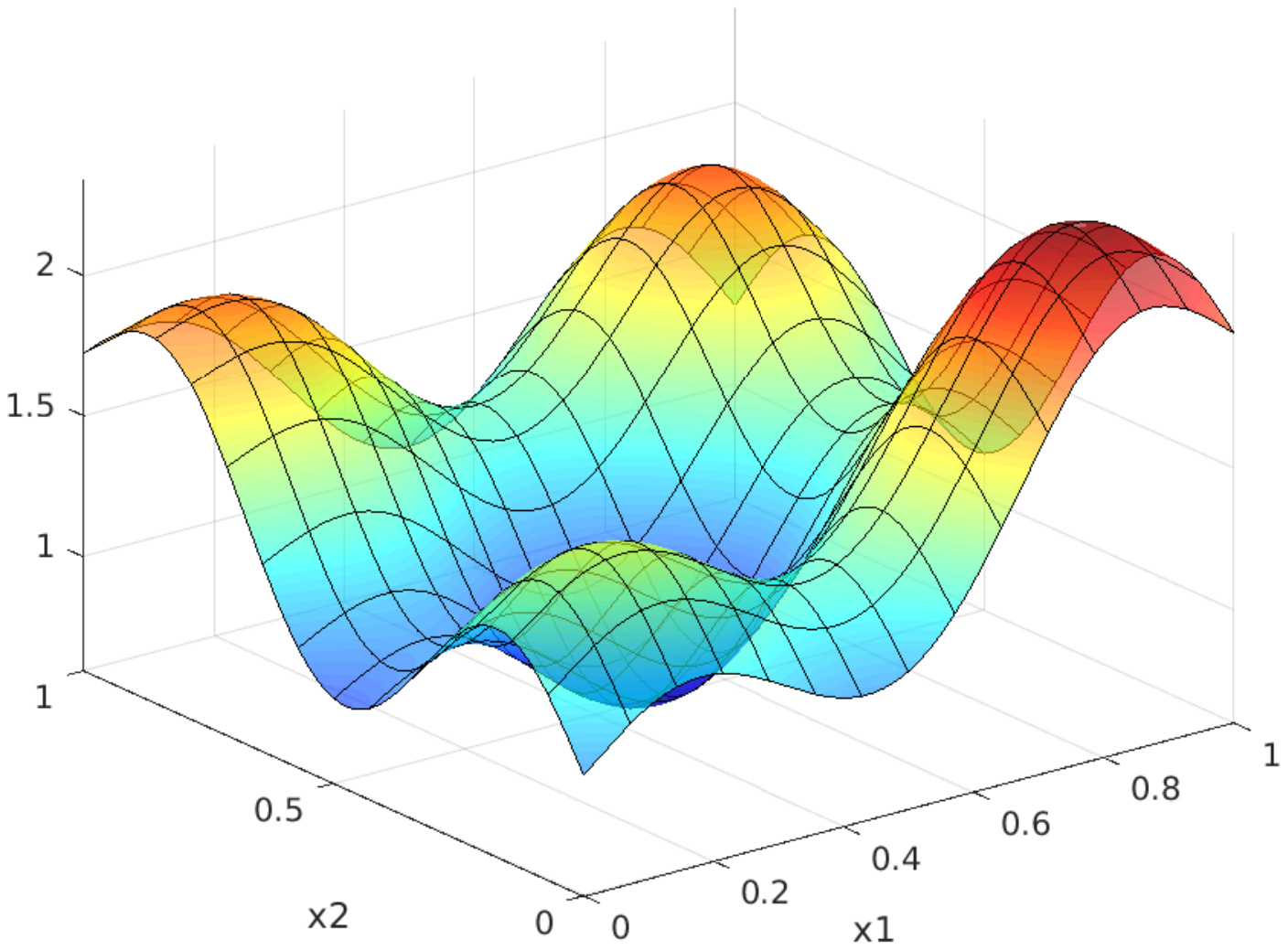}
	\includegraphics[width=0.49\textwidth]{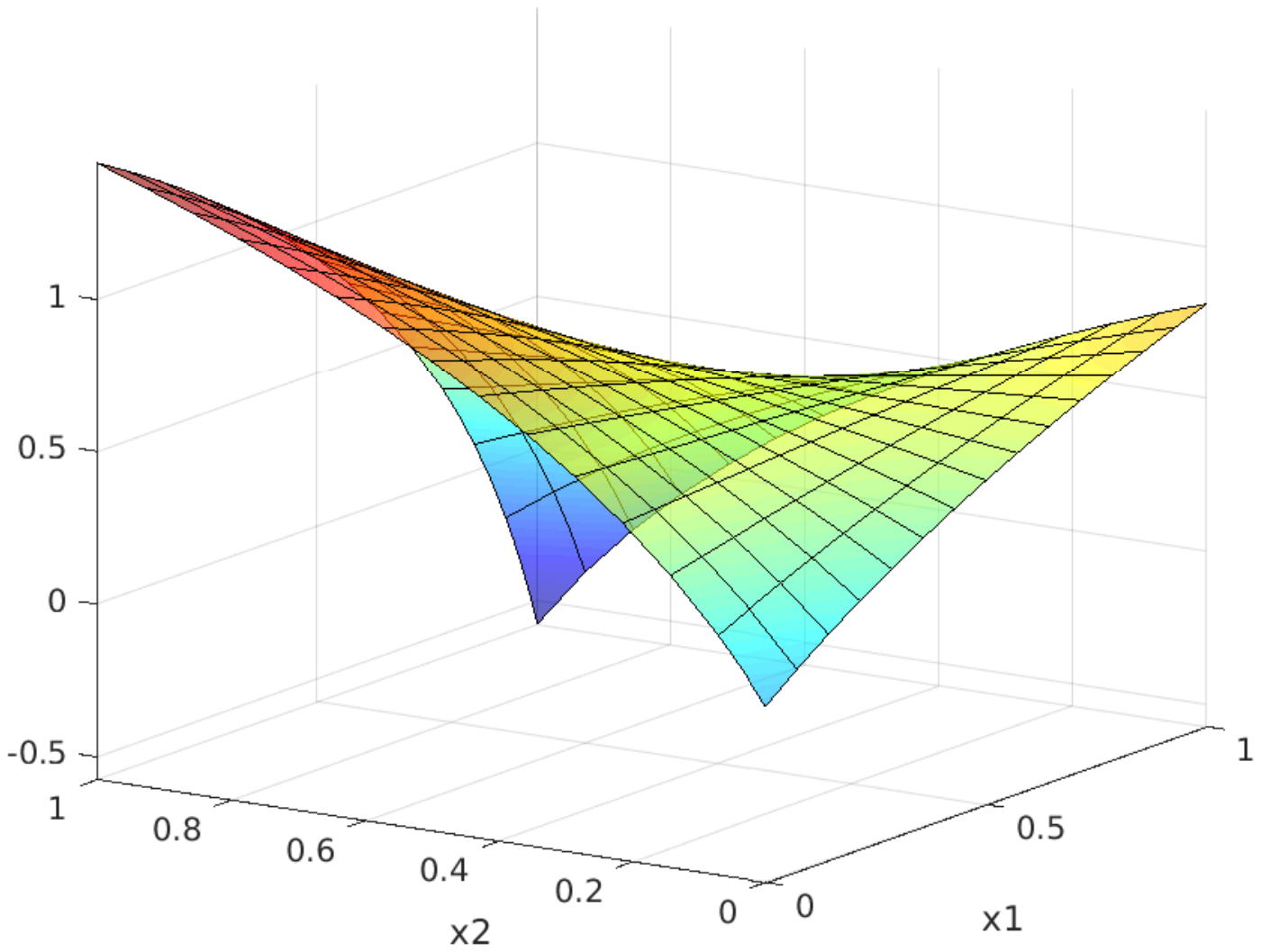}
	\caption{Left: A 2-D continuous submodular function:
		$[x_1; x_2]\mapsto 0.7 (x_1 - x_2)^2 + e^{-4(2x_1 -
			\frac{5}{3})^2} + 0.6e^{-4(2x_1 - \frac{1}{3})^2}+
		e^{-4(2x_2 - \frac{5}{3})^2}+ e^{-4(2x_2 - \frac{1}{3})^2}$.
		Right: A 2-D softmax extension, which is continuous
		DR-submodular.
		$\x \mapsto \log\de{\diag(\x)(\bmL-\bmI) +\bmI }, \x\in
		[0,1]^2$, where $\bmL = [2.25, 3; 3, 4.25]$.}
	\label{fig_2d_softmax}
\end{figure}

\section{Operations that Preserve Continuous (DR-)Submodularity}
\label{sec_ops_preserving_submodularity}

Continuous submodularity is preserved under various operations,
e.g.,
the sum of two continuous submodular  functions is submodular, non-negative combinations of continuous submodular functions are still submodular, and
a continuous submodular function multiplied by a positive scalar is still submodular.
In this section, we will study some general
submodularity preserving operations from the perspective of function composition. Then we will look at some exemplary applications resulting from these rules.

\begin{observation}[\cite{bach2015submodular}]\label{lemma_restriction}
Let $f$ be a DR-submodular function over $\X = \prod_{i=1}^n \X_i$.
Let $\tilde{f}$ be the function defined by restricting $f$ on a product of subsets of $\X_i$. Then $\tilde{f}$ is DR-submodular.
\end{observation}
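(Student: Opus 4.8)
The plan is to verify the DR property of $\tilde f$ directly from its zeroth-order definition (\cref{def_dr}), which requires no differentiability and is therefore the most economical route. Write $\tilde\X := \prod_{i=1}^n \Y_i$ with each $\Y_i \subseteq \X_i$, so that $\tilde\X \subseteq \X$, and let $\tilde f := f|_{\tilde\X}$ be the restriction.

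First I would fix arbitrary $\a \leqco \b \in \tilde\X$, an arbitrary coordinate $i\in[n]$, and an arbitrary $k\in\R_+$ for which $k\chara_i + \a$ and $k\chara_i + \b$ still lie in $\tilde\X$; these are exactly the quantifiers appearing in the DR property for $\tilde f$. The crucial observation is that the product-of-subsets assumption makes the feasibility constraints on $\tilde\X$ a subset of those on $\X$: since $\tilde\X \subseteq \X$, all four points $\a,\ \b,\ k\chara_i + \a,\ k\chara_i + \b$ belong to $\X$ as well, and the relation $\a \leqco \b$ is unchanged. Hence the hypotheses of the DR property of $f$ are met for precisely this choice of $(\a, \b, i, k)$.

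Applying the DR inequality for $f$ then yields $f(k\chara_i + \a) - f(\a) \geq f(k\chara_i + \b) - f(\b)$, and because $\tilde f$ coincides with $f$ at every point of $\tilde\X$, the identical inequality holds with $\tilde f$ in place of $f$. As the quantified variables were arbitrary, $\tilde f$ satisfies the DR property over $\tilde\X$ and is therefore DR-submodular, as claimed.

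I expect no genuine obstacle here; the only point needing care is that the DR property involves only axis-aligned increments along a single coordinate, so one must confirm that moving along coordinate $i$ keeps the remaining coordinates fixed inside their respective $\Y_j \subseteq \X_j$. This is exactly what the product structure of $\tilde\X$ guarantees, and it is the reason the statement restricts to a \emph{product} of subsets rather than an arbitrary subset of $\X$: on a non-product subset the incremented point $k\chara_i + \a$ need not remain feasible, and the transfer of quantifiers would fail. (The same one-line argument, run with \cref{def_weakdr} instead, would establish the analogous closure of plain submodularity under such restrictions.)
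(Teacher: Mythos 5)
Your proof is correct. Note that the paper itself gives no proof of this statement: it is recorded as an Observation imported from \citet{bach2015submodular}, so there is nothing internal to compare against; your direct verification from the zeroth-order DR definition (\cref{def_dr}) is the natural argument, and it works exactly because the DR property is a universally quantified statement whose hypotheses already require the incremented points to lie in the domain --- shrinking the domain to $\prod_i \Y_i \subseteq \prod_i \X_i$ only shrinks the set of instances to check, and $\tilde f$ agrees with $f$ on all of them.

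One small precision point on your closing remark: on a non-product subset the quantifier transfer would not actually \emph{fail} --- the same implication goes through, since the hypotheses still demand $k\chara_i+\a$ and $k\chara_i+\b$ be feasible. The real reason the statement insists on a product of subsets is that (DR-)submodularity in this paper is only \emph{defined} for functions on product domains $\prod_i \X_i$ (the domain must be a lattice, closed under $\vee$ and $\wedge$); on an arbitrary subset the resulting property would be well-typed but would no longer be the right notion, and could hold vacuously. This does not affect the validity of your proof, which only concerns the product case.
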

\Cref{lemma_restriction} will be useful when we try to obtain discrete submodular functions by discretizing continuous submodular functions.

\subsection{Function Composition}

Suppose there are  two functions $h: \R^m \rightarrow \R^n$ and $f: \R^n \rightarrow \R$.
Consider the composed function $g(\x) :=f(h(\x)) = (f \circ h)(\x)$. We are interested in what properties are needed from $f$ and $h$ such that the composed function $g$ is DR-submodular.

Here $h$ is a multivariate vector-valued  function. Equivalently, we can express $h$ as $n$ multivariate functions $h^k: \R^m \rightarrow \R,
k=1,...,n$.
We use  $\nabla h$ to denote the $n\times m$ Jacobian matrix of $h$. Let $\y = h(\x)$, so $y_k = h^k(\x)$.

For a vector-valued function, we define its (DR)-submodularity as,
\begin{definition}[(DR-)submodularity for vector-valued functions]
Let $h: \R^m \rightarrow \R^n$ be a multivariate vector-valued  function, and  $h^k: \R^m \rightarrow \R$ be the \ith{k} entry of the output, $k=1,...,n$. Then we say $h$ is (DR-)submodular iff $h^k$ is (DR-)submodular, $\forall k\in [n]$.
\end{definition}

Assume for simplicity that both $f$ and $h$ are twice differentiable. Applying the chain rule twice, one can
verify that

\begin{align}
\nabla^2 g(\x) = \nabla h (\x)^\trans  \nabla^2 f(\y) \nabla h (\x) + \sum_{k=1}^n \fracpartial{f(\y)}{y_k}  \nabla^2 h^k(\x),
\end{align}
where the product above is the standard matrix multiplication.
After some manipulation, one can see that the \ith{(i,j)} entry of $\nabla^2 g(\x)$ is,

\begin{align}\label{eq_ijth}
\fracppartial{g(\x)}{x_i}{x_j}
= \sum_{s,  t = 1}^n  \fracppartial{f(\y)}{y_s}{y_t}  \fracpartial{h^s(\x)}{x_i} \fracpartial{h^t(\x)}{x_j}
+ \sum_{k=1}^n \fracpartial{f(\y)}{y_k}  \fracppartial{h^k(\x)}{x_i}{x_j}.
\end{align}

Maintaining DR-submodularity (or IR-supermodularity) means
maintaining the sign of $\fracppartial{g(\x)}{x_i}{x_j}$.
From \cref{eq_ijth}, one can see that if we want $\fracppartial{g(\x)}{x_i}{x_j}$ to be non-positive, $h$ must in general be monotone.  $h$ could be either nondecreasing or nonincreasing, in both cases we have $\fracpartial{h^s(\x)}{x_i} \fracpartial{h^t(\x)}{x_j} \geq 0$.

\begin{theorem}[DR-submodularity preserving conditions on function composition]
\label{prop_composition}
Suppose  $h: \R^m \rightarrow \R^n$ is monotone (nondecreasing or nonincreasing),
$f: \R^n \rightarrow \R$.
The following statements about the  composed function $g(\x) :=f(h(\x)) = (f \circ h)(\x)$ hold:

\begin{enumerate}
	\item  If $f$ is DR-submodular, nondecreasing, and $h$ is DR-submodular, then $g$ is DR-submodular;

	\item  If $f$ is DR-submodular, nonincreasing, and $h$ is \irsuper, then $g$ is DR-submodular;

	\item  If $f$ is \irsuper, nondecreasing, and $h$ is \irsuper, then $g$ is \irsuper;

	\item  If $f$ is \irsuper, nonincreasing, and $h$ is DR-submodular, then $g$ is \irsuper.
\end{enumerate}
\end{theorem}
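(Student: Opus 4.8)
The plan is to reduce all four statements to a sign analysis of the second-order expression in \cref{eq_ijth}, invoking the second-order characterizations of (DR-)submodularity. Since $f$ and $h$ are assumed twice differentiable, \cref{lemma_dr} tells us that $g$ is DR-submodular iff $\fracppartial{g(\x)}{x_i}{x_j} \leq 0$ for all $i,j$, and (applying the same statement to $-g$) that $g$ is \irsuper iff $\fracppartial{g(\x)}{x_i}{x_j} \geq 0$ for all $i,j$. The identical translation applies to the ingredients: DR-submodularity of $f$ means $\fracppartial{f(\y)}{y_s}{y_t} \leq 0$ for all $s,t$ (and $\geq 0$ if $f$ is \irsuper), DR-submodularity of each $h^k$ means $\fracppartial{h^k(\x)}{x_i}{x_j} \leq 0$ (and $\geq 0$ if $h^k$ is \irsuper), while monotonicity of $f$ fixes the sign of each $\fracpartial{f(\y)}{y_k}$ and monotonicity of $h$ fixes the sign of each entry of $\nabla h$.

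The crucial observation, already recorded before the statement, is that monotonicity of $h$ forces the Jacobian product in the first sum of \cref{eq_ijth} to be non-negative: whether $h$ is nondecreasing or nonincreasing, $\fracpartial{h^s(\x)}{x_i}$ and $\fracpartial{h^t(\x)}{x_j}$ carry the same sign, so $\fracpartial{h^s(\x)}{x_i}\,\fracpartial{h^t(\x)}{x_j} \geq 0$ for every $s,t,i,j$. This single fact is what lets the first (quadratic-form) term in \cref{eq_ijth} inherit its sign directly from the Hessian of $f$, independently of the direction of monotonicity; the direction of monotonicity of $h$ only enters through the sign of $\nabla h$, which never appears alone in \cref{eq_ijth}.

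With these sign rules in hand, each of the four cases becomes a two-line check that both summands of \cref{eq_ijth} carry the sign demanded by the target conclusion. In case 1 ($f$ DR-submodular and nondecreasing, $h$ DR-submodular), the first sum pairs $\fracppartial{f(\y)}{y_s}{y_t}\leq 0$ with the non-negative Jacobian product, hence is $\leq 0$; the second sum pairs $\fracpartial{f(\y)}{y_k}\geq 0$ with $\fracppartial{h^k(\x)}{x_i}{x_j}\leq 0$, hence is $\leq 0$; summing yields $\fracppartial{g(\x)}{x_i}{x_j}\leq 0$. Cases 2--4 follow the same template: the curvature of $f$ sets the sign of the first term (via the non-negative Jacobian product), the monotonicity sign of $\nabla f$ multiplies the curvature of $h^k$ in the second term, and one checks that in each row of the hypothesis both terms land on the same side --- non-positive for the DR-submodular conclusions (cases 1, 2) and non-negative for the \irsuper conclusions (cases 3, 4).

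I do not expect a genuine obstacle here: once \cref{eq_ijth} and the second-order form of \cref{lemma_dr} are available, the argument is pure sign-chasing. The only points warranting care are (i) applying the characterization in the correct direction for \irsuper functions, i.e.\ flipping every inequality by passing to $-f$, $-h^k$, or $-g$ as appropriate, and (ii) noting that the conclusion requires \emph{every} entry of the Hessian of $g$ to have the correct sign, including the diagonal $i=j$ (the coordinate-wise concavity part); this is automatic, since \cref{eq_ijth} holds verbatim for $i=j$ and the twice-differentiable form of \cref{lemma_dr} controls all entries at once.
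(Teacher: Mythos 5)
Your sign analysis of \cref{eq_ijth} is correct as far as it goes, and it is exactly the argument the paper itself disposes of in a single sentence: when $f$ and $h$ are twice differentiable, the theorem follows by examining the $(i,j)$ entry of $\nabla^2 g(\x)$. Your bookkeeping in the four cases is accurate — the product $\fracpartial{h^s(\x)}{x_i}\,\fracpartial{h^t(\x)}{x_j}\geq 0$ holds for either direction of monotonicity of $h$, the entrywise sign of $\nabla^2 f$ then fixes the sign of the first sum, the signs of $\nabla f$ and $\nabla^2 h^k$ fix the second, and the diagonal entries $i=j$ are covered by the same computation.

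The gap is in your opening sentence: $f$ and $h$ are \emph{not} assumed twice differentiable in the statement. DR-submodularity (\cref{def_dr}) is defined with no differentiability at all, and the paper makes a point of this — its displayed proof of statement 1 is a zeroth-order argument written precisely to cover non-differentiable functions. That argument runs: DR-submodularity of $h$ gives $h(\x+k\bas_i)-h(\x)\geqco h(\y+k\bas_i)-h(\y)$ for $\x\leqco\y$; monotonicity of $f$ then bounds $g(\x+k\bas_i)-g(\x)$ below by $f[h(\x)+h(\y+k\bas_i)-h(\y)]-f[h(\x)]$; and the (vector-increment form of the) DR property of $f$, applied at the comparable points $h(\x)\leqco h(\y)$, completes the chain down to $g(\y+k\bas_i)-g(\y)$. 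Your second-order route has no counterpart to this step, because \cref{eq_ijth} simply does not exist for non-smooth $f$ or $h$; as written, your proof establishes the theorem only for $C^2$ functions. To close the gap you would need either to reproduce a zeroth-order argument of this kind for each of the four cases, or to supply a (nontrivial) smoothing argument showing that DR-submodularity, IR-supermodularity and monotonicity are preserved under approximation in a way that passes the conclusion to the limit; neither is present in the proposal.
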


If $f$ and $h$ are both twice differentiable, \cref{prop_composition} can be directly proved by examining the  \ith{(i,j)} entry of $\nabla^2 g(\x)$  in
\cref{eq_ijth}.
\emph{Furthermore, the above conclusions can also be rigorously proved when the functions are non-differentiable.} Bellow we give an exemplar proof  of statement 1 in \cref{prop_composition}. The other proofs are omitted due to high  similarity.

\paragraph{Proof of statement 1 in  \cref{prop_composition} when the functions are non-differentiable}

\begin{proof}[Proof of \cref{prop_composition} when the functions are non-differentiable]

To prove the DR-submodularity of $g$, it suffices to show that:
\begin{align}\label{dr_of_g}
\forall \x \leqco \y, \forall i\in [m],  \forall k \geq 0, g(\x + k\bas_i) - g(\x)\geq g(\y +k\bas_i) - g(\y).
\end{align}

Due to DR-submodularity of $h$,
\begin{align}\label{eq88}
h(\x+k\bas_i) - h(\x) \geqco h(\y + k\bas_i) - h(
\y)
\end{align}

I) Let us consider the case when $h$ is nondecreasing.
It holds,
\begin{align}\label{eq9}
	h(\x) \leqco h(\y)
\end{align}

Then,
\begin{align}
&g(\x + k\bas_i) - g(\x)\\
& = f[h(\x+k\bas_i)] - f[h(\x)] \\
& \geq   f[h(\x) + h(\y+k\bas_i)  - h(\y)    ] - f[h(\x)]  \quad  \text{\labelcref{eq88} and $f$ is non-decreasing}   \\
& \geq   f[ h(\y+k\bas_i)  ] - f[h(\y)]  \quad \text{\labelcref{eq9} and $f$ is DR-submodular}   \\
& =
g(\y +k\bas_i) - g(\y).
\end{align}

Thus we prove \cref{dr_of_g}, i.e., the DR-submodularity of $g$.

II) Let us consider the case when $h$ is nonincreasing.
It holds,
\begin{align}\label{eq117}
h(\x + k\bas_i) \geqco h(\y + k\bas_i)
\end{align}

Thus,
\begin{align}
&  g(\y) - g(\y +k\bas_i)\\
& =  f[h(\y)]  - f[ h(\y+k\bas_i)  ]\\
& \geq   f[ h(\y+k\bas_i) + h(\x)   - h(\x+k\bas_i )    ] - f[h(\y  + k\bas_i)]  \quad  \text{\labelcref{eq88} \& $f$ is nondecreasing}   \\
& \geq f[h(\x)]  -  f[h(\x+k\bas_i)] \quad \text{\labelcref{eq117} and $f$ is DR-submodular} \\
& =  g(\x) - g(\x + k\bas_i).
\end{align}
\end{proof}

By examining the  \ith{(i,j)} entry of $\nabla^2 g(\x)$  in
\cref{eq_ijth}, we can also prove the following conclusion:

\begin{lemma}\label{lemma_separable_reparameterization}
Suppose $h$ is monotone (nondecreasing or nonincreasing). In addition, assume $h$ is separable, that is, $m=n$ and $h^k(\x) = h^k(x_k), k = 1,...,n$.  Then $f(h(\x))$ maintains submodularity (supermodularity) of $f$.
\end{lemma}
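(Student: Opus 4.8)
The plan is to prove the lemma through the second-order characterization of (sub/super)modularity (\cref{eq2}), reducing the whole claim to the sign of the off-diagonal entries of $\nabla^2 g$ and then using separability to collapse the composition formula \cref{eq_ijth}. Recall that $g$ is submodular iff $\fracppartial{g(\x)}{x_i}{x_j}\leq 0$ for all $i\neq j$, and supermodular iff all these off-diagonal entries are non-negative, so it suffices to pin down their sign.

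First I would substitute separability into \cref{eq_ijth}. Since $h^s(\x)=h^s(x_s)$, the Jacobian $\nabla h(\x)$ is diagonal: $\fracpartial{h^s(\x)}{x_i}$ equals $(h^i)'(x_i)$ when $i=s$ and vanishes otherwise, while $\fracppartial{h^k(\x)}{x_i}{x_j}$ is nonzero only when $i=j=k$. Fixing any pair $i\neq j$, the second sum in \cref{eq_ijth} therefore vanishes identically (each of its terms needs $i=j=k$), and the first double sum retains a single surviving term, yielding
\begin{align}
\fracppartial{g(\x)}{x_i}{x_j}=\fracppartial{f(\y)}{y_i}{y_j}\,(h^i)'(x_i)\,(h^j)'(x_j),\qquad i\neq j,
\end{align}
with $\y=h(\x)$. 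The conclusion is then immediate: monotonicity of the separable map $h$ forces every $(h^k)'$ to share one sign (all $\geq 0$ if $h$ is nondecreasing, all $\leq 0$ if nonincreasing), so the factor $(h^i)'(x_i)(h^j)'(x_j)\geq 0$ in both cases. Hence $\fracppartial{g(\x)}{x_i}{x_j}$ inherits the sign of $\fracppartial{f(\y)}{y_i}{y_j}$, which is $\leq 0$ when $f$ is submodular and $\geq 0$ when $f$ is supermodular.

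The point worth emphasizing — and the reason this lemma is not subsumed by \cref{prop_composition} — is that no submodularity or concavity of $h$ is required here: separability is exactly the hypothesis that annihilates the $\nabla^2 h^k$ contribution off the diagonal, precisely the term that forced $h$ to be itself (DR-)submodular in \cref{prop_composition}. Consequently only the monotonicity of $h$ survives, and it enters solely through the sign of the product of first derivatives.

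For completeness I would also give a proof valid without differentiability, via the weak DR characterization (\cref{def_weakdr}, \cref{lemma_support_dr}). Fix $\a\leqco\b$ with $a_i=b_i$ and $k\geq 0$. Because $h$ is separable, adding $k\bas_i$ changes only the $i$-th coordinate of the image, and since $a_i=b_i$ the increment is identical at both points: $h(\a+k\bas_i)=h(\a)+\delta\bas_i$ and $h(\b+k\bas_i)=h(\b)+\delta\bas_i$ with $\delta:=h^i(a_i+k)-h^i(a_i)=h^i(b_i+k)-h^i(b_i)$. If $h$ is nondecreasing then $h(\a)\leqco h(\b)$, $\delta\geq 0$, and $h(\a)_i=h(\b)_i$, so $g(\a+k\bas_i)-g(\a)=f(h(\a)+\delta\bas_i)-f(h(\a))\geq f(h(\b)+\delta\bas_i)-f(h(\b))=g(\b+k\bas_i)-g(\b)$ is exactly the weak DR inequality for $f$; the nonincreasing case is symmetric after a change of variables (now $h(\a)\geqco h(\b)$ and $\delta\leq 0$). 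The main obstacle is conceptual rather than computational: identifying separability as the exact structural hypothesis that decouples the cross-derivatives, and, in the non-differentiable route, handling the nonincreasing case with the correct variable substitution.
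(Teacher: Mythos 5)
Your proposal is correct, and its differentiable part is essentially the paper's own proof: both work from the second-order characterization, start from \cref{eq_ijth}, kill the $\sum_k \fracpartial{f(\y)}{y_k}\nabla^2 h^k(\x)$ contribution using separability, and let monotonicity of $h$ enter only through the sign of the product of first partial derivatives. Your version is slightly sharper in presentation: you note the Jacobian of $h$ is diagonal, so for $i\neq j$ exactly one term ($s=i$, $t=j$) survives in the double sum, whereas the paper argues over all pairs $(s,t)$, splitting into $s=t$ (where separability kills one factor) and $s\neq t$ (where submodularity of $f$ and monotonicity of $h$ give the sign). Where you genuinely go beyond the paper is the non-differentiable case: the paper only remarks that one can ``similarly prove the conclusion using the zeroth order definition,'' while you give an actual argument via the weak DR characterization (\cref{def_weakdr}, \cref{lemma_support_dr}), exploiting that separability yields $h(\a+k\bas_i)=h(\a)+\delta\bas_i$ and $h(\b+k\bas_i)=h(\b)+\delta\bas_i$ with the \emph{same} increment $\delta$ because $a_i=b_i$, and handling the nonincreasing case by swapping the roles of the two image points so that the weak DR inequality for $f$ is applied with the non-negative increment $-\delta$. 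That argument is sound and fills in a step the paper leaves implicit; the only minor point to make explicit is that the shifted points lie in the domain of $f$, which holds since they are images under $h$ of points of $\X$.
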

The detailed proof can be found in \cref{app_proof_separable_reparameterization}. It is worth noting that under the same setting as in \cref{lemma_separable_reparameterization}, $f(h(\x))$ might not  maintain DR-submodularity (IR-supermodularity) of $f$.

\subsection{Examples of Function Composition}

\paragraph{Design deep submodular (set or integer) functions (DSFs).}

Using conclusions in this section, we obtain a general way of composing  discrete DSFs: i) We make a continuous deep  submodular function $f: \X \rightarrow \R$ utilizing the composition rules; ii) By restricting $f$ to the binary lattice $\{0, 1\}^n$, we obtain a deep submodular set function. Similarly, by restricting $f$ on the integer lattice $\{0, 1, 2, ..., k\}^n$, we get  a deep submodular integer function. This step is ensured by the restriction rule (\cref{lemma_restriction}).

For a specific example, we can easily prove that the DSFs composed by nesting SCMMs with concave functions \citep{bilmes2017deep} are binary submodular.

\begin{figure}
	\centering
	\includegraphics[width=0.4\textwidth]{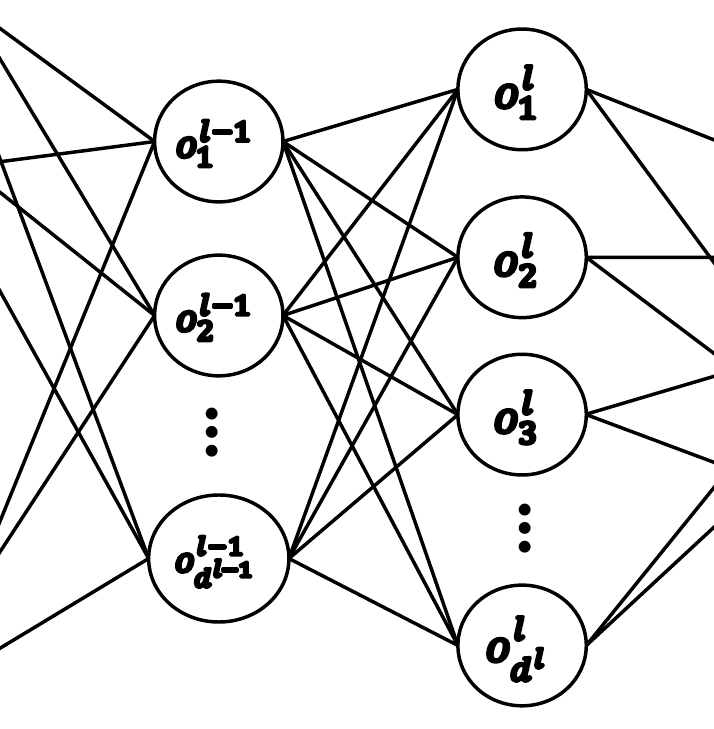}
	\caption{Layers $l-1$ and $l$ of the DSF.}
	\label{fig_dsf}
\end{figure}

Firstly, we can prove that the continuous function
composed by nesting SCMMs with concave functions \citep{bilmes2017deep} are continuous submodular. Let the original input vector be $\x\in \R^n$, which serves as the input vector of the $0^\text{th}$ layer. As shown by \cref{fig_dsf}, let the output of the \ith{i} neuron in the \ith{l} layer be $o^l_i$.  So
\begin{align}\label{eq23}
o^l_i = \sigma ( W^l_{1,1} o^{l-1}_1 +  W^l_{1,2} o^{l-1}_2 + ... +  W^l_{1,d^{l-1}} o^{l-1}_{d^{l-1} } ).
\end{align}
where we assume there are $d^l$ neurons in layer $l$, and use $\BW^l \in \R_+^{d^l \times d^{l-1}}$ to denote the weight matrix between layer $l-1$ and layer $l$.  $\sigma$ is the activation function which is concave and \nondec in the positive orthant.

Now let us proceed by induction. When $l=0$,
$o^0_i(\x)$ is DR-submodular and \nondec since
$o^0_i(\x) = x_i$. Let us assume for the \ith{(l-1)} layer that
$o^{l-1}_i(\x)$ is DR-submodular and \nondec wrt. $\x$. It is left to prove that $o^{l}_i(\x)$ is  DR-submodular and \nondec wrt. $\x$. According to \cref{eq23}, $o^l_i (\x) = \sigma ( W^l_{1,1} o^{l-1}_1(\x) +  W^l_{1,2} o^{l-1}_2(\x) + ... +  W^l_{1,d^{l-1}} o^{l-1}_{d^{l-1} }(\x) )$. Since $\BW$ is non-negative, the function $h(\x) =  W^l_{1,1} o^{l-1}_1(\x) +  W^l_{1,2} o^{l-1}_2(\x) + ... +  W^l_{1,d^{l-1}} o^{l-1}_{d^{l-1} }(\x) $ is DR-submodular and \nondec. $\sigma$ is also DR-submodular and \nondec. According to \cref{prop_composition},  $o^l_i (\x)$ is also DR-submodular wrt $\x$. Thus we finish the induction.

Given that  continuous DSFs are continuous submodular, by means of the restriction operation, we obtain binary or integer deep submodular functions.

However, the above principles offer more general ways of designing DSFs, other than nesting SCMMs with concave activation functions (as proposed by \citet{bilmes2017deep}). As long as the resultant continuous map is DR-submodular, the discrete function obtained by restriction will be DR-submodular.
With the principles proved in this section, one can immediately recognize that the following applications enjoy continuous submodular objectives (more details will be discussed in the corresponding sections).

\paragraph{Influence Maximization with Marketing Strategies.}

One can easily see that the objective in \cref{influence_general_marketing} is the composition of a nondecreasing multilinear extension and a monotone activation function. So it is DR-submodular according to \cref{prop_composition}.

\paragraph{Revenue Maximization with Continuous Assignments.}
One can also verify that the revenue maximization problem in \cref{revenue_max_maxcut} is the composition of a non-monotone DR-submodular multilinear extension and a separable monotone function, so it is still DR-submodular according to \cref{lemma_separable_reparameterization}.

\section{Properties of  Constrained  DR-Submodular  Maximization}
\label{sec_structures}

In this section, we  first formulate the constrained DR-submodular
maximization problem, and then establish several properties of it. In particular, we show properties related to    concavity  of the objective along certain directions, and establish  the
relation between  locally
stationary points and the global
optimum (thus called ``local-global relation'').
These properties will be used to derive guarantees for the algorithms in the following sections. All omitted proofs are in \cref{app_proofs_struc_algs}.

\subsection{The Constrained (DR-)Submodular Maximization Problem}

The general setup of constrained
continuous
submodular function  maximization is,
\begin{align}\label{setup}
\max_{\x \in \P\subseteq \X} f(\x),     \tag{P}
\end{align}
where $f: \X \rightarrow \R$ is continuous submodular or
DR-submodular, $\X = [\underline{\u}, \bar \u]$
\citep{bian2017guaranteed}.  One can assume $f$ is non-negative over
$\X$, since otherwise one just needs to find a lower bound for the
minimum function value of $f$ over $\X$ (because box-constrained
submodular minimization can be solved to arbitrary precision in
polynomial time \citep{bach2015submodular}). Let the lower bound be
$f_{\text{min}}$, then working on a new function
$f'(\x):= f(\x) - f_{\text{min}}$ will not change the solution
structure of the original problem \labelcref{setup}.

The constraint set $\P\subseteq \X$ is assumed to be a
\emph{down-closed} {convex} set, since without this property one
cannot reach any constant factor approximation guarantee of the
problem \labelcref{setup} \citep{vondrak2013symmetry}. Formally,
down-closedness of a convex set is defined as follows:
\begin{definition}[Down-closedness]
	A down-closed convex set is a convex set $\P$ associated with a
	lower bound $\underline{\bu}\in \P$, such that:
	\begin{enumerate}
		\item  $\forall \y\in \P$, $\underline{\bu} \leqco \y$;

		\item $\forall \y\in\P$, $\x\in \R^n$,
		$\underline{\bu} \leqco \x\leqco \y$ implies that $\x\in \P$.
	\end{enumerate}
\end{definition}

Without loss of generality, we assume $\P$ lies in the positive
orthant and has the lower bound $\zero$.  Otherwise we can always
define a new set
$\P' = \{\x \;|\; \x = \y - \underline{\bu}, \y\in \P \}$ in the
positive orthant, and a corresponding continuous submodular function
$f'(\x) := f(\x + \underline{\bu})$, and all properties of the
function are still preserved.

The diameter of $\P$ is $D:= \max_{\x,\y\in\P}\|\x-\y\|$, and it holds
that $D \leq \|\bar \u \|$.  We use $\x^*$ to denote the
global maximum of \labelcref{setup}.
In some applications we know that $f$ satisfies the monotonicity property:

\begin{definition}[Monotonicity]
	A function $f(\cdot)$  is monotone nondecreasing if,
	\begin{align}
	\forall \a \leqco \b, f(\a) \leq f(\b).
	\end{align}
	In the sequel, by ``monotonicity'', we mean monotone nondecreasing by default.
\end{definition}

We also assume that $f$ has
Lipschitz gradients,

\begin{definition}[Lipschitz gradients]
	A differentiable function $f(\cdot)$ has $L$-Lipschitz gradients if
	for all $\x,\y \in \X$ it holds that,
	\begin{align}\label{eq_smooth}
	\| \nabla f(\x)- \nabla f(\y) \| \leq L \|\x - \y\|.
	\end{align}
	According to \citet[Lemma 1.2.3]{nesterov2013introductory}, if
	$f(\cdot)$ has $L$-Lipschitz gradients, then
	\begin{align}\label{eq_quad_lower_bound}
	|f(\x + \v) - f(\x)  - \dtp{\nabla f(\x)}{\v}| \leq \frac{L}{2} \|\v\|^2.
	\end{align}
\end{definition}

For Frank-Wolfe style algorithms, the notion of curvature usually
gives a tighter bound than just using the Lipschitz gradients.
\begin{definition}[Curvature of a continuously differentiable function]
	The curvature of a differentiable function $f(\cdot)$ w.r.t. a
	constraint set $\P$ is,
	\begin{align}
	C_f(\P) : = \sup_{\x, \v\in \P, \gamma\in (0,  1],  \y = \x +
		\gamma (\v - \x )}\frac{2}{\gamma^2}\left[f(\y) - f(\x) - {(\y -
		\x)^\trans}{\nabla f(\x)}\right].
	\end{align}
\end{definition}

If a differentiable function $f(\cdot)$ has $L$-Lipschitz gradients,
one can easily show that $C_f(\P) \leq LD^2$, given \citet[Lemma
1.2.3]{nesterov2013introductory}.

\subsection{Properties Along Non-negative/Non-positive Directions}

Though in general a DR-submodular function $f$ is neither convex, nor
concave, it is \emph{concave along some directions}:

\begin{proposition}[\cite{bian2017guaranteed}]\label{prop_concave}
	A continuous {DR}-submodular function $f(\cdot)$ is concave along
	any non-negative direction $\v \geqco \zero$, and any non-positive
	direction $\v \leqco \zero$.
\end{proposition}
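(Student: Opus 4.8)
The plan is to reduce the claim to the concavity of a one-dimensional restriction. Fix any $\x \in \X$ and a feasible direction $\v$, and consider the scalar function $g(\xi) := f(\x + \xi \v)$ on the interval of $\xi$ for which $\x + \xi\v \in \X$; since $\X = [\underline\u, \bar\u]$ is a box and hence convex, this interval is well defined. Showing that $f$ is concave along $\v$ is exactly showing that $g$ is concave, and I would treat $\v \geqco \zero$ and $\v \leqco \zero$ separately, the whole point being that in both cases the coordinatewise products $v_i v_j$ are nonnegative.

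For a continuously differentiable $f$, the cleanest route is through the antitone-gradient characterization of \cref{lemma_dr_antitone}. Here $g'(\xi) = \dtp{\nabla f(\x + \xi\v)}{\v}$, and it suffices to show $g'$ is nonincreasing. Take $\xi_1 \leq \xi_2$. When $\v \geqco \zero$ we have $\x + \xi_1\v \leqco \x + \xi_2\v$, so \cref{lemma_dr_antitone} gives $\nabla f(\x+\xi_1\v) \geqco \nabla f(\x+\xi_2\v)$; pairing the coordinatewise-nonnegative gradient difference with $\v \geqco \zero$ yields $g'(\xi_1) - g'(\xi_2) = \dtp{\nabla f(\x+\xi_1\v) - \nabla f(\x+\xi_2\v)}{\v} \geq 0$. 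When $\v \leqco \zero$, the ordering flips to $\x + \xi_2\v \leqco \x + \xi_1\v$, so the gradient difference becomes coordinatewise nonpositive, but it is now paired with the nonpositive $\v$, so the inner product is again nonnegative. Either way $g'$ is nonincreasing and $g$ is concave. Equivalently, in the twice-differentiable case one can invoke \cref{lemma_dr} to write $g''(\xi) = \sum_{i,j} v_i v_j \, \partial^2 f / \partial x_i \partial x_j \leq 0$, since every Hessian entry is nonpositive and every product $v_i v_j$ is nonnegative.

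To cover continuous DR-submodular functions that need not be differentiable, I would instead use the zeroth-order \texttt{DR} property of \cref{def_dr} together with the fact that a one-variable function is concave iff its increments are nonincreasing, i.e. $g(s+\delta) - g(s) \geq g(t+\delta) - g(t)$ for $s \leq t$, $\delta \geq 0$. The key intermediate step is a vectorized form of \cref{def_dr}: for any $\a \leqco \b$ and any $\z \geqco \zero$,
\begin{align}
f(\a + \z) - f(\a) \geq f(\b + \z) - f(\b),
\end{align}
which I would obtain by telescoping over coordinates, writing $\z = \sum_i z_i\chara_i$ and applying the single-coordinate \texttt{DR} inequality at the nested base points $\a + \sum_{j<i} z_j\chara_j \leqco \b + \sum_{j<i} z_j\chara_j$. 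For $\v \geqco \zero$, setting $\a = \x + s\v$, $\b = \x + t\v$, $\z = \delta\v$ turns this vector inequality directly into the required increment inequality for $g$. For $\v \leqco \zero$, I would write $\w = -\v \geqco \zero$ and apply the same vector inequality with base points $\x - (t+\delta)\w \leqco \x - (s+\delta)\w$ and increment $\delta\w$, which after rearrangement gives concavity of $\xi \mapsto f(\x + \xi\v)$, using that reflection preserves concavity.

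The routine parts are the sign bookkeeping and the telescoping. The only genuinely delicate point is the nonpositive-direction case: one must be careful that reversing the sign of $\v$ both flips the product order of the endpoints and flips the sign of the vector paired with the gradient difference, so that the two sign changes cancel and concavity is preserved. The telescoping argument handles this uniformly once the endpoints are ordered correctly.
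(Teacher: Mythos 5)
Your proposal is correct, and its main arguments take a genuinely different (and more general) route than the paper. The paper's own proof is precisely the Hessian computation you mention only in passing: for twice-differentiable $f$ it writes $g(\xi) = f(\x + \xi\v)$ and computes $g''(\xi) = \v^\trans \nabla^2 f(\x+\xi\v)\,\v = \sum_{i\neq j} v_i v_j \nabla^2_{ij} f + \sum_i v_i^2 \nabla^2_{ii} f \leq 0$, with the off-diagonal entries non-positive by submodularity and the diagonal ones by coordinate-wise concavity, and it dismisses the non-positive-direction case as "similar." Your two primary routes strengthen this: the first-order argument via the antitone-gradient characterization (\cref{lemma_dr_antitone}) shows $g'$ is nonincreasing and so covers continuously differentiable $f$ that need not be twice differentiable, and your zeroth-order argument — telescoping the single-coordinate \texttt{DR} inequality of \cref{def_dr} into the vector form $f(\a+\z) - f(\a) \geq f(\b+\z) - f(\b)$ for $\a \leqco \b$, $\z \geqco \zero$, then reading off nonincreasing increments of $g$ — handles non-differentiable $f$ entirely, which the paper's proof does not address. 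Your sign bookkeeping in the non-positive-direction case is handled correctly in both routes (the flip of the endpoint ordering cancels against the flip of the sign of $\v$). One minor caveat in the zeroth-order route: nonincreasing increments of $g$ yield midpoint-type concavity, so to conclude concavity in the classical sense you need some regularity of $f$ (e.g.\ continuity of $f$ on its box domain); under that standing assumption the argument is complete.
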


Notice that DR-submodularity is a stronger condition than concavity
along directions $\v \in \pm \R_+^n$: for instance, a concave function
is concave along any direction, but it may not be a DR-submodular
function.

\paragraph{Strong DR-submodularity.}
DR-submodular objectives may be strongly concave along directions
$\v \in \pm \R_+^n$, e.g., for DR-submodular quadratic functions.  We
will show that such additional structure may be exploited to obtain
stronger guarantees for the local-global relation.

\begin{definition}[Strong DR-submodularity] \label{eq_strongly_dr} A
	function $f$ is $\mu$-strongly DR-submodular ($\mu\geq 0$) if for
	all $\x\in \X$ and $\v \in \pm \R_+^n$, it holds that,
	\begin{align}\label{eq_strong_dr} f(\x+\v) \leq f(\x) +
	\dtp{\nabla f(\x)}{\v} - \frac{\mu }{2}\|\v\|^2.
	\end{align}
\end{definition}

\subsection{Local-Global Relation: Relation Between Approximately Stationary Points and Global Optimum}\label{subsec_local_global}

We know that for unconstrained  optimization problems,  $\|\nabla f(\x)\|$ is often used as  the non-stationarity measure of the point $\x$. What should be
a proper non-stationarity measure of a general
constrained optimization problem? We advocate the non-stationarity measure proposed by
\citet{lacoste2016convergence} and \citet{reddi2016stochastic}, which  can
be calculated for free within  {Frank-Wolfe}-style algorithms (e.g.,  \cref{nonconvex_fw}).

\paragraph{Non-stationarity measure.}

For any constraint set $\Q\subseteq \X$, the non-stationarity
of a point $\x\in \Q$ is,
\begin{align}\label{non_stationary}
g_{\Q}(\x) := \max_{\v\in\Q}\dtp{\v - \x}{\nabla f(\x)} \qquad \text{(non-stationarity)}.
\end{align}
It always holds that $g_{\Q}(\x)\geq 0$, and $\x$ is defined to be  a
stationary point in $\Q$ iff $g_{\Q}(\x)=0$, so \labelcref{non_stationary} is a natural generalization of
the non-stationarity measure for unconstrainted optimization problems.

We start with the following proposition %
 involving the non-stationarity measure.
\begin{proposition}[\cite{biannips2017nonmonotone}]\label{lemma_3_1}
If $f$ is $\mu$-strongly DR-submodular, then for any two points $\x$,  $\y$ in $\X$, it holds:
\begin{align}\label{non_stationarity}
(\y-\x)^{\trans}\nabla f(\x) \geq f(\x\vee\y) + f(\x\wedge \y) - 2f(\x) + \frac{\mu}{2}\|\x -\y\|^2.
\end{align}
\end{proposition}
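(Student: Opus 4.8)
The plan is to lower-bound the directional inner product $(\y-\x)^\trans\nabla f(\x)$ by splitting the displacement $\y-\x$ into its non-negative and non-positive parts, which align exactly with the lattice operations $\vee$ and $\wedge$, and then invoking the strong DR-submodularity inequality (\cref{eq_strongly_dr}) separately along each part. Notice that strong DR-submodularity only gives information along directions $\v\in\pm\R_+^n$, so the whole difficulty is to write $\y-\x$ as a sum of one non-negative and one non-positive vector in a way that reproduces the terms $f(\x\vee\y)$ and $f(\x\wedge\y)$.

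First I would set $\v_1 := (\x\vee\y)-\x$ and $\v_2 := (\x\wedge\y)-\x$. Coordinate-wise, $(\v_1)_i=\max(0,y_i-x_i)\geq 0$ and $(\v_2)_i=\min(0,y_i-x_i)\leq 0$, so $\v_1\geqco\zero$ and $\v_2\leqco\zero$, i.e.\ both lie in $\pm\R_+^n$. They have disjoint supports and sum to $\y-\x$, which yields the two identities $\v_1+\v_2=\y-\x$ and $\|\v_1\|^2+\|\v_2\|^2=\|\y-\x\|^2$ (the cross terms vanish because for each coordinate at most one of $(\v_1)_i,(\v_2)_i$ is nonzero). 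Since $\X=[\underline{\u},\bar\u]$ is a box, both $\x+\v_1=\x\vee\y$ and $\x+\v_2=\x\wedge\y$ remain in $\X$, so the strong DR-submodularity inequality is applicable at these displacements.

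Next I would rearrange \cref{eq_strong_dr} into the form $\dtp{\nabla f(\x)}{\v}\geq f(\x+\v)-f(\x)+\frac{\mu}{2}\|\v\|^2$ and apply it to $\v=\v_1\geqco\zero$ and to $\v=\v_2\leqco\zero$. This gives $\dtp{\nabla f(\x)}{\v_1}\geq f(\x\vee\y)-f(\x)+\frac{\mu}{2}\|\v_1\|^2$ and $\dtp{\nabla f(\x)}{\v_2}\geq f(\x\wedge\y)-f(\x)+\frac{\mu}{2}\|\v_2\|^2$. Adding the two, using linearity $(\y-\x)^\trans\nabla f(\x)=\dtp{\nabla f(\x)}{\v_1}+\dtp{\nabla f(\x)}{\v_2}$, and substituting the Pythagorean identity $\|\v_1\|^2+\|\v_2\|^2=\|\y-\x\|^2$ produces exactly the claimed bound.

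The argument is essentially mechanical once the decomposition is chosen, so I do not anticipate a real obstacle; the only point requiring care is recognizing that the natural split of $\y-\x$ into its positive and negative parts coincides with $(\x\vee\y)-\x$ and $(\x\wedge\y)-\x$, which is precisely what makes the terms $f(\x\vee\y)$ and $f(\x\wedge\y)$ materialize. It is worth emphasizing that this proof invokes only strong DR-submodularity along $\pm\R_+^n$ directions and never the zeroth-order submodularity inequality directly.
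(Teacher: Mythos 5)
Your proof is correct and is essentially identical to the paper's own argument: the paper applies the strong DR-submodularity inequality \labelcref{eq_strong_dr} along the non-negative direction $\x\vee\y-\x$ and the non-positive direction $\x\wedge\y-\x$, sums the two inequalities, and uses $\x\vee\y+\x\wedge\y=\x+\y$ together with $\|\x\vee\y-\x\|^2+\|\x\wedge\y-\x\|^2=\|\y-\x\|^2$, exactly as you do. Your added remarks (disjoint supports, feasibility of $\x+\v_1$ and $\x+\v_2$ in the box $\X$) are correct bookkeeping that the paper leaves implicit.
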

 Proposition \ref{lemma_3_1} implies that if $\x$ is  stationary in $\P$ (i.e., $g_{\P}(\x)=0$), then $2f(\x) \geq f(\x\vee\y) + f(\x\wedge \y)  + \frac{\mu}{2}\|\x -\y\|^2$, which gives an implicit relation
between $\x$ and  $\y$.

As the following statements show, $g_{\Q}(\x)$
plays an important role in characterizing the local-global
relation in both monotone and non-monotone setting.

\subsubsection{Local-Global Relation in the Monotone Setting}

\begin{corollary}[Local-Global Relation: \emph{Monotone Setting}]\label{coro_1half}
	Let $\x$ be a point in $\P$ with non-stationarity $g_{\P}(\x)$.  If
	$f$ is monotone nondecreasing and $\mu$-strongly DR-submodular, then
	it holds that,
	\begin{align}
	f(\x) \geq \frac{1}{2}\left[f(\x^*) -g_{\P}(\x) \right ]  +   \frac{\mu}{4}\|\x -\optcont\|^2.
	\end{align}
\end{corollary}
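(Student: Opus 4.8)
The plan is to derive the monotone local-global relation directly from Proposition~\ref{lemma_3_1} by making a careful choice for the point $\y$, together with the definition of the non-stationarity measure. First I would instantiate Proposition~\ref{lemma_3_1} with $\y := \optcont$, the global optimum of \labelcref{setup}. This gives
\begin{align*}
(\optcont-\x)^{\trans}\nabla f(\x) \geq f(\x\vee\optcont) + f(\x\wedge \optcont) - 2f(\x) + \frac{\mu}{2}\|\x -\optcont\|^2.
\end{align*}
The left-hand side is precisely of the form appearing in the non-stationarity measure \labelcref{non_stationary}: since $\optcont \in \P$, the definition $g_{\P}(\x) = \max_{\v\in\P}\dtp{\v - \x}{\nabla f(\x)}$ yields $g_{\P}(\x) \geq \dtp{\optcont - \x}{\nabla f(\x)}$. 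Combining these two inequalities eliminates the gradient term and leaves an inequality purely in function values plus the strong-concavity correction.

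Next I would control the two ``lattice'' terms $f(\x\vee\optcont)$ and $f(\x\wedge\optcont)$ using monotonicity. The key observation is that $\x \leqco \x\vee\optcont$ and $\optcont \leqco \x\vee\optcont$, while $\x\wedge\optcont \leqco \x$ and $\x\wedge\optcont \leqco \optcont$. Since $f$ is monotone nondecreasing, $f(\x\vee\optcont) \geq f(\optcont) = f(\x^*)$ and $f(\x\wedge\optcont) \geq 0$ (here I would invoke the standing assumption that $f$ is non-negative over $\X$, established in the paragraph after \labelcref{setup}). Substituting these lower bounds gives
\begin{align*}
g_{\P}(\x) \geq f(\x^*) - 2f(\x) + \frac{\mu}{2}\|\x -\optcont\|^2.
\end{align*}
Rearranging to isolate $f(\x)$ and dividing by $2$ produces exactly the claimed bound $f(\x) \geq \tfrac{1}{2}[f(\x^*) - g_{\P}(\x)] + \tfrac{\mu}{4}\|\x-\optcont\|^2$.

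I do not anticipate a serious obstacle here, as the argument is essentially a direct specialization of Proposition~\ref{lemma_3_1} followed by two monotonicity bounds. The only point requiring mild care is justifying $f(\x\wedge\optcont)\geq 0$: this relies on the non-negativity normalization of $f$ rather than on monotonicity alone, so I would make that dependence explicit. One should also verify that $\x\vee\optcont$ and $\x\wedge\optcont$ both lie in $\X = [\underline{\u},\bar\u]$ so that $f$ is defined and monotonicity applies at these points, which is immediate since the box $\X$ is closed under coordinate-wise $\vee$ and $\wedge$. Everything else is routine algebraic rearrangement.
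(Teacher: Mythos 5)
Your proposal is correct and follows essentially the same route as the paper's proof: instantiate Proposition~\ref{lemma_3_1} with $\y = \optcont$, bound the gradient term by $g_{\P}(\x)$ using $\optcont \in \P$, then apply monotonicity to get $f(\x \vee \optcont) \geq f(\x^*)$ and non-negativity to get $f(\x \wedge \optcont) \geq 0$. Your explicit flagging of the dependence on the non-negativity normalization (rather than monotonicity alone) matches the paper's reasoning exactly.
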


\Cref{coro_1half} indicates that any stationary point is a 1/2
approximation, which is also found by \citet{hassani2017gradient} (with
$\mu = 0$). Furthermore, if $f$ is $\mu$-strongly DR-submodular, the
quality of $\x$ will be improved considerably: if $\x$ is close to $\optcont$,
it should be close to being optimal since $f$ is smooth; if $\x$ is
far away from $\optcont$, the term $\frac{\mu}{4}\|\x -\optcont\|^2$
will boost the approximation bound significantly.
We provide here a very succinct proof based on Proposition \ref{lemma_3_1}.

\begin{proof}[Proof of \cref{coro_1half}]
	Let $\y = \optcont$ in \cref{lemma_3_1}, one can easily obtain
	\begin{align}
	f(\x) \geq \frac{1}{2}\left[f(\x^* \vee \x) + f(\optcont \wedge
	\x) -g_{\P}(\x) \right ]  +   \frac{\mu}{4}\|\x -\x^*\|^2.
	\end{align}

	Because of monotonicity and $\x^* \vee \x \geqco \x^*$, we know that
	$f(\x^* \vee \x)\geq f(\x^*)$. From non-negativity,
	$f(\optcont \wedge \x) \geq 0$.  Then we reach the conclusion.
\end{proof}

\subsubsection{Local-Global Relation in the Non-Monotone Setting}

\begin{proposition}[Local-Global Relation:
	\emph{Non-Monotone Setting}]\label{local_global}
	Let $\x$ be a point in $\P$ with non-stationarity $g_{\P}(\x)$, and
	${\Q} :=\P \cap  \{\y | \y\leqco \bar \u - \x\}$.
	Let $\z$ be a point in $\Q$ with non-stationarity $g_{\Q}(\z)$.  It
	holds that,
	\begin{flalign}
	&\max\{f(\x), f(\z) \} \geq \\\notag
	&\frac{1}{4}\left[f(\x^*)
	-g_{\P}(\x) -g_{\Q}(\z)\right ] + \frac{\mu}{8}\left(\|\x
	-\x^*\|^2 + \|\z -\z^*\|^2\right ),
	\end{flalign}
	where $\z^*:= \x\vee \x^* -\x$.
\end{proposition}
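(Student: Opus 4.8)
The plan is to bound $f(\x)$ and $f(\z)$ separately by two applications of \cref{lemma_3_1} and then glue the two right-hand sides together with a purely combinatorial submodular inequality. First I would apply \cref{lemma_3_1} at $\x$ with $\y=\optcont$; since $\optcont\in\P$ we have $(\optcont-\x)^\trans\nabla f(\x)\leqco g_\P(\x)$, so that
\begin{align}
2f(\x)\geq f(\x\vee\optcont)+f(\x\wedge\optcont)-g_\P(\x)+\tfrac{\mu}{2}\|\x-\optcont\|^2 .
\end{align}
Next I would apply \cref{lemma_3_1} at $\z$ with $\y=\z^*$. The one point to check carefully is that $\z^*\in\Q$, so that $(\z^*-\z)^\trans\nabla f(\z)\leqco g_\Q(\z)$ is legitimate: indeed $\z^*=\x\vee\optcont-\x=(\optcont-\x)\vee\zero$ satisfies $\zero\leqco\z^*\leqco\optcont$, hence $\z^*\in\P$ by down-closedness, and $\z^*\leqco\bar\u-\x$ because $\x\vee\optcont\leqco\bar\u$; thus $\z^*\in\Q$. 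This gives
\begin{align}
2f(\z)\geq f(\z\vee\z^*)+f(\z\wedge\z^*)-g_\Q(\z)+\tfrac{\mu}{2}\|\z-\z^*\|^2 .
\end{align}
Adding the two bounds, the whole statement reduces (via $2\max\{f(\x),f(\z)\}\geq f(\x)+f(\z)$ and dividing by $4$) to the single combinatorial inequality
\begin{align}\label{eq_fourterm}
f(\x\vee\optcont)+f(\x\wedge\optcont)+f(\z\vee\z^*)+f(\z\wedge\z^*)\geq f(\optcont).
\end{align}

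I expect \labelcref{eq_fourterm} to be the crux, precisely because $f$ is non-monotone: unlike the monotone case (\cref{coro_1half}) one cannot lower-bound $f(\x\vee\optcont)$ by $f(\optcont)$. To prove it I would set $\p:=\x\wedge\optcont$ and $\w:=\z^*=(\optcont-\x)\vee\zero$, which satisfy the two identities $\optcont=\p+\w$ and $\x\vee\optcont=\x+\w$. The engine is the vectorized diminishing-returns inequality implied by the DR property (\cref{def_dr}): for DR-submodular $f$ and $\s\leqco\t$, $\a\geqco\zero$ with all points in $\X$, summing the coordinatewise DR property along a telescoping path yields $f(\s+\a)-f(\s)\geq f(\t+\a)-f(\t)$. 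Applying this with $\s=\w$, $\t=\z\vee\w$, $\a=\x$ (feasible because $\z\vee\w\leqco\bar\u-\x$) and discarding the non-negative term $f(\z\vee\w+\x)$ gives the non-monotone replacement
\begin{align}
f(\x\vee\optcont)+f(\z\vee\z^*)\geq f(\w)+f(\z\vee\w+\x)\geq f(\z^*).
\end{align}
Taking instead $\s=\zero$, $\t=\w$, $\a=\p$ in the same inequality gives subadditivity $f(\optcont)=f(\p+\w)\leq f(\p)+f(\w)=f(\x\wedge\optcont)+f(\z^*)$. Combining the last two displays with $f(\z\wedge\z^*)\geq 0$ establishes \labelcref{eq_fourterm}.

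Finally I would assemble the pieces: substituting \labelcref{eq_fourterm} into the sum of the two \cref{lemma_3_1} bounds yields $2f(\x)+2f(\z)\geq f(\optcont)-g_\P(\x)-g_\Q(\z)+\tfrac{\mu}{2}\big(\|\x-\optcont\|^2+\|\z-\z^*\|^2\big)$, and dividing by $4$ together with $\max\{f(\x),f(\z)\}\geq\tfrac12(f(\x)+f(\z))$ delivers the claimed bound. The hypotheses enter only in three places: non-negativity (to drop $f(\z\vee\w+\x)$, $f(\z\wedge\z^*)$ and to use $f(\zero)\geq0$), down-closedness of $\P$ (to certify $\z^*\in\Q$), and the DR property (for the vectorized diminishing-returns inequality); strong DR-submodularity is used only inside \cref{lemma_3_1} and merely carries the quadratic terms along.
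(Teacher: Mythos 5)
Your proposal is correct and follows essentially the same route as the paper's proof: two applications of \cref{lemma_3_1} (at $\x$ with $\y=\x^*$ and at $\z$ with $\y=\z^*$, after certifying $\z^*\in\Q$), reduction to the same four-term inequality (the paper's \cref{claim_key}), and a proof of that claim via the same two DR-property comparisons — your auxiliary point $\z\vee\z^*+\x$ is exactly the paper's $(\x+\z)\vee\x^*$, and your subadditivity step $f(\p)+f(\w)\geq f(\p+\w)+f(\zero)$ is the paper's second inequality with the roles of the two increments swapped. The only cosmetic difference is that you state the telescoped (vectorized) DR inequality explicitly, which the paper invokes implicitly.
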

\cref{fig_local_global_nonmotone} provides a two-dimensional visualization of \cref{local_global}.
Notice that the smaller constraint $\Q$ is generated after the first stationary point $\x$ is calculated.
\begin{figure}[htbp]
	\centering
	\includegraphics[width=0.6\textwidth]{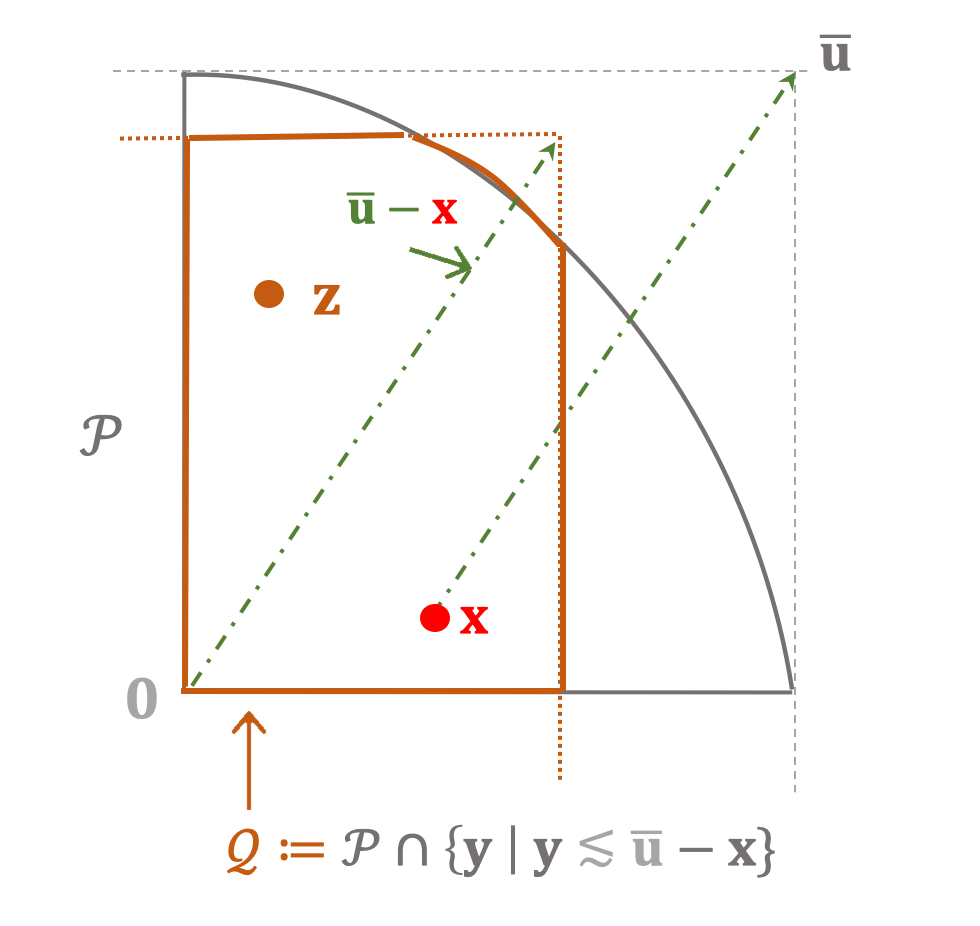}
	\caption{Visualization of the local-global relation in the non-monotone setting.}
	\label{fig_local_global_nonmotone}
\end{figure}

\paragraph{Proof sketch of Proposition \ref{local_global}:}
The proof uses \cref{lemma_3_1}, the non-stationarity
in \labelcref{non_stationary} and a key observation in the following
\namecref{claim_key}.  The detailed proof is deferred to
\cref{app_claim_proof}.
\begin{restatable}[]{claim}{keyclaim}
	\label{claim_key}
	Under the setting of \cref{local_global}, it holds that,
	\begin{align}
	f(\x\vee \x^*) + f(\x \wedge \x^*) +  f(\z\vee \z^*) + f(\z \wedge \z^*) \geq f(\x^*).
	\end{align}
\end{restatable}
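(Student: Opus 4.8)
The plan is to run the argument through the first-order (DR) characterization rather than the raw $0^{\text{th}}$-order definition, with a single work-horse lemma. For a continuous DR-submodular $f$ I would first record the \emph{multi-coordinate DR inequality}: whenever $\a\leqco\b$ in $\X$ and $\w\geqco\zero$ with $\a+\w,\b+\w\in\X$,
\begin{align*}
f(\a+\w)-f(\a)\ \geq\ f(\b+\w)-f(\b).
\end{align*}
This is immediate from the DR property (\cref{def_dr}, equivalently \cref{lemma_dr_antitone}) by adding the coordinates of $\w$ one at a time and telescoping. Taking $\a=\zero$ gives, for any $\p,\q\geqco\zero$ with $\p+\q\in\X$, the subadditivity bound $f(\p+\q)\leq f(\p)+f(\q)-f(\zero)\leq f(\p)+f(\q)$ (using $f(\zero)\geq0$ from non-negativity of $f$), which is the device that converts a lattice point into a sum of function values.

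Next I would set up bookkeeping through the coordinate split $P:=\{i:x_i\leq x^*_i\}$ and $N:=\{i:x_i>x^*_i\}$. By definition $\z^*=\x\vee\x^*-\x$ is supported on $P$, with $\z^*_P=\x^*_P-\x_P$ and $\z^*_N=\zero$, and one has the exact identities $\x\wedge\x^*+\z^*=\x^*$ and $\x\vee\x^*=\x+\z^*$; thus $\x\vee\x^*$ agrees with $\x^*$ on $P$ while $\x\wedge\x^*$ agrees with $\x^*$ on $N$. I would also note the feasibility facts enforced by $\Q$: the bound $\z^*\leqco\bar\u-\x$ holds automatically (since $\x\vee\x^*\leqco\bar\u$), and $\z\leqco\bar\u-\x$ because $\z\in\Q$.

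The clean step is to charge $f(\x^*)$ to the first phase. Applying the multi-coordinate DR inequality with $\a=\zero$, $\b=\x\wedge\x^*$, $\w=\z^*$ and using $\x\wedge\x^*+\z^*=\x^*$ gives
\begin{align*}
f(\x^*)\ \leq\ f(\x\wedge\x^*)+f(\z^*)-f(\zero)\ \leq\ f(\x\wedge\x^*)+f(\z^*).
\end{align*}
This already produces one of the four target terms and reduces the claim to a single phase-two estimate, namely $f(\z^*)\leq f(\x\vee\x^*)+f(\z\vee\z^*)+f(\z\wedge\z^*)$; adding this to the displayed bound and restoring the discarded $f(\zero)\geq0$ yields the statement.

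This phase-two estimate is where the difficulty concentrates, and it is the step I expect to fight with. It is \emph{not} true that $f(\z^*)\leq f(\z\vee\z^*)+f(\z\wedge\z^*)$ on its own: for the (non-monotone) DR-submodular $f(y_1,y_2)=y_1+y_2-y_1y_2$ on $[0,2]^2$, taking $\z^*=(2,0)$ and $\z=(0,2)$ gives $2=f(\z^*)>f(2,2)+f(0,0)=0$. What saves the claim is the coupling forced by $\Q$: one cannot realize such a configuration while simultaneously honouring $\z^*=\x\vee\x^*-\x$, $\z\leqco\bar\u-\x$ and non-negativity of $f$ on all of $[\zero,\bar\u]$ without the leftover term $f(\x\vee\x^*)=f(\x+\z^*)$ growing to absorb the deficit. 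Accordingly I would prove the phase-two estimate by a second application of the multi-coordinate DR inequality inside the sub-box $[\zero,\bar\u-\x]$, pairing the increment $(\z^*-\z)_+$ against $\z\vee\z^*$ and $\z\wedge\z^*$ and using $\x\vee\x^*=\x+\z^*$ together with $\z^*\leqco\bar\u-\x$ to keep every intermediate point feasible. The genuinely delicate point is that the DR inequality and submodularity only ever bound \emph{joins and meets} from above, so $f$ at the ``small'' point $\z^*$ cannot be controlled by $f$ at larger points without monotonicity; the coordinate split and the box constraint must therefore be combined so that the bound is assembled purely out of the subadditivity form $f(\p+\q)\leq f(\p)+f(\q)$ and non-negativity, never out of a monotonicity step. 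I expect the careful matching of the feasible increments to the four target terms, rather than any single inequality, to be the crux of the proof.
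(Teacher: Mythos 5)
Your first step is exactly the paper's inequality \labelcref{proof_part2}: applying the multi-coordinate DR inequality with base pair $(\zero,\ \x\wedge\x^*)$ and increment $\z^*$ (via the identity $\x\wedge\x^* + \z^* = \x^*$) gives $f(\x^*) \leq f(\x\wedge\x^*) + f(\z^*)$, and your reduction of the claim to the ``phase-two estimate'' $f(\z^*) \leq f(\x\vee\x^*) + f(\z\vee\z^*) + f(\z\wedge\z^*)$ is sound. The gap is that this estimate --- which you yourself flag as the crux --- is never proved, and the sketch you offer for it fails. Pairing the increment $(\z^*-\z)_+ = \z\vee\z^* - \z$ against the base pair $(\z\wedge\z^*,\ \z)$ gives $f(\z^*) - f(\z\wedge\z^*) \geq f(\z\vee\z^*) - f(\z)$, which is just submodularity at $\z,\z^*$ and bounds $f(\z^*)$ from \emph{below}, the wrong direction. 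Your prescription that the bound be assembled ``purely out of the subadditivity form and non-negativity'' is also a misdiagnosis: subadditivity is the $\a=\zero$ special case of the DR inequality, and no inequality based at $\zero$ or supported inside the sub-box $[\zero, \bar\u - \x]$ alone can upper-bound $f$ at the small point $\z^*$ by values at larger points; what is needed is the genuinely two-point comparison with a nonzero base pair and an increment coming from \emph{outside} that sub-box.

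The missing move (the paper's inequality \labelcref{proof_part1}) is to take the base pair $\a = \z^* \leqco \b = \z\vee\z^*$ and the common increment $\w = \x$ --- the increment is $\x$ itself, not anything built from $\z$ and $\z^*$. Feasibility is exactly what $\Q$ buys: $\z \leqco \bar\u - \x$ and $\z^* = \x\vee\x^* - \x \leqco \bar\u - \x$ give $\b + \w = \x + \z\vee\z^* = (\x+\z)\vee\x^* \in \X$, while $\a + \w = \x + \z^* = \x\vee\x^*$. The DR inequality then yields $f(\x\vee\x^*) - f(\z^*) \geq f\bigl((\x+\z)\vee\x^*\bigr) - f(\z\vee\z^*)$, and discarding $f\bigl((\x+\z)\vee\x^*\bigr) \geq 0$ by non-negativity gives $f(\z^*) \leq f(\x\vee\x^*) + f(\z\vee\z^*)$, i.e.\ your phase-two estimate with the $f(\z\wedge\z^*)$ term not even needed. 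So your scaffolding has the right shape and coincides with the paper's proof for the first half, but the essential construction of the auxiliary point $(\x+\z)\vee\x^*$ via the increment $\x$ is absent, and without it the argument does not close.
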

Note that \citet{chekuri2014submodular,gillenwater2012near} propose a
similar relation for the special cases of the multilinear/softmax
extensions by mainly proving the same conclusion as in
\cref{claim_key}. Their relation does not incorporate the properties
of non-stationarity or strong DR-submodularity.  They both use the
proof idea of constructing a specialized auxiliary set function
tailored to specific DR-submodular functions (the considered extensions).  We present a different
proof method by directly utilizing the DR property on carefully
constructed auxiliary points (e.g., $(\x+\z)\vee \x^*$ in the proof of
\cref{claim_key}), which is arguably more succinct and straightforward
than that of \citet{chekuri2014submodular,gillenwater2012near}.

\section{Exemplary Applications   of Continuous Submodular Optimization}\label{sec_app}

Continuous submodularity naturally finds applications in various
domains, ranging from
influence and revenue maximization, to DPP MAP inference and mean
field inference of probabilistic graphical models. We
discuss several concrete problem instances in this section.

\subsection{Submodular Quadratic Programming (SQP)}

Non-convex/non-concave QP problems of the form
$f(\x) = \frac{1}{2} \x^\trans \bmH \x + \bh^\trans \x + c$ under
convex constraints naturally arise in many applications, including
scheduling \citep{DBLP:journals/jacm/Skutella01}, inventory theory,
and free boundary problems.  A special class of QP is the submodular
QP (the minimization of which was studied in \citet{kim2003exact}), in
which all off-diagonal entries of $\bmH$ are required to be
non-positive.  {Price optimization} with continuous prices is a
DR-submodular quadratic program \citep{ito2016large}.

Another representative  class of  DR-submodular quadratic objectives
arise when computing the {stability number}  $s(G)$ of a graph $G= (V, E)$,
${s(G)}^{-1} = \min_{\x\in \Delta}\x^\trans (\bmA + \bmI)\x$,
where $\bmA$ is the adjacency matrix of the graph $G$, $\Delta$ is the
standard simplex \citep{motzkin1965maxima}.  This instance is a
convex-constrained monotone DR-submodular maximization problem.

\subsection{Continuous Extensions of Submodular Set Functions}
\label{sec_app_multilinear}

The {\em Lov{\'a}sz  extension} \citep{lovasz1983submodular} used for
submodular set function minimization is both submodular and convex
(see Appendix A of \citet{bach2015submodular}).

The {\em multilinear extension} \citep{calinescu2007maximizing} is
extensively used for submodular set function maximization.  It is the
expected value of $F(S)$ under the fully factorized  surrogate distribution
$q(S|{\x}):= \prod_{i\in S}x_i \prod_{j\notin S}(1-x_j),
\x\in[0,1]^\groundset$:
\begin{align}\label{eq_multilinear_ext}
\multi(\x) := \E_{q(S\mid \x)} [F(S)] =\sum_{S\subseteq
	\groundset}F(S)\prod_{i\in S}x_i \prod_{j\notin S}(1-x_j).
\end{align}
$\multi(\x)$ is DR-submodular and coordinate-wise linear
\citep{bach2015submodular}.
The partial derivative of $\multi(\x)$ can be expressed as,
\begin{align}
\nabla_i \multi(\x)  &
= \E_{q(S\mid \x, x_i = 1)} [F(S)]  - \E_{q(S\mid \x, x_i=0)} [F(S)]\\\notag
& =\multi(\sete{x}{i}{1}) - \multi(\sete{x}{i}{0})
\\\notag
&  =  \sum_{S\subseteq \groundset, S\ni i } F(S)
\prod_{j \in S\backslash\{i\}}x_j
\prod_{j'\notin S}(1-x_{j'})\\\notag
& \quad  - \sum_{S\subseteq
	\groundset\backslash \{i\} }\ F(S) \prod_{j\in
	S} x_j \prod_{j'\notin S, j'\neq i}
(1-x_{j'}).
\end{align}

At the first glance, evaluating the multilinear extension
in \cref{eq_multilinear_ext} costs
an exponential number of
operations. However, when used in practice, one can often use sampling
techniques to estimate its value and gradient. Furthermore, it is
worth noting that for several classes of practical submodular set
functions, their multilinear extensions $\multi(\cdot)$ admit closed form
expressions.  We present details in the following.

\subsubsection{Gibbs Random Fields}\label{gibbs_multilinear}

Let us use $\v \in \{0,1\}^\groundset  $ to equivalently
denote the $n$ binary random variables in a Gibbs random field.
$F(\v)$ corresponds to the negative energy function
in Gibbs random fields. If the energy function is
parameterized with a finite order of interactions, i.e.,
$F(\v) = \sum_{s\in \groundset} \theta_s v_s + \sum_{(s,t)\in
	\groundset \times \groundset} \theta_{s, t}v_s v_t + ... +
\sum_{(s_1, s_2, ..., s_d)} \theta_{s_1, s_2, ..., s_d}v_{s_1} \cdots
v_{s_d},  \; d< \infty$, then one can verify that its
multilinear extension has the following closed form,
\begin{align}
\multi(\x)
= \sum_{s\in \groundset} \theta_s x_s + \sum_{(s,t)\in \groundset
	\times \groundset} \theta_{s, t}x_s x_t + ...  \\\notag
+  \sum_{(s_1, s_2,
	..., s_d)} \theta_{s_1, s_2, ..., s_d}x_{s_1} \cdots  x_{s_d}\,.
\end{align}

The gradient of this expression can also be easily derived.  Given
this observation, one can quickly  derive the multilinear extensions
of a large category of energy functions of Gibbs random fields, e.g.,
graph cut, hypergraph cut, Ising models, etc.
Specifically,

\paragraph{Undirected  \maxcut.}
For undirected \maxcut, its objective is
$F(\v) = \frac{1}{2}\sum_{(i,j)\in E} w_{ij} (v_i + v_j -2v_i v_j), \v
\in \{0,1\}^\groundset $.
One can verify that its multilinear extension is
$\multi(\x) = \frac{1}{2}\sum_{(i,j)\in E} w_{ij} (x_i + x_j -2x_i
x_j), \x \in [0,1]^\groundset $.

\paragraph{Directed \maxcut.} For directed \maxcut, its objective is
$F(\v) = \sum_{(i,j)\in E} w_{ij} v_i (1- v_j), \v \in
\{0,1\}^\groundset $.
Its multilinear extension is
$\multi(\x) = \sum_{(i,j)\in E} w_{ij} x_i (1- x_j), \x \in
[0,1]^\groundset $.

\paragraph{Ising models.}
For Ising models \citep{ising1925contribution} with non-positive pairwise interactions (antiferromagnetic interactions),
$F(\v) = \sum_{s\in \groundset} \theta_s v_s + \sum_{(s,t)\in E}
\theta_{st}v_s v_t$,
$\v\in \{0, 1 \}^\groundset$, this objective can be easily verified to
be submodular.
Its multilinear extension is:
\begin{align}
\multi(\x)= \sum_{s\in \groundset} \theta_s x_s + \sum_{(s,t)\in E}
\theta_{st}x_s x_t, \x \in [0,1]^\groundset.
\end{align}

\subsubsection{Facility Location and FLID (Facility Location Diversity)}

FLID is a diversity model \citep{Tschiatschek16diversity} that has
been designed as a computationally efficient alternative to DPPs
\citep{kulesza2012determinantal}.  It is based on
the facility location objective.  Let
$\BW\in \R_+^{|\groundset|\times D }$ be the weights, each row
corresponds to the latent representation of an item, with $D$ as the
dimensionality. Then
\begin{align}\notag
F(S) := & \sum\nolimits_ {i\in S} u_i + \sum\nolimits_{d=1}^{D} (
\max_{i\in S} W_{i,d} - \sum\nolimits_{i\in S} W_{i,d} ) \\
=& \sum\nolimits_ {i\in S} u'_i + \sum\nolimits_{d=1}^{D}
\max_{i\in S}W_{i,d},
\end{align}
which models both coverage and diversity, and
$u'_i = u_i - \sum_{d=1}^{D} W_{i,d}$. If $u'_i=0$, one recovers the
facility location objective.
The computational complexity of evaluating its partition function is
$\bigo{|\groundset|^{D+1}}$ \citep{Tschiatschek16diversity}, which is
exponential in terms of $D$.

We now show the technique such that $\multi(\x)$ and
$\nabla_i \multi(\x) $ can be evaluated in $\bigo{Dn^2}$ time.
Firstly, for one $d\in [D]$, let us sort $W_{i,d}$ such that
$W_{i_d(1), d} \leq W_{i_d(2), d} \leq \cdots \leq W_{i_d(n), d} $.
After this sorting, there are $D$ permutations to record:
$i_d(l), l=1,...,n, \forall d\in [D]$.  Now, one can verify that
\begin{align}
\multi(\x)   \notag
& =  \sum_ {i \in [n]} u'_i x_i +  \sum_d
\sum_{S\subseteq \groundset }  \max_{i \in S} W_{i, d}
\prod_{m\in S}x_m \prod_{m'\notin S}(1-x_{m'}) \\\notag
& = \sum_ {i\in [n]} u'_i x_i + \sum_{d} \sum_{l=1}^n
W_{i_d(l), d} x_{i_d(l)} \prod_{m=l+1}^n [1-
x_{i_d(m)}].
\end{align}
Sorting costs $\bigo{Dn\log n}$, and from the above expression, one
can see that the cost of evaluating $\multi(\x)$ is $\bigo{Dn^2}$. By the
relation that
$\nabla_i \multi(\x) = \multi(\sete{x}{i}{1}) -
\multi(\sete{x}{i}{0})$,
the cost is also $\bigo{Dn^2}$.

\subsubsection{Set Cover Functions}
\label{supp_setcover}
Suppose there are $|C| = \{c_1, ...,c_{|C|}\}$ concepts, and $n$ items
in $\groundset$. Give a set $S\subseteq \groundset$, $\Gamma (S)$
denotes the set of concepts covered by $S$. Given a modular function
$\m: 2^C \mapsto \R_+ $, the set cover function is defined as
$F(S) = \m (\Gamma(S))$.
This function models coverage in maximization,
and also the notion of complexity in minimization problems \citep{lin2011optimal}.
Let us define an inverse map $\Gamma^{-1}$, such that for
each concept $c$, $\Gamma^{-1}(c)$ denotes the set
of items $v$ such that $\Gamma^{-1}(c) \ni v$. So the
multilinear extension is,
\begin{align}\notag
\multi(\x)  & =  \sum\nolimits_{i \in \groundset}  \m (\Gamma(S))  \prod\nolimits_{m\in S}x_m \prod\nolimits_{m'\notin S }(1-x_{m'}) \\
& =  \sum\nolimits_ {c\in C}  m_c \left[  1- \prod\nolimits_{ i\in \Gamma^{-1}(c) }  (1- x_i) \right].
\end{align}
The last equality is achieved by considering the situations
where a concept $c$ is covered.
One can observe that both $\multi(\x)$ and $\nabla_i \multi(\x) $ can
be evaluated in $\bigo{n|C|}$ time.

\subsubsection{General Case: Approximation by Sampling}

In the most general case, one may only have access to the function values of $F(S)$.
In this scenario, one can use a polynomial number of sample steps to estimate
$\multi(\x)$ and its gradients.

Specifically: 1) Sample $k$ times
$S \sim q(S|\x)$ and evaluate function values for them, resulting in
$F(S_1), ...,F(S_k)$.  2) Return the average
$\frac{1}{k}\sum_{i=1}^{k} F(S_i)$. According to the Hoeffding bound
\citep{hoeffding1963probability}, one can easily derive that
$\frac{1}{k}\sum_{i=1}^{k} F(S_i)$ is arbitrarily close to
$\multi(\x)$ with increasingly more samples: With probability at least
$1- \exp(-k\epsilon^2/2)$, it holds that
$|\frac{1}{k}\sum_{i=1}^{k} F(S_i) - \multi(\x)| \leq \epsilon \max_S
|F(S)| $, for all $\epsilon > 0$.

\subsection{Influence Maximization with Marketing Strategies}
\label{app_influence_max_marketing_strategies}

\citet{kempe2003maximizing} propose a general marketing strategy
for influence maximization. They
assume that there exists a number $m$ of different marketing actions
$M_i$, each of which may affect some subset of nodes by increasing
their probabilities of being activated.  A natural requirement would be
that the more we spend on any one action, the stronger should be its
effect.
Formally, one chooses $x_i$ investments to marketing action $M_i$, so
a marketing strategy is an $m$-dimensional vector $\x\in \R^m$.  Then
the probability that node $i$ will become activated is described by
the activation function: $a^i(\x): \R^m \rightarrow [0, 1]$. This
function should satisfy the DR property by assuming that any
marketing strategy is more effective when the targeted individual is
less ``marketing-saturated'' at that point.

Now we search for the expected size of the final active set, which
is the expected influence. We know that given a marketing strategy
$\x$, a node $i$ becomes active with probability $a^i(\x)$, so the
expected influence is:
\begin{align}\label{influence_general_marketing}
f(\x) = \sum_{S\subseteq V} F(S) \prod_{i\in S} a^i(\x)
\prod_{j\notin S} (1 - a^j(\x)).
\end{align}
$F(S)$ is the influence with the seeding set as $S$. It is submodular
for many influence models, such as the Linear Threshold model and
Independent Cascade model of \citet{kempe2003maximizing}.  One can
easily see that \cref{influence_general_marketing} is DR-submodular by
viewing it as a composition of the multilinear extension of $F(S)$ and
the activation function $a(\x)$.

\subsubsection{Realizations of the Activation Function $a(\x)$}
\label{app_activations_influence_max}

For the activation function $a^i(\x)$, we consider two realizations:

\begin{enumerate}
	\item Independent marketing action.

	Here we provide one action for each customer, and different
	actions are independent. So we have $m = |V|$ actions, and for customer
	$i$, there exists an activation function $a^i(x_i)$, which is a one
	dimensional nondecreasing DR-submodular function. A specific
	instance is that $a^i(x_i) = 1 - (1 - p_i)^{x_i}$, $p_i \in [0, 1]$
	is the probability of customer $i$ becoming activated with one unit of
	investment.

	\item Bipartite marketing actions.

	Suppose there are $m$ marketing actions and $|V|$ customers.  The
	influence relationship among actions and customers are modeled as a
	bipartite graph $(M, V; W)$, where $M$ and $V$ are collections of
	marketing actions and customers, respectively, and $W$ is the collection
	of weights.  The edge weight, $p_{st}\in W$, represents the
	influence probability of action $s$ to customers $t$ by providing one
	unit of investment to action $s$.  So with a marketing strategy as
	$\x$, the probability of a customer $t$ being activated is
	$a^t(\x) = 1- \prod_{(s, t)\in W} \left(1-p_{st} \right)^{x_s}$.
	This is a nondecreasing DR-submodular function.

\end{enumerate}

One may notice that the independent marketing action is a special
case of bipartite marketing action.

\subsection{Optimal Budget Allocation with Continuous Assignments}

Optimal budget allocation is a special case of the influence
maximization problem.  It can be modeled as a bipartite graph
$(S,T; W)$, where $S$ and $T$ are collections of advertising channels
and customers, respectively.
The edge weight, $p_{st}\in W$, represents the influence probability
of channel $s$ to customer $t$.
The goal is to distribute the budget (e.g., time for a TV
advertisement, or space of an inline ad) among the source nodes, and
to maximize the expected influence on the potential customers
\citep{soma2014optimal,DBLP:conf/aaai/HatanoFMK15}.

The total influence of customer $t$ from all channels can be modeled
by a proper monotone DR-submodular function $I_t(\x)$, e.g.,
$I_t(\x) = 1- \prod_{(s, t)\in W} \left(1-p_{st} \right)^{x_s}$ where
$\x\in \R^S_+$ is the budget assignment among the advertising
channels.  For a set of $k$ advertisers, let $\x^i\in \R^S_+$ be
the budget assignment for advertiser $i$, and
$\x:= [\x^1,\cdots, \x^k]$ denote the assignments for all the
advertisers.  The overall objective is,
\begin{flalign}
& g(\x)= \sum\nolimits_{i=1}^k \alpha_i f(\x^i) ~\text{ with }~\\
& f(\x^i) :=\sum\nolimits_{t\in T} I_t(\x^i), \; \zero \leqco
\x^i\leqco \bar \bu^i , \forall i = 1,..., k,
\end{flalign}
which is monotone DR-submodular.

A concrete application arises when advertisers bid for search
marketing, i.e., where vendors bid for the right to appear alongside
the results of different search keywords.
Here, $x^i_s$ is the volume of advertisement space allocated to the
advertiser $i$ to show his ad alongside query keyword $s$.  The search
engine company needs to distribute the budget (advertising space) to
all vendors to maximize their influence on the
customers, %
while respecting various constraints. For example, each vendor has a
specified budget limit for advertising, and the ad space associated
with each search keyword can not be too large.
All such constraints can be formulated as a down-closed polytope $\P$,
hence the \submodularfw algorithm (\cref{alg_sfmax_GradientAscend} in \cref{sec_mono_dr_fun})  can be used to find an
approximate solution for the problem $\max_{\x\in \P} g(\x)$.

Note that one can flexibly add regularizers in designing
$I_t(\x^i)$ as long as it remains monotone DR-submodular.
For example, adding separable regularizers of the form
$\sum_s \phi(x^i_s)$ does not change off-diagonal entries of the
Hessian, and hence maintains submodularity. Alternatively, bounding
the second-order derivative of $\phi(x^i_s)$ ensures DR-submodularity.

\subsection{Softmax Extension for DPPs}

Determinantal point processes (DPPs) are probabilistic models of
repulsion, which have been used to model diversity in machine learning
\citep{kulesza2012determinantal}. The constrained MAP (maximum a
posteriori) inference problem of a DPP is an NP-hard combinatorial
problem in general. Currently, the methods with the best approximation
guarantees are based on either maximizing the multilinear extension
\citep{calinescu2007maximizing} or the softmax extension
\citep{gillenwater2012near}, both of which are continuous DR-submodular
functions.

The multilinear extension is given as an expectation over the original
set function values, thus evaluating the objective of this extension
requires expensive sampling in general.  In contrast, the {\em softmax extension} has a
closed form expression, which is more appealing from a
computational perspective.
Let $\bmL$ be the positive semidefinite kernel matrix of a DPP, its
softmax extension is:
\begin{flalign}\label{eq_softmax}
f(\x) = \log \text{det\;}[{\diag(\x)(\bmL-\bmI) +\bmI}], \x\in [0,1]^n,
\end{flalign}
where $\bmI$ is the identity matrix, $\diag(\x)$ is the diagonal
matrix with diagonal elements set as $\x$.
Its DR-submodularity can be established by directly applying
Lemma 3 of \citet{gillenwater2012near},  which immediately implies
that all  entries of  $\nabla^2  f$ are non-positive, so $f(\x)$
is continuous DR-submodular.

The problem of MAP
inference in DPPs corresponds to the problem $\max_{\x\in \P} f(\x)$,
where $\P$ is a down-closed convex constraint, e.g., a matroid
polytope or a matching polytope.

\subsection{Mean Field Inference for Probabilistic Log-Submodular Models}

Probabilistic log-submodular models \citep{djolonga14variational} are a class of
probabilistic models over subsets of a ground set $\groundset = [n]$,
where the log-densities are submodular set functions $F(S)$:
$p(S) = \frac{1}{\parti}\exp(F(S))$. The partition function
$ \parti = \sum_{S\subseteq \groundset}\exp(F(S))$ is typically hard to
evaluate.  One can use mean field inference to approximate $p(S)$ by
some factorized distribution
$q(S|\x):= \prod_{i\in S}x_i \prod_{j\notin S}(1-x_j), \x\in
[0,1]^n$,
by minimizing the distance measured w.r.t. the Kullback-Leibler
divergence between $q$ and $p$, i.e.,
$ \sum_{S\subseteq \groundset} q(S|\x)
\log\frac{q(S|\x)}{p(S)}$. It is,

\begin{align}
\text{KL}(\x) & =
-\sum_{S\subseteq \groundset}F(S)  \prod_{i\in S}x_i \prod_{j\notin S}(1-x_j) +\\\notag
&  \sum\nolimits_{i=1}^{n} [x_i\log x_i + (1-x_i)\log(1-x_i)] + \log \parti.
\end{align}

$ \text{KL}(\x)$ is \irsuper w.r.t. $\x$. To see this: The first term
is the negative of a multilinear extension, so it is \irsuper. The
second term is separable, and coordinate-wise convex, so it will not
affect the off-diagonal entries of $\nabla^2 \text{KL}(\x)$, it will
only contribute to the diagonal entries.  Now, one can see that all
entries of $\nabla^2 \text{KL}(\x)$
are non-negative, so $\text{KL}(\x)$ is \irsuper w.r.t. $\x$.
Minimizing the Kullback-Leibler divergence $\text{KL}(\x)$ amounts to
maximizing a DR-submodular function.

\subsection{Revenue Maximization with Continuous Assignments}
\label{sec_revenue_max}

Given a social connection graph with nodes denoting $n$ users and
edges encoding their connection strength, the viral marketing suggests to choose a small subset of buyers to
give them some product for free, to trigger a cascade of further
adoptions through ``word-of-mouth'' effects, in order to maximize the
total revenue \citep{hartline2008optimal}.  For some products (e.g.,
software), the seller usually gives away the product in the form of a
trial, to be used for free for a limited time period.  In this task,
except for deciding whether to choose a user or not, the sellers also
need to decide how much the free assignment should be, in which the
assignments should be modeled as continuous variables.
We call this problem \emph{revenue maximization with continuous
	assignments}.

We use a directed graph $G = (\groundset, E; \BW)$ to represent the
social connection graph. $\groundset$ contains all the $n$ users, $E$
is the edge set, and $\BW$ is the adjacency matrix. We treat the
undirected social connection graph as a special case of the directed
graph, by taking one undirected edge as two directed edge with the
same weight.

\subsubsection{A Variant of the Influence-and-Exploit (IE) Strategy}
\label{app_special_case_of_ie}

One model with ``discrete'' product assignments is considered by
\citet{soma2017non} and \citet{durr2019non}, motivated by the
observation that giving a user more free products increases the
likelihood that the user will advocate this product.
It can be treated as a
simplified variant of the Influence-and-Exploit (IE) strategy of
\citet{hartline2008optimal}.
Specifically:

\begin{itemize}
	\item \emph{Influence} stage: Each user $i$ that is given
	$x_i$ units of products for free becomes an advocate of
	the product with probability $1 - q^{x_i}$ (independently from other
	users), where $q\in (0, 1)$ is a parameter. This is consistent with
	the intuition that with more free assignment, the user is more
	likely to advocate the product.

	\item \emph{Exploit} stage: suppose that a set $S$ of users advocate
	the product while the complement set $\groundset \setminus S$ of
	users do not. Now the revenue comes from the users in
	$\groundset \setminus S$, since they will be influenced by the
	advocates with probability proportional to the edge weights. We use
	a simplified concave graph model \citep{hartline2008optimal} for the
	value function, i.e.,
	$v_j(S) = \sum_{i\in S} W_{ij}, j\in \groundset \setminus S$.
	Assume for simplicity that the users of $\groundset \setminus S$ are
	visited independently with each other.  Then the revenue is:
	\begin{align}
	R(S) =\sum_{j\in\groundset \setminus S} v_j(S) = \sum_{j\in
		\groundset\setminus S} \sum_{i\in S} W_{ij}.
	\end{align}
	Notice that $S$ is a random set drawn according to the distribution
	specified by the continuous assignment $\x$.
\end{itemize}

With this Influence-and-Exploit (IE) strategy, the expected revenue is
a function $f: \R_+^\groundset \rightarrow \R_+$, as shown below:

\begin{align}\notag
f(\x) & = \epe[S]{R(S)}= \epe[S]{\sum_{i\in S} \sum_{j\in \groundset\setminus
		S}W_{ij} } \\\label{revenue_max_maxcut}
& = \sum_{i\in \groundset} \sum_{j\in \groundset\setminus
	\{i\}} W_{ij} (1- q^{x_i})q^{x_j}.
\end{align}
According to \cref{lemma_separable_reparameterization}, one can  see that the above objective is submodular, since it is composed by the multilinear extension of $R(S)$ (which is continuous submodular) and the separable function $h: \R^\groundset \rightarrow \R^\groundset$, where $h^i(x_i) = 1 - q^{x_i}$.

\subsubsection{An Alternative Model}
\label{app_revenue_max_alternative}

In addition to the Influence-and-Exploit (IE) model, we also consider  an alternative model.  Assume there are $q$ products
and $n$ buyers/users, let $\x^i \in \R_+^n$  be the assignments of
product $i$ to the $n$ users, let $\x:= [\x^1,\cdots, \x^q]$ denote
the assignments for the $q$ products.  The revenue can be modeled as
$g(\x) = \sum_{i=1}^q f(\x^i)$ with
\begin{flalign}\label{eq_re}
f(\x^i) := \alpha_i \sum\nolimits_{s: x^i_s =0} R_s(\x^i) + \beta_i
\sum\nolimits_{t: x^i_t \neq 0} \phi (x^i_t) +\gamma_i
\sum\nolimits_{t: x^i_t \neq 0} \bar R_t(\x^i),\\\notag \zero \leqco
\x^i \leqco \bar \bu^i,
\end{flalign}
where $x^i_t$ is the assignment of product $i$ to user $t$ for free,
e.g., the amount of free trial time or the amount of the product
itself.  $R_s(\x^i)$ models revenue gain from user $s$ who did not
receive the free assignment. It can be some non-negative,
non-decreasing submodular function. $\phi (x^i_t)$ models revenue gain
from user $t$ who received the free assignment, since the more one
user tries the product, the more likely he/she will buy it after the
trial period.  $\bar R_t(\x^i)$ models the revenue loss from user $t$
(in the free trial time period the seller cannot get profits), which
can be some non-positive, non-increasing submodular function.
For products with continuous assignments, usually the cost of the
product does not increase with its amount, e.g., the product as a
software, so we only have the box constraint on each assignment. The
objective in \cref{eq_re} is generally
\emph{non-concave/non-convex}, and non-monotone submodular (see
\cref{supp_revenue} for more details).
\begin{lemma}\label{revenue}
	If $R_s(\x^i)$ is non-decreasing submodular and $\bar R_t(\x^i)$ is
	non-increasing submodular, then $f(\x^i)$ in \cref{eq_re} is
	submodular.
\end{lemma}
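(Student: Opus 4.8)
The plan is to use the fact that submodularity is preserved under non-negative combinations, so it suffices to verify submodularity of each of the three summands of \cref{eq_re} separately (the weights $\alpha_i,\beta_i,\gamma_i$ being non-negative). Writing $\x$ for $\x^i$ throughout, the middle term $\beta_i\sum_{t:x_t\neq 0}\phi(x_t)$ is \emph{separable} --- a sum of univariate functions of the individual coordinates --- so all of its mixed second differences vanish and it is modular off the diagonal, hence submodular, with no assumption on $\phi$. The whole argument therefore reduces to the first term $T_1(\x):=\sum_{s:x_s=0}R_s(\x)$ and the third term $T_3(\x):=\sum_{t:x_t\neq 0}\bar R_t(\x)$. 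Because the indicators $x_s=0$ and $x_t\neq 0$ make these functions non-differentiable along the coordinate hyperplanes, I would verify the $0^{\text{th}}$-order definition of submodularity \labelcref{eq1} for them directly rather than passing through the Hessian characterization \labelcref{eq2}.

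For $T_1$ I would fix $\x,\y$ and establish $T_1(\x)+T_1(\y)\geq T_1(\x\vee\y)+T_1(\x\wedge\y)$ \emph{one index at a time}, since $T_1$ is a sum over $s$. The key combinatorial observation is how the activation pattern of an index transforms under the lattice operations: as all coordinates are non-negative, $(\x\vee\y)_s=0$ iff $x_s=0$ and $y_s=0$, whereas $(\x\wedge\y)_s=0$ iff $x_s=0$ or $y_s=0$. Splitting on the four cases for the pair $(x_s,y_s)$, the ``both zero'' case contributes $R_s$ to all four points and the per-index inequality is exactly submodularity of $R_s$; the ``neither zero'' case contributes nowhere; and in each ``exactly one zero'' case the index is active at one of $\{\x,\y\}$ and at $\x\wedge\y$ but not at $\x\vee\y$, so the per-index inequality collapses to $R_s(\x)\geq R_s(\x\wedge\y)$ (or its symmetric form), which holds by $\x\wedge\y\leqco\x$ and the \emph{non-decreasing} property of $R_s$. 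Summing over $s$ gives submodularity of $T_1$.

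The third term is handled by the mirror-image argument. Now $(\x\vee\y)_t\neq 0$ iff $x_t\neq 0$ or $y_t\neq 0$, and $(\x\wedge\y)_t\neq 0$ iff $x_t\neq 0$ and $y_t\neq 0$, so the ``both nonzero'' case is again submodularity of $\bar R_t$, while each ``exactly one nonzero'' case places index $t$ on $\x\vee\y$ rather than on $\x\wedge\y$ and reduces to $\bar R_t(\x)\geq\bar R_t(\x\vee\y)$, which follows from $\x\leqco\x\vee\y$ together with the \emph{non-increasing} property of $\bar R_t$. Assembling the three submodular pieces with their non-negative weights then yields submodularity of $f(\x^i)$.

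The main obstacle is exactly the ``exactly one active'' cases: there the family of summands defining $T_1$ (resp.\ $T_3$) is different at $\x\vee\y$ than at $\x\wedge\y$, so the two sides cannot be matched by the submodularity of $R_s$ (resp.\ $\bar R_t$) alone. What resolves them is that monotonicity enters in precisely the right direction in each term --- non-decreasing for $R_s$, whose stray term lands on the smaller point $\x\wedge\y$, and non-increasing for $\bar R_t$, whose stray term lands on the larger point $\x\vee\y$ --- which is why both monotonicity hypotheses of the lemma are indispensable.
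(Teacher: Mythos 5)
Your proof is correct and follows essentially the same route as the paper's: the paper likewise treats the middle term as modular, and proves submodularity of $\sum_{s:x_s=0}R_s(\x)$ and $\sum_{t:x_t\neq 0}\bar R_t(\x)$ by splitting indices according to the supports of $\x$ and $\y$, applying monotonicity of $R_s$ (resp.\ $\bar R_t$) to the indices lying in exactly one support and submodularity to the remaining ones. Your per-index four-case analysis is just a coordinate-wise rephrasing of that same grouping, with monotonicity entering in the same directions.
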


\subsection{Applications Generalized from the Discrete Setting}

Many discrete submodular problems can be naturally generalized to the
continuous setting with continuous submodular objectives.  The maximum
coverage problem and the problem of text summarization with submodular
objectives are among the examples \citep{lin2010multi}. We put details
in the sequel.

\subsubsection{Text Summarization}

Submodularity-based objective functions for text summarization perform
well in practice \citep{lin2010multi}.  Let $C$ be the set of all
concepts, and $\groundset$ be the set of all sentences.  As a
typical example, the concept-based summarization aims to find a subset
$S$ of the sentences to maximize the total credit of concepts covered
by $S$. \citet{soma2014optimal} considered extending the submodular
text summarization model to one that incorporates ``confidence''
of a sentence, which has a discrete value, and modeling the objective to
be an integer submodular function.  It is perhaps even more natural to consider continuous confidence values
$x_i\in [0, 1]$. Let us use $p_i(x_i)$ to denote the set of covered
concepts when selecting sentence $i$ with confidence level $x_i$, it
can be a monotone covering function
$p_i: \R_+ \rightarrow 2^C, \forall i\in \groundset$.  Then the
objective function of the extended model is
$f(\x) = \sum_{j\in \cup_i p_i(x_i) } c_j$, where $c_j\in \R_+$ is the
credit of concept $j$. It can be verified that this objective is a
monotone continuous submodular function.

\subsubsection{Maximum Coverage}

In the maximum coverage  problem, there are
$n$ subsets $C_1,..., C_n$ from the ground
set $V$.  One subset $C_i$ can be chosen
with ``confidence" level $x_i\in [0,1]$, the set of covered elements
when choosing subset $C_i$ with confidence $x_i$
can be modeled with the following monotone normalized covering function: $p_i: \R_+ \rightarrow 2^V, i=1,..., n$.
The target is to choose
subsets from $C_1,..., C_n$ with confidence level
to maximize the number of covered elements  $|\cup_{i=1}^n p_i(x_i)|$,
at the same time respecting the budget
constraint $\sum_i c_i x_i \leq b$ (where $c_i$ is the cost of choosing subset $C_i$).
This problem generalizes the classical maximum coverage problem.
It is easy to see that the objective
function is monotone submodular, and
the constraint is a down-closed polytope.

\subsubsection{Sensor Energy Management}
For cost-sensitive outbreak detection in sensor networks
\citep{leskovec2007cost}, one needs to place sensors in a subset of
locations selected from all the possible locations $\groundset$, to
quickly detect a set of contamination events $E$, while respecting the
cost constraints of the sensors.  For each location $v\in \groundset$
and each event $e\in E$, a value $t (v, e)$ is provided as the time it
takes for the placed sensor in $v$ to detect event
$e$. \citet{DBLP:conf/nips/SomaY15} considered the sensors with
discrete energy levels. It is natural to model the energy levels of
sensors to be a \emph{continuous} variable $\x\in \R_+^\groundset$.
For a sensor with energy level $x_v$, the success probability it
detects the event is $1-(1-p)^{x_v}$, which models that by spending
one unit of energy one has an extra chance of detecting the event with
probability $p$.  In this model, beyond deciding whether to place a
sensor or not, one also needs to decide the optimal energy levels. Let
$t_{\infty} = \max_{e\in E, v\in \groundset}t(v,e)$, let $v_e$ be the
first sensor that detects event $e$ ($v_e$ is a random variable).  One
can define the objective as the expected detection time that could be
\textit{saved},
\begin{flalign}
f(\x) := \mathbb{E}_{e\in E} \mathbb{E}_{v_e} [t_{\infty} - t(v_e,
e)],
\end{flalign}
which is a monotone DR-submodular function.  Maximizing $f(\x)$ w.r.t.
the cost constraints pursues the goal of finding the optimal energy
levels of the sensors, to maximize the expected detection time that
could be saved.

\subsubsection{Multi-Resolution Summarization}
Suppose we have a collection of items, e.g., images
$\groundset = \{\ele_1, ..., \ele_n\}$.
We follow the strategy to extract a representative summary, where
representativeness is defined w.r.t.~a submodular set function
$F:2^\groundset\to \mathbb{R}$. However, instead of returning a single
set, our goal is to obtain summaries at multiple levels of detail or
resolution. One way to achieve this goal is to assign each item
$\ele_i$ a nonnegative score $x_i$. Given a user-tunable threshold
$\tau$, the resulting summary $S_\tau=\{\ele_i | x_i \geq \tau\}$ is
the set of items with scores exceeding $\tau$. Thus, instead of
solving the discrete problem of selecting a fixed set $S$, we pursue
the goal to optimize over the scores, e.g., to use the following
continuous submodular function,
\begin{flalign}
f(\x) =  \sum\nolimits_{i \in \groundset} \sum\nolimits_{j\in \groundset}  \phi(x_j) s_{i,j}
- \sum\nolimits_{i \in \groundset} \sum\nolimits_{j \in \groundset} x_i x_j s_{i,j},
\end{flalign}
where $s_{i,j}\geq 0$ is the similarity between items $i,j$, and
$\phi(\cdot)$ is a non-decreasing concave function.

\subsubsection{Facility Location with Scales}

The classical discrete facility location problem can be
generalized to the continuous case where the scale of a facility is
determined by a continuous value in interval $[\zero, \bar \bu]$.  For a
set of facilities $\groundset$, let $\x\in \R_+^\groundset$ be the
scale of all facilities.  The goal is to decide how large each
facility should be in order to optimally serve a set $T$ of
customers. For a facility $s$ of scale $x_s$, let $p_{st}(x_s)$ be the
value of service it can provide to customer $t\in T$, where
$p_{st}(x_s)$ is a normalized monotone function ($p_{st}(0) =
0$).
Assuming each customer chooses the facility with highest value, the
total service provided to all customers is
$f(\x) = \sum_{t\in T} \max_{s\in \groundset} p_{st}(x_s)$.  It can be
shown that $f$ is monotone submodular.

\section{Algorithms for  Monotone \text{DR}-Submodular Maximization}
\label{sec_mono_dr_fun}

In this section,  we present
two classes of algorithms for maximizing a {\em monotone} continuous  DR-submodular  function subject to a  down-closed convex constraint.
The detailed proofs can be found in \cref{supp_proof_monotone_max}.
Even despite the monotonicity assumption, solving
the problem to optimality  is still a very challenging
task.  In fact, we prove the following hardness result:

\begin{proposition}[Hardness and Inapproximability]\label{prop_np}
	The problem of maximizing a monotone nondecreasing continuous
	DR-submodular function subject to a general down-closed
	\emph{polytope} constraint is NP-hard.  For any $\epsilon >0$, it
	cannot be approximated in polynomial time within a ratio of
	$(1-1/e+\epsilon)$ (up to low-order terms), unless RP = NP.
\end{proposition}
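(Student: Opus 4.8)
The plan is to prove hardness by reduction from the maximum coverage problem (equivalently, monotone submodular maximization under a cardinality constraint), whose tight $(1-1/e)$-inapproximability is classical. Since a general down-closed polytope subsumes the cardinality polytope, it suffices to exhibit a single hard family of instances whose constraint happens to be down-closed.

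First I would take a monotone submodular coverage function $F:2^\groundset\to\R_+$ and pass to its multilinear extension $\multi$ (see \cref{eq_multilinear_ext}). By the properties recalled in \cref{sec_app_multilinear}, $\multi$ is monotone nondecreasing and DR-submodular, and for coverage functions it admits the closed form derived in \cref{supp_setcover}, hence is evaluable in polynomial time, so the reduction stays within the class of efficiently represented instances. I would take as the feasible region the cardinality polytope $\P=\{\x\in[0,1]^n : \sum_{i} x_i \leq \card\}$, which is a down-closed convex polytope, so that $\max_{\x\in\P}\multi(\x)$ is a legitimate instance of problem \labelcref{setup}.

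The crux is to show that the continuous optimum equals the discrete optimum, i.e.\ $\max_{\x\in\P}\multi(\x)=\max_{|S|\leq \card}F(S)$. The inequality $\geq$ is immediate, since integral points of $\P$ are feasible. For $\leq$ I would invoke a pipage-rounding argument: because $\multi$ is coordinate-wise linear ($\partial^2\multi/\partial x_i^2=0$) and DR-submodular ($\partial^2\multi/\partial x_i\partial x_j\leq 0$), the second directional derivative of $\multi$ along $\bas_i-\bas_j$ equals $-2\,\partial^2\multi/\partial x_i\partial x_j\geq 0$, so $\multi$ is convex along every such direction. Given a fractional $\x\in\P$ I would pick two fractional coordinates and move along $\bas_i-\bas_j$ keeping $x_i+x_j$ fixed (thus preserving membership in $\P$); convexity guarantees the objective does not decrease when we push to the endpoint of the feasible segment that makes one coordinate integral. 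Iterating yields an integral $\mathbf 1_S\in\P$ (so $|S|\leq \card$) with $F(S)=\multi(\mathbf 1_S)\geq\multi(\x)$, establishing $\leq$.

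Putting the pieces together: a polynomial-time $(1-1/e+\epsilon)$-approximation for \labelcref{setup} would produce $\x\in\P$ with $\multi(\x)\geq(1-1/e+\epsilon)\,\text{OPT}$, and the rounding step above would return $S$ with $F(S)\geq\multi(\x)$ (via deterministic pipage rounding, or randomized swap rounding), giving a $(1-1/e+\epsilon)$-approximation for maximum coverage, which contradicts its inapproximability unless $\mathrm{RP}=\mathrm{NP}$. The main obstacle I anticipate is the lossless-rounding step: one must verify that the coordinate-wise linearity of the multilinear extension makes $\multi$ exactly (not merely weakly) convex along $\bas_i-\bas_j$ so that no value is lost, and that feasibility in $\P$ is maintained throughout the rounding; the efficient closed-form evaluability of $\multi$ for coverage functions is what keeps the entire reduction polynomial.
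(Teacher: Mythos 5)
Your proposal is correct and follows essentially the same route as the paper's proof: reduce from cardinality-constrained monotone submodular maximization (max coverage) by passing to the multilinear extension over the cardinality/matroid polytope, round the fractional solution back losslessly via pipage rounding, and invoke the $(1-1/e)$-inapproximability result of \citet{feige1998threshold}. The one refinement you add is specializing to coverage functions, whose multilinear extension is evaluable in closed form, which makes the reduction deterministic; the paper instead treats general monotone submodular $F$ and evaluates the extension by sampling, which is precisely why its conclusion is phrased as RP $=$ NP, whereas your variant would in fact support the stronger conclusion ``unless P $=$ NP.''
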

\cref{prop_np} can be proved  by
the reduction from the problem of maximizing
a monotone submodular set function subject to cardinality constraints.  The proof relies on   the techniques of multilinear extension \citep{calinescu2007maximizing,DBLP:journals/siamcomp/CalinescuCPV11} and pipage rounding \citep{ageev2004pipage},    and also the hardness results  of  \citet{feige1998threshold,calinescu2007maximizing}.

\begin{remark}
	Due to the NP-hardness of converging to the global optimum for
	Problem \labelcref{setup}, in the following by
	``convergence'' we mean converging to a solution point which has a
	constant factor approximation guarantee with respect to the global
	optimum.
\end{remark}

\subsection{Algorithms based on the Local-Global Relation: \nonconvexfw and \pga}

The first class of algorithms directly utilize the local-global
relation of \cref{coro_1half}. We know that any stationary point is a
1/2 approximate solution. Thus {\em any} solver that obtains a
stationary point yields a solution with a 1/2 approximation
guarantee. We give two concrete examples below.

\subsubsection{The \nonconvexfw Algorithm}
\label{sec_nonconvex_fw}

For sake of completeness, we summarize the \nonconvexfw
algorithm in \cref{nonconvex_fw}.

\begin{algorithm}[htbp]
	\caption{\nonconvexfw
		$(f, \P, K, \epsilon, \x^\pare{0})$\citep{lacoste2016convergence}
		for maximizing a smooth objective}\label{nonconvex_fw}
	\KwIn{$\max_{\x \in \P} f(\x)$, $f$: a smooth function, $\P$:
		{convex} set, $K$: number of iterations, $\epsilon$: stopping
		tolerance}
	\For{$k = 0, ... , K$}{ {find
			$\v^\pare{k} \text{ s.t. } \dtp{\v^\pare{k}}{\nabla
				f(\x^\pare{k})} \geq \max_{\v\in\P} \dtp{\v}{ \nabla
				f(\x^\pare{k})}$\tcp*{\emph{LMO}}}
		{$\d^\pare{k} \leftarrow \v^\pare{k} - \x^\pare{k}$,
			$g_k := \dtp{\d_k}{\nabla f(\x^\pare{k})}$ \tcp*{$g_k$:
				non-stationarity measure}} {\bfseries{if $g_k \leq \epsilon$
				then return $\x^\pare{k}$}\;} {Option I:
			$\gamma_k \in \argmin_{\gamma\in [0, 1]}f(\x^\pare{k} + \gamma
			\d^\pare{k})$,

			Option II: $\gamma_k \leftarrow \min \{\frac{g_k}{C}, 1 \}$ for
			$C\geq C_f(\P)$ \;}
		{$\x^\pare{k+1}\leftarrow \x^\pare{k} + \gamma_k \d^\pare{k}$ \;}
	} \KwOut{$\x^\pare{k'}$ and $g_{k'} = \min_{0\leq k\leq K} g_k$
		\tcp*{modified output solution compared to that of
			\citet{lacoste2016convergence}}}
\end{algorithm}
\cref{nonconvex_fw} is modified from \citet{lacoste2016convergence}.
The only difference lies in the output: we return the solution
$\x^\pare{k'}$ with the minimum non-stationarity, which is needed to
invoke the local-global relation. In contrast, \citet{lacoste2016convergence}
outputs the solution from the last iteration.
Since $C_f(\P)$ is generally hard to evaluate, we use the
classical oblivious \stepsize rule ($\frac{2}{k+2}$) and the Lipschitz
\stepsize rule
($\gamma_k = \min\{1, \frac{g_\pare{k}}{L \|\d^\pare{k}\|} \}$, where
$g_\pare{k}$ is the so-called Frank-Wolfe gap) in the experiments (\cref{sec_exp}).

\citet{hassani2017gradient} show that the {Projected Gradient
Ascent} algorithm (\pga) with constant \stepsize ($1/L$) can
converge to a stationary point, so it has a 1/2 approximation
guarantee.
We can also show that the \nonconvexfw of
\citet{lacoste2016convergence} has a 1/2 approximation guarantee
according to the local-global relation:

\begin{corollary}\label{coro_nonconvex_fw}
	The {non-convex Frank-Wolfe} algorithm (abbreviated as \nonconvexfw) of
	\citet{lacoste2016convergence} has a 1/2 approximation guarantee,
	and $1/\sqrt{k}$ rate of convergence for solving Problem \labelcref{setup} when the objective is monotone \nondec.
\end{corollary}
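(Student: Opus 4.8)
The plan is to combine two ingredients that are already in hand: the convergence rate of the Frank-Wolfe gap for smooth non-convex objectives due to \citet{lacoste2016convergence}, and the monotone local-global relation of \cref{coro_1half}. The bridge between them is the observation that the quantity $g_k := \dtp{\d^\pare{k}}{\nabla f(\x^\pare{k})}$ computed inside \cref{nonconvex_fw} is \emph{exactly} the non-stationarity measure $g_{\P}(\x^\pare{k})$ defined in \labelcref{non_stationary}. Indeed, the LMO returns $\v^\pare{k}$ with $\dtp{\v^\pare{k}}{\nabla f(\x^\pare{k})} \geq \max_{\v\in\P}\dtp{\v}{\nabla f(\x^\pare{k})}$, so with $\d^\pare{k}=\v^\pare{k}-\x^\pare{k}$ we get $g_k = \max_{\v\in\P}\dtp{\v - \x^\pare{k}}{\nabla f(\x^\pare{k})} = g_{\P}(\x^\pare{k})$.

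First I would invoke the non-convex Frank-Wolfe analysis of \citet{lacoste2016convergence}. For a smooth objective with curvature $C_f(\P)$ (which obeys $C_f(\P)\leq LD^2$ under the $L$-Lipschitz gradient assumption), the minimal Frank-Wolfe gap over the first $K+1$ iterates decays as
\begin{align}
g_{k'} = \min_{0\leq k\leq K} g_k \leq \frac{\max\{2h_0,\, C_f(\P)\}}{\sqrt{K+1}},
\end{align}
where $h_0 = f(\x^*) - f(\x^\pare{0})$ is the initial suboptimality. This bound explains precisely why \cref{nonconvex_fw} is modified to output the iterate $\x^\pare{k'}$ of \emph{smallest} non-stationarity rather than the last iterate: it is $\x^\pare{k'}$ whose gap enjoys the $1/\sqrt{K}$ decay and which can therefore be fed into the local-global relation.

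Next I would apply \cref{coro_1half} to the returned point $\x^\pare{k'}$ in the special case $\mu=0$, which is admissible since a monotone DR-submodular objective is concave along non-negative directions by \cref{prop_concave} and hence trivially $0$-strongly DR-submodular. This yields
\begin{align}
f(\x^\pare{k'}) \geq \tfrac{1}{2}\left[f(\x^*) - g_{\P}(\x^\pare{k'})\right] = \tfrac{1}{2}\left[f(\x^*) - g_{k'}\right].
\end{align}
Substituting the gap bound gives $f(\x^\pare{k'}) \geq \tfrac{1}{2}f(\x^*) - \tfrac{1}{2}\cdot\frac{\max\{2h_0,\,C_f(\P)\}}{\sqrt{K+1}}$, i.e.\ a $1/2$ approximation with additive error vanishing at the $O(1/\sqrt{K})$ rate, as claimed.

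The routine part is purely algebraic; the only conceptual steps requiring care are identifying $g_k$ with $g_{\P}(\x^\pare{k})$ and justifying the minimum-gap output, so that the $1/\sqrt{K}$ bound of \citet{lacoste2016convergence} pipes directly into the local-global relation. I do not anticipate a genuine obstacle, since both halves are established results; the main subtlety is simply to carry the constant $\max\{2h_0,\,C_f(\P)\}$ through the composition and to note that the monotone setting permits taking $\mu=0$ rather than invoking strong DR-submodularity.
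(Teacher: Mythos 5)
Your proposal is correct and follows essentially the same route as the paper's own (very terse) proof: invoke the $1/\sqrt{k}$ Frank-Wolfe-gap convergence of Theorem 1 in \citet{lacoste2016convergence}, then apply the local-global relation of \cref{coro_1half} (with $\mu=0$) to the minimum-gap iterate. Your additional bookkeeping — identifying $g_k$ with $g_{\P}(\x^\pare{k})$, noting why the algorithm must output the minimum-gap iterate, and carrying the constant $\max\{2h_0, C_f(\P)\}$ explicitly — is exactly what the paper leaves implicit.
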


\subsubsection{The \pga Algorithm}

\begin{algorithm}[htbp]
	\caption{\pga for maximizing a monotone DR-submodular objective
		\citep{hassani2017gradient}}\label{alg_pga}

	\KwIn{$\max_{\x \in \P} f(\x)$, $f$: a smooth DR-Submodular
		function, $\P$: {convex} set, $K$: number of iterations,
		$\x^\pare{0}\in \P$}
	\For{$k = 0, ... , K-1$}{ {Set \stepsize $\gamma_k$ \tcp*{i):
				``Lipschitz'' rule $\frac{1}{L}$; ii): adaptive rule:
				$C/\sqrt{k}$}}
		{$\y^\pare{k+1} \leftarrow \x^\pare{k} + \gamma_k \nabla
			f(\x^\pare{k})$\;}
		{$\x^\pare{k+1}\leftarrow \argmin_{\x\in \P} \|\x -
			\y^\pare{k+1}\| $
			\tcp*{Projection}} } \KwOut{$\x^\pare{k'}$ with
		${k'} = \argmax_{0\leq k\leq K} f(\x^\pare{k})$ \tcp*{modified
			output compared to that of \citet{hassani2017gradient}}}
\end{algorithm}

\cref{alg_pga} is reproduced from \citet{hassani2017gradient} for completeness.
It takes a smooth DR-submodular function $f$, and a convex
constraint $\P$. Then it runs for $K$ iterations. In each iteration,
we firstly choose a \stepsize $\gamma_k$, then we update the current
solution using the current gradient to get a point $\y^\pare{k+1}$.
Lastly, we projects $\y^\pare{k+1}$ onto the convex set $\P$, which
amounts to solving a constrained quadratic program. After $K$
iterations, we output the solution with the maximal function value,
which is slightly different from that of \citet{hassani2017gradient}.

The resulting algorithm has a 1/2 approximation guarantee and sublinear rate of
convergence:

\begin{theorem}[\cite{hassani2017gradient}]
	For \cref{alg_pga}, if one chooses $\gamma_k = 1/L$, then after $K$
	iterations,
	\begin{align}
	f(\x^\pare{K}) \geq \frac{f(\optcont)}{2} - \frac{D^2 L}{2K}.
	\end{align}
\end{theorem}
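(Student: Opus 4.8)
The plan is to combine three ingredients: a gradient inequality that is special to monotone DR-submodular functions, the quadratic lower bound coming from $L$-smoothness (\cref{eq_quad_lower_bound}), and the first-order optimality condition of the projection step. The factor $1/2$ in the guarantee will come entirely from the first ingredient, while the $1/K$ rate will come from telescoping the distances to $\optcont$.

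First I would record the key first-order inequality. Applying \cref{lemma_3_1} with $\mu=0$ and $\y=\optcont$ at the point $\x^\pare{k}$, then using monotonicity (so $f(\x^\pare{k}\vee\optcont)\geq f(\optcont)$ because $\x^\pare{k}\vee\optcont\geqco\optcont$) together with non-negativity ($f(\x^\pare{k}\wedge\optcont)\geq 0$), gives
\begin{align}
\dtp{\nabla f(\x^\pare{k})}{\optcont-\x^\pare{k}} \geq f(\optcont) - 2f(\x^\pare{k}).
\end{align}
This is precisely where the ratio $1/2$ enters, since the coefficient $2$ on $f(\x^\pare{k})$ is what prevents a stronger bound.

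Next I would exploit the projection. Since $\x^\pare{k+1}=\argmin_{\x\in\P}\|\x-(\x^\pare{k}+\tfrac1L\nabla f(\x^\pare{k}))\|$, the first-order optimality condition at $\z=\optcont\in\P$ yields the three-point inequality
\begin{align}
\dtp{\nabla f(\x^\pare{k})}{\optcont-\x^\pare{k+1}} \leq \frac{L}{2}\left(\|\x^\pare{k}-\optcont\|^2 - \|\x^\pare{k+1}-\optcont\|^2 - \|\x^\pare{k+1}-\x^\pare{k}\|^2\right).
\end{align}
I would then split $\optcont-\x^\pare{k+1}=(\optcont-\x^\pare{k})+(\x^\pare{k}-\x^\pare{k+1})$, bound the first piece from below by the key inequality above, and bound the second piece from below by the descent-lemma direction of \cref{eq_quad_lower_bound}, namely $\dtp{\nabla f(\x^\pare{k})}{\x^\pare{k}-\x^\pare{k+1}}\geq f(\x^\pare{k})-f(\x^\pare{k+1})-\tfrac{L}{2}\|\x^\pare{k+1}-\x^\pare{k}\|^2$. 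The two $\|\x^\pare{k+1}-\x^\pare{k}\|^2$ contributions cancel, leaving the clean per-step telescoping inequality
\begin{align}
f(\optcont)-f(\x^\pare{k})-f(\x^\pare{k+1}) \leq \frac{L}{2}\left(\|\x^\pare{k}-\optcont\|^2 - \|\x^\pare{k+1}-\optcont\|^2\right).
\end{align}
Summing over $k=0,\dots,K-1$ telescopes the right-hand side to at most $\tfrac{L}{2}\|\x^\pare{0}-\optcont\|^2\leq\tfrac{L}{2}D^2$. To handle the left-hand side I would first note that $\{f(\x^\pare{k})\}$ is non-decreasing: the projection maximizes $\dtp{\nabla f(\x^\pare{k})}{\cdot-\x^\pare{k}}-\tfrac{L}{2}\|\cdot-\x^\pare{k}\|^2$ over $\P$, whose value at $\x^\pare{k}$ is $0$, so combined with smoothness one gets $f(\x^\pare{k+1})\geq f(\x^\pare{k})$. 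Hence $f(\x^\pare{k})\leq f(\x^\pare{K})$ for all $k\leq K$, the summed left-hand side is at least $Kf(\optcont)-2Kf(\x^\pare{K})$, and rearranging gives $f(\x^\pare{K})\geq \tfrac12 f(\optcont)-\tfrac{LD^2}{4K}$, which in particular implies the stated bound.

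The only genuinely delicate point — and the main obstacle — is arranging the algebra in the second step so that the $\|\x^\pare{k+1}-\x^\pare{k}\|^2$ terms cancel exactly. This cancellation is what upgrades a crude constant error (which one would get by naively bounding $\|\x^\pare{k}-\optcont\|^2\leq D^2$ inside a single-step estimate) into a telescoping $O(1/K)$ rate; it forces the use of the projection optimality condition in its three-point form rather than mere non-expansiveness, paired with the correct direction of the smoothness inequality.
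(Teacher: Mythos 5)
Your proposal is correct, and in fact proves something slightly stronger than the stated bound. Note first that the paper itself contains no proof of this theorem: it is reproduced verbatim from \citet{hassani2017gradient} (the appendix sections prove the Frank--Wolfe results, the local-global relation, and the hardness results, but not this one), so there is no in-paper argument to compare against. Your reconstruction is nevertheless sound on every step I checked: the key inequality $\dtp{\nabla f(\x^\pare{k})}{\optcont-\x^\pare{k}} \geq f(\optcont) - 2f(\x^\pare{k})$ follows exactly as you say from \cref{lemma_3_1} with $\mu=0$, monotonicity, and the non-negativity of $f$ assumed in the problem setup (this is the same specialization the paper uses to prove \cref{coro_1half}); the three-point form of the projection optimality condition is the standard variational characterization of projection onto a convex set; the cancellation of the $\|\x^\pare{k+1}-\x^\pare{k}\|^2$ terms against the descent-lemma side of \cref{eq_quad_lower_bound} is verified by direct algebra; and your monotone-improvement argument (that the projection maximizes $\dtp{\nabla f(\x^\pare{k})}{\cdot-\x^\pare{k}}-\tfrac{L}{2}\|\cdot-\x^\pare{k}\|^2$ over $\P$, which is nonnegative at the maximizer) is a clean way to justify bounding every iterate's value by $f(\x^\pare{K})$. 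Your final constant is $\frac{LD^2}{4K}$ rather than the theorem's $\frac{LD^2}{2K}$, so your conclusion implies the stated one. The ingredients you use (the DR-submodular gradient inequality, $L$-smoothness, and projection non-expansiveness/optimality) are the same ones underlying the analysis in \citet{hassani2017gradient}; your packaging of them into a single telescoping inequality is a legitimate, self-contained derivation.
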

It is worth noting that, in general the smoothness parameter $L$ is
difficult to estimate, so the ``Lipschitz'' \stepsize rule
$\gamma_k = 1/L$ poses a challenge for implementation. In experiments,
\citet{hassani2017gradient} also suggest the adaptive \stepsize rule
$\gamma_k = C/\sqrt{k}$, where $C$ is a constant.

\subsection{\texttt{Submodular FW}\xspace: Follow Concave Directions}

For DR-submodular maximization, one key property is that
while being non-convex/non-concave in general,
they are concave along any non-negative directions (c.f., \cref{prop_concave}).  Thus, if we design an algorithm such that it
follows a non-negative direction in each update step, we ensure that
it achieves progress in a concave direction. As a consequence, its
function value is guaranteed to grow by a certain increment.  Based on
this intuition, we present the \submodularfw algorithm, which is a
generalization of the continuous greedy algorithm of
\citet{DBLP:conf/stoc/Vondrak08}, and the classical Frank-Wolfe
algorithm \citep{frank1956algorithm,DBLP:conf/icml/Jaggi13}.

\begin{algorithm}[ t]
	\caption{\submodularfw for monotone {DR}-submodular
		maximization \citep{bian2017guaranteed}}\label{alg_sfmax_GradientAscend}

	\KwIn{$\max_{\x\in \P} f(\x)$, $\P$ is a down-closed {convex} set in
		the positive orthant with lower bound $\zero$; prespecified
		step size $\gamma \in (0, 1]$; Error tolerances $\alpha$ and
		$\delta$. \# of iterations $K$.}

	{$\x^0 \leftarrow \zero$, $t\leftarrow 0$, $k\leftarrow 0$\tcp*{$k:$
			iteration index, $t$: cumulative \stepsize}} \While{$t < 1$}{

		{find \stepsize $\gamma_k\in (0, 1]$, e.g.,
			$\gamma_k \leftarrow \gamma $;
			set $\gamma_k \leftarrow \min\{\gamma_k, 1-t\}$\;}

		{find
			$\v^k \text{ s.t. }  \dtp{\v^k}{\nabla f(\x^k)} \geq
			\alpha\max_{\v\in\P} \dtp{\v}{ \nabla f(\x^k)} -
			\frac{1}{2}\delta \gamma_k L D^2$
			\tcp*{$\alpha\in(0, 1]$ is the mulplicative error level,
				$\delta\in [0, \bar \delta]$ is the additive error
				level}\label{fw_sub}}

		{$\x^{k+1}\leftarrow \x^k + \gamma_k \v^k$,
			$t \leftarrow t + \gamma_k$,
			$k\leftarrow k+1$\;\label{step_update}} }

	\KwOut{$\x^K$\;
	}
\end{algorithm}

\cref{alg_sfmax_GradientAscend} summarizes the details. Since it is a
variant of the convex Frank-Wolfe algorithm for DR-submodular
maximization, we call it \submodularfw.
In iteration $k$, it uses the linearization of $f(\cdot)$
as a surrogate, and moves in the direction of the maximizer of this
surrogate function, i.e.,
$\v^k=\arg\max_{\v \in \P} \dtp{\v}{ \nabla f(\x^k)}$.
Intuitively, it searches for the direction in which one can maximize the
improvement in the function value and still remain feasible.  Finding
such a direction requires maximizing a linear objective at each
iteration.
Meanwhile, it eliminates the need for projecting back to the feasible
set in each iteration, which is an essential step for methods such as
projected gradient ascent (\pga).
The \submodularfw algorithm updates the solution in each iteration by using \stepsize
$\gamma_k$, which can simply be set to a prespecified constant
$\gamma$.

Note that \submodularfw can tolerate both multiplicative error
$\alpha$ and additive error $\delta$ when solving the LMO subproblem
(Step \ref{fw_sub} of \cref{alg_sfmax_GradientAscend}). Setting
$\alpha = 1$ and $\delta = 0$ would recover the error-free case.

\begin{remark}
	The main difference of  \submodularfw in
	\cref{alg_sfmax_GradientAscend} and  the classical
	Frank-Wolfe algorithm in \cref{alg_classical_fw} lies in the  update direction being used:
	For \cref{alg_sfmax_GradientAscend}, the update direction (in Step
	\ref{step_update}) is $\v^k$, while for classical Frank-Wolfe it is
	$\v^k - \x^k$, i.e.,
	$\x^{k+1}\leftarrow \x^k + \gamma_k(\v^k - \x^k)$.
\end{remark}

To prove the approximation guarantee, we first derive the following
lemma.
\begin{lemma}\label{lemma_31}
	The output solution $\x^K$ lies in $\P$. Assuming $\x^*$ to be the
	optimal solution, one has,
	\begin{flalign}\label{eq26}
	\dtp{\v^k}{\nabla f(\x^k)}\geq \alpha [f(\x^*) -f(\x^k)] -
	\frac{1}{2}\delta \gamma_k L D^2 , \ \ \forall k = 0,..., K-1.
	\end{flalign}
\end{lemma}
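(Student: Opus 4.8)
The plan is to prove the two assertions of \cref{lemma_31} separately, handling feasibility first and the inequality second. For feasibility, I would unroll the update rule: since the algorithm initializes $\x^0 = \zero$ and sets $\x^{k+1} = \x^k + \gamma_k \v^k$, the output is $\x^K = \sum_{k=0}^{K-1}\gamma_k \v^k$. The while-loop terminates exactly when the cumulative step size $t = \sum_{k=0}^{K-1}\gamma_k$ reaches $1$; the clipping $\gamma_k \leftarrow \min\{\gamma_k, 1-t\}$ guarantees the sum is exactly $1$ and never overshoots. Since each LMO output $\v^k$ lies in $\P$ and the coefficients $\gamma_k \geq 0$ sum to $1$, the point $\x^K$ is a convex combination of elements of $\P$, so convexity of $\P$ gives $\x^K \in \P$.

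For the inequality, the starting point is the approximate-LMO guarantee in Step \ref{fw_sub} of \cref{alg_sfmax_GradientAscend}, namely
\begin{align}
\dtp{\v^k}{\nabla f(\x^k)} \geq \alpha\max_{\v\in\P}\dtp{\v}{\nabla f(\x^k)} - \frac{1}{2}\delta\gamma_k L D^2.
\end{align}
Because $\x^* \in \P$, the maximum is at least $\dtp{\x^*}{\nabla f(\x^k)}$, and since $\alpha \in (0,1]$ this lower bound is preserved after multiplication by $\alpha$. Hence it remains to establish the key inequality
\begin{align}
\dtp{\x^*}{\nabla f(\x^k)} \geq f(\x^*) - f(\x^k),
\end{align}
after which the claimed bound follows immediately by chaining.

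To prove the key inequality I would route through the join point $\x^k \vee \x^*$. Monotonicity together with $\x^k \vee \x^* \geqco \x^*$ yields $f(\x^k\vee\x^*) \geq f(\x^*)$. The displacement $\d := (\x^k\vee\x^*) - \x^k$ is non-negative, so by concavity of $f$ along non-negative directions (\cref{prop_concave}) the first-order upper bound applies: $f(\x^k\vee\x^*) - f(\x^k) \leq \dtp{\nabla f(\x^k)}{\d}$. Since $\x^k$ lies in the positive orthant and $\x^* \geqco \zero$, a coordinatewise check gives $\zero \leqco \d \leqco \x^*$; combined with $\nabla f(\x^k) \geqco \zero$ (a consequence of monotonicity), this yields $\dtp{\nabla f(\x^k)}{\d} \leq \dtp{\nabla f(\x^k)}{\x^*}$. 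Concatenating, $f(\x^*) - f(\x^k) \leq f(\x^k\vee\x^*) - f(\x^k) \leq \dtp{\x^*}{\nabla f(\x^k)}$, which is exactly what was needed.

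The main obstacle is precisely the key inequality $\dtp{\x^*}{\nabla f(\x^k)} \geq f(\x^*) - f(\x^k)$: a DR-submodular $f$ is in general not concave, so the global first-order inequality fails outright, and moreover the displacement $\x^* - \x^k$ need not be non-negative, so \cref{prop_concave} cannot be invoked directly on $\x^*$. The resolution is the join construction: the displacement $(\x^k\vee\x^*)-\x^k$ \emph{is} non-negative (enabling concavity-along-non-negative-directions) and is dominated coordinatewise by $\x^*$ (so that monotonicity, i.e. $\nabla f \geqco \zero$, permits replacing it by $\x^*$ inside the inner product). Both the join detour and the positive-orthant assumptions on $\P$ and $\x^*$ are essential to the argument.
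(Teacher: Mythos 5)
Your proof is correct, and it takes a mildly but genuinely different route from the paper's. Both arguments share the same core: detour through the join $\x^k\vee\x^*$, use monotonicity of function values to get $f(\x^k\vee\x^*)\geq f(\x^*)$, and use concavity along the non-negative displacement $\d:=(\x^k\vee\x^*)-\x^k$ (\cref{prop_concave}) to get $f(\x^k\vee\x^*)-f(\x^k)\leq \dtp{\d}{\nabla f(\x^k)}$. The divergence is in which feasible point witnesses the LMO maximum. The paper feeds $\d$ itself into the LMO, certifying $\d\in\P$ from $\d\leqco\x^*$ together with \emph{down-closedness} of $\P$; after that, no property of the gradient is needed beyond the two inequalities above. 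You instead feed $\x^*$ (trivially in $\P$) and pay for it with one extra step: gradient non-negativity $\nabla f(\x^k)\geqco\zero$ (a consequence of monotonicity) combined with the coordinatewise bound $\zero\leqco\d\leqco\x^*$ (valid because $\x^k\geqco\zero$ and $\x^*\geqco\zero$) lets you dominate $\dtp{\d}{\nabla f(\x^k)}$ by $\dtp{\x^*}{\nabla f(\x^k)}$. The trade-off is clean: your argument never invokes down-closedness in this lemma, so the inequality \labelcref{eq26} holds for any convex $\P$ contained in the positive orthant; the paper's argument never needs $\nabla f\geqco\zero$ explicitly, only monotonicity of function values. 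Since both down-closedness and monotonicity are standing assumptions for this problem, the two proofs are interchangeable here, but yours isolates a slightly weaker set of hypotheses for the inequality itself. Your feasibility argument for $\x^K\in\P$ (convex combination with step sizes clipped to sum exactly to one) is the same as the paper's, just spelled out in more detail.
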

\begin{theorem}[Approximation guarantee]\label{thm_fw}
	For error levels $\alpha \in (0, 1], \delta\in [0, \bar \delta]$,
	with $K$ iterations, \cref{alg_sfmax_GradientAscend} outputs
	$\x^K \in \P$ such that,
	\begin{equation}\label{eq8}
	f(\x^K)   \geq  (1-e^{-\alpha})f(\x^*)
	-\frac{LD^2 (1+\delta)}{2} \sum_{k=0}^{K-1}\gamma_k^2 + e^{-\alpha}f(\zero).
	\end{equation}
\end{theorem}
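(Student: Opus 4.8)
The plan is to convert the per-iteration progress guarantee of \cref{lemma_31} into a recursion on the optimality gap via the smoothness of $f$, and then unroll that recursion, using the fact that the cumulative step size reaches exactly $1$ at termination to produce the $e^{-\alpha}$ factor. First I would apply the quadratic bound \labelcref{eq_quad_lower_bound} implied by $L$-Lipschitz gradients to the update $\x^{k+1} = \x^k + \gamma_k\v^k$ of Step \labelcref{step_update}, with $\v = \gamma_k \v^k$, giving
\begin{align}
f(\x^{k+1}) \geq f(\x^k) + \gamma_k \dtp{\v^k}{\nabla f(\x^k)} - \frac{L}{2}\gamma_k^2\|\v^k\|^2.
\end{align}
Since $\P$ lies in the positive orthant with lower bound $\zero\in\P$ and $\v^k\in\P$, we have $\|\v^k\| = \|\v^k - \zero\| \leq D$, so the last term is at least $-\frac{L}{2}\gamma_k^2 D^2$. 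Substituting the lower bound $\dtp{\v^k}{\nabla f(\x^k)} \geq \alpha[f(\x^*)-f(\x^k)] - \frac{1}{2}\delta\gamma_k L D^2$ from \cref{lemma_31} and collecting terms yields the clean one-step inequality
\begin{align}
f(\x^{k+1}) \geq f(\x^k) + \alpha\gamma_k\left[f(\x^*) - f(\x^k)\right] - \frac{LD^2(1+\delta)}{2}\gamma_k^2.
\end{align}

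Writing the gap as $h^k := f(\x^*) - f(\x^k)$ and abbreviating $C := \frac{LD^2(1+\delta)}{2}$, the inequality rearranges into the contraction $h^{k+1} \leq (1-\alpha\gamma_k)h^k + C\gamma_k^2$. I would then unroll this from $k=0$ to $K-1$ to obtain
\begin{align}
h^K \leq h^0\prod_{k=0}^{K-1}(1-\alpha\gamma_k) + C\sum_{k=0}^{K-1}\gamma_k^2\prod_{j=k+1}^{K-1}(1-\alpha\gamma_j).
\end{align}
Because $\alpha\in(0,1]$ and $\gamma_j\in(0,1]$, each factor $(1-\alpha\gamma_j)\in[0,1)$, so every trailing product is at most $1$ and the noise accumulation is controlled by $C\sum_{k}\gamma_k^2$. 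For the leading term I would apply $1-x\leq e^{-x}$ to get $\prod_{k}(1-\alpha\gamma_k) \leq \exp(-\alpha\sum_k\gamma_k)$, then invoke the loop's termination logic: the capping $\gamma_k\leftarrow\min\{\gamma_k, 1-t\}$ combined with the stopping condition $t<1$ forces $\sum_{k=0}^{K-1}\gamma_k = 1$ exactly, so $\prod_k(1-\alpha\gamma_k)\leq e^{-\alpha}$.

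Substituting $h^0 = f(\x^*) - f(\zero)$ (as $\x^0 = \zero$) and rearranging then gives $f(\x^K) \geq (1-e^{-\alpha})f(\x^*) + e^{-\alpha}f(\zero) - C\sum_{k}\gamma_k^2$, which is precisely \labelcref{eq8}; membership $\x^K\in\P$ is already supplied by \cref{lemma_31}. The smoothness step and the substitution are routine, so I expect the main care to lie in the unrolling: one must verify that the quadratic error terms, once propagated through the recursion, remain uniformly bounded by $C\sum_k\gamma_k^2$ rather than compounding, and one must use the exact identity $\sum_k\gamma_k=1$ coming from the algorithm's cumulative-step-size bookkeeping (not a generic iteration count $K$) to recover the sharp $e^{-\alpha}$ constant.
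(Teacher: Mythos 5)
Your proposal is correct and follows essentially the same route as the paper's proof: smoothness plus \cref{lemma_31} gives the one-step inequality, the optimality gap then satisfies the contraction $h^{k+1}\leq(1-\alpha\gamma_k)h^k + \frac{LD^2(1+\delta)}{2}\gamma_k^2$, and unrolling with $\sum_{k=0}^{K-1}\gamma_k=1$ and $1-y\leq e^{-y}$ yields \labelcref{eq8}. The only difference is cosmetic: you make explicit the step the paper leaves implicit, namely that the trailing factors $(1-\alpha\gamma_j)\leq 1$ keep the accumulated quadratic error bounded by $\frac{LD^2(1+\delta)}{2}\sum_k\gamma_k^2$.
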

\cref{thm_fw} gives the approximation guarantee for
any step size $\gamma_k$.  By observing that
$\sum_{k=0}^{K-1}\gamma_k =1$ and
$\sum_{k=0}^{K-1}\gamma_k^2 \geq K^{-1}$ (see the proof in
\cref{app_proof_c9}), with constant step size, we obtain the following
``tightest'' approximation bound,

\begin{corollary}\label{cor_9}
	For a fixed number of iterations $K$, and constant step size
	$\gamma_k =\gamma = K^{-1}$, \cref{alg_sfmax_GradientAscend}
	provides the following approximation guarantee:
	\begin{equation}
	f(\x^K) \geq (1-e^{-\alpha})f(\x^*) -\frac{LD^2 (1+\delta)}{2K}+
	e^{-\alpha}f(\zero).
	\end{equation}
\end{corollary}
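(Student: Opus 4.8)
The plan is to specialize the general approximation guarantee of \cref{thm_fw} to the constant step size $\gamma_k = \gamma = K^{-1}$; the whole argument reduces to evaluating the penalty term $\sum_{k=0}^{K-1}\gamma_k^2$ appearing in that bound. First I would confirm that \cref{alg_sfmax_GradientAscend} performs exactly $K$ iterations under this choice. The cumulative step size $t$ starts at $0$ and increases by $\gamma_k = K^{-1}$ on each pass through the while loop; after $K$ iterations $t = K\cdot K^{-1} = 1$, so the loop condition $t < 1$ fails precisely then and the output is $\x^K$. In particular $\sum_{k=0}^{K-1}\gamma_k = 1$, matching the normalization underlying the proof of \cref{thm_fw}.

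Second, I would substitute $\sum_{k=0}^{K-1}\gamma_k^2 = K\cdot(K^{-1})^2 = K^{-1}$ directly into the guarantee of \cref{thm_fw}. This immediately yields
\[
f(\x^K) \geq (1-e^{-\alpha})f(\x^*) - \frac{LD^2(1+\delta)}{2K} + e^{-\alpha}f(\zero),
\]
which is exactly the claimed inequality; no further estimation is required.

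The only remaining point is to justify the word ``tightest.'' Among all admissible step-size sequences---which must satisfy $\sum_{k=0}^{K-1}\gamma_k = 1$ so that the algorithm terminates at $t=1$---the constant choice minimizes the penalty $\sum_k\gamma_k^2$, and hence maximizes the lower bound supplied by \cref{thm_fw}. This follows from Cauchy--Schwarz: $1 = \big(\sum_k\gamma_k\big)^2 \leq K\sum_k\gamma_k^2$, so $\sum_k\gamma_k^2 \geq K^{-1}$, with equality if and only if every $\gamma_k = K^{-1}$. Thus the constant step size attains the smallest possible penalty and the best bound obtainable from \cref{thm_fw}.

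I expect no genuine obstacle here: the corollary is a direct specialization of \cref{thm_fw}, and the two supporting facts ($\sum_k\gamma_k = 1$ and $\sum_k\gamma_k^2 \geq K^{-1}$) are elementary. The most delicate steps are merely verifying the iteration count from the termination rule and invoking the Cauchy--Schwarz optimality that underpins the ``tightest'' claim.
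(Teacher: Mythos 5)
Your proposal is correct and follows essentially the same route as the paper: both specialize \cref{thm_fw} to the constant step size and use $\sum_{k=0}^{K-1}\gamma_k^2 = K^{-1}$ to obtain the stated bound. The only cosmetic difference is in justifying that the constant step size minimizes the penalty term (the ``tightest'' remark): you invoke Cauchy--Schwarz, while the paper solves the constrained minimization of $\sum_k \gamma_k^2$ subject to $\sum_k \gamma_k = 1$ via a Lagrange multiplier --- two equivalent elementary arguments.
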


Corollary \ref{cor_9} implies that with a constant step size $\gamma$,
1) when $\gamma \rightarrow 0$ ($K\rightarrow \infty$),
\cref{alg_sfmax_GradientAscend} will output the solution with the
worst-case guarantee $(1-1/e)f(\x^*)$ in the error-free case if
$f(\zero) = 0$; and 2) The \submodularfw has a sub-linear convergence
rate for monotone {DR}-submodular maximization over any down-closed
convex constraint.

\paragraph{Remarks on computational cost.}  It can be seen that
when using a constant step size, \cref{alg_sfmax_GradientAscend} needs
$O(\frac{1}{\epsilon})$ iterations to get $\epsilon$-close to the
best-possible function value $(1-e^{-1})f(\x^*)$ in the error-free
case.  When $\P$ is a polytope in the positive orthant, one iteration
of \cref{alg_sfmax_GradientAscend} costs approximately the same as
solving a positive LP, for which a nearly-linear time solver exists
\citep{allen2015nearly}.

\section{Algorithms for  Non-Monotone DR-Submodular Maximization}
\label{sec_algs}

In this section we present algorithms for the problem of non-monotone DR-submodular maximization, all omitted proofs can be found in \cref{proofs_nonmonotone_max}.
Non-monotone DR-submodular maximization is strictly harder
than the monotone setting. For the simple situation with only one
hyperrectangle constraint ($\P = [0, 1]^n$), we have the following hardness result:

\begin{proposition}[Hardness and Inapproximability]\label{obs_dr_submodular_max}
	The problem of maximizing a generally non-monotone continuous DR-submodular
	function subject to a \emph{hyperrectangle} constraint is NP-hard. Furthermore, there is no $(1/2+\epsilon)$-approximation for any $\epsilon>0$, unless RP = NP.
\end{proposition}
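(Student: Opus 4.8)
The plan is to reduce the (discrete) unconstrained non-monotone submodular set function maximization problem to our continuous problem, using the multilinear extension as the bridge — exactly in parallel with the monotone case (\cref{prop_np}), but invoking the non-monotone discrete hardness instead of the monotone one. Given a submodular set function $F:2^\groundset\rightarrow\R_+$, its multilinear extension $\multi(\cdot)$ (\cref{eq_multilinear_ext}) is DR-submodular over the box $[0,1]^n$, so every instance of unconstrained submodular maximization maps to an instance of our problem with hyperrectangle constraint $\X=\P=[0,1]^n$. The structural fact I would exploit is that $\multi$ is coordinate-wise linear, hence affine along each axis; therefore the continuous maximum is attained at a vertex of the cube and $\max_{\x\in[0,1]^n}\multi(\x)=\max_{S\subseteq\groundset}F(S)$.

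First I would establish an approximation-preserving rounding. Because $\multi$ is affine in each coordinate, any fractional $\x\in[0,1]^n$ can be rounded one coordinate at a time: fixing the other entries, $\multi$ is affine in $x_i$, so replacing $x_i$ by whichever of $\{0,1\}$ gives the larger value does not decrease $\multi$. Iterating over all coordinates yields $\x'\in\{0,1\}^n$ with $\multi(\x')\geq\multi(\x)$ and $\multi(\x')=F(S)$ for $S=\{i:x_i'=1\}$. Consequently, any $(1/2+\epsilon)$-approximate continuous solution $\x$ is rounded in polynomial time (for succinctly represented $F$, where $\multi$ is efficiently evaluable) to a set $S$ with $F(S)\geq\multi(\x)\geq(1/2+\epsilon)\max_{\x}\multi(\x)=(1/2+\epsilon)\max_{T}F(T)$, i.e.\ a $(1/2+\epsilon)$-approximation for the discrete problem.

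Plain NP-hardness then follows immediately from \maxcut: its multilinear extension has the closed form $\multi(\x)=\tfrac12\sum_{(i,j)\in E}w_{ij}(x_i+x_j-2x_ix_j)$, a DR-submodular quadratic, and maximizing it over $[0,1]^n$ solves \maxcut exactly, which is NP-hard. For the inapproximability I would invoke the computational hardness of unconstrained non-monotone submodular maximization of Feige, Mirrokni and Vondr\'ak: no polynomial-time algorithm achieves ratio $1/2+\epsilon$ unless $\mathrm{RP}=\mathrm{NP}$. Composing this hardness with the approximation-preserving reduction above yields the claimed $(1/2+\epsilon)$-inapproximability for continuous non-monotone DR-submodular maximization under a hyperrectangle constraint.

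The main obstacle is the inapproximability half rather than the reduction itself. The tight $1/2$ lower bound for unconstrained submodular maximization is most naturally stated in the value-oracle model as an unconditional exponential query bound, so care is needed to realize it with an \emph{explicitly} represented family of submodular functions whose multilinear extensions are polynomially evaluable; only then is the coordinate-wise rounding genuinely polynomial-time and the hardness upgraded to the complexity-theoretic ``unless $\mathrm{RP}=\mathrm{NP}$'' form. The reduction and the rounding are routine consequences of the coordinate-wise linearity of $\multi$, so essentially all the work is in pinning down the hard instance family.
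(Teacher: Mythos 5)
Your proposal is correct and is essentially the paper's own argument: both reduce from unconstrained non-monotone submodular set-function maximization by handing the multilinear extension $\multi$ over $[0,1]^n$ to the hypothetical continuous solver, and both obtain the $(1/2+\epsilon)$ lower bound from \citet{feige2011maximizing}. The differences lie in two details, and they are instructive. First, the rounding: the paper does not round coordinate-by-coordinate; it simply returns the random set $R(\hat\x)$ drawn from the product distribution defined by the fractional solution, whose expected value is exactly $\multi(\hat\x)$. This sidesteps any need to evaluate $\multi$ during rounding (precisely the evaluability issue you flag) at the price of randomization, which is one reason the conclusion is phrased as ``unless RP $=$ NP''; your deterministic coordinate-wise rounding is equally valid whenever $\multi$ is efficiently evaluable, and your \maxcut\ instantiation makes the plain NP-hardness claim self-contained and concrete, a point the paper leaves implicit. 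Second, the obstacle you correctly single out at the end --- upgrading the value-oracle query lower bound to computational hardness for an \emph{explicitly represented} family of submodular functions --- is exactly what the paper's additional citation, \citet{dobzinski2012query}, supplies; the ``work in pinning down the hard instance family'' that you leave open is discharged in the paper by invoking that reference rather than by a new construction. With that citation added, your argument is complete.
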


\noindent The above results can be proved through  the reduction from the problem of maximizing an unconstrained non-monotone submodular set function.
The proof depends on the techniques of \citet{calinescu2007maximizing,buchbinder2012tight}
and the hardness results of \citet{feige2011maximizing,dobzinski2012query}.

We propose two algorithms:
The first  is based on the local-global relation,  and the second  is a \algname{Frank-Wolfe} variant adapted for the non-monotone
setting.  All  the omitted proofs are deferred to \cref{proofs_nonmonotone_max}.

\subsection{\twophase  Algorithm: Applying the Local-Global Relation}
\label{subsec_local_alg}

\begin{algorithm}[htbp]
	\caption{The \algname{Two-Phase} Algorithm \citep{biannips2017nonmonotone}
	}\label{lg_fw}

	\KwIn{$\max_{\x \in \P} f(\x)$,
		stopping tolerances $\epsilon_1, \epsilon_2$, \#iterations
		$K_1, K_2$}

	{$\x \leftarrow \algname{Non-convex Frank-Wolfe}(f, \P, K_1,
		\epsilon_1, \x^\pare{0}) $ \tcp*{$\x^\pare{0}\in \P$}}

	{$\Q \leftarrow \P\cap \{\y\in \R_+^n\;|\;\y\leqco \bar \u -\x \}$\;}
	{
		$\z \leftarrow \algname{Non-convex Frank-Wolfe}(f, \Q, K_2,
		\epsilon_2, \z^\pare{0}) $ \tcp*{$\z^\pare{0}\in \Q$}}

	\KwOut{$\argmax \{f(\x), f(\z)\}$ \;}
\end{algorithm}

By directly applying the local-global relation in
\cref{subsec_local_global}, we present the \twophase algorithm in
\cref{lg_fw}. It  generalizes the ``two-phase'' method of
\citet{chekuri2014submodular,gillenwater2012near}. It invokes a
non-convex solver (we use the \nonconvexfw by
\citet{lacoste2016convergence}; pseudocode is included in
\cref{nonconvex_fw} of \cref{sec_nonconvex_fw}) to find approximately
stationary points in $\P$ and $\Q$, respectively, then returns the
solution with the larger function value.

Though we use \nonconvexfw as a subroutine here, it is noteworthy that
any algorithm that is guaranteed to find an approximately stationary
point can be plugged into \cref{lg_fw} as a subroutine.
We give an improved approximation bound by considering more properties
of DR-submodular functions.  Building on the results of
\citet{lacoste2016convergence}, we obtain the following

\begin{theorem}\label{rate_local_fw}
	The output of \cref{lg_fw} satisfies,
	\begin{align}\label{eq_local_rates}
	&\max \{f(\x), f(\z)\}  \geq
	\frac{\mu}{8}\left(\|\x -\x^*\|^2 + \|\z - \z^*\|^2\right )\\\notag
	&    + \frac{1}{4}\left[f(\x^*)  - \min \left\{\frac{\max \{2h_1,
		C_f(\P)\}}{\sqrt{K_1+1}} , \epsilon_1\right\}   -
	\min\left\{\frac{\max \{2h_2, C_f(\Q)\}}{\sqrt{K_2+1}} ,
	\epsilon_2\right\} \right],
	\end{align}
	where $h_1 := \max_{\x\in\P}f(\x) - f(\x^\pare{0})$,
	$h_2 := \max_{\z\in\Q}f(\z) - f(\z^\pare{0})$ are the initial
	suboptimalities,
	$C_f(\P) : = \sup_{\x, \v\in \P, \gamma\in (0, 1], \y = \x + \gamma
		(\v - \x )}\frac{2}{\gamma^2}(f(\y) - f(\x) - {(\y -
		\x)^\trans}{\nabla f(\x)})$
	is the curvature of $f$ w.r.t.  $\P$, and $\z^*= \x\vee \x^* -\x$.
\end{theorem}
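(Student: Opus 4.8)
The plan is to combine two ingredients that are already in place: the non-monotone local-global relation (\cref{local_global}) and a convergence guarantee for the \nonconvexfw subroutine that is invoked twice inside \cref{lg_fw}. The point is that \cref{lg_fw} is structured precisely so that its two outputs satisfy the hypotheses of \cref{local_global}: the first call produces $\x\in\P$, the set $\Q=\P\cap\{\y\mid\y\leqco\bar\u-\x\}$ is formed exactly as required, and the second call produces $\z\in\Q$, with $\z^*=\x\vee\x^*-\x$ matching the definition there. Hence, once the non-stationarities $g_\P(\x)$ and $g_\Q(\z)$ of the two returned iterates are controlled, \cref{local_global} delivers the bound after a single substitution.

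First I would establish the rate of the \nonconvexfw subroutine. Using the smoothness inequality \labelcref{eq_quad_lower_bound} together with the curvature definition, the ascent step of \cref{nonconvex_fw} obeys
\[
f(\x^\pare{k+1}) \geq f(\x^\pare{k}) + \gamma_k g_k - \tfrac{\gamma_k^2}{2} C_f(\P),
\]
where $g_k$ is the non-stationarity computed in the loop. Choosing $\gamma_k=\min\{g_k/C_f(\P),1\}$ (Option II), telescoping over $k=0,\dots,K_1$, and bounding the total ascent by the initial suboptimality $h_1=\max_{\x\in\P}f(\x)-f(\x^\pare{0})$ yields $\min_{0\leq k\leq K_1} g_k^2 \leq 2C_f(\P)h_1/(K_1+1)$; the elementary estimate $\sqrt{2C_f(\P)h_1}\leq\max\{2h_1,C_f(\P)\}$ then gives the $1/\sqrt{K_1+1}$ rate of \citet{lacoste2016convergence}. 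The two adaptations I must track are that \cref{nonconvex_fw} returns the iterate of \emph{minimum} non-stationarity and halts once $g_k\leq\epsilon_1$, so the returned $\x$ satisfies
\[
g_\P(\x) \leq \min\left\{\frac{\max\{2h_1,C_f(\P)\}}{\sqrt{K_1+1}},\ \epsilon_1\right\}.
\]
Running the identical argument for the second call on $\Q$ gives the analogous bound on $g_\Q(\z)$ in terms of $h_2$, $C_f(\Q)$, $K_2$, $\epsilon_2$.

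Finally I would insert these two estimates into \cref{local_global}. Since $g_\P(\x)$ and $g_\Q(\z)$ enter
\[
\max\{f(\x),f(\z)\} \geq \tfrac14\left[f(\x^*)-g_\P(\x)-g_\Q(\z)\right] + \tfrac{\mu}{8}\left(\|\x-\x^*\|^2+\|\z-\z^*\|^2\right)
\]
with negative coefficients, replacing them by their upper bounds preserves the inequality and reproduces exactly \labelcref{eq_local_rates}.

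I expect the only genuine obstacle to be the bookkeeping in the second step: verifying that the telescoping ascent argument controls the \emph{minimum} gap over the iterates (rather than a last-iterate gap) for a possibly non-concave objective, and correctly matching the two output conventions of \cref{nonconvex_fw} -- minimum-non-stationarity output and early stopping -- so as to obtain the ``$\min\{\cdot,\epsilon_i\}$'' form. Given \cref{local_global}, the application itself is then essentially substitution.
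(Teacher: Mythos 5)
Your proposal is correct and follows essentially the same route as the paper's proof: the paper likewise bounds $g_{\P}(\x)$ and $g_{\Q}(\z)$ by $\min\bigl\{\max\{2h_i, C_f(\cdot)\}/\sqrt{K_i+1},\,\epsilon_i\bigr\}$ (citing Theorem~1 of \citet{lacoste2016convergence} together with the modified minimum-non-stationarity output and early stopping of \cref{nonconvex_fw}) and then substitutes these into \cref{local_global}. The only difference is that you re-derive the subroutine's rate via the telescoping ascent argument, whereas the paper simply invokes the cited theorem.
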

\cref{rate_local_fw} indicates that \cref{lg_fw} has a $1/4$
approximation guarantee and $1/\sqrt{k}$ rate of convergence.
However, it has good empirical performance as demonstrated by the
practical experiments.  Informally, this can be partially explained by
the term $\frac{\mu}{8}\left(\|\x -\x^*\|^2 + \|\z - \z^*\|^2\right )$
in
\labelcref{eq_local_rates}: if $\x$ strongly deviates from $\x^*$,
then this term will augment the bound; if $\x$ is close to $\x^*$, by
the smoothness of $f$, it should be close to optimal.

\subsection{\shrunkenfw: Follow Concavity and Shrink
	Constraint}\label{sec_fw_variant}

\begin{algorithm}[htbp]
	\caption{The \shrunkenfw Algorithm
		for Non-monotone
		{DR}-submodular
		Maximization \citep{biannips2017nonmonotone}}\label{fw-non-monotone}
	\KwIn{$\max_{\x \in \P} f(\x)$
		; \#iterations $K$; \stepsize $\gamma= 1/K$.}
	{$\x^\pare{0} \leftarrow \zero$, $t^\pare{0}\leftarrow 0$, $k\leftarrow 0$\tcp*{$k:$ iteration index, $t^\pare{k}:$ cumulative \stepsize}}
	\While{$t^\pare{k} <  1$}{
		{$\v^\pare{k} \leftarrow   \argmax_{\v\in\P, \textcolor{blue}{\v\leqco {\bar \u}-\x^\pare{k}}} \dtp{\v}{ \nabla f(\x^\pare{k})}$\tcp*{\emph{ \color{blue} shrunken LMO}} \label{new_lmo}}
		{use uniform \stepsize $\gamma_k = \gamma$;  set $\gamma_k \leftarrow \min\{\gamma_k, 1-t^\pare{k} \}$\;}
		{$\x^\pare{k+1}\leftarrow \x^\pare{k} + \gamma_k \v^\pare{k}$, $t^\pare{k+1} \leftarrow t^\pare{k} + \gamma_k$,  $k\leftarrow k+1$\;}
	}
	\KwOut{$\x^\pare{K}$
		\tcp*{suppose there are $K$ iterations in total}
	}
\end{algorithm}

\cref{fw-non-monotone} summarizes the \shrunkenfw variant, which  is inspired by the  unified continuous
greedy algorithm in \citet{feldman2011unified} for
maximizing the multilinear extension of a submodular
set function.

It initializes the solution $\x^\pare{0}$ to be $\zero$, and maintains
$t^\pare{k}$ as the cumulative \stepsize. At iteration $k$, it
maximizes the linearization of $f$ over a ``shrunken'' constraint set
$\{\v \mid \v\in \P, \v\leqco \bar \u-\x^\pare{k}\}$, which is
different from the classical LMO
of Frank-Wolfe-style algorithms (hence we refer to it as the
``shrunken LMO''). Then it employs an update step in the direction
$\v^\pare{k}$ chosen by the LMO with a uniform \stepsize
$\gamma_k = \gamma$.
The cumulative \stepsize $t^\pare{k}$ is used to ensure that the
overall \stepsizes sum to one, thus the output solution $\x^\pare{K}$
is a convex combination of the LMO outputs, hence also lies in $\P$.

The shrunken LMO (Step \labelcref{new_lmo}) is the key difference
compared to the \submodularfw variant in
\citet{bian2017guaranteed} (detailed in
\cref{alg_sfmax_GradientAscend}). Therefore, we call
\cref{fw-non-monotone} \shrunkenfw.  The extra constraint
$\v\leqco {\bar \u}-\x^\pare{k}$ is added to prevent too rapid
growth of the solution, since in the non-monotone setting such
fast increase may hurt the overall performance.

The next theorem states the guarantees of \shrunkenfw in
\cref{fw-non-monotone}.
\begin{theorem}\label{thm-e}
	Consider \cref{fw-non-monotone} with uniform step size $\gamma$.
	For $k = 1,..., K$ it holds that,
	\begin{flalign}
	f(\x^\pare{k}) \geq t^\pare{k} e^{-t^\pare{k}}f(\x^*) - \frac{L
		D^2}{2}k\gamma^2 - O(\gamma^2)f(\x^*).
	\end{flalign}
\end{theorem}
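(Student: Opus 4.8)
The plan is to prove a per-iteration progress inequality and unroll it into a linear recursion. First I would apply the $L$-smoothness bound \labelcref{eq_quad_lower_bound} to the update $\x^\pare{k+1}=\x^\pare{k}+\gamma_k\v^\pare{k}$, obtaining $f(\x^\pare{k+1})\ge f(\x^\pare{k})+\gamma_k\dtp{\v^\pare{k}}{\nabla f(\x^\pare{k})}-\tfrac{L}{2}\gamma_k^2\|\v^\pare{k}\|^2$, and use $\|\v^\pare{k}\|\le D$ (both $\v^\pare{k}$ and $\zero$ lie in $\P$). To lower bound the inner product, I would test the shrunken LMO of Step \labelcref{new_lmo} against the point $(\x^*\vee\x^\pare{k})-\x^\pare{k}=(\x^*-\x^\pare{k})_+$: it is non-negative and dominated by $\x^*$, hence in $\P$ by down-closedness, and it is coordinatewise at most $\bar\u-\x^\pare{k}$ since $\x^*\leqco\bar\u$, so it is feasible. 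Optimality of $\v^\pare{k}$ then gives $\dtp{\v^\pare{k}}{\nabla f(\x^\pare{k})}\ge\dtp{(\x^*\vee\x^\pare{k})-\x^\pare{k}}{\nabla f(\x^\pare{k})}$, and because this direction is non-negative, concavity of $f$ along non-negative directions (\cref{prop_concave}) upgrades the right-hand side to $f(\x^*\vee\x^\pare{k})-f(\x^\pare{k})$.

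The crux is to control $f(\x^*\vee\x^\pare{k})$, and this is where the shrunken constraint pays off. From $\v^\pare{k}\leqco\bar\u-\x^\pare{k}$ the update obeys $\x^\pare{k+1}\leqco(1-\gamma_k)\x^\pare{k}+\gamma_k\bar\u$, so an induction from $\x^\pare{0}=\zero$ with uniform step size gives $\x^\pare{k}\leqco(1-(1-\gamma)^k)\bar\u$. I would then invoke an auxiliary sampling lemma for non-negative submodular functions: if $\z\leqco\theta\bar\u$ then $f(\z\vee\y)\ge(1-\theta)f(\y)$ for every $\y$. Since $\z\mapsto f(\z\vee\x^*)$ is non-negative and submodular (the reparameterization $\max(\cdot,x^*_i)$ is separable and monotone, so \cref{lemma_separable_reparameterization} applies), taking $\theta=1-(1-\gamma)^k$ and $\y=\x^*$ yields $f(\x^*\vee\x^\pare{k})\ge(1-\gamma)^k f(\x^*)$. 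It is precisely the shrunken bound $1-(1-\gamma)^k$, rather than the weaker $t^\pare{k}=k\gamma$ that an un-shrunken iterate would force, that produces the decisive factor $(1-\gamma)^k\approx e^{-t^\pare{k}}$.

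Chaining the three steps gives the recursion $f(\x^\pare{k+1})\ge(1-\gamma)f(\x^\pare{k})+\gamma(1-\gamma)^kf(\x^*)-\tfrac{LD^2}{2}\gamma^2$. Unrolling from $\x^\pare{0}=\zero$ and discarding the non-negative $f(\zero)$ term, the geometric weights collapse because $(1-\gamma)^{k-1-j}(1-\gamma)^j=(1-\gamma)^{k-1}$ is independent of the summation index $j$, so $f(\x^\pare{k})\ge k\gamma(1-\gamma)^{k-1}f(\x^*)-\tfrac{LD^2}{2}\gamma\,(1-(1-\gamma)^k)$. The penalty is at most $\tfrac{LD^2}{2}k\gamma^2$ via $1-(1-\gamma)^k\le k\gamma$, matching the stated smoothness term; and writing $t^\pare{k}=k\gamma$ together with $k\gamma(1-\gamma)^{k-1}\ge t^\pare{k}e^{-t^\pare{k}}-O(\gamma^2)$ turns the leading term into $t^\pare{k}e^{-t^\pare{k}}f(\x^*)$ up to the $O(\gamma^2)f(\x^*)$ remainder. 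The only bookkeeping wrinkle is the final, possibly truncated step with $\gamma_k=\min\{\gamma,1-t^\pare{k}\}$, which only shrinks the move and is absorbed by the same inequalities.

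I expect the main obstacle to be the sampling lemma together with the iterate-boundedness accounting: securing the tight $(1-\gamma)^k$ (equivalently $e^{-t}$) factor, as opposed to the much weaker $1-t$ that a naive bound would give, is exactly what separates the non-monotone analysis from the monotone one and motivates the shrunken LMO. Once that factor is in hand, the recursion and the exponential approximation are routine.
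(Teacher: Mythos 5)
Your proposal follows the paper's own route almost step for step: the smoothness bound with $\|\v^\pare{k}\|\leq D$, testing the shrunken LMO against the point $(\x^*\vee\x^\pare{k})-\x^\pare{k}$ (feasible by down-closedness and $\x^*\leqco\bar\u$), upgrading the inner product via concavity along non-negative directions (\cref{prop_concave}), the growth bound $\x^\pare{k}\leqco(1-(1-\gamma)^k)\bar\u$ (the paper's \cref{prop_non_fw}), and the resulting per-iteration recursion, which is exactly the paper's \cref{claim3_1}. Your explicit unrolling of that recursion, in place of the paper's induction using concavity of $t e^{-t}$, is a harmless and arguably cleaner variant.

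The genuine gap is in how you justify the key lemma $f(\x^\pare{k}\vee\x^*)\geq(1-\gamma)^k f(\x^*)$. You invoke a ``sampling lemma for non-negative submodular functions'' and support its applicability by noting that $\z\mapsto f(\z\vee\x^*)$ is non-negative and submodular via \cref{lemma_separable_reparameterization}. But non-negativity plus submodularity does not imply such a lemma. In one dimension every function is submodular, and $f(x)=(1-x)^2$ on $[0,1]$ is non-negative, yet with $\y=0$, $\theta=\tfrac{1}{2}$, $\z=\tfrac{1}{2}$ one gets $f(\z\vee\y)=\tfrac{1}{4}<\tfrac{1}{2}=(1-\theta)f(\y)$. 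The lemma genuinely needs concavity along non-negative directions, i.e.\ the DR property --- and that is precisely what the composition $\z\mapsto f(\z\vee\x^*)$ destroys: the reparameterization $\max(\cdot,x_i^*)$ is separable and monotone but convex in each coordinate, and the paper explicitly warns after \cref{lemma_separable_reparameterization} that such compositions preserve submodularity but \emph{not} DR-submodularity. So the reduction you propose hands you a function to which the lemma you want to apply can fail. The repair is the paper's proof of \cref{lem_nonmonotone_fw}: work with the DR-submodular $f$ itself, consider the ray $r(\lambda)=\x^*+\lambda(\x\vee\x^*-\x^*)$, verify that the stretched point $r(\lambda')$ with $\lambda'=(1-(1-\gamma)^k)^{-1}$ stays inside $[\zero,\bar\u]$ (so that $f(r(\lambda'))\geq 0$ by non-negativity), and use concavity of $f$ along this non-negative direction together with $\x\vee\x^*=(1-\tfrac{1}{\lambda'})\,r(0)+\tfrac{1}{\lambda'}\,r(\lambda')$ to conclude $f(\x\vee\x^*)\geq(1-\tfrac{1}{\lambda'})f(\x^*)$. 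With that one substitution, the rest of your argument goes through.
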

By observing that $t^\pare{K} = 1$ and applying \cref{thm-e}, we get
the following \namecref{coro_e}:
\begin{corollary}\label{coro_e}
	The output of \cref{fw-non-monotone} satisfies
	\begin{flalign}
	f(\x^\pare{K}) \geq \frac{1}{e} f(\x^*) - \frac{L D^2}{2K} -
	O\left(\frac{1}{K^2}\right)f(\x^*).
	\end{flalign}
\end{corollary}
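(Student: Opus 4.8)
The plan is to derive \cref{coro_e} as a direct specialization of \cref{thm-e} at the terminal iterate of \cref{fw-non-monotone}. The only quantities I need to control are the cumulative \stepsize $t^\pare{K}$, the iteration count $k$, and the \stepsize $\gamma$; once these are fixed, the bound of \cref{thm-e} collapses to the claimed expression by elementary arithmetic.

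First I would confirm that, with the prescribed uniform \stepsize $\gamma = 1/K$, \cref{fw-non-monotone} performs exactly $K$ iterations and terminates with $t^\pare{K} = 1$. Each pass through the \texttt{while} loop increments the cumulative \stepsize by $\gamma_k = \gamma = 1/K$, and because $K \cdot (1/K) = 1$ lands exactly on the budget boundary, the truncation $\gamma_k \leftarrow \min\{\gamma_k, 1 - t^\pare{k}\}$ is never triggered before the final step. After $K$ increments we thus have $t^\pare{K} = K \cdot \tfrac{1}{K} = 1$, at which point the loop guard $t^\pare{k} < 1$ fails and the algorithm halts; the output is $\x^\pare{K}$ with $k = K$.

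Next I would substitute $k = K$, $t^\pare{K} = 1$, and $\gamma = 1/K$ into the guarantee of \cref{thm-e},
\begin{align*}
f(\x^\pare{K}) \geq t^\pare{K} e^{-t^\pare{K}} f(\x^*) - \frac{LD^2}{2} K \gamma^2 - O(\gamma^2) f(\x^*).
\end{align*}
The leading coefficient becomes $t^\pare{K} e^{-t^\pare{K}} = 1 \cdot e^{-1} = \tfrac{1}{e}$; the smoothness penalty becomes $\tfrac{LD^2}{2} K \gamma^2 = \tfrac{LD^2}{2} \cdot K \cdot \tfrac{1}{K^2} = \tfrac{LD^2}{2K}$; and the lower-order term becomes $O(\gamma^2) = O(1/K^2)$. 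Collecting these substitutions reproduces exactly
\begin{align*}
f(\x^\pare{K}) \geq \frac{1}{e} f(\x^*) - \frac{LD^2}{2K} - O\!\left(\frac{1}{K^2}\right) f(\x^*),
\end{align*}
which is the statement of \cref{coro_e}.

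There is essentially no analytic obstacle, since all the real work is carried out inside \cref{thm-e}; the corollary is purely a matter of evaluating that bound at $t^\pare{K} = 1$. The single point deserving care is the bookkeeping of the first step --- verifying that the cumulative \stepsize reaches $1$ \emph{exactly}, rather than overshooting or forcing a truncated final increment --- so that the factor $t^\pare{K} e^{-t^\pare{K}}$ evaluates cleanly to $1/e$. This holds precisely because $\gamma = 1/K$ partitions the total budget of $1$ into $K$ equal pieces.
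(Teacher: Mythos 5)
Your proof is correct and takes exactly the same route as the paper: the paper derives \cref{coro_e} precisely by observing that $t^\pare{K}=1$ and substituting $k=K$, $\gamma = 1/K$ into the bound of \cref{thm-e}. Your extra bookkeeping confirming that the uniform \stepsize $\gamma = 1/K$ makes the cumulative \stepsize reach $1$ exactly (so the truncation $\gamma_k \leftarrow \min\{\gamma_k, 1-t^\pare{k}\}$ is never active) is a sound, if routine, elaboration of the paper's one-line observation.
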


\cref{coro_e} shows that \cref{fw-non-monotone} enjoys a sublinear
convergence rate towards some point $\x^\pare{K}$ inside $\P$, with a
$1/e$ approximation guarantee.

\paragraph{Proof sketch of \cref{thm-e}: }
The proof is by induction. To prepare the building blocks, we first of
all show that the growth of $\x^\pare{k}$ is indeed bounded,
\begin{restatable}[Bounding the growth of $\x^\pare{k}$]{lemma}{restalemmatwo}
	\label{prop_non_fw}
	Assume $\x^\pare{0} = \zero$. For $k=0,..., K-1$, it holds,
	\begin{align}
	x_i^\pare{k}\leq \bar u_i[1-(1-\gamma)^{t^\pare{k}/\gamma}],
	\forall i\in [n].
	\end{align}
\end{restatable}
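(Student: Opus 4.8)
The plan is to prove the bound by a straightforward induction on the iteration index $k$, with the \emph{shrunken} linear maximization oracle supplying the crucial coordinate-wise control. The structural fact I would exploit is that in Step~\labelcref{new_lmo} the direction $\v^\pare{k}$ is chosen from $\{\v\in\P \mid \v\leqco \bar\u - \x^\pare{k}\}$, so coordinate-wise $v_i^\pare{k}\leq \bar u_i - x_i^\pare{k}$; moreover $\v^\pare{k}\geqco\zero$ because $\P$ lies in the positive orthant with lower bound $\zero$. Because the step size is uniform ($\gamma=1/K$, and $1-t^\pare{k}\geq\gamma$ throughout the claimed range, so the truncation $\gamma_k\leftarrow\min\{\gamma,1-t^\pare{k}\}$ never fires), we have $t^\pare{k}=k\gamma$ and hence $t^\pare{k}/\gamma=k$; the vector update then collapses to a per-coordinate scalar recursion.

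First I would settle the base case $k=0$: since $\x^\pare{0}=\zero$ and $1-(1-\gamma)^{0}=0$, the claim reads $x_i^\pare{0}\leq 0$ and holds with equality. For the inductive step, I assume $x_i^\pare{k}\leq \bar u_i\bigl[1-(1-\gamma)^{k}\bigr]$ for every $i$ and substitute the shrunken-LMO bound into the update $x_i^\pare{k+1}=x_i^\pare{k}+\gamma v_i^\pare{k}$, obtaining
\begin{align}
x_i^\pare{k+1} \leq x_i^\pare{k} + \gamma(\bar u_i - x_i^\pare{k}) = (1-\gamma)x_i^\pare{k} + \gamma\bar u_i.
\end{align}
Since $\gamma\in(0,1]$ gives $1-\gamma\geq 0$, the inductive hypothesis may be inserted on the right without reversing the inequality, after which the expression telescopes:
\begin{align}
x_i^\pare{k+1} &\leq (1-\gamma)\bar u_i\bigl[1-(1-\gamma)^{k}\bigr] + \gamma\bar u_i \notag\\
&= \bar u_i\bigl[(1-\gamma) - (1-\gamma)^{k+1} + \gamma\bigr] = \bar u_i\bigl[1-(1-\gamma)^{k+1}\bigr],
\end{align}
which is precisely the claim at index $k+1$, closing the induction.

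I do not anticipate a genuine obstacle: the argument is an elementary telescoping induction rather than anything delicate. The two places that merely require care are (i) confirming that no iterate in the range $k\leq K-1$ is shortened, so that $t^\pare{k}=k\gamma$ and the exponent $t^\pare{k}/\gamma$ is the integer $k$, and (ii) keeping the inequality direction correct when invoking the hypothesis, which rests on $1-\gamma\geq0$ together with $\v^\pare{k}\geqco\zero$. The one conceptual point worth emphasizing is that the extra constraint $\v\leqco\bar\u-\x^\pare{k}$ defining the shrunken LMO is exactly what caps the growth of each coordinate at $\bar u_i\bigl[1-(1-\gamma)^{t^\pare{k}/\gamma}\bigr]$; this bounded-growth lemma is in turn the building block for the induction proving \cref{thm-e}.
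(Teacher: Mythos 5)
Your proof is correct and follows essentially the same route as the paper's: induction on $k$, using the shrunken-LMO constraint $v_i^\pare{k}\leq \bar u_i - x_i^\pare{k}$ to get the recursion $x_i^\pare{k+1}\leq(1-\gamma)x_i^\pare{k}+\gamma\bar u_i$, then inserting the inductive hypothesis and telescoping. The only difference is cosmetic: you make explicit that $t^\pare{k}/\gamma=k$ (checking the truncation never fires for $k\leq K-1$), whereas the paper keeps the exponent in the form $t^\pare{k}/\gamma$ throughout; both are fine.
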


Then the following \namecref{lem_nonmonotone_fw} provides a lower
bound, which depends on the global optimum,

\begin{restatable}[Generalized from Lemma 7 of
	\citet{chekuri2015multiplicative}]{lemma}{restalemmathree}
	\label{lem_nonmonotone_fw}
	Given $\bmtheta\in (\zero, \bar \u]$, let
	$\lambda' = \min_{i\in [n]} \frac{\bar u_i}{\theta_i}$. Then for all
	$\x\in [\zero, \bmtheta]$, it holds,
	\begin{align}
	f(\x\vee \x^*) \geq (1-\frac{1}{\lambda'})f(\x^*).
	\end{align}
\end{restatable}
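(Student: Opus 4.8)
The plan is to reduce the claim to the one-dimensional concavity of $f$ along a carefully chosen non-negative direction, combined with the non-negativity of $f$. Fix $\x\in[\zero,\bmtheta]$ and set $\d := \x\vee\x^* - \x^* \geqco \zero$, which is indeed a non-negative direction since $\x\vee\x^* \geqco \x^*$. The target point $\x\vee\x^* = \x^* + \d$ sits at parameter $t=1$ on the ray $t\mapsto \x^* + t\d$. The idea is to introduce a second reference point further along this ray, namely $\z := \x^* + \lambda'\d$, apply concavity on the segment joining $\x^*$ (at $t=0$) and $\z$ (at $t=\lambda'$), and then bound $f(\z)$ from below by $0$.

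First I would verify that $\z$ is feasible, i.e.\ $\z\in\X=[\zero,\bar\u]$; this is the step where the definition of $\lambda'$ is essential. Coordinate-wise, if $x_i\leq x^*_i$ then $(\x\vee\x^*)_i = x^*_i$ and hence $z_i = x^*_i\in[0,\bar u_i]$ since $\x^*\in\X$. If instead $x_i > x^*_i$, then $z_i = x^*_i + \lambda'(x_i - x^*_i) = (1-\lambda')x^*_i + \lambda' x_i$. Its lower bound $z_i > x^*_i \geq 0$ is immediate; for the upper bound, since $\lambda'\geq 1$ we have $(1-\lambda')x^*_i\leq 0$, so $z_i\leq \lambda' x_i\leq \lambda'\theta_i\leq \bar u_i$, where the final inequality uses $\lambda' = \min_j \bar u_j/\theta_j \leq \bar u_i/\theta_i$ together with $x_i\leq\theta_i$. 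Thus $\z\in\X$, and because $\X$ is a box (hence convex) the entire segment $\{\x^*+t\d : t\in[0,\lambda']\}$ lies in $\X$.

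With feasibility in hand, I would invoke \cref{prop_concave}: because $\d\geqco\zero$, the univariate function $t\mapsto f(\x^*+t\d)$ is concave on $[0,\lambda']$. Noting that $\lambda'\geq 1$, so that $1\in[0,\lambda']$, I write $1$ as the convex combination $1 = (1-\tfrac{1}{\lambda'})\cdot 0 + \tfrac{1}{\lambda'}\cdot\lambda'$. Concavity then yields
\begin{align}
f(\x\vee\x^*) = f(\x^*+\d) \geq \Big(1-\tfrac{1}{\lambda'}\Big)f(\x^*) + \tfrac{1}{\lambda'}f(\z) \geq \Big(1-\tfrac{1}{\lambda'}\Big)f(\x^*),
\end{align}
where the last inequality drops the nonnegative term $\tfrac{1}{\lambda'}f(\z)$ using $f(\z)\geq 0$. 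This is exactly the claimed bound.

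The only genuinely delicate point is the feasibility check $\z\in\X$: it is precisely the balance that the definition $\lambda' = \min_i \bar u_i/\theta_i$ is engineered to guarantee, ensuring we may scale the displacement $\d$ by the factor $\lambda'$ without leaving the box. Everything else is a routine consequence of concavity along non-negative directions (\cref{prop_concave}) and the non-negativity of $f$.
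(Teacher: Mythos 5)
Your proposal is correct and follows essentially the same route as the paper's own proof: both define the ray $\x^*+t(\x\vee\x^*-\x^*)$, verify via the same coordinate-wise case analysis that the point at parameter $\lambda'$ stays in $[\zero,\bar\u]$ (which is exactly where $\lambda'=\min_i \bar u_i/\theta_i$ is used), write $\x\vee\x^*$ as the convex combination $(1-\tfrac{1}{\lambda'})\x^* + \tfrac{1}{\lambda'}\z$, and conclude by concavity along non-negative directions (\cref{prop_concave}) together with non-negativity of $f$. No gaps; the argument matches the paper's proof step for step.
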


Then the key ingredient for induction  is the relation between  $f(\x^{\pare{k+1}})$
and $f(\x^{\pare{k}})$ indicated by:
\begin{restatable}{claim}{restaclaimthree}
	\label{claim3_1}
	For $k = 0,...,K-1$ it holds,
	\begin{align}
	f(\x^{\pare{k+1}}) \geq (1-\gamma) f(\x^{\pare{k}}) +
	\gamma(1-\gamma)^{t^\pare{k}/\gamma} f(\x^*) -\frac{L
		D^2}{2}\gamma^2.
	\end{align}
\end{restatable}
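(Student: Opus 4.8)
The plan is to prove the one-step recursion by first turning the update into a lower bound on $f(\x^\pare{k+1})$ via smoothness, and then estimating the Frank-Wolfe inner product $\dtp{\v^\pare{k}}{\nabla f(\x^\pare{k})}$ from below. Since the update is $\x^\pare{k+1} = \x^\pare{k} + \gamma \v^\pare{k}$ and $\|\v^\pare{k}\| \leq D$ (because $\v^\pare{k}\in\P$ and $\zero\in\P$), the quadratic bound \labelcref{eq_quad_lower_bound} implied by $L$-Lipschitz gradients gives $f(\x^\pare{k+1}) \geq f(\x^\pare{k}) + \gamma\dtp{\v^\pare{k}}{\nabla f(\x^\pare{k})} - \frac{L D^2}{2}\gamma^2$. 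It then suffices to show $\dtp{\v^\pare{k}}{\nabla f(\x^\pare{k})} \geq (1-\gamma)^{t^\pare{k}/\gamma} f(\x^*) - f(\x^\pare{k})$, after which the three terms rearrange exactly into the claimed inequality.

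The key step is to feed a well-chosen feasible point into the shrunken LMO. I would use the non-negative vector $\w := \x^\pare{k}\vee\x^* - \x^\pare{k}$. Since $\w \leqco \x^*$ and $\P$ is down-closed, $\w\in\P$; moreover componentwise $\w \leqco \bar \u - \x^\pare{k}$ follows from $\x^* \leqco \bar \u$ and $\x^\pare{k}\leqco\bar \u$, so $\w$ satisfies the extra constraint $\v\leqco\bar \u-\x^\pare{k}$ of the shrunken LMO in Step \labelcref{new_lmo}. Optimality of $\v^\pare{k}$ then yields $\dtp{\v^\pare{k}}{\nabla f(\x^\pare{k})} \geq \dtp{\w}{\nabla f(\x^\pare{k})}$. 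Because $\w$ is precisely the non-negative direction from $\x^\pare{k}$ to $\x^\pare{k}\vee\x^*$, concavity of $f$ along non-negative directions (\cref{prop_concave}) gives $\dtp{\w}{\nabla f(\x^\pare{k})} \geq f(\x^\pare{k}\vee\x^*) - f(\x^\pare{k})$.

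Finally I would lower-bound $f(\x^\pare{k}\vee\x^*)$ using the growth estimate together with \cref{lem_nonmonotone_fw}. By \cref{prop_non_fw}, $\x^\pare{k}\in[\zero,\btheta]$ with $\theta_i := \bar u_i[1-(1-\gamma)^{t^\pare{k}/\gamma}]$; for $k\geq 1$ one has $\btheta\in(\zero,\bar \u]$, and since the factors $\bar u_i$ cancel, $\lambda' = \min_i \bar u_i/\theta_i = [1-(1-\gamma)^{t^\pare{k}/\gamma}]^{-1}$. Applying \cref{lem_nonmonotone_fw} then gives $f(\x^\pare{k}\vee\x^*) \geq (1-1/\lambda')f(\x^*) = (1-\gamma)^{t^\pare{k}/\gamma}f(\x^*)$, and chaining the three estimates produces the claim.

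The main obstacle is the boundary case $k=0$, where $t^\pare{0}=0$ forces $\btheta=\zero$ so that \cref{lem_nonmonotone_fw} cannot be invoked directly; here I would argue by hand, using $\x^\pare{0}\vee\x^*=\x^*$ and $(1-\gamma)^{0}=1$, which makes the concavity estimate $\dtp{\v^\pare{0}}{\nabla f(\x^\pare{0})}\geq f(\x^*)-f(\x^\pare{0})$ already match the target. The other delicate point is the double role played by $\w=\x^\pare{k}\vee\x^*-\x^\pare{k}$: one must check simultaneously that it lies in the shrunken feasible region and that it is exactly the direction along which \cref{prop_concave} applies. Everything remaining is routine rearrangement.
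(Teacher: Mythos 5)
Your proof is correct and follows essentially the same route as the paper's own argument: the smoothness bound with $\|\v^\pare{k}\|\leq D$, feeding the point $\x^\pare{k}\vee\x^*-\x^\pare{k}$ into the shrunken LMO of Step \labelcref{new_lmo}, concavity along non-negative directions (\cref{prop_concave}), and finally \cref{lem_nonmonotone_fw} with $\btheta = \bar\u\,[1-(1-\gamma)^{t^\pare{k}/\gamma}]$ supplied by \cref{prop_non_fw}. Your explicit treatment of the boundary case $k=0$, where $\btheta=\zero$ falls outside the hypothesis of \cref{lem_nonmonotone_fw} and the inequality instead follows directly from $\x^\pare{0}\vee\x^*=\x^*$, is a sound refinement of a point the paper's proof glosses over.
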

It is derived by a combination of the quadratic lower
bound in \cref{eq_quad_lower_bound}, \cref{prop_non_fw} and
\cref{lem_nonmonotone_fw}.

\subsection{Remarks on the Two Algorithms.}
Notice that though the \twophase algorithm has an inferior guarantee
compared to \shrunkenfw, it is still of interest: i) It preserves
flexibility in using a wide range of existing solvers for finding an
(approximately) stationary point. ii) The guarantees that we present
rely on a worst-case analysis. The empirical performance of the
\twophase algorithm is often comparable or better than that of
\shrunkenfw. This suggests to explore more properties in concrete
problems that may favor the \twophase algorithm.

\section{Experimental Evaluation}
\label{sec_exp}

\if 0
\subsection{Datasets}

We used graphs from the \cite{kunegis2013konect} project and
the  SNAP project.

\paragraph{Monotone maximization}

Proposed algorithms:

- the
\algname{Frank-Wolfe} variant: \submodularfw, which has
a $1-1/e$ approximation guarantee and sublinear rate $1/K$

- \nonconvexfw:  $1/2$ guarantee, and $1/\sqrt{K}$ rate

\textbf{Baselines}:

\begin{enumerate}
\item \pga: projected gradient ascent with constant
\stepsize $1/L$; $L$ is not known generally, we make it
to be $1/K$.

\end{enumerate}

We use constant \stepsize for \submodularfw since it gives the tightest approximation guarantee (see  \cref{cor_9}).

\paragraph{Non-monotone maximization}

Proposed algorithms:

- \twophasefw:   $1/4$ approximation guarantee,  $1/\sqrt{K}$ rate

- \shrunkenfw,   $1/e$ approximation guarantee, $1/K$ rate

\textbf{baselines}:

\begin{enumerate}

\item Projected
gradient ascent (\pga) with diminishing step sizes ($\frac{C}{\sqrt{k+1}}$, $k$ starting from 0),
and with constant step sizes $1/L$. We set $C = 1e2$ in the
experiments.
\end{enumerate}

\fi

\subsection{Influence Maximization with Marketing Strategies}

Follow the application in \cref{app_influence_max_marketing_strategies},
we consider the following simplified influence  model for experiments. The resulted problem is an instance of the monotone DR-submodular maximization problem.

\subsubsection{Experimental Setup}

\paragraph{Simplified Influence Model for Experiments.}

For general influence models, it is hard to evaluate
\cref{influence_general_marketing}.  To simplify the experiments, we
consider $F(S)$ to be a facility location objective, for which the
expected influence has a closed-form expression, as shown by \citet[Section
4.2]{bian2019optimalmeanfield}.
Here each customer may represent an ``opinion leader'' in social networks,
and there is a bipartite graph describing the influence strength of
each opinion leader to the population.

\paragraph{Dataset.}

We used the UC Irvine forum
dataset\footnote{\url{http://konect.uni-koblenz.de/networks/opsahl-ucforum}} as the real-world bipartite graph.
It is a bipartite network containing user posts to forums. The users
are students at the University of California, Irvine. An edge
represents a forum message on a specific forum.  It has in total 899
users, 522 forums and 33,720 edges (posts on the forum).

For a specific (user, forum) pair, we determine the edge weight as the
number of posts from that user on the forum. This weighting indicates
that the more one user has posted on a forum, the more he has
influenced that particular forum. With this processing, we have
7,089 unique edges between users and forums.

We experimented with the independent marketing actions in \cref{app_activations_influence_max} for
simplicity. For a customer $i$, we set the parameter $p_i \in [0, 1]$
based on the following heuristic: Firstly, we calculate the
``degree'' of customer $i$ as the number of forums he has posted on:
$d_i = \|W_{i:}\|_0$. Then we set $p_i = \sigma(- d_i)$, $\sigma(\cdot)$ is
the logistic sigmoid function.  Remember that $p_i$ is the probability
of customer $i$ becoming activated with one unit of investment, so this
heuristic means that the more influence power a user has, the more
difficult it is to activate him, because he might charge more than other
users with less influence power.
Since it is too time consuming to experiment on the whole bipartite
graph, we experimented on different subgraphs of the original
bipartite graph.

\subsubsection{Experimental Results}

\setkeys{Gin}{width=0.53\textwidth}
\begin{figure}[htbp]
	\center \subfloat [50 users, 10 forums] {
		\includegraphics[]{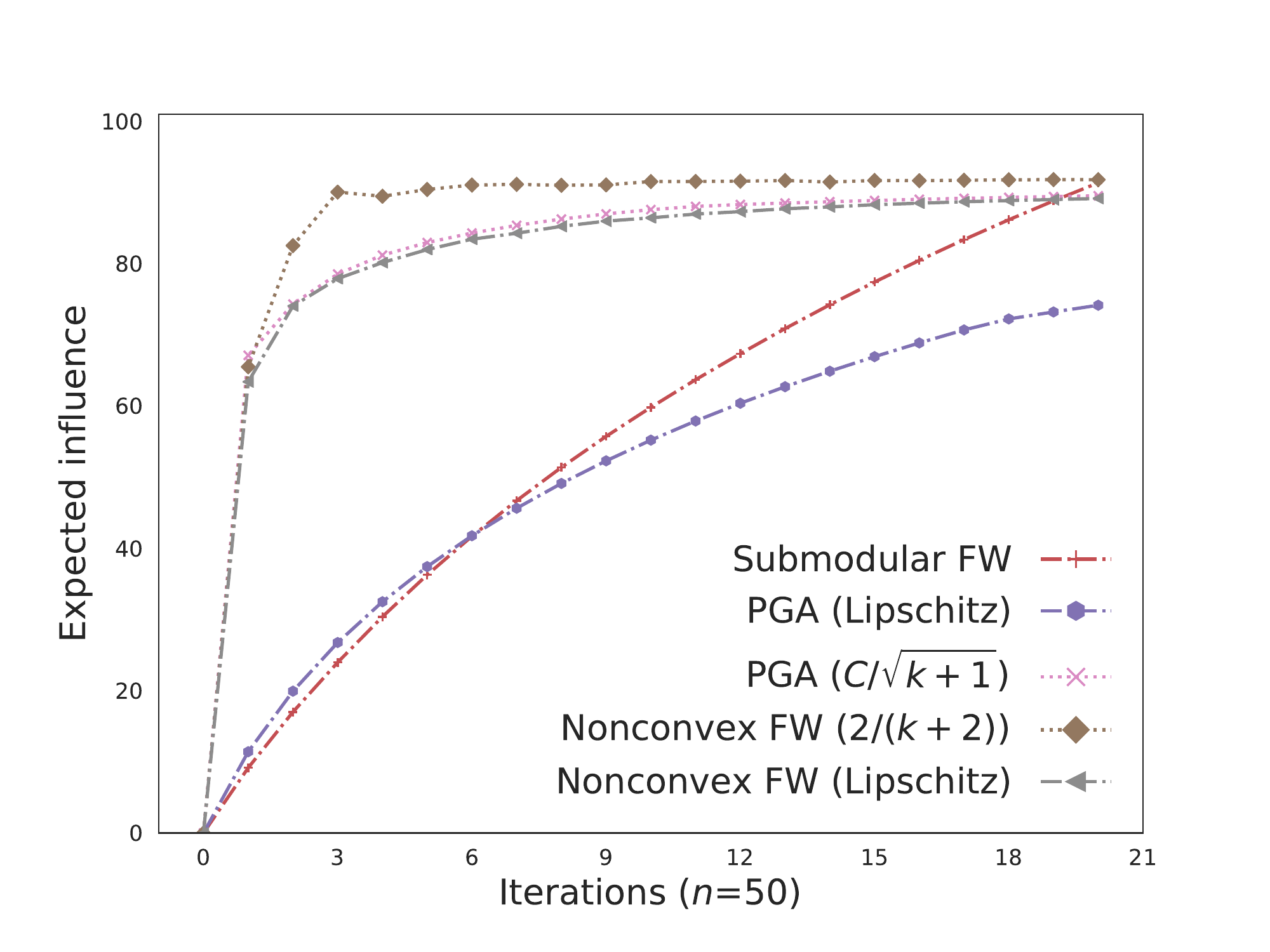}}
	\subfloat [100 users, 10 forums] {
		\includegraphics[]{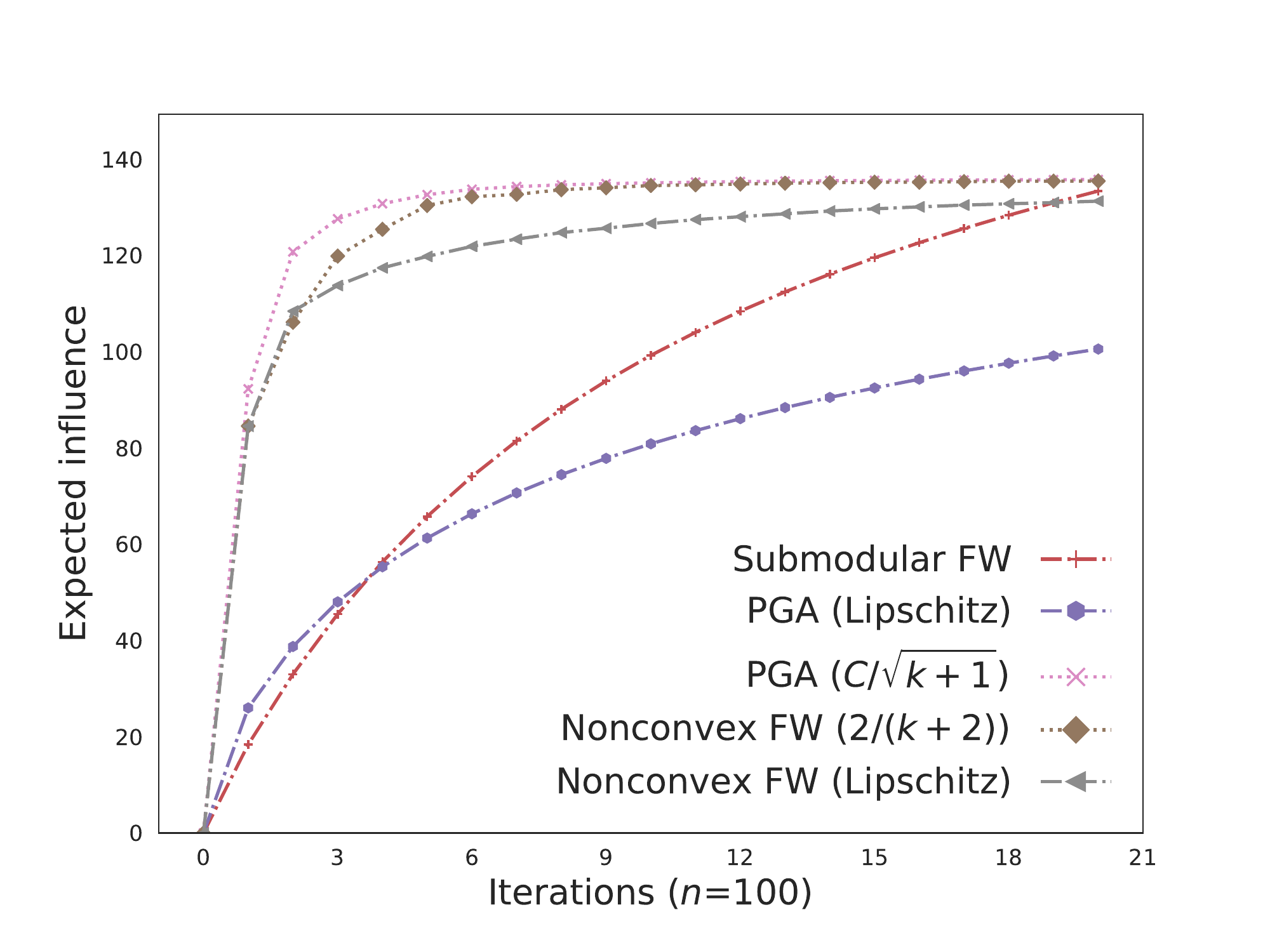}}\\
\subfloat [150 users, 20 forums] {
	\includegraphics[]{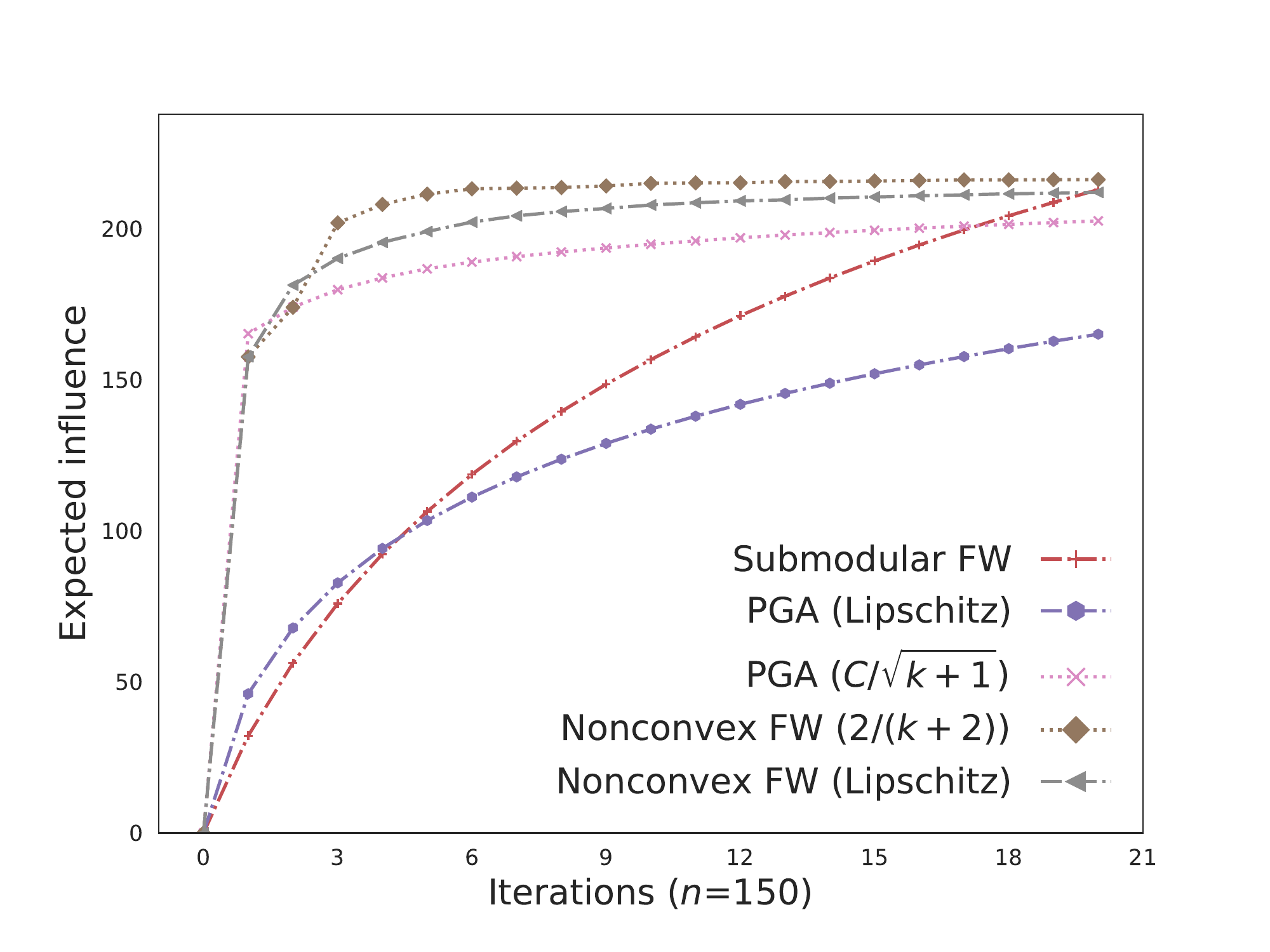}}
 \subfloat [200 users, 20 forums] {
	\includegraphics[]{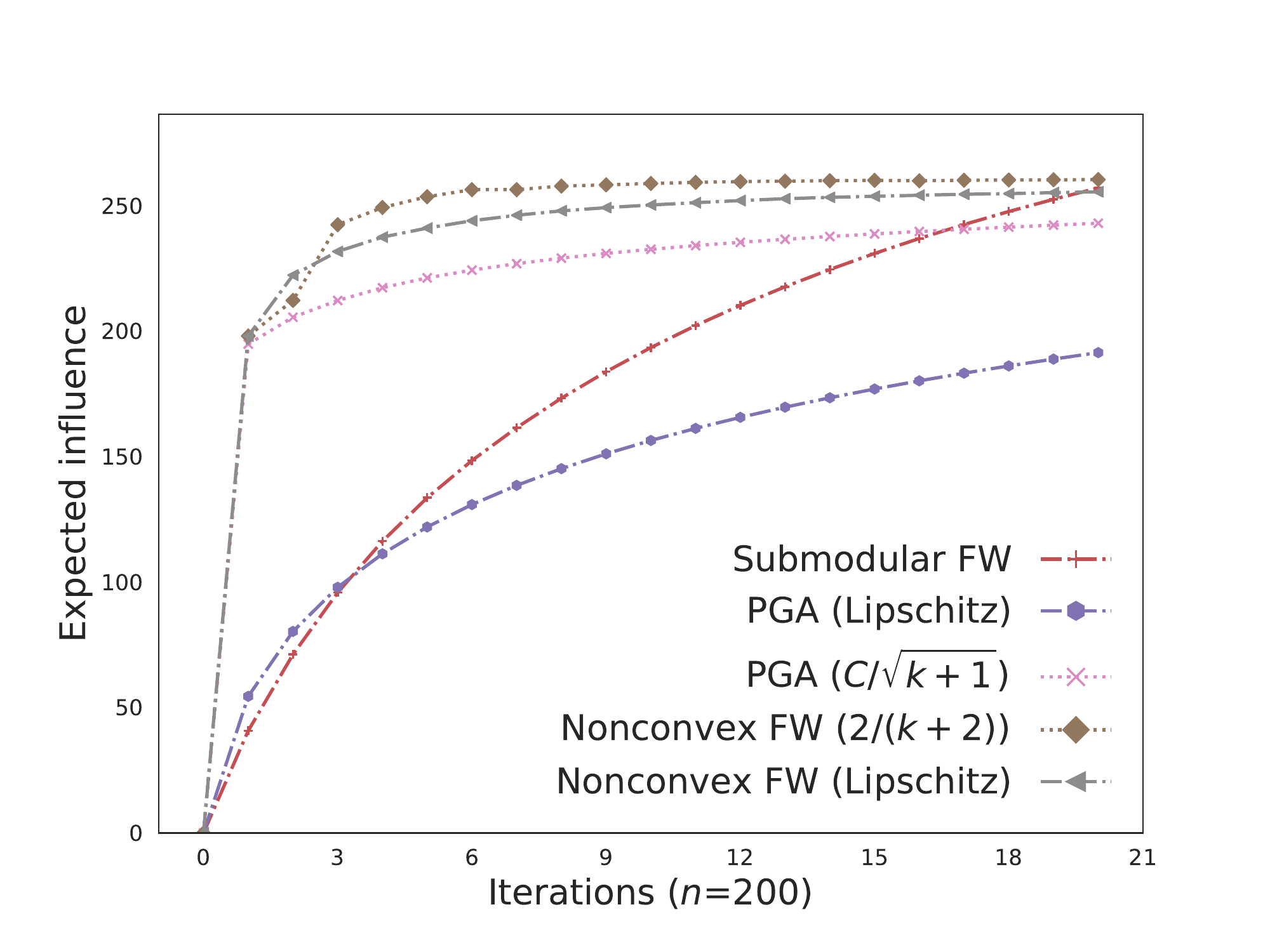}}
	\caption{Expected influence w.r.t. iterations of different
		algorithms on real-world subgraphs with (a) 50 users; (b) 100 users; (c) 150 users; (d) 200 users. \submodularfw has a  stable performance. It does not need to tune the \stepsizes or any hyperparameters. \pga algorithms are sensitive to quality of tuned \stepsizes. \nonconvexfw
		with the Lipschitz \stepsize rule also needs a careful tuning of the Lipschitz parameter.}
	\label{fig_traj_influence_50_100}
\end{figure}

\cref{fig_traj_influence_50_100} documents
the trajectories of expected influence of different algorithms.  We
can see that \submodularfw has a very stable performance: It can
always reach a fairly good solution, no matter what kind of setting
you have. And it does not need to tune the \stepsizes or any
hyperparameters. One drawback is that it converges relatively slowly in the
beginning.

For \pga algorithms, we tested with two \stepsize rules: the Lipschitz
rule ($1/L$) which has the 1/2 approximation guarantee; the diminishing
\stepsize rule ($C/\sqrt{k+1}$), which does not have a formal
theoretical guarantee.
One general observation is that both \stepsize rules need a careful
tuning of hyperparameters, and the performance crucially depends on
the quality of hyperparameters. For example, for \pga, if the
\stepsize is too small, it may converge too slowly; if the \stepsizes
are too large, it tends to fluctuate.

For \nonconvexfw algorithms, we also tested two \stepsize rules: the
``oblivious'' rule ($2/(k+2)$)) and the Lipschitz rule. Apparently the
Lipschitz \stepsize rule needs a careful tuning of the Lipschitz
parameter $L$, while the oblivious rule does not.  With a careful
tuning of $L$, both \nonconvexfw variants converge very fast and
converge to the highest function value.

\if 0
\subsubsection{Facility location objective  for movie recommendation}

We tested on  the facility location objective
used by \cite{hassani2017gradient} for a movie recommendation task.
Its multilinear extension has closed form
expression, as shown in \cite{bian2018meanfield} .

Let $\BR \in \R^{m\times n}$ be the
rating matrix from $m$ users for $n$ movies.
The
MovieLens 1M dataset\footnote{\url{https://grouplens.org/datasets/movielens/}} has $m = 6,041$ users and $n = 4,000$ movies, in total
1 million ratings (from 1 to 5).
According to
\cite{mokhtari2017conditional},
we set the missing entries to be 0. The utility
of selecting $S$ is given by the following facility location
objective,
\begin{align}
F(S) = \frac{1}{m} \sum_{i=1}^{m} F_i(S),
F_i(S) = \max_{j \in S} R_{i,j}.
\end{align}

The multilinear extension of $F(S)$ is,
\begin{align}
\multi(\x) = \frac{1}{m} \sum_{i=1}^{m} f_i(\x), \x\in \R^n.
\end{align}

which has a closed-form expression \citep{bian2018meanfield}:
For one customer $i$, let us sort $\flweights_{ij}$ such that
$\flweights_{ij_i(1)} \leq \flweights_{ij_i(2)}\leq \cdots \leq  \flweights_{ij_i(n)}$.
Its multilinear extension can be formulated as:
\begin{align}
\multi(\x)  & = \frac{1}{m} \sum_{i\in [m]} \sum_{	S\subseteq \groundset }  \max_{j \in S} \flweights_{ij} \prod_{k\in S}x_k \prod_{k'\notin S }(1-x_{k'}) \\
& = \frac{1}{m} \sum_{i\in [m]} \sum_{l=1}^n \flweights_{ij_i (l)} x_{j_i(l)} \prod_{k=l+1}^n (1- x_{j_i(k)}).
\end{align}

So $\multi(\x)$ can be evaluated in $\bigo{m |\groundset|^2}$ time,
and one can see that $\nabla_k \multi(\x) $ can be
calculated in $\bigo{m |\groundset|^2}$  time as well.

\cref{fig_facloc} shows the result on movie recommendation.
One can observe that
\nonconvexfw always performs the best, though
it has a 1/2 approximation guarantee.
\pga with a constant \stepsize does not have a satisfactory
performance, while with adaptive \stepsize it
has similar performance as \nonconvexfw.

\setkeys{Gin}{width=0.5\textwidth}
\begin{figure}[t]
    \subfloat[ \label{}]{%
    \hspace{-.3cm}
      \includegraphics[]{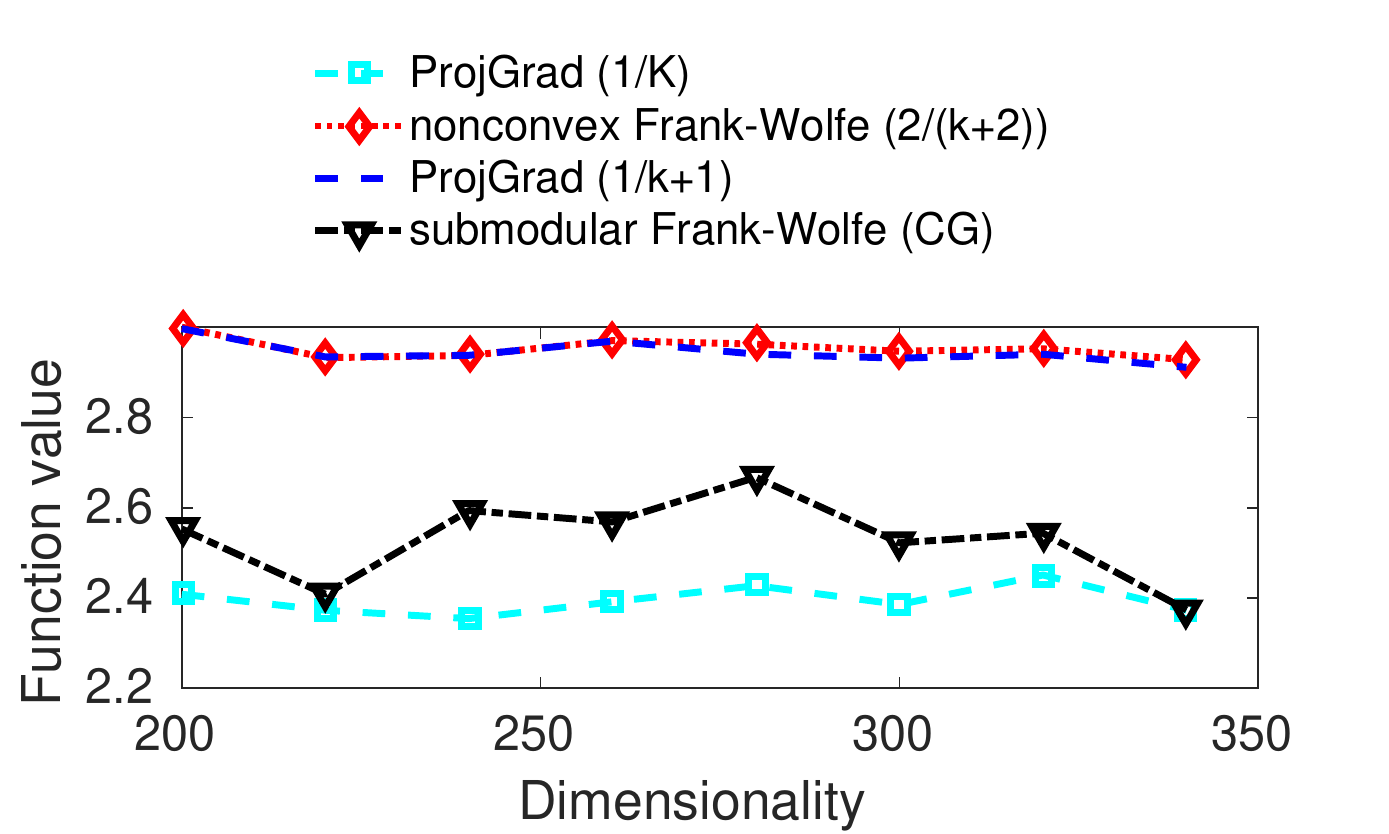}
     }
    \subfloat[$n$=300 \label{}]{
    \hspace{-.3cm}
      \includegraphics[]{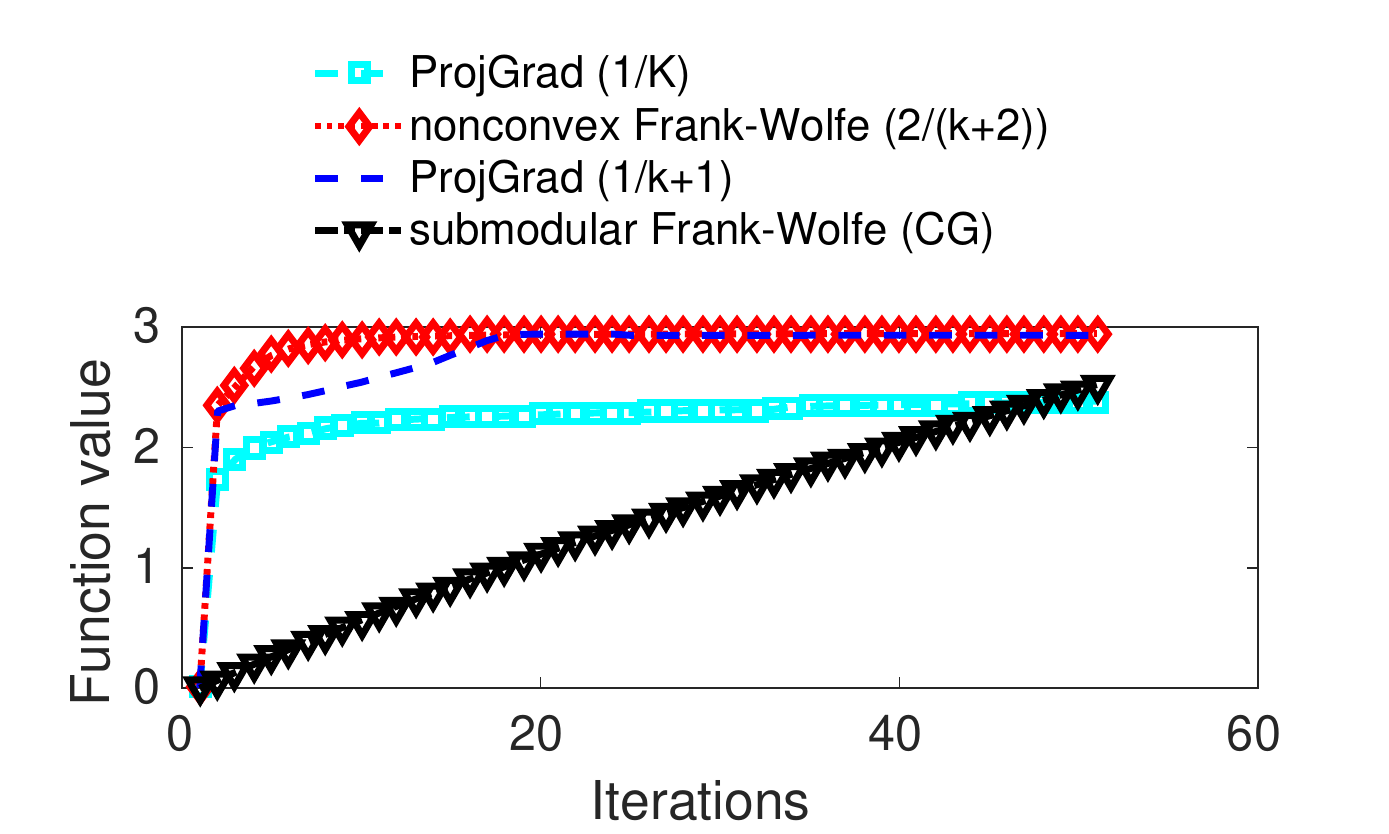}
     }\\
    \subfloat[$n$=320 \label{}]{%
    \hspace{-.3cm}
      \includegraphics[]{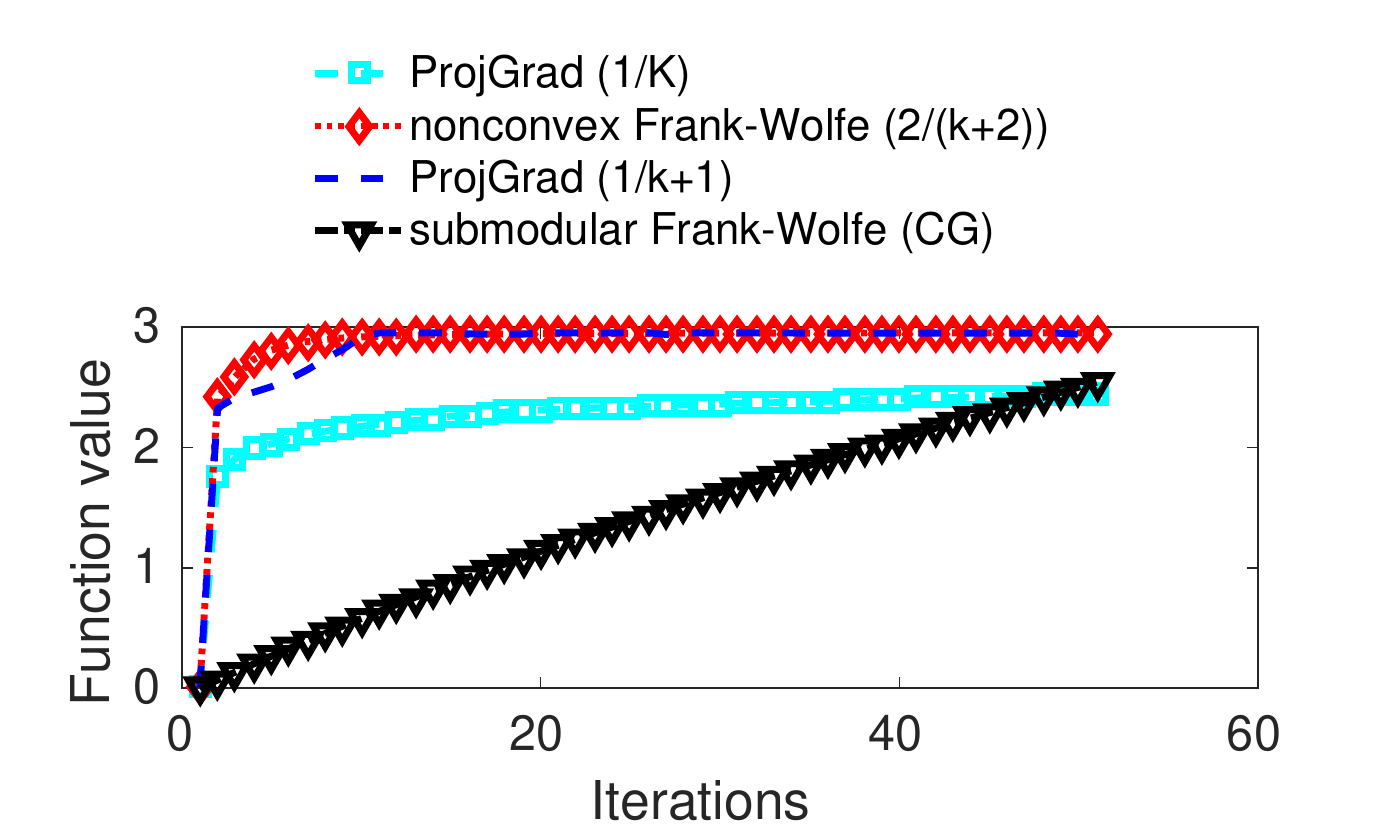}
     }
    \subfloat[$n$=340 \label{}]{
    \hspace{-.3cm}
      \includegraphics[]{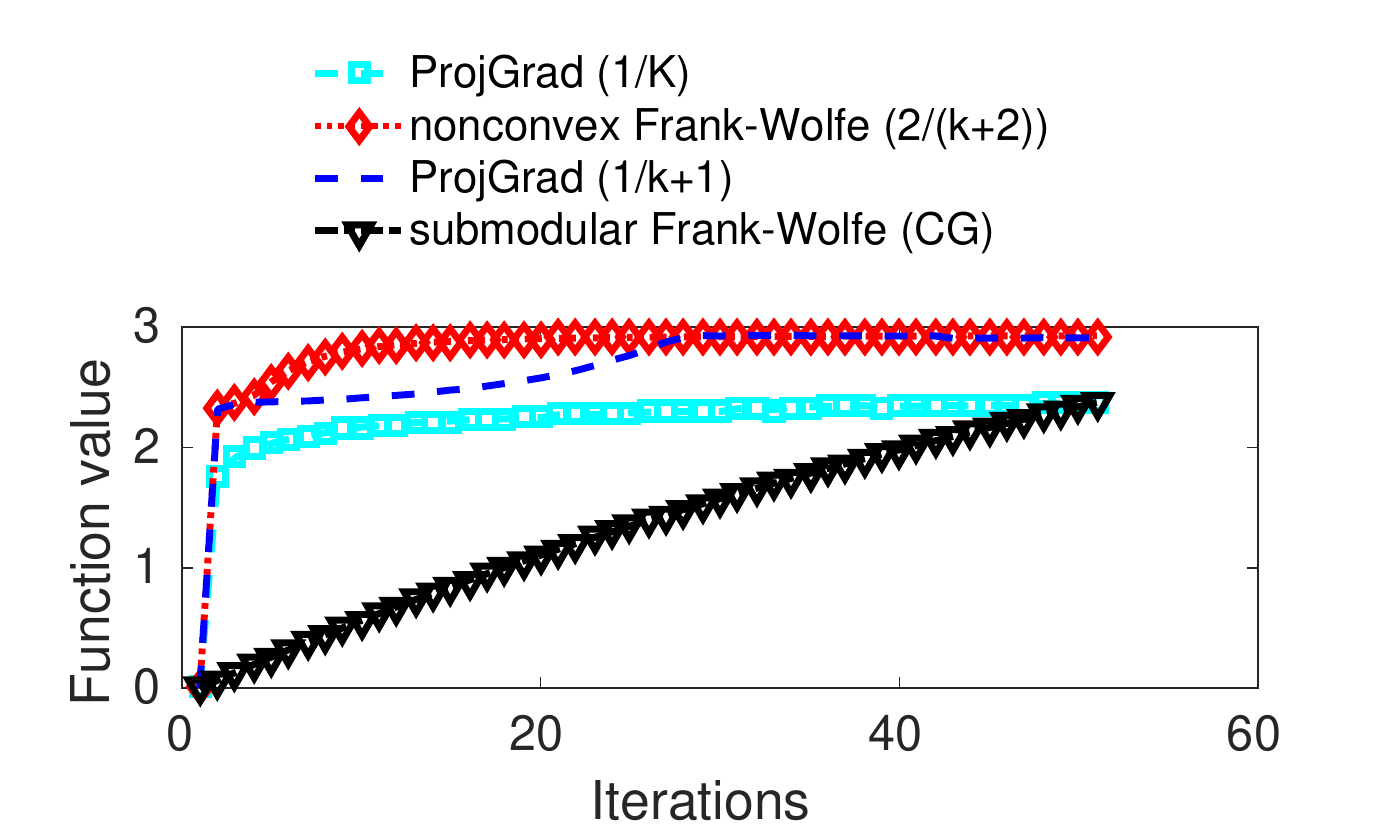}
     }\\
    \caption{Monotone experiments on the facility location
    objective.
    }
    \label{fig_facloc}
\end{figure}
\fi

\if 0
\subsubsection{DR-submodular Quadratic Programming}

As a state-of-the-art global solver,  \algname{quadprogIP}\footnote{We used  the open source code provided by \citet{xia2015globally}, and the IBM CPLEX optimization studio {\url{https://www.ibm.com/jm-en/marketplace/ibm-ilog-cplex}}
as the subroutine.} \citep{xia2015globally}
can find the global optimum (possibly  in  exponential time), which were used  to calculate  the approximation ratios.
Our problem instances are  synthetic DR-submodular quadratic objectives with down-closed  polytope
constraints, i.e., $f(\x) = \frac{1}{2}\x^\trans \bmH \x + \h^\trans \x +c$ and $\P = \{\x\in \R_+^n \ |\  \bmA \x \leqco \b, \x \leqco \bar \u, \bmA\in \R_{++}^{m\times n}, \b\in \R_+^m \}$.
Both objective and constraints were randomly generated, in the
following two manners:

\textbf{1) Uniform distribution. }
$\bmH\in \R^{n\times n}$ is a symmetric matrix
with uniformly distributed  entries in $[-1, 0]$; $\bmA\in \R^{m\times n}$ has uniformly distributed entries in $[\nu, \nu +1]$, where $\nu = 0.01$
is a small positive constant in order to make entries of $\bmA$
strictly positive.

\setkeys{Gin}{width=0.33\textwidth}
 \begin{figure}[htbp]
   \center
  \includegraphics[width=0.56\textwidth]{legend_h.pdf}\\
  \vspace{-0.4cm}
 \subfloat[$m={\floor {0.5n}}$ \label{fig_quad_sub1}]{
 \includegraphics[]{no_legend_ratiosquad_synm-halfn-n_exp20-seed0.pdf}}
 \hspace{-0.4cm}
 \subfloat[$m=n$ \label{fig_quad_sub2}]{
 \includegraphics[]{no_legend_ratiosquad_synm-n-n_exp20-seed0.pdf}}
  \hspace{-0.4cm}
  \subfloat[$m=\floor {1.5n}$ \label{fig_quad_sub3}]{
  \includegraphics[]{no_legend_ratiosquad_synm-onehalfn-n_exp20-seed0.pdf}}
 \caption{Results on   DR-submodular  quadratic instances with uniform distribution.}
 \label{fig_quad}
\end{figure}

\textbf{2) Exponential distribution. }  The entries of $-\bmH$
and $\bmA$ were sampled from exponential distributions $\text{Exp}(\lambda)$ (For a random variable $y\geq 0$, its probability density function is $\lambda e^{-\lambda y}$, and for $y<0$, its  density is $0$).    Specifically, each entry of $-\bmH$
was sampled from $\text{Exp}(1)$, then the matrix $-\bmH$
was made to be  symmetric. Each entry of $\bmA$ was
sampled from $\text{Exp}(0.25) + \nu$, where $\nu = 0.01$
is a small positive constant.

In both the above two cases, we set   $\b = \mathbf{1}^m$, and    $\bar \u$ to be the tightest upper bound of $\P$ by  $\bar u_j = \min_{i\in [m] }\frac{b_i}{A_{ij}}, \forall j\in [n]$.
In order to make $f$  non-monotone,
we set $\h = -0.2*\bmH^\trans \bar \u$.
To make sure that $f$ is non-negative, we first of all solve the
problem $\min_{\x\in \P} \frac{1}{2}\x^\trans \bmH \x + \h^\trans \x$ using \algname{quadprogIP}, let the solution to be $\hat\x$, then  set
$c= -f(\hat\x) + 0.1*|f(\hat\x)|$.

 \setkeys{Gin}{width=0.33\textwidth}
 \begin{figure}[htbp]
   \center
  \includegraphics[width=0.56\textwidth]{legend_h.pdf}\\
  \vspace{-0.4cm}
 \subfloat[$m= {\floor {0.5n}}$ \label{fig_quad_exp_sub1}]{
 \includegraphics[]{no_legend_ratiosexp_synm-halfn-n_exp20-seed0.pdf}}
 \hspace{-0.4cm}
 \subfloat[$m=n$ \label{fig_quad_exp_sub2}]{
 \includegraphics[]{no_legend_ratiosexp_synm-n-n_exp20-seed0.pdf}}
  \hspace{-0.4cm}
  \subfloat[$m=\floor {1.5n}$ \label{fig_quad_exp_sub3}]{
  \includegraphics[]{no_legend_ratiosexp_synm-onehalfn-n_exp20-seed0.pdf}}
 \caption{Results on   quadratic instances with exponential  distribution.}
 \label{fig_quad_exp}
 \vspace{-0.44cm}
\end{figure}

The approximation ratios w.r.t.  dimensionalities ($n$) are plotted in \cref{fig_quad,fig_quad_exp}, for the two manners of data generation. We set the number of constraints to be $m=\floor {0.5n}$,
$m=n$ and $m=\floor {1.5n}$ in \cref{fig_quad_sub1,fig_quad_sub2,fig_quad_sub3} (and \cref{fig_quad_exp_sub1,fig_quad_exp_sub2,fig_quad_exp_sub3}), respectively.

One can see that  \twophasefw
usually performs  the best, \pga  follows, and  \shrunkenfw  is the last.
 The good performance of \twophasefw  can be partially  explained by the strong DR-submodularity
 of quadratic functions according to \cref{rate_local_fw}.
Performance of the two analyzed algorithms is consistent with the  theoretical
bounds: the approximation ratios of  \shrunkenfw
are always much higher than $1/e$.

\subsubsection{Maximizing  Softmax Extensions}

 With some derivation, one can see the derivative of the softmax extension  in \labelcref{eq_softmax} is:
 $\nabla_i f(\x) = \tr{ ({\diag(\x)(\bmL-\bmI) +\bmI })^{-1}(\bmL - \bmI)_i}, \forall i\in [n]$,
 where $(\bmL - \bmI)_i$ denotes the matrix obtained by zeroing  all entries except
 the $i^\text{th}$ row of $(\BL - \BI)$. For $\BC:= ({\diag(\x)(\bmL-\bmI) +\bmI })^{-1}, \BD:=(\BL - \BI)$, one can see that $\nabla_i f(\x) = \BD_{i\cdot}^\trans \BC_{\cdot i}$, which gives an efficient way to calculate the gradient $\nabla f(\x)$.

  \setkeys{Gin}{width=0.33\textwidth}
         \begin{figure}[htbp]
       \center
              \subfloat[$m={\floor {0.5n}}$ \label{fig_softmax_syn1}]{
              \includegraphics[]{pyplots/2019-09-11-dataid-1-nbase-10-nstride-2-mntype-1.pdf}}
        \subfloat[$m=n$ \label{fig_softmax_syn2}]{
        \includegraphics[]{pyplots/2019-09-11-dataid-1-nbase-10-nstride-2-mntype-2wo.pdf}}
      \subfloat[$m= \floor{1.5n}$ \label{fig_softmax_syn3}]{
      \includegraphics[]{pyplots/2019-09-11-dataid-1-nbase-10-nstride-2-mntype-3wo.pdf}}
    \caption{Results  on synthetic  softmax instances with polytope constraints generated from uniform distribution.}
         \label{fig_softmax_syn}
    \end{figure}

 \paragraph{Results on synthetic data.}
We generate the softmax objectives (see \labelcref{eq_softmax}) in the following way:  first
 generate the $n$ eigenvalues $\d\in \R_+^n$, each
 randomly distributed in $[0, 1.5]$, and set
  $\BD = \diag(\d)$. After
 generating a random unitary matrix $\BU$, we   set $\BL = \BU \BD\BU^\trans$.   One can verify that $\BL$ is positive semidefinite and has
 eigenvalues as the entries of $\d$.

 We generate the down-closed polytope constraints in the same form
 and same way as that for  DR-submodular quadratic functions,
 except for setting $\b = 2*\mathbf{1}^m$.
Function values returned by different solvers w.r.t. $n$ are shown in \cref{fig_softmax_syn}, for which the random polytope
constraints were generated with uniform distribution.
The number of constraints was set to be $m={\floor {0.5n}}$,  $m=n$ and
$m=\floor {1.5n}$ in  \cref{fig_softmax_syn1,fig_softmax_syn2,fig_softmax_syn3}, respectively.
One can observe that  \twophasefw
still has the best performance, the \shrunkenfw follows, and   \pga  has the worst performance.

\fi

\subsection{Maximizing  Softmax Extensions}

Maximizing Softmax extensions of DPP MAP
inference is an important instance of non-monotone
DR-submodular maximization problem.
One can obtain  the derivative of the softmax
extension in \cref{eq_softmax} as:
\begin{align}
\nabla_i f(\x) = \tr{ \{ [{\diag(\x)(\bmL-\bmI) +\bmI }]^{-1} [(\bmL -
	\bmI)_i] \}}, \forall i\in [n],
\end{align}
where $(\bmL - \bmI)_i$ denotes the matrix obtained by zeroing all
entries except for  the $i^\text{th}$ row of $(\BL - \BI)$. Let
$\BC:= ({\diag(\x)(\bmL-\bmI) +\bmI })^{-1}, \BD:=(\BL - \BI)$, one
can see that $\nabla_i f(\x) = \BD_{i\cdot}^\trans \BC_{\cdot i}$\footnote{where $\BD_{i\cdot}$ means the $i$-th row of $\BD$ and $\BC_{\cdot i}$ indicates the $i$-th column of $\BC$.},
which gives an efficient way to calculate the gradient $\nabla f(\x)$.

\setkeys{Gin}{width=0.34\textwidth}
\begin{figure}[htbp]
	\center
	\subfloat
	{
		\includegraphics[]{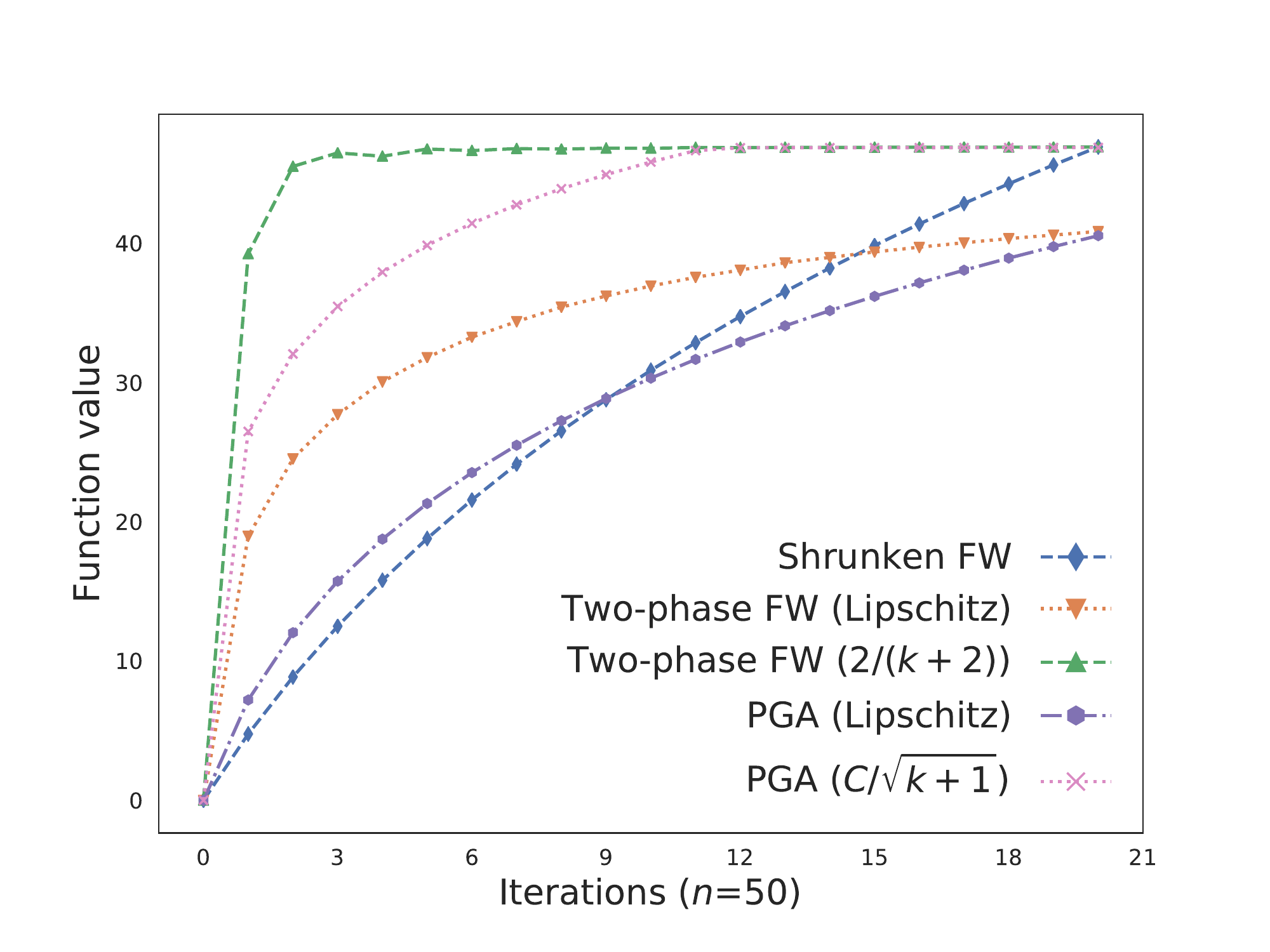}}
	\subfloat
	{
		\includegraphics[]{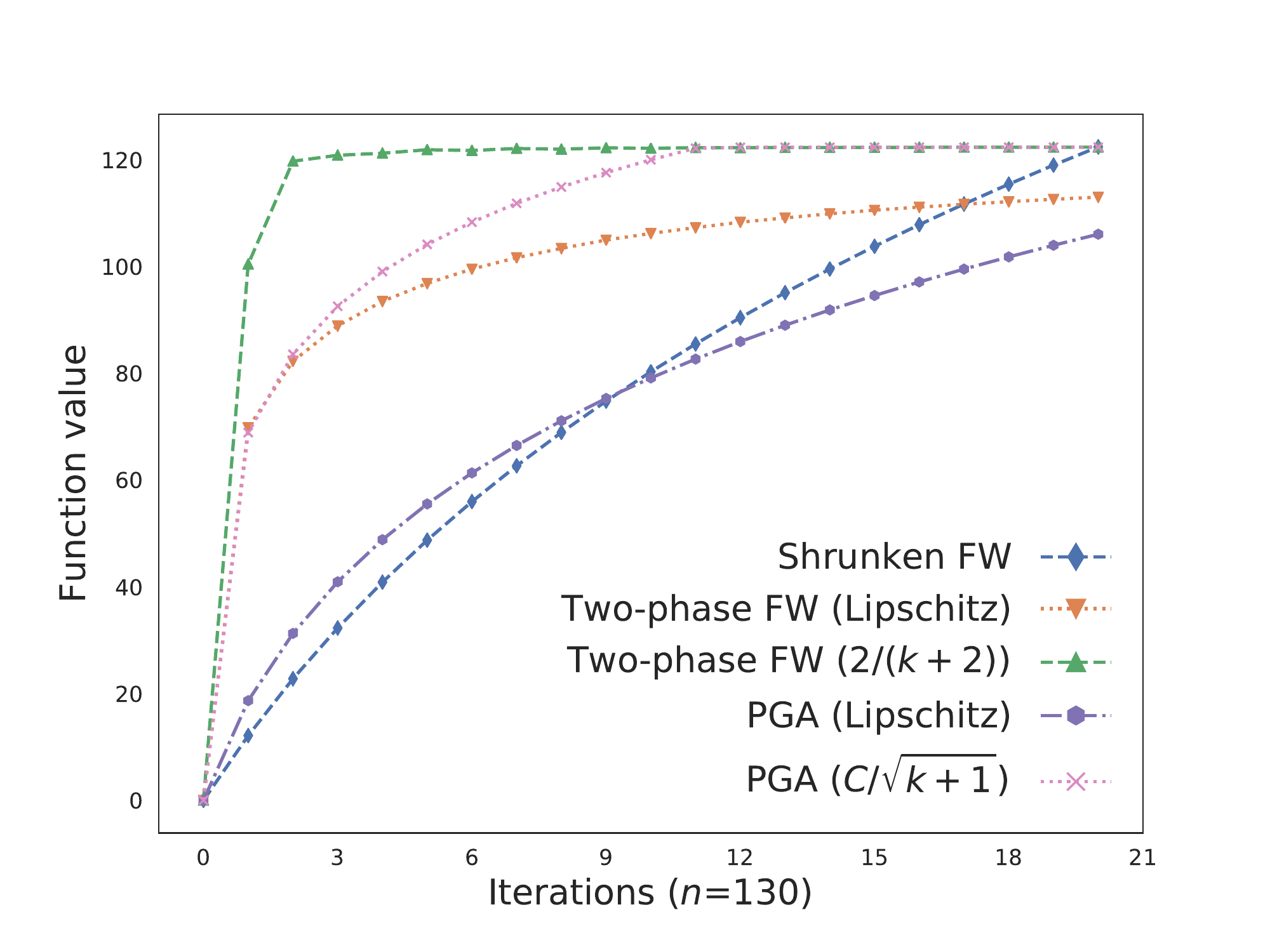}}
	\subfloat
	{
	\includegraphics[]{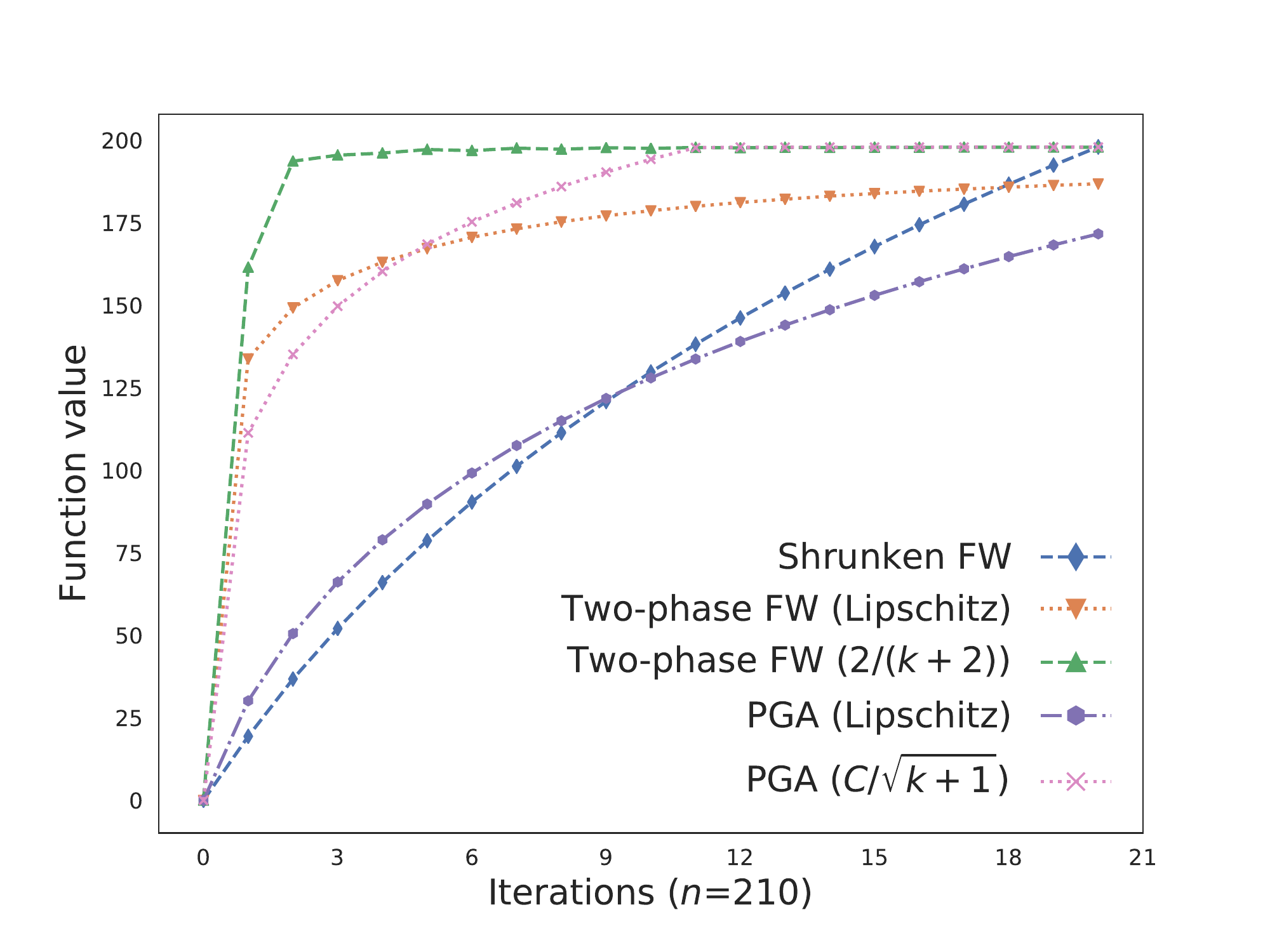}}
	\caption{Trajectories of different solvers  on  Softmax instances with one cardinality  constraint. Left: $n=50$; Middle: $n=130$; Right: $n=210$. \twophasefw has the fastest convergence.
\shrunkenfw converges slower, yet it always eventually returns a high
function value.  The performance of \pga highly depends on the
hyperparameters of the \stepsizes}
	\label{fig_softmax_syn}
\end{figure}

\paragraph{Results on Synthetic Data.}
We generate the softmax objectives (see \labelcref{eq_softmax}) in
the following way: first generate the $n$ eigenvalues $\d\in \R_+^n$,
each evenly distributed in $[0, 10]$, and set $\BD =
\diag(\d)$.
After generating a random unitary matrix $\BU$, we set
$\BL = \BU \BD\BU^\trans$.  One can verify that $\BL$ is positive
semidefinite and has eigenvalues as the entries of $\d$.
Then we generate one cardinality constraint in the form of
$\BA \x \leqco \b$, where $\BA =  \one^{1\times n}$ and
$\b = 0.5n$.

Function value trajectories returned by different solvers on problem instances with different dimensionalities are shown in
\cref{fig_softmax_syn}.
One can observe that \twophasefw has the fastest convergence.
\shrunkenfw converges slower, however it always eventually returns a high
function value.  The performance of \pga highly depends on the
hyperparameters of the \stepsizes.

\subsection{Revenue Maximization with Continuous Assignments}

We experiment with the model from \cref{app_special_case_of_ie} on several real-world graphs. Note that the objective of the simplified revenue maximization model is in general continuous submodular, and resulting optimization problem is a continuous submodular maximization problem with down-closed convex constraint.
For this problem setting,  the studied algorithms might not have a formal theoretical approximation guarantee. Yet, due to the practical usage of this application, it is worthwhile to use it as a robustness test of the algorithms in this section.

\subsubsection{Experimental Setting}

The  real-world graphs are from the Konect network
collection
\citep{kunegis2013konect}\footnote{\url{http://konect.uni-koblenz.de/networks}}
and the SNAP\footnote{\url{http://snap.stanford.edu/}} dataset.
The graph datasets and corresponding experimental parameters are
recorded in \cref{tab_dataset}.
We tested with the constraint that is the interaction of  one box constraint ($0 \leq x_i \leq u$) and one cardinality constraint $\one ^\trans \x \leq b$.

\begin{table}[htbp]
	\begin{center}
		\caption{Graph datasets and the corresponding experimental parameters. $n$ is the number of nodes, $q$ is the parameter of the model in \cref{app_special_case_of_ie}, $u$ is the upper bound of the box constraint, and $b$ is the budget.}
		\label{tab_dataset}
		\begin{tabularx}{\textwidth}{|r|X|X|X|l|X|}
			\hline
			Dataset name&   $n$  & \#edges & $q$ & $u$ &  budget $b$  \\
			\hline
			\hline
			``Reality Mining''  & 96 & 1,086,404 (multiedge) &   0.75 & 10  &   $0.2nu$  \\
			\hline
			``Residence hall'' & 217 & 2,672 & 0.75  & 10 & $0.4nu$  \\
			\hline
			``Infectious'' & 410 & 17,298 & 0.7 & 20 & $0.2nu$  \\
			\hline
			``U. Rovira i Virgili'' & 1,133  & 5,451& 0.8 & 20 & 		$0.2nu$  \\
			\hline
			``ego Facebook'' & 4,039  & 88,234& 0.9 & 40 & 		$0.1nu$  \\
			\hline
		\end{tabularx}
	\end{center}
\end{table}

For a specific example, the ``Reality Mining''
\citep{eagle2006reality}
dataset\footnote{\url{http://konect.uni-koblenz.de/networks/mit}, and\\
	\url{http://realitycommons.media.mit.edu/realitymining.html}}
contains the contact data of 96 persons through tracking 100 mobile
phones.  The dataset was collected in 2004 over the course of nine
months and represents approximately 500,000 hours of data on users'
location, communication and device usage behavior.
Here one contact could mean a phone call, Bluetooth sensor proximity
or physical location proximity.  We use the number of contacts as the
weight of an edge, by assuming that the more contacts happen between
two persons, the stronger the connection strength should be.

\subsubsection{Experimental Results}

\paragraph{Results on a Small Graph for Visualization.}

\looseness -1 First, we tested on a small graph, in order to clearly  visualize the results. We select a subgraph from the ``Reality Mining'' dataset
by taking the first five users/nodes, the nodes and number of contacts
amongst nodes are shown in \cref{fig_reality_subgraph}. For
illustration, we label the five users as ``A, B, C, D,
E''. One can see that there are different level of contacts between
different users, for example, there are 22,194 contacts between A and
B, while there are only 82 contacts between E and C.

\cref{fig_sub_Trajectories_reality} traces the trajectories of
different algorithms when maximizing the revenue objective. They were
all run for 20 iterations. One can see that \shrunkenfw and
\twophasefw reach higher revenue than \pga algorithms. Notice that
\shrunkenfw and \twophasefw with oblivious \stepsizes do not need to
tune any hyperparameters, while the others need to adapt the Lipschitz
parameter $L$ and the constant $C$ to determine the \stepsizes.

\setkeys{Gin}{width=0.52\textwidth}
\begin{figure}[htbp]
	\center
	\subfloat[The ``Reality Mining'' subgraph. \label{fig_reality_subgraph}]{
		\includegraphics[]{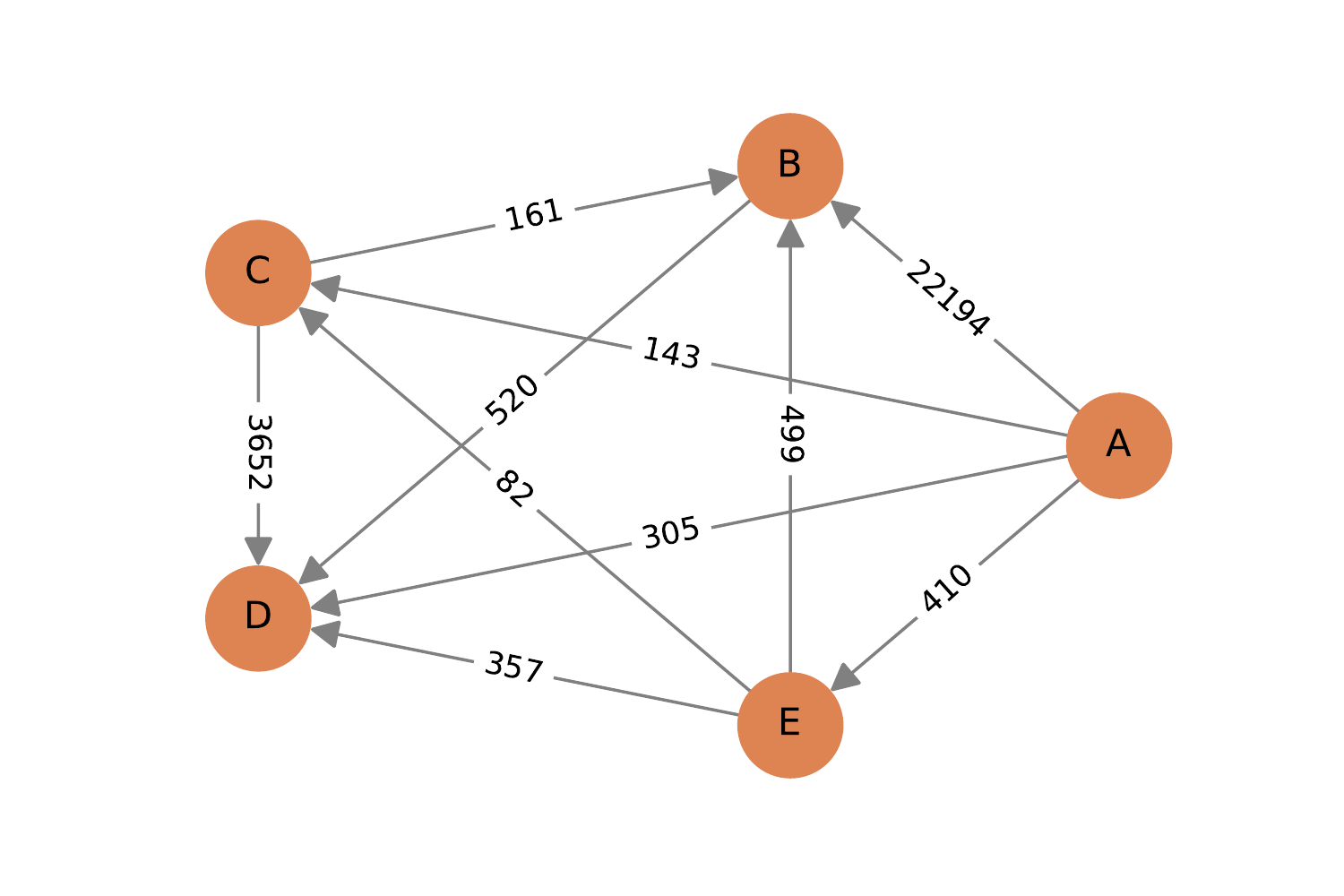}}
	\hspace{-0.99cm}
	\subfloat[Trajectories of algorithms with 20 iterations \label{fig_sub_Trajectories_reality}]{
		\includegraphics[]{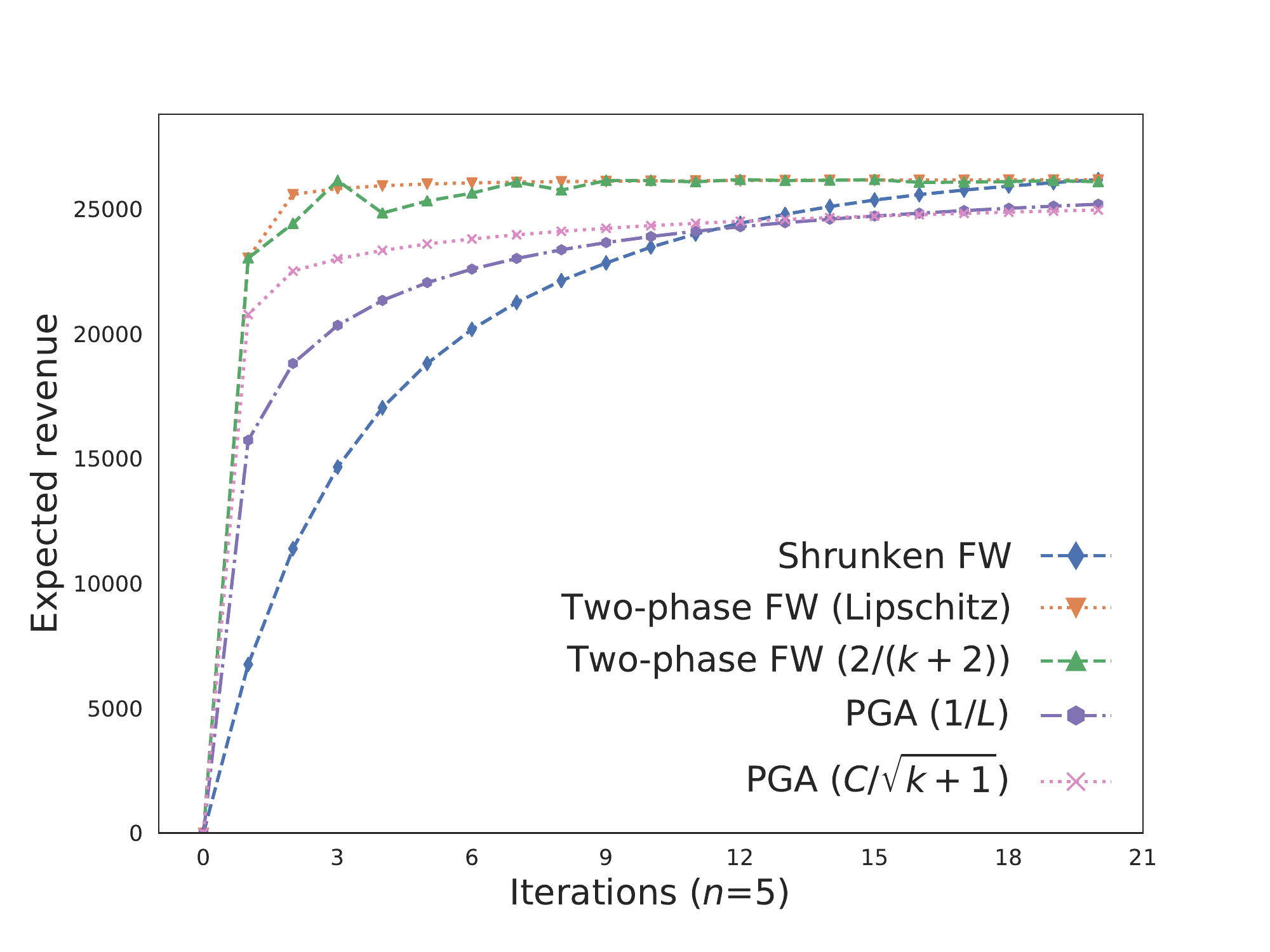}}
	\caption{Results  on  the ``Reality Mining'' subgraph with one cardinality constraint, where $u=10, b=0.2*n*u$.}
	\label{fig_reality}
\end{figure}

One may ask the question: \emph{How does the assignment look like for
different algorithms?} In order to show this behavior, we visualize the
assignments in \cref{fig_reality_assignments}.
One can see that \shrunkenfw assigns user A the most free products
(6.1), followed by user C (3.3), then user E (0.6). All other users
get $0$ assignment.  This is consistent with the intuition: one can
observe that user A most strongly influences others users (with total contacts as 22,194+
410 + 143), while user D exerts zero influence on others. \twophasefw
provides similar result, while \pga is conservative in assigning free
products to users.

\setkeys{Gin}{width=0.35\textwidth}
\begin{figure}[htbp]
	\center
	\subfloat{
		\includegraphics[]{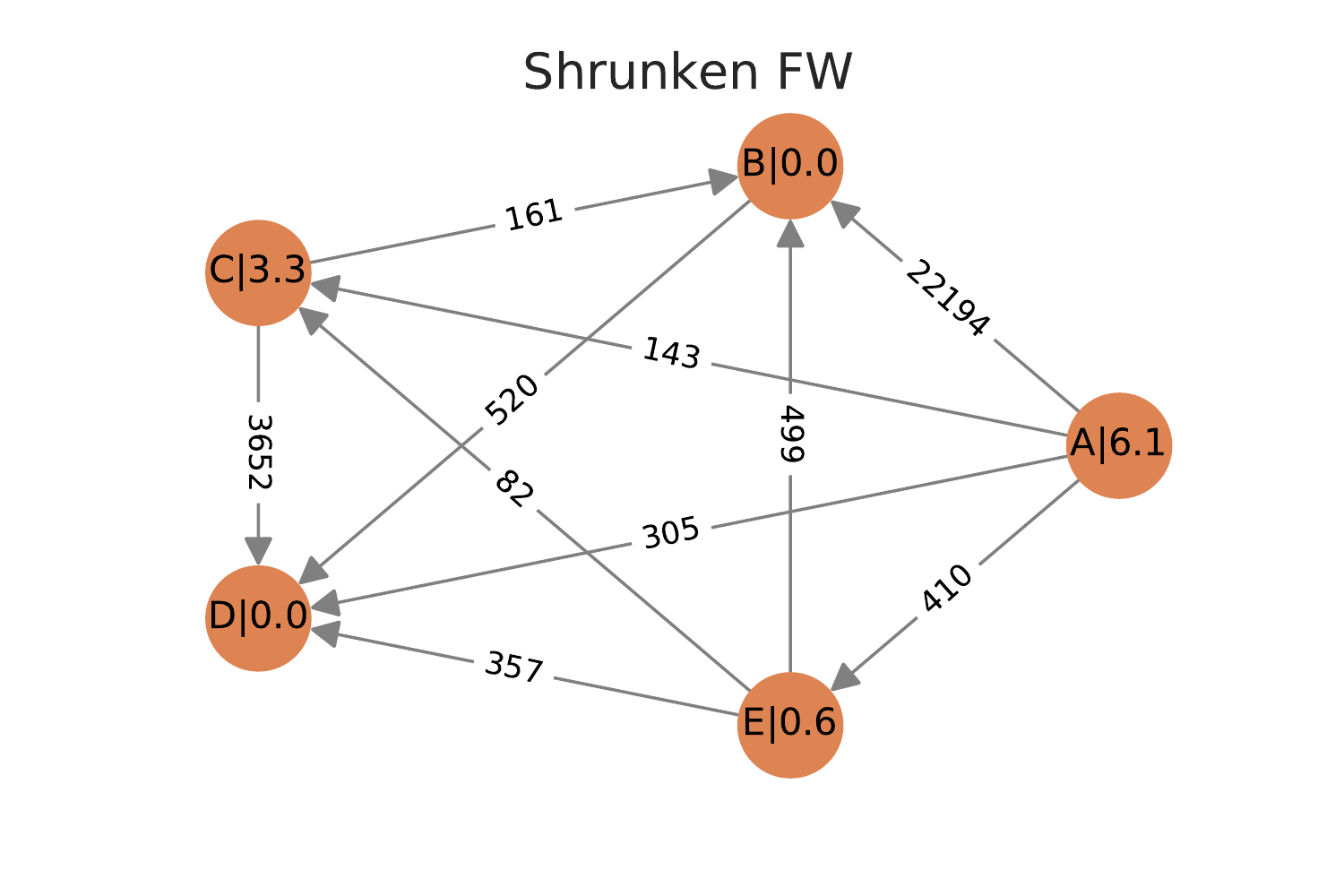}}
	\hspace{-0.99cm}
	\subfloat{
		\includegraphics[]{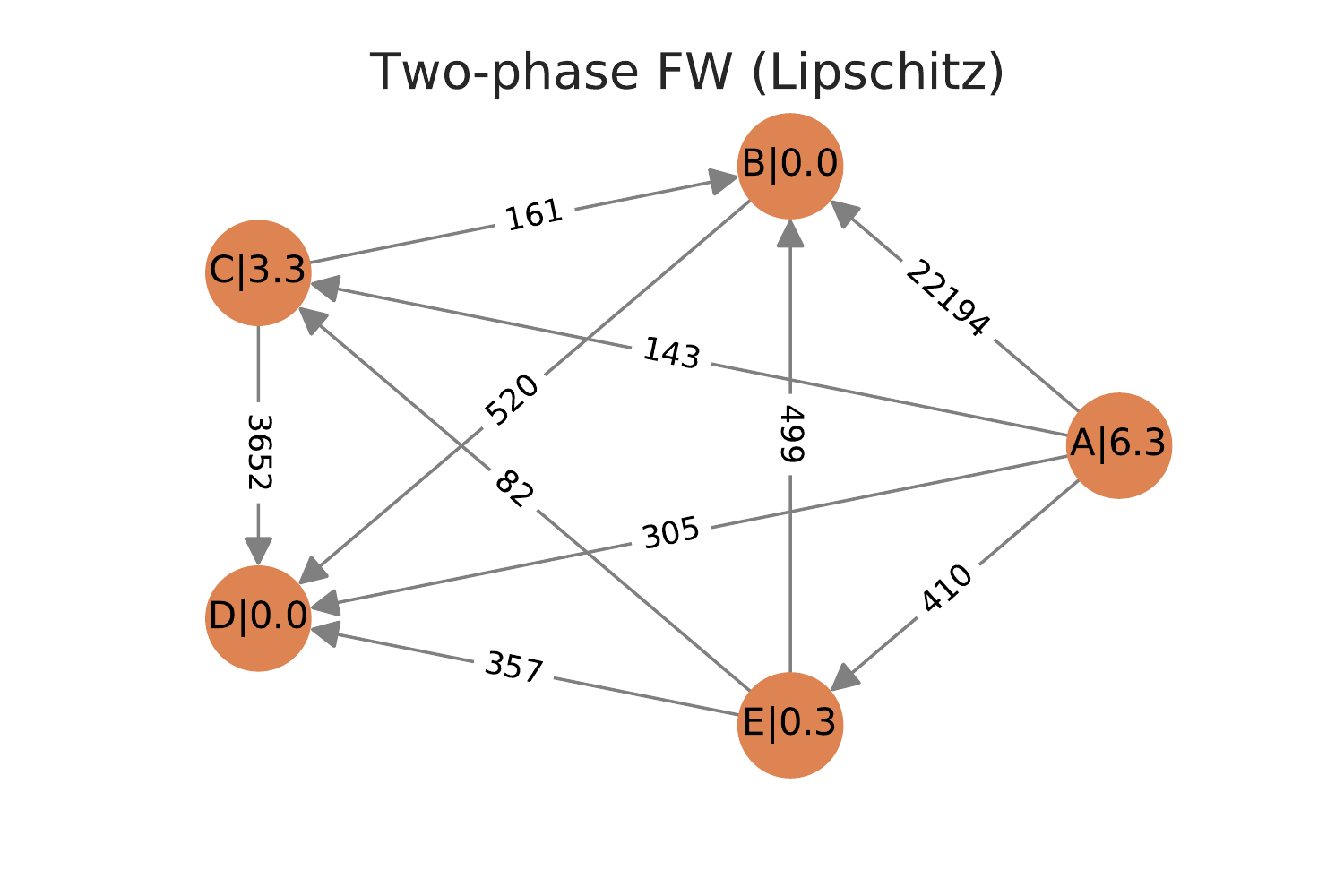}}
	\hspace{-0.99cm}
	\subfloat
	{
		\includegraphics[]{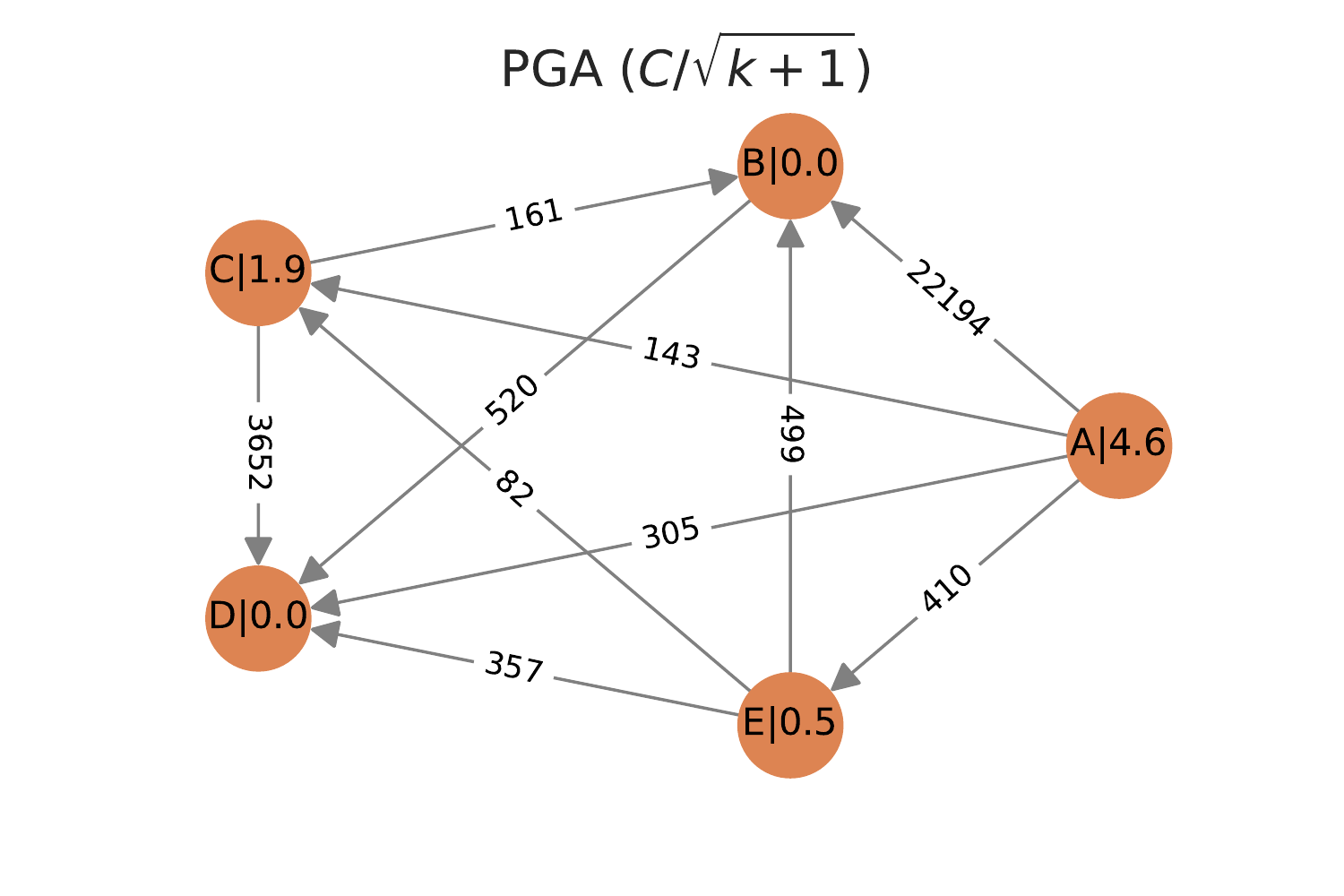}}
	\caption{Assignments to the users returned by different algorithms. \pga is more conservative in terms of assigning free
products to users than the other two algorithms: \shrunkenfw and \twophasefw.}
	\label{fig_reality_assignments}
\end{figure}

\paragraph{Results on Big Graphs.}

Then we looked at the behavior of the algorithms on the original big
graph, which is plotted in \cref{fig_traj_big_graph}, for real-world graphs with at most
$n = 4,039$ nodes.

One can observe that usually \twophasefw algorithm achieves the highest
objective value, and also converges with the fastest rate. \shrunkenfw converges
slower than \twophasefw, but it always reaches competitive function
value.  \pga algorithms
need to tune parameters for the \stepsize, and converges to lower
objective values.

\setkeys{Gin}{width=0.52\textwidth}
\begin{figure}[htbp]
	\center
	\subfloat
	[``Residence hall'' dataset]
	{
		\includegraphics[]{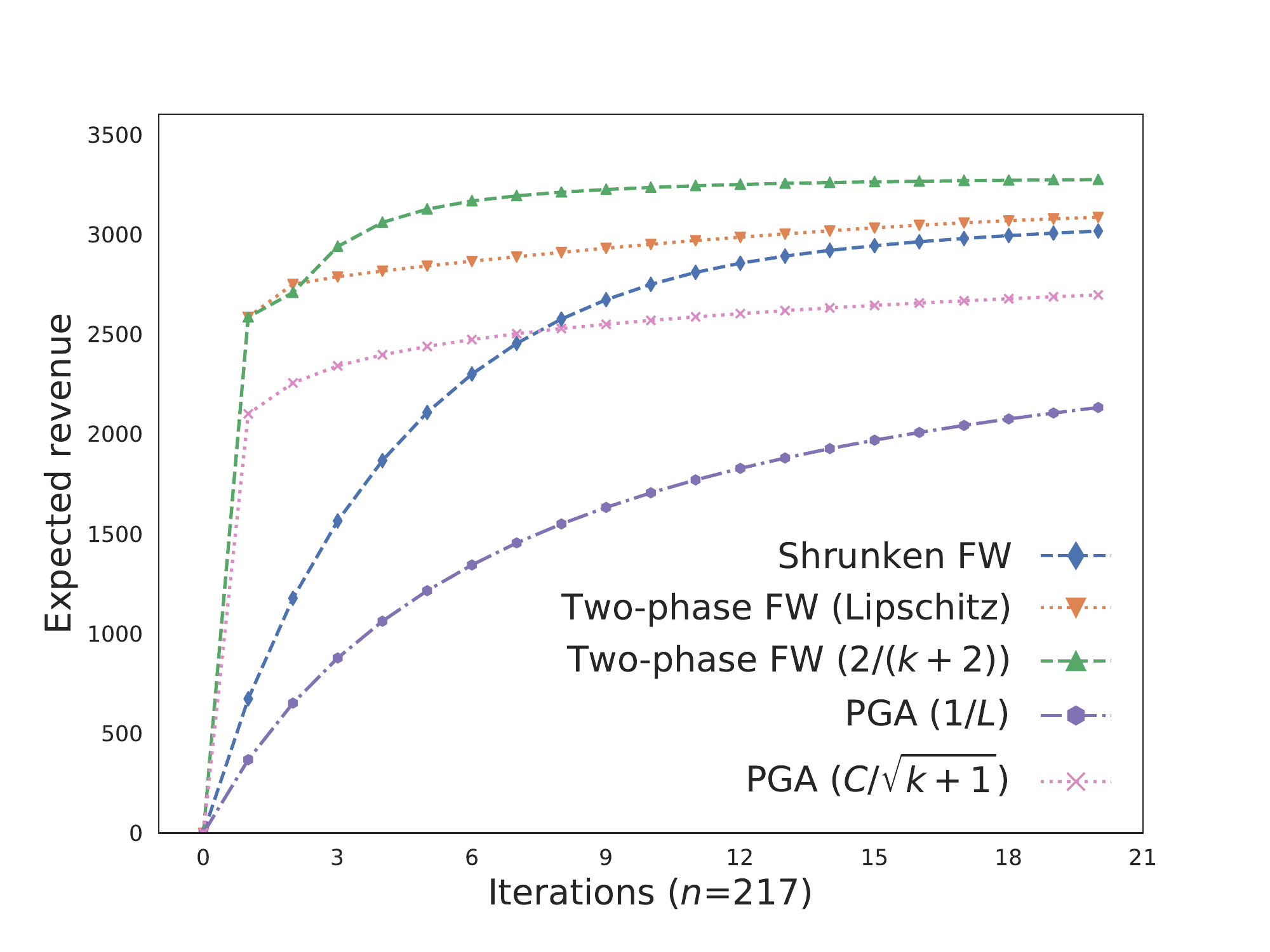}}
	\subfloat
	[``Infectious'' dataset]
	{
		\includegraphics[]{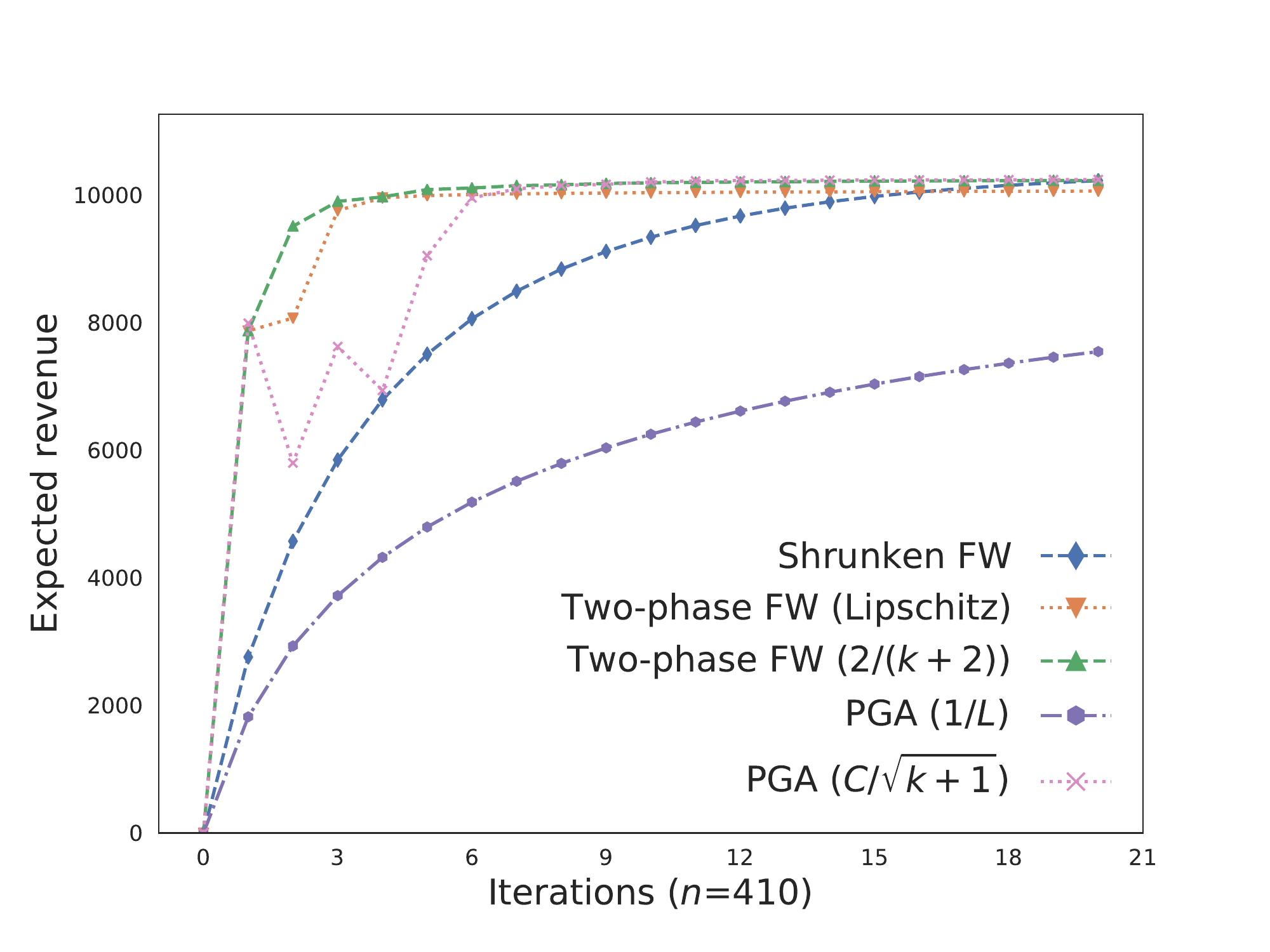}}\\
	\subfloat
	[``U. Rovira i Virgili'' dataset]
	{
		\includegraphics[]{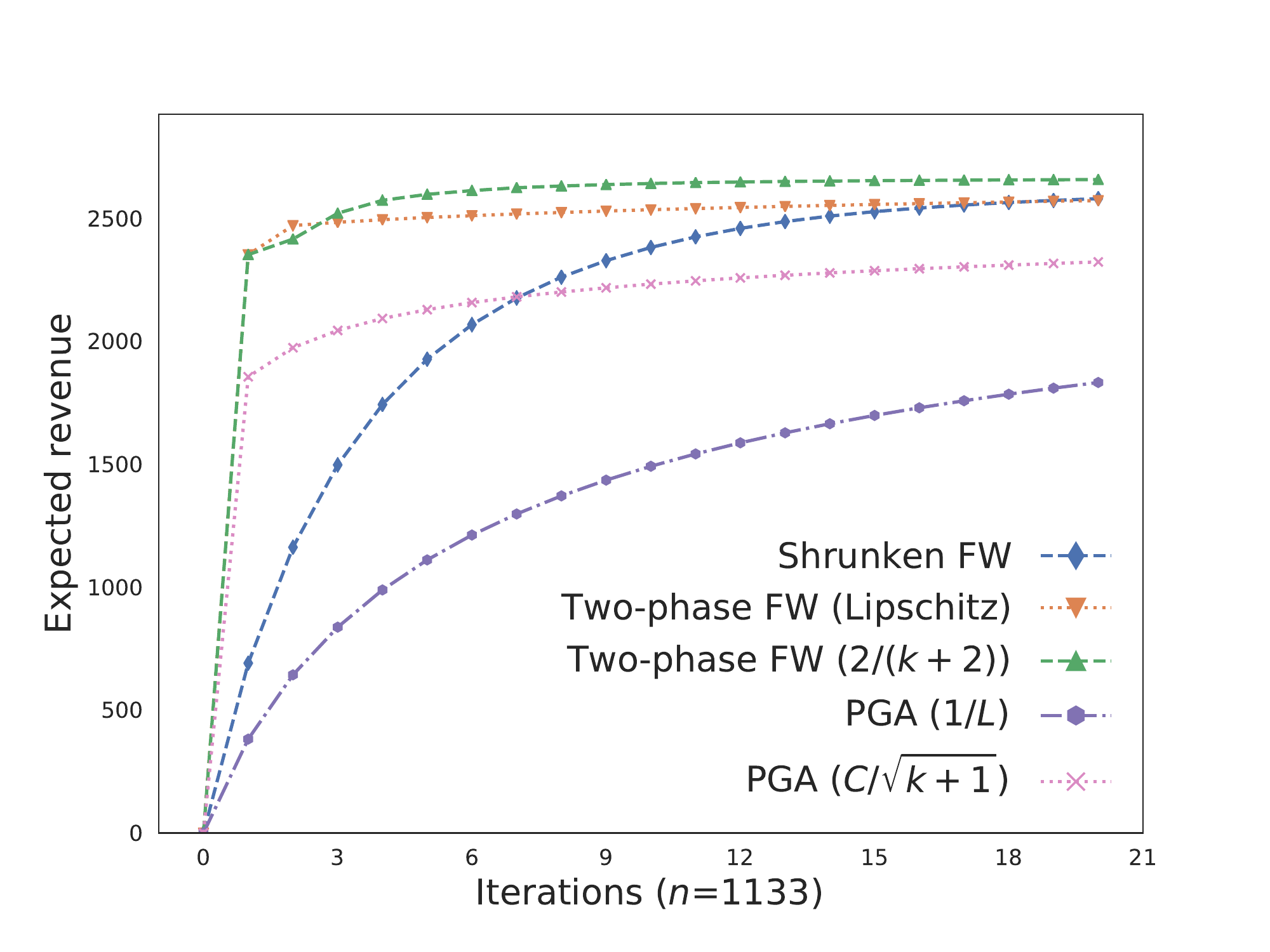}}
	\subfloat
	[``ego Facebook'' dataset]
	{
		\includegraphics[]{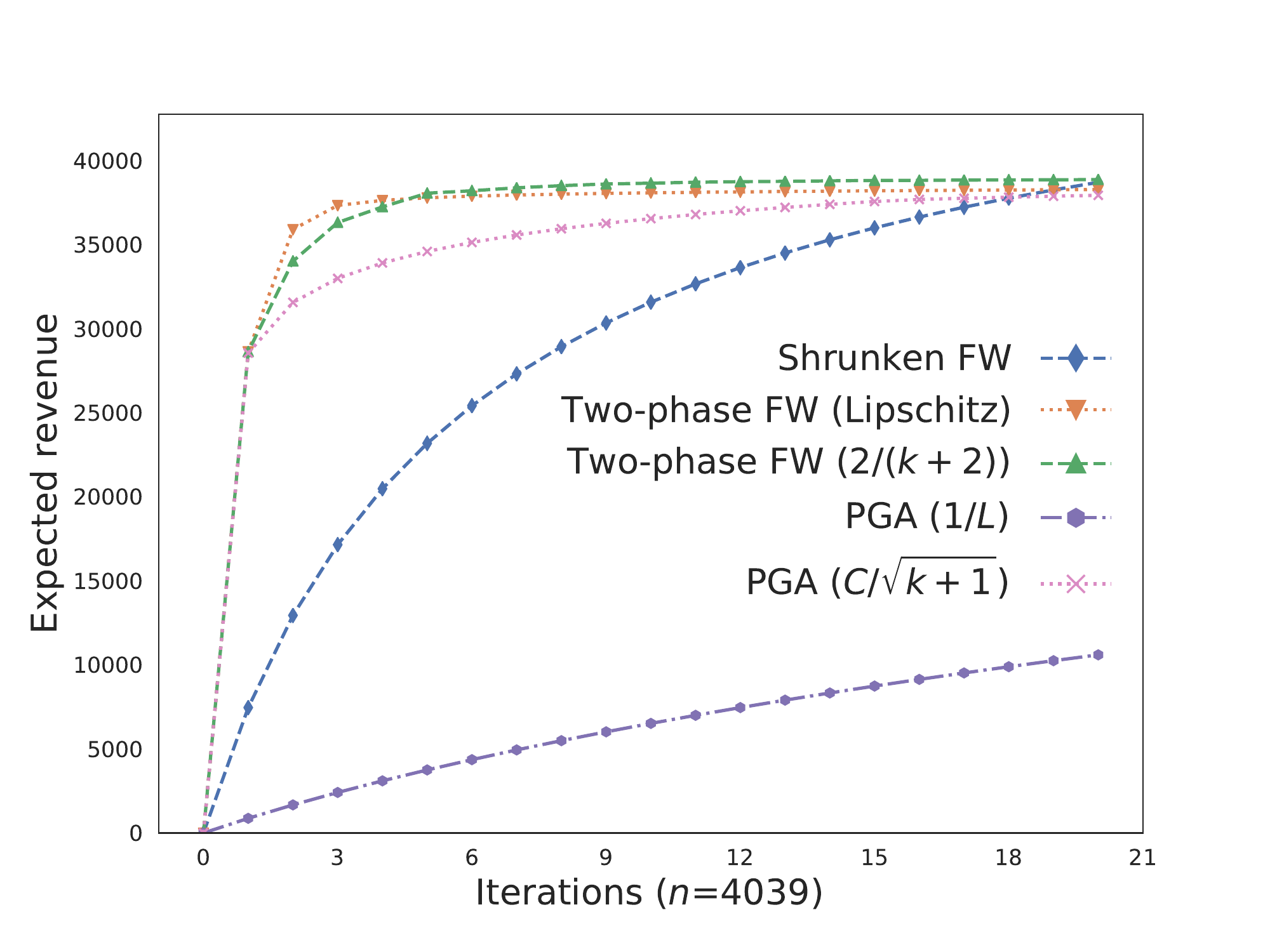}}
	\caption{Trajectory of different algorithms on real-world graphs. Usually \twophasefw  achieves the highest
objective value, and also converges with the fastest rate. \shrunkenfw converges
slower than \twophasefw, but it always reaches competitive function
value.  \pga algorithms
need to tune parameters for the \stepsize, and converges to lower objective values.}
	\label{fig_traj_big_graph}
\end{figure}

\section{Conclusion} %
\label{sec_discu}

In this work, we have systematically studied continuous submodularity and the problem of continuous (DR)-submodular maximization. With  rigorous characterizations and study of composition rules, we established important properties of this class of functions.
Based on of geometric properties of  continuous DR-submodular maximization, we proposed provable algorithms for both the monotone and non-monotone settings.
We also identified representative applications and demonstrated the effectiveness of the proposed algorithms on both synthetic and real-world experiments.

\clearpage

{
\bibliography{\bibpath}
}

\newpage
\appendix
\appendixtitle{Appendix}

\section{Proofs of Characterizations of Continuous Submodular  Functions}\label{app_proof}

Since $\X_i$ is a compact subset
of $\R$, we denote its lower bound and upper bound to be $\underline{u}_i$ and
$\bar u_i$, respectively.

\subsection{Proofs of \cref{lemma_dr_antitone} and
	\cref{lemma_weak_antitone}}

\begin{proof}[Proof of \cref{lemma_dr_antitone}]
	\emph{Sufficiency}: For any dimension $i$,
	\begin{align}
	\nabla_i f(\a) = \lim_{k\rightarrow 0} \frac{f(k\bas_i + \a)
		- f(\a)}{k}  \geq \lim_{k\rightarrow 0} \frac{f(k\bas_i +
		\b) - f(\b)}{k} = \nabla_i f(\a).
	\end{align}

	\emph{Necessity}:

	Firstly, we show that for any $\c \geqco \zero$, the function
	$g(\x) := f(\c + \x) - f(\x)$ is monotonically non-increasing.
	\begin{align}
	\nabla g(\x) =  \nabla f(\c + \x) - \nabla f(\x) \leqco \zero.
	\end{align}

	Taking $\c = k\bas_i$, since $g(\a) \leq g(\b)$, we reach the
	DR-submodularity definition.
\end{proof}

\begin{proof}[Proof of \cref{lemma_weak_antitone}]
	Similar as the proof of \cref{lemma_dr_antitone}, we have the
	following:

	\emph{Sufficiency}: For any dimension $i$
	$\text{ s.t. } a_{i} = b_{i}$,
	\begin{align}
	\nabla_i f(\a) = \lim_{k\rightarrow 0} \frac{f(k\bas_i + \a) -
		f(\a)}{k}  \geq \lim_{k\rightarrow 0} \frac{f(k\bas_i + \b) -
		f(\b)}{k} = \nabla_i f(\a).
	\end{align}

	\emph{Necessity}:

	We show that for any $k \geq 0$, the function
	$g(\x) := f(k\bas_i + \x) - f(\x)$ is monotonically non-increasing.
	\begin{align}
	\nabla g(\x) =  \nabla f(k\bas_i + \x) - \nabla f(\x) \leqco \zero.
	\end{align}

	Since $g(\a) \leq g(\b)$, we reach the weak DR definition.
\end{proof}

\subsection{Alternative Formulation of the \NEWDR\ Property}

First of all, we will prove that \NEWDR\ has the following alternative
formulation, which will be used to prove Proposition \ref{lemma_support_dr}.
\begin{lemma}[Alternative formulation of \NEWDR]
	The \NEWDR\ property (\cref{def_supp_dr2}, denoted as
	\emph{Formulation I}) has the following equilvalent formulation
	(\cref{def_supp_dr}, denoted as \emph{Formulation II}):
	$\forall \a\leqco \b\in \X$,
	$\forall i\in \{i'|a_{i'} = b_{i'}=\underline{u}_{i'} \}, \forall
	k'\geq l'\geq 0$
	s.t. $(k'\chara_i + \a)$, $(l'\chara_i + \a)$, $(k'\chara_i + \b)$
	and $(l'\chara_i + \b)$ are still in $\X$, the following inequality
	is satisfied,
	\begin{equation} \label{def_supp_dr} f(k'\chara_i + \a) -
	f(l'\chara_i + \a) \geq f(k'\chara_i+ \b) - f(l'\chara_i+ \b).
	\quad \emph{(Formulation II)}
	\end{equation}
\end{lemma}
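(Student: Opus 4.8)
The plan is to prove the two implications separately, in each case re-parametrizing the increment along coordinate $i$. The whole argument rests on the additive identity $k'\chara_i + \a = (k'-l')\chara_i + (l'\chara_i + \a)$, which lets me convert freely between an ``increment from a fixed base point'' (Formulation I) and an ``increment between two levels $l'$ and $k'$'' (Formulation II). No deep idea is needed; the content is entirely in how the lower bound $\underline{u}_i$ enters each formulation.

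For the direction (Formulation II) $\Rightarrow$ (Formulation I): suppose I am given $\a \leqco \b$ with $a_i = b_i$ equal to a common value that is \emph{not} necessarily $\underline{u}_i$, together with $k \geq 0$ such that $k\chara_i + \a$ and $k\chara_i + \b$ lie in $\X$. I would reset the $i$-th coordinate down to the lower bound by defining $\tilde{\a} := \a - (a_i - \underline{u}_i)\chara_i$ and $\tilde{\b} := \b - (b_i - \underline{u}_i)\chara_i$. Then $\tilde{a}_i = \tilde{b}_i = \underline{u}_i$ and $\tilde{\a}\leqco\tilde{\b}$. Setting $l' := a_i - \underline{u}_i \geq 0$ and $k' := k + l' \geq l'$, the four points $l'\chara_i + \tilde\a = \a$, $k'\chara_i + \tilde\a = k\chara_i + \a$, $l'\chara_i + \tilde\b = \b$, and $k'\chara_i + \tilde\b = k\chara_i + \b$ are exactly those assumed to lie in $\X$. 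Applying Formulation II to $\tilde\a,\tilde\b$ with these $k',l'$ then yields Formulation I verbatim.

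For the converse (Formulation I) $\Rightarrow$ (Formulation II): I am given $\a \leqco \b$ with $a_i = b_i = \underline{u}_i$ and $k' \geq l' \geq 0$ with all four shifted points in $\X$. I would shift the base up by $l'$ along coordinate $i$: set $\a' := l'\chara_i + \a$ and $\b' := l'\chara_i + \b$, so that $\a' \leqco \b'$ and $a'_i = b'_i = \underline{u}_i + l'$. Crucially, Formulation I only demands that the $i$-th coordinates be \emph{equal}, not that they sit at the lower bound, so this shifted common value poses no obstacle. Taking step $k := k'-l' \geq 0$, one has $k\chara_i + \a' = k'\chara_i + \a$ and $k\chara_i + \b' = k'\chara_i + \b$, both in $\X$ by assumption. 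Applying Formulation I to $\a',\b'$ with step $k$ reproduces Formulation II.

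The argument has no genuine obstacle; it is a bookkeeping equivalence. The only point requiring attention---and the reason both formulations are stated---is the asymmetric role of the lower bound: Formulation II pins the $i$-th coordinate at $\underline{u}_i$ but compensates with a two-level increment, whereas Formulation I permits an arbitrary common $i$-th value but only a single increment from the base. The two re-parametrizations above trade these features against one another. The one thing to verify carefully throughout is that each intermediate point still lies in the product domain $\X$, which is immediate since every shift moves only the $i$-th coordinate between values already assumed feasible.
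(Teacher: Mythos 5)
Your proof is correct and follows essentially the same re-parametrization argument as the paper: for (II) $\Rightarrow$ (I) you reset the $i$-th coordinate to the lower bound via $\tilde\a = \a - (a_i - \underline{u}_i)\chara_i$ and take $l' = a_i - \underline{u}_i$, $k' = k + l'$, and for (I) $\Rightarrow$ (II) you shift the base points up by $l'\chara_i$ and take step $k = k' - l'$, exactly as the paper does. The only cosmetic difference is that the paper splits the first direction into three cases according to whether $a_i = b_i$ equals $\underline{u}_i$, lies strictly between $\underline{u}_i$ and $\bar u_i$, or equals $\bar u_i$, whereas your single substitution handles all of these uniformly (with $l' = 0$ recovering the first case).
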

\begin{proof}

	Let $D_1 = \{i| a_i = b_i = \underline{u}_i \}$,
	$D_2 = \{i|\underline{u}_i < a_i = b_i < \bar u_i \}$, and
	$D_3 = \{i| a_i = b_i = \bar u_i \}$.

	1) \texttt{Formulation II} $\Rightarrow$ \texttt{Formulation I}

	When $i\in D_1$, set $l' = 0$ in \texttt{Formulation II} one can get
	$f(k'\chara_i+ \a) - f(\a) \geq f(k'\chara_i+ \b) - f(\b)$.

	When $i\in D_2$, $\forall k\geq 0$, let
	$l' = a_i - \underline{u}_i = b_i- \underline{u}_i >0,$
	$k' = k + l' = k +(a_i - \underline{u}_i)$, and let
	$\bar \a = (\sete{a}{i}{\underline{u}_i}), \bar \b =
	(\sete{b}{i}{\underline{u}_i})$.
	It is easy to see that $\bar \a \leqco \bar \b$, and
	$\bar a_i = \bar{b}_i = \underline{u}_i$. Then from
	\texttt{Formulation II},
	\begin{flalign}
	& f(k'\chara_i + \bar \a) - f(l'\chara_i + \bar \a) = f(k\chara_i
	+ \a) - f(\a) \\\notag
	& \geq f(k'\chara_i + \bar \b) -
	f(l'\chara_i + \bar \b) = f(k\chara_i + \b) - f(\b).
	\end{flalign}
	When $i\in D_3$, \cref{def_supp_dr2} holds trivially.

	The above three situations proves the \texttt{Formulation I}.

	2) \texttt{Formulation II} $\Leftarrow$ \texttt{Formulation I}

	$\forall \a\leqco \b$, $\forall i\in D_1$, one has
	$a_i = b_i = \underline{u}_i$.
	$\forall k'\geq l' \geq 0$, let
	$\hat \a = l'\chara_i + \a, \hat \b = l'\chara_i + \b$, let
	$k = k'-l' \geq 0$, it can be verified that $\hat \a\leqco \hat \b$
	and $\hat a_i = \hat b_i$, from \texttt{Formulation I},
	\begin{flalign}
	&f(k\chara_i + \hat \a) - f(\hat \a) = f(k'\chara_i + \a) -
	f(l'\chara_i + \a)\\\notag \geq
	& f(k\chara_i + \hat \b) - f(\hat
	\b) = f(k'\chara_i + \b) - f(l'\chara_i + \b).
	\end{flalign}
	which proves \texttt{Formulation II}.
\end{proof}

\subsection{Proof of Proposition \ref{lemma_support_dr}}

\begin{proof}

	1) \texttt{submodularity} $\Rightarrow$ \texttt{weak DR}:

	Let us prove the \texttt{Formulation II} (\cref{def_supp_dr}) of
	\texttt{weak DR}, which is,

	$\forall \a\leqco \b\in \X$,
	$\forall i\in \{i'|a_{i'} = b_{i'}=\underline{u}_{i'} \}, \forall
	k'\geq l'\geq 0$, the following inequality holds,
	\begin{equation}
	f(k'\chara_i+ \a) - f(l'\chara_i + \a) \geq f(k'\chara_i+ \b) - f(l' \chara_i + \b).
	\end{equation}

	And $f$ is a submodular function iff $\forall \x, \y\in \X$,
	$f(\x)+f(\y) \geq f( \x \vee \y) + f(\x \wedge \y)$, so
	$f(\y) - f(\x\wedge \y) \geq f(\x\vee \y) - f(\x)$.

	Now $\forall \a \leqco \b \in\X$, one can set $\x = l'\chara_i + \b$
	and $\y = k'\chara_i + \a$. It can be easily verified that
	$\x\wedge \y =l'\chara_i + \a$ and $\x\vee \y = k'\chara_i + \b$.
	Substituting all the above equalities into
	$f(\y) - f(\x\wedge \y) \geq f(\x\vee \y) - f(\x)$ one can get
	$f(k'\chara_i+ \a) - f(l'\chara_i + \a) \geq f(k'\chara_i+ \b) - f(l' \chara_i + \b)$.

	2) \texttt{submodularity}
	$\Leftarrow$  \texttt{weak DR}:

	Let us use \texttt{Formulation I} (\cref{def_supp_dr2}) of
	\texttt{weak DR} to prove the \texttt{submodularity} property.

	$\forall \x, \y\in \X$, let $D := \{e_1, \cdots, e_d\}$  be the
	set of elements for which $y_e > x_e$, let
	$k_{e_i}: = y_{e_i} - x_{e_i}$.  Now set
	$\a^0 := \x\wedge \y, \b^0 := \x$ and
	$\a^i = (\sete{a^{i-1}}{e_i}{y_{e_i}}) = k_{e_i}\chara_i +
	\a^{i-1}, \b^i = (\sete{b^{i-1}}{e_i}{y_{e_i}}) = k_{e_i}\chara_i
	+ \b^{i-1}$,
	for $i = 1, \cdots, d$.

	One can verify that
	$\a^i\leqco \b^i, a^i_{e_{i'}} = b^i_{e_{i'}}$ for all
	$i'\in D, i=0, \cdots, d$, and that
	$\a^d = \y, \b^d = \x\vee \y$.

	Applying \cref{def_supp_dr2} of the \texttt{weak DR} property for
	$i = 1,\cdots, d$ one can get
	\begin{flalign}
	&f(k_{e_1}\chara_{e_1} + \a^0) - f(\a^0) \geq
	f(k_{e_1}\chara_{e_1} + \b^0) - f(\b^0) \\
	&f(k_{e_2}\chara_{e_2} + \a^1) - f(\a^1) \geq
	f(k_{e_2}\chara_{e_2} + \b^1) - f(\b^1) \\\notag
	&\cdots\\
	&f(k_{e_d}\chara_{e_d} + \a^{d-1}) -
	f(\a^{d-1}) \geq f(k_{e_d}\chara_{e_d} + \b^{d-1}) -
	f(\b^{d-1}).
	\end{flalign}
	Taking a sum over all the above $d$ inequalities, one can get
	\begin{flalign}
	& f(k_{e_d}\chara_{e_d} + \a^{d-1}) - f(\a^{0}) \geq
	f(k_{e_d}\chara_{e_d} + \b^{d-1}) - f(\b^{0})\\\notag
	&  \Leftrightarrow\\
	& f(\y) - f(\x\wedge \y) \geq f(\x\vee
	\y) - f(\x)\\\notag
	&  \Leftrightarrow\\
	& f(\x) + f(\y) \geq
	f(\x\vee \y) + f(\x\wedge \y),
	\end{flalign}
	which proves the submodularity property.
\end{proof}

\subsection{Proof of Proposition \ref{lemma_dr}}

\begin{proof}

	1) \texttt{submodular} + \texttt{coordinate-wise concave}
	$\Rightarrow$ \texttt{DR}:

	From coordinate-wise concavity we have $f(\a + k\chara_i) - f(\a) \geq f(\a+(b_i - a_i + k)\chara_i) - f(\a+(b_i - a_i)\chara_i)$. Therefore, to prove \text{DR} it suffices to show that
	\begin{flalign}\label{eq_12}
	f(\a+(b_i - a_i + k)\chara_i) - f(\a+(b_i - a_i)\chara_i) \geq f(\b + k\chara_i) - f(\b).
	\end{flalign}
	Let $\x:=\b, \y:=(\a+(b_i - a_i + k)\chara_i)$, so $\x\wedge\y  = (\a+(b_i - a_i)\chara_i), \x\vee \y = (\b + k\chara_i)$.
	From submodularity, one can see that inequality \labelcref{eq_12} holds.

	2) \texttt{DR} $\Rightarrow$ \texttt{submodular} + \texttt{coordinate-wise concave}:

	From \texttt{DR} property, the  \texttt{weak DR} (\cref{def_supp_dr2}) property is implied, which
	equivalently proves the \textit{submodularity} property.

	To prove \textit{coordinate-wise concavity}, one just need to set $\b:=\a+l\chara_i$, then we have  $f(\a + k\chara_i) - f(\a) \geq f(\a + (k+l)\chara_i) - f(\a + l\chara_i)$.
\end{proof}

\section{Proofs for Properties of Continuous DR-Submodular Maximization}
\label{app_proofs_struc_algs}

\subsection{Proof of \cref{lemma_separable_reparameterization}}
\label{app_proof_separable_reparameterization}

\begin{proof}[Proof of \cref{lemma_separable_reparameterization}]

Supppose for simplicity that $f$ and $h$ are both twice differentiable. Note that when  $f$ and $h$ are not differentiable, one can similarly prove the conclusion using \ith{\text{zero}} order definition of continuous submodularity.

Without loss of generality,  let us prove that $f(h(\x))$ maintains submodularity of $f$.  One just need to show that the term $\fracppartial{g(\x)}{x_i}{x_j}$
in  \cref{eq_ijth} is non-positive when $i \neq j$.

Firstly, let us consider the term $\sum_{k=1}^n \fracpartial{f(\y)}{y_k}  \fracppartial{h^k(\x)}{x_i}{x_j}$. Since $h$ is separable as stated above, $\fracppartial{h^k(\x)}{x_i}{x_j}$ is always zero, so $\sum_{k=1}^n \fracpartial{f(\y)}{y_k}  \fracppartial{h^k(\x)}{x_i}{x_j}$ is always zero.

Then it remains to show that the term $\sum_{s,  t = 1}^n  \fracppartial{f(\y)}{y_s}{y_t}  \fracpartial{h^s(\x)}{x_i} \fracpartial{h^t(\x)}{x_j}$ is non-positive. There are two situations: 1) $s=t$. Since $i\neq j$, there must be one term out of $\fracpartial{h^s(\x)}{x_i}$ and $\fracpartial{h^t(\x)}{x_j}$ that are zero (because $h$ is separable). 2) $s\neq t$. Since $f$ is submodular, it holds that $ \fracppartial{f(\y)}{y_s}{y_t}\leq 0$. Because $h$ is monotone, it also holds that  $\fracpartial{h^s(\x)}{x_i} \fracpartial{h^t(\x)}{x_j} \geq 0$. So the term $\sum_{s,  t = 1}^n  \fracppartial{f(\y)}{y_s}{y_t}  \fracpartial{h^s(\x)}{x_i} \fracpartial{h^t(\x)}{x_j}$ is non-positive in the above two situations.

Now we reach the conclusion that $f(h(\x))$ maintains submodularity of $f$.
\end{proof}

\subsection{Proof of  \cref{prop_concave}}\label{supp_prop_concave}

\begin{proof}[Proof of  \cref{prop_concave}]
	Consider a univariate  function
	\begin{align}
	g(\xi):= f(\x+\xi \v^*), \xi\geq 0, \v^* \geqco \zero.
	\end{align}

	We know that
	\begin{align}
	\frac{d g(\xi)}{d \xi} = \dtp{\v^*}{\nabla f(\x+\xi \v^*)}.
	\end{align}

	It can be verified that:

	$g(\xi)$ is concave $\Leftrightarrow$
	\begin{flalign}
	\frac{d^2 g(\xi)}{d \xi^2} = (\v^*)^\trans \nabla^2 f(\x+\xi \v^*) \v^* = \sum_{i\neq j} v^*_i v^*_j \nabla^2_{ij} f + \sum_i (v_i^*)^2\nabla_{ii}^2f \leq 0.
	\end{flalign}
	The non-positiveness of $\nabla^2_{ij} f $ is ensured by submodularity of $f(\cdot)$, and the non-positiveness of $\nabla^2_{ii} f $ results from the coordinate-wise concavity of $f(\cdot)$.

	The proof of concavity along any non-positive direction is similar, which is omitted here.
\end{proof}

\subsection{Proof of \cref{lemma_3_1}}

\begin{proof}[Proof of \cref{lemma_3_1}]
	Since $f$ is DR-submodular, so it is concave along any direction $\v\in \pm \R^n_+$. We know that $\x\vee \y - \x \geqco \zero$
	and $\x\wedge \y - \x\leqco \zero$, so from the strong DR-submodularity in \labelcref{eq_strong_dr},
	\begin{align}
	& f(\x\vee\y)  - f(\x) \leq \dtp{\nabla f(\x)}{\x\vee \y - \x}  -\frac{\mu}{2}  \|\x\vee \y - \x\|^2,\\
	&  f(\x\wedge \y) - f(\x) \leq \dtp{\nabla f(\x)}{\x\wedge \y - \x}  -\frac{\mu}{2}  \|\x\wedge  \y - \x\|^2.
	\end{align}
	Summing the above two inequalities and notice that $\x\vee\y + \x\wedge \y = \x+\y$, we arrive,
	\begin{flalign}
	& (\y-\x)^{\trans}\nabla f(\x)\\\notag
	& \geq f(\x\vee\y) + f(\x\wedge \y) - 2f(\x) + \frac{\mu}{2}  (\|\x\vee \y - \x\|^2 + \|\x\wedge  \y - \x\|^2)\\
	& = f(\x\vee\y) + f(\x\wedge \y) - 2f(\x) + \frac{\mu}{2}  \| \y - \x\|^2,
	\end{flalign}
	the last equality holds since $\|\x\vee \y - \x\|^2 + \|\x\wedge  \y - \x\|^2 =  \| \y - \x\|^2$.
\end{proof}

\subsection{Proof of  \cref{local_global}}\label{app_claim_proof}

\begin{proof}[Proof of \cref{local_global}]
	Consider the point $\z^*:= \x\vee \x^* -\x = (\x^* - \x)\vee \zero$. One can see that: 1) $\zero\leqco \z^* \leqco \x^*$; 2) $\z^* \in \P$ (down-closedness); 3) $\z^*\in \Q$ (because of   $\z^*\leqco \bar \u - \x$).
	From \cref{lemma_3_1},
	\begin{align}\label{eq_1718}
	& \dtp{\x^*-\x}{\nabla f(\x)} +  2f(\x) \geq f(\x\vee \x^*) + f(\x \wedge \x^*) +  \frac{\mu}{2}\|\x -\x^*\|^2, \\\label{eq12}
	& \dtp{\z^*-\z}{\nabla f(\z)} +  2f(\z) \geq f(\z\vee \z^*) + f(\z \wedge \z^*) +  \frac{\mu}{2}\|\z -\z^*\|^2.
	\end{align}
	Let us first of all prove the following  key \namecref{claim_key}.

	\keyclaim*

	\begin{proof}[Proof of \cref{claim_key}]
		Firstly, we are going to prove that
		\begin{align}\label{proof_part1}
		f(\x \vee \x^*) + f(\z\vee \z^*) \geq f(\z^*) + f((\x+\z)\vee \x^*),
		\end{align}
		which is equivalent to
		$f(\x \vee \x^*) - f(\z^*) \geq f((\x+\z)\vee \x^*) - f(\z\vee \z^*)$.
		It can be shown that  $\x \vee \x^*  - \z^* = (\x+\z)\vee \x^* - \z\vee \z^* $. Combining this with
		the fact that $\z^* \leqco \z\vee \z^*$, and using the DR property (see \cref{def_dr}) implies
		\labelcref{proof_part1}.
		Then we establish,
		\begin{align}\label{eq_EqaulityPoints}
		\x \vee \x^*  - \z^* = (\x+\z)\vee \x^* - \z\vee \z^* ~.
		\end{align}
		We will show that both the RHS and LHS of the above equation are equal to $\x$:  for the LHS of \labelcref{eq_EqaulityPoints} we can write
		$\x \vee \x^*  - \z^* =  \x \vee \x^*  - \left(  \x \vee \x^* - \x\right) = \x$.
		For the RHS of \labelcref{eq_EqaulityPoints} let us consider any coordinate $i\in [n]$,
		\begin{align}\notag
		&(x_i+z_i)\vee x_i^* - z_i\vee z_i^* =\\
		& (x_i+z_i)\vee x_i^* - \left((x_i+z_i)-x_i\right)\vee  \left((x_i \vee x_i^*) - x_i\right) =x_i,
		\end{align}
		where the last equality holds easily for the two situations: $(x_i+z_i) \geq  x_i^*$ and $(x_i+z_i) < x_i^*$.

		Next, we are going to prove that,
		\begin{align}\label{proof_part2}
		f(\z^*) + f(\x\wedge \x^*)\geq f(\x^*) + f(\zero).
		\end{align}
		It is equivalent to
		$f(\z^*)   - f(\zero) \geq  f(\x^*) - f(\x\wedge \x^*)$,
		which can be done similarly by the DR property: Notice that
		\begin{align}
		\x^* - \x\wedge \x^* = \x\vee \x^* - \x = \z^* - \zero \text{ and }
		\zero \leqco  \x\wedge \x^*.
		\end{align}
		Thus \labelcref{proof_part2} holds from the DR property.
		Combining \labelcref{proof_part1,proof_part2} one can get,
		\begin{align}\notag
		& f(\x \vee \x^*) + f(\z\vee \z^*) + f(\x\wedge \x^*) + f(\z\wedge \z^*) \\
		& \geq  f(\x^*) + f(\zero) +  f((\x+\z)\vee \x^*)+ f(\z\wedge \z^*)\\\notag
		& \geq f(\x^*).    \quad \text{(non-negativity of $f$) }
		\end{align}
	\end{proof}

	Combining \labelcref{eq_1718,eq12} and \cref{claim_key} it reads,
	\begin{align}\label{eq16}
	& \dtp{\x^* -\x}{\nabla f(\x)} +  \dtp{\z^*-\z}{\nabla f(\z)}  +   2(f(\x) + f(\z) ) \\ & \geq f(\x^*) +
	\frac{\mu}{2}(\|\x -\x^*\|^2 + \|\z -\z^*\|^2).
	\end{align}

	From the definition of non-stationarity in \labelcref{non_stationary} one can get,
	\begin{align}\label{eq17}
	&  g_{\P}(\x) := \max_{\v\in\P}\dtp{\v - \x}{\nabla f(\x)} \overset{\x^*\in \P}{\geq}  \dtp{\x^*-\x}{\nabla f(\x)},\\\label{eq18}
	& g_{\Q}(\z) := \max_{\v\in\Q}\dtp{\v - \z}{\nabla f(\z)}  \overset{\z^*\in \Q}{\geq} \dtp{\z^*-\z}{\nabla f(\z)}.
	\end{align}
	Putting together \cref{eq16,eq17,eq18} we can get,
	\begin{align}
	2(f(\x) + f(\z) ) \geq f(\x^*) -g_{\P}(\x) -g_{\Q}(\z) +   \frac{\mu}{2}(\|\x -\x^*\|^2 + \|\z -\z^*\|^2).
	\end{align}
	So it arrives
	\begin{flalign}
	& \max\{f(\x), f(\z) \} \geq \\
	&\frac{1}{4}[f(\x^*) -g_{\P}(\x) -g_{\Q}(\z)]  +   \frac{\mu}{8}(\|\x -\x^*\|^2 + \|\z -\z^*\|^2).
	\end{flalign}
\end{proof}

\section{Additional Details for  Monotone DR-Submodular  Maximization}
\label{supp_proof_monotone_max}

\subsection{Proof of \cref{prop_np}}

\begin{proof}[Proof of \cref{prop_np}]

	On a high level, the proof idea follows from the reduction from the
	problem of maximizing a monotone submodular set function subject to
	cardinality constraints.

	Let us denote $\Pi_1$ as the problem of maximizing a monotone
	submodular set function subject to cardinality constraints, and
	$\Pi_2$ as the problem of maximizing a monotone continuous
	DR-submodular function under general down-closed polytope
	constraints.
	Following \citet{DBLP:journals/siamcomp/CalinescuCPV11}, there exist
	an algorithm $\A$ for $\Pi_1$ that consists of a polynomial time
	computation in addition to polynomial number of subroutine calls to
	an algorithm for $\Pi_2$. For details on $\A$ see the following.

	First of all, the multilinear extension
	\citep{calinescu2007maximizing} of a monotone submodular set
	function is a monotone continuous submodular function, and it is
	coordinate-wise linear, thus falls into a special case of monotone
	continuous DR-submodular functions. Evaluating the multilinear extension and its gradients can be done using sampling methods, thus resulted in a randomized algorithm.

	So the algorithm $\A$ shall be: 1) Maximize the multilinear
	extension of the submodular set function over the matroid polytope
	associated with the cardinality constraint, which can be achieved by
	solving an instance of $\Pi_2$.  We call the solution obtained the
	fractional solution; 2) Round the fractional solution to a feasible
	integeral solution using polynomial time rounding technique in
	\citet{ageev2004pipage,calinescu2007maximizing} (called the pipage
	rounding). Thus we prove the reduction from $\Pi_1$ to $\Pi_2$.

	Our reduction algorithm $\A$ implies the NP-hardness and
	inapproximability of problem $\Pi_2$.

	For the NP-hardness, because $\Pi_1$ is well-known to be NP-hard
	\citep{calinescu2007maximizing,feige1998threshold}, so $\Pi_2$ is
	NP-hard as well.

	For the inapproximability: Assume there exists a polynomial
	algorithm ${\mathscr B}$ that can solve $\Pi_2$ better than $1-1/e$,
	then we can use ${\mathscr B}$ as the subroutine algorithm in the
	reduction, which implies that one can solve $\Pi_1$ better than
	$1 - 1/e$. Now we slightly adapt the proof of inapproximability on
	max-k-cover of \citet{feige1998threshold}, since max-k-cover is a
	special case of $\Pi_1$.  According to the proof of  Theorem 5.3 in
	\citet{feige1998threshold} and our reduction $\A$, we have a
	reduction from approximating 3SAT--5 to problem $\Pi_2$. Using the
	rest proof of Theorem 5.3 in \citet{feige1998threshold}, we reach
	the result that one cannot solve $\Pi_2$ better than $1 - 1/e$,
	unless RP = NP.
\end{proof}

\subsection{Proof of \cref{coro_nonconvex_fw}}

\begin{proof}[Proof of \cref{coro_nonconvex_fw}]
	Firstly, according to Theorem  1 of \citet{lacoste2016convergence},
	\nonconvexfw is known to converge to a  stationary point with a rate of
	$1/\sqrt{k}$.

	Then according to \cref{coro_1half}, any stationary point is a
	1/2 approximate solution.
\end{proof}

\subsection{Proof of \cref{lemma_31}}
\begin{proof}
	It is easy to see that $\x^K$ is a convex combination of points in
	$\P$, so $\x^K\in\P$.

	Consider the point
	$\v^*:=(\x^*\vee \x) - \x = (\x^* - \x)\vee \zero\geqco \zero$.  Because
	$\v^*\leqco \x^*$ and $\P$ is down-closed, we get $\v^*\in \P$.

	By monotonicity, $f(\x+\v^*) = f(\x^*\vee \x) \geq f(\x^*)$.

	Consider the function $g(\xi):= f(\x+\xi \v^*), \xi\geq 0$.
	$\frac{d g(\xi)}{d \xi} = \dtp{\v^*}{\nabla f(\x+\xi \v^*)}$.  From
	Proposition \ref{prop_concave}, $g(\xi)$ is concave, hence
	\begin{flalign}
	g(1) - g(0) = f(\x+\v^*) - f(\x) \leq \frac{d g(\xi)}{d \xi}
	\Bigr|_{\xi = 0} \times 1 = \dtp{\v^*}{ \nabla f(\x)}.
	\end{flalign}
	Then one can get
	\begin{flalign}
	&\dtp{\v}{\nabla f(\x)} \overset{(a)}{\geq} \alpha \dtp{\v^*}{
		\nabla f(\x)} -\frac{1}{2}\delta \gamma L D^2 \geq \\
	&\alpha (f(\x+\v^*) - f(\x)) -\frac{1}{2}\delta \gamma L D^2 \geq
	\alpha (f(\x^*) -f(\x)) -\frac{1}{2}\delta \gamma L D^2,
	\end{flalign}
	where $(a)$ is resulted from the LMO step of \cref{alg_sfmax_GradientAscend}.
\end{proof}

\subsection{Proof of \cref{thm_fw}}
\begin{proof}[Proof of \cref{thm_fw}]
	From the Lipschitz assumption of $f$ (\cref{eq_smooth}):
	\begin{flalign}
	f(\x^{k+1}) - f(\x^k) & = f(\x^k + \gamma_k \v^k) - f(\x^k)
	\\\notag
	&\geq \gamma_k \dtp{\v^k}{\nabla f(\x^k)} -
	\frac{\cg}{2}\gamma_k^2 \|\v^k\|^2 \quad (\text{Lipschitz
		smoothness}) \\\notag &\geq \gamma_k \alpha [f(\x^*) - f(\x^k)]
	- \frac{1}{2}\gamma_k^2 \delta LD^2 - \frac{\cg}{2}\gamma_k^2 D^2.
	\quad (\text{Lemma \ref{lemma_31}})
	\end{flalign}
	After rearrangement,
	\begin{flalign}
	f(\x^{k+1}) - f(\x^*) \geq (1-\alpha\gamma_k) [f(\x^k) - f(\x^*)]-
	\frac{LD^2\gamma_k^2 (1+\delta)}{2}.
	\end{flalign}
	Therefore,
	\begin{flalign}
	f(\x^K) - f(\x^*) \geq \prod_{k=0}^{K-1}
	(1-\alpha\gamma_k)[f(\zero) - f(\x^*)] - \frac{LD^2 (1+\delta)}{2}
	\sum_{k=0}^{K-1}\gamma_k^2 .
	\end{flalign}
	One can observe that $\sum_{k=0}^{K-1}\gamma_k = 1$, and since
	$1-y \leq e^{-y}$ when $y\geq 0$,
	\begin{flalign}
	f(\x^*) - f(\x^K) &\leq [f(\x^*) - f(\zero)]e^{-\alpha
		\sum_{k=0}^{K-1}\gamma_k} + \frac{LD^2 (1+\delta)}{2}
	\sum_{k=0}^{K-1}\gamma_k^2 \\
	& = [f(\x^*) -
	f(\zero)]e^{-\alpha} +\frac{LD^2 (1+\delta)}{2}
	\sum_{k=0}^{K-1}\gamma_k^2.
	\end{flalign}
	After rearrangement, we get,
	\begin{align}
	f(\x^K) \geq (1-1/e^{\alpha})f(\x^*) -\frac{LD^2 (1+\delta)}{2}
	\sum_{k=0}^{K-1}\gamma_k^2 + e^{-\alpha}f(\zero).
	\end{align}
\end{proof}

\subsection{Proof of \cref{cor_9}}\label{app_proof_c9}

\begin{proof}[Proof of \cref{cor_9}]
	Fixing $K$, to reach the tightest bound in \cref{eq8} amounts to
	solving the following problem:
	\begin{flalign}
	&\min \sum_{k=0}^{K-1}\gamma_k^2\\\notag &\text{ s.t. }
	\sum_{k=0}^{K-1}\gamma_k = 1, \gamma_k \geq 0.
	\end{flalign}
	Using Lagrangian method, let $\lambda$ be the Lagrangian multiplier,
	then
	\begin{align}
	L(\gamma_0, \cdots, \gamma_{K-1}, \lambda) =
	\sum_{k=0}^{K-1}\gamma_k^2 + \lambda \left[\sum_{k=0}^{K-1}\gamma_k
	- 1\right].
	\end{align}
	It can be easily verified that when
	$\gamma_0 = \cdots =\gamma_{K-1} = K^{-1}$,
	$\sum_{k=0}^{K-1}\gamma_k^2$ reaches the minimum (which is
	$K^{-1}$). Therefore we obtain the tightest worst-case bound in
	Corollary \ref{cor_9}.
\end{proof}

\section{Details of Revenue Maximization with Continuous Assignments}
\label{supp_revenue}

\subsection{More Details About the Model}
As discussed in the main text, $R_s(\x^i)$ should be some
non-negative, non-decreasing, submodular function; therefore, we set
$R_s(\x^i) := \allowbreak \sqrt{\sum_{t: x^i_t \neq 0}x^i_t w_{st}}$,
where $w_{st}$ is the weight of edge connecting users $s$ and $t$.
The first part in R.H.S. of \cref{eq_re} models the revenue from users
who have not received free assignments, while the second and third
parts model the revenue from users who have gotten the free
assignments.
We use $w_{tt}$ to denote the ``self-activation rate" of user $t$:
Given certain amount of free trail to user $t$, how probable is it
that he/she will buy after the trial.  The intuition of modeling the
second part in R.H.S. of \cref{eq_re} is: Given the users more free
assignments, they are more likely to buy the product after using it.
Therefore, we model the expected revenue in this part by
$\phi(x^i_t) = w_{tt}x^i_t$; The intuition of modeling the third part
in R.H.S. of \cref{eq_re} is: Giving the users more free assignments,
the revenue could decrease, since the users use the product for free
for a longer period.  As a simple example, the decrease in the revenue
can be modeled as $\gamma \sum_{t:x^i_t\neq 0} -x^i_t$.

\subsection{Proof of Lemma \ref{revenue}}

\begin{proof}

	First of all, we prove that $g(\x) : = \sum_{s: x_s =0} R_s(\x)$
	is a non-negative submodular function.

	It is easy to see that $g(\x)$ is non-negative.
	To prove that $g(\x)$ is submodular, one just need,
	\begin{flalign}\label{eq_f}
	g(\a) + g(\b) \geq g(\a\vee \b) + g(\a\wedge \b), \quad  \forall \a, \b \in [\zero, \bar \bu].
	\end{flalign}
	Let $A:= \spt{\a}, B := \spt{\b}$, where $\spt{\x}:=\{i|x_i\neq 0 \}$ is  the  support of the vector $\x$.
	First of all, because $R_s(\x)$ is non-decreasing,  and $\b\geqco \a\wedge \b$, $\a\geqco \a\wedge \b$,

	\begin{flalign}\label{eq_1}
	\sum_{s\in A\backslash B} R_s(\b) + \sum_{s\in B\backslash A} R_s(\a) \geq \sum_{s\in A\backslash B} R_s(\a\wedge \b)  + \sum_{s\in B\backslash A} R_s(\a\wedge \b).
	\end{flalign}
	By submodularity of $R_s(\x)$, and  summing over $s\in \groundset \backslash(A\cup B)$,
	\begin{flalign}\label{eq_2}
	\sum_{s\in \groundset \backslash(A\cup B)}R_s(\a) + \sum_{s\in \groundset \backslash(A\cup B)}R_s(\b) \geq \sum_{s\in \groundset \backslash(A\cup B)}R_s(\a\vee \b) + \sum_{s\in \groundset \backslash(A\cup B)}R_s(\a\wedge \b).
	\end{flalign}
	Summing Equations  \ref{eq_1} and \ref{eq_2} one can get
	\begin{flalign}\notag
	\sum_{s\in \groundset \backslash A}R_s(\a) + \sum_{s\in \groundset \backslash B}R_s(\b) \geq \sum_{s\in \groundset \backslash(A\cup B)}R_s(\a\vee \b) + \sum_{s\in \groundset \backslash(A\cap  B)}R_s(\a\wedge \b)
	\end{flalign}
	which is equivalent to  \cref{eq_f}.

	Then we prove that $h(\x):=\sum_{t: x_t \neq 0} \bar R_t(\x)$ is submodular.
	Because $\bar R_t(\x)$ is non-increasing, and $\a\leqco \a\vee \b$,
	$\b \leqco \a\vee \b$,
	\begin{flalign}\label{37}
	\sum_{t\in A\backslash B} \bar R_t(\a) + \sum_{t\in B\backslash A} \bar R_t(\b) \geq \sum_{t\in A\backslash B} \bar R_t(\a\vee \b) + \sum_{t\in B\backslash A} \bar R_t(\a\vee \b).
	\end{flalign}
	By submodularity of $\bar R_t(\x)$, and summing over $t\in A\cap  B$,
	\begin{flalign}\label{38}
	\sum_{t\in A\cap  B} \bar R_t(\a) + \sum_{t\in A\cap  B} \bar R_t(\b) \geq \sum_{t\in A\cap  B} \bar R_t(\a\vee \b) + \sum_{t\in A\cap  B} \bar R_t(\a\wedge \b).
	\end{flalign}
	Summing Equations \ref{37}, \ref{38} we get,
	\begin{flalign}
	\sum_{t\in A} \bar R_t(\a) + \sum_{t\in  B} \bar R_t(\b) \geq \sum_{t\in A\cup  B} \bar R_t(\a\vee \b) + \sum_{t\in A\cap  B} \bar R_t(\a\wedge \b)
	\end{flalign}
	which is equivalent to $h(\a)+h(\b)\geq h(\a\vee \b)+h(\a\wedge \b)$, $\forall \a, \b \in [\zero, \bar \bu]$, thus proving the submodularity of
	$h(\x)$.

	Finally, because $f(\x)$ is the sum of two submodular functions and one
	modular function, so it is submodular.
\end{proof}

\section{Proofs for  Non-Monotone DR-Submodular Maximization}
\label{proofs_nonmonotone_max}

\subsection{Proof for Hardness and Inapproximability}

\begin{proof}[Proof of \cref{obs_dr_submodular_max}]
	The main proof  follows from the  reduction from the problem of
	maximizing an unconstrained non-monotone submodular set function.

	Let us denote $\Pi_1$ as the problem of  maximizing an unconstrained non-monotone submodular set function, and $\Pi_2$ as
	the problem  of maximizing a box constrained non-monotone   continuous DR-submodular  function.
	Following the Appendix A of  \cite{buchbinder2012tight}, there exist an  algorithm $\A$ for $\Pi_1$
	that consists of a polynomial time computation in addition to
	polynomial number of subroutine calls to an algorithm for $\Pi_2$. For details
	see the following.

	Given a submodular set  function $F: 2^{\groundset}\rightarrow \R_+$, its
	multilinear extension \citep{calinescu2007maximizing}
	is a function $f: [0,1]^\groundset \rightarrow \R_+$, whose value
	at a point $\x\in [0,1]^\groundset$ is the expected value of $F$ over
	a random subset $R(\x)\subseteq \groundset$, where $R(\x)$ contains
	each element $e\in \groundset$ independently with probability $x_e$.
	Formally, $f(\x):= \mathbb{E} [R(\x)] = \sum_{S\subseteq \groundset} F(S) \prod_{e\in S} x_e\prod_{e'\notin S}(1-x_{e'})$.
	It can be easily seen that $f(\x)$ is a non-monotone
	DR-submodular  function.

	Then the algorithm $\A$ can be: 1) Maximize the multilinear extension
	$f(\x)$ over the box constraint $[0, 1]^\groundset$, which can be achieved by
	solving an instance of $\Pi_2$. Obtain the fractional solution $\hat \x\in [0, 1]^n$; 2) Return the random set $R(\hat \x)$. According to the definition
	of multilinear extension, the expected value of $F(R(\hat \x))$ is
	$f(\hat \x)$.  Thus proving the reduction from $\Pi_1$ to $\Pi_2$.

	Given the reduction, the hardness result follows from the hardness
	of unconstrained non-monotone submodular set function maximization.

	The inapproximability result comes from that of the unconstrained non-monotone submodular set function maximization in \citet{feige2011maximizing}  and \citet{dobzinski2012query}.
\end{proof}

\subsection{Proof of \cref{rate_local_fw}}

\begin{proof}[Proof of \cref{rate_local_fw}]

	Let $g_{\P}(\x), g_{\Q}(\z)$ to the non-stationarity of $\x$ and
	$\z$, respectively. Since we are using
	the \nonconvexfw (\cref{nonconvex_fw}) as
	subroutine, according to \citet[Theorem 1]{lacoste2016convergence}, one can  get,
	\begin{align}
	&	g_{\P}(\x) \leq \min\left\{\frac{\max \{2h_1, C_f(\P)\}}{\sqrt{K_1+1}} , \epsilon_1\right\}, \\
	&	 g_{\Q}(\z) \leq  \min\left\{\frac{\max \{2h_2, C_f(\Q)\}}{\sqrt{K_2+1}} , \epsilon_2 \right\}.
	\end{align}
	Plugging the above into \cref{local_global} we reach the  conclusion in \labelcref{eq_local_rates}.
\end{proof}

\subsection{Detailed Proofs for \cref{thm-e}}

\subsubsection{Proof of \cref{prop_non_fw}}

\restalemmatwo*

\begin{proof}[Proof of \cref{prop_non_fw}]
	We prove  by induction.
	First of all, it holds when $k=0$, since $x_i^\pare{0}=0$,
	and $t^\pare{0}=0$ as well.
	Assume it holds for $k$. Then for $k+1$, we have
	\begin{align}
	x_i^\pare{k+1} & = x_i^\pare{k} + \gamma v_i^\pare{k}\\
	& \leq x_i^\pare{k} + \gamma ({\bar u_i} - x_i^\pare{k}) \quad \text{(constraint of shrunken LMO)}\\\notag
	& = (1-\gamma) x_i^\pare{k} + \gamma {\bar u_i}\\
	& \leq (1-\gamma){\bar u_i}[1- (1-\gamma)^{t^\pare{k}/\gamma} ]+ \gamma {\bar u_i} \quad \text{ (induction) } \\\notag
	& =  \bar u_i [1- (1-\gamma)^{t^\pare{k+1}/\gamma}].
	\end{align}
\end{proof}

\subsubsection{Proof of \cref{lem_nonmonotone_fw}}

\restalemmathree*

\begin{proof}[Proof of \cref{lem_nonmonotone_fw}]

	Consider $r(\lambda)= \x^* + \lambda(\x\vee \x^* - \x^*)$, it is easy to
	see that $r(\lambda)\geqco 0, \forall \lambda \geq 0$.

	Notice that $\lambda'\geq 1$.
	Let $\y = \r(\lambda') =  \x^* + \lambda'(\x\vee \x^* - \x^*)$, it is easy to see that $\y \geqco 0$, it also holds that $\y\leqco \bar u$: Consider one coordinate $i$, 1) if $x_i\geq x_i^*$, then $y_i = x_i^* + \lambda'(x_i - x_i^*)\leq \lambda'x_i \leq \lambda'\theta_i \leq \bar u_i$; 2)  if $x_i< x_i^*$, then $y_i = x_i^* \leq \bar u_i$. So $f(\y) \geq 0$.

	Note that
	\begin{align}
	\x\vee \x^* = (1-\frac{1}{\lambda'})\x^* + \frac{1}{\lambda'}\y = (1-\frac{1}{\lambda'})r(0) + \frac{1}{\lambda'}r(\lambda'),
	\end{align}

	since $f$ is concave along $r(\lambda)$, so it holds that,
	\begin{align}
	f(\x\vee \x^*) \geq  (1-\frac{1}{\lambda'})f(\x^*) +  \frac{1}{\lambda'}f(\y) \geq (1-\frac{1}{\lambda'})f(\x^*).
	\end{align}
\end{proof}

\subsubsection{Proof of \cref{thm-e}}
\label{app_subsec_thm2_proof}

\begin{proof}[Proof of \cref{thm-e}]

	First of all, let us prove the \namecref{claim3_1}:

	\restaclaimthree*

	\begin{proof}[Proof of \cref{claim3_1}]
		Consider a point
		$\z^\pare{k}:= \x^\pare{k}\vee \x^* - \x^\pare{k}$, one can
		observe that: 1) $\z^\pare{k}\leqco \bar \u -\x^\pare{k}$; 2)
		since $\x^\pare{k}\geqco \zero, \x^*\geqco \zero$, so
		$\z^\pare{k}\leqco \x^*$, which implies that $\z^\pare{k}\in \P$
		(from down-closedness of $\P$). So $\z^\pare{k}$ is a candidate
		solution for the shrunken LMO (Step \labelcref{new_lmo} in \cref{fw-non-monotone}). We have,
		\begin{flalign}
		f(\x^{\pare{k+1}}) - f(\x^{\pare{k}}) & \geq \gamma\dtp{\nabla
			f(\x^\pare{k})}{\v^\pare{k}} - \frac{L}{2}\gamma^2
		\|\v^\pare{k}\|^2  (\text{Quadratic lower bound
			of \labelcref{eq_quad_lower_bound}}) \\ & \geq
		\gamma\dtp{\nabla f(\x^\pare{k})}{\v^\pare{k}} -
		\frac{L}{2}\gamma^2 D^2 \quad (\text{diameter of } \P) \\
		& \geq \gamma \dtp{\nabla f(\x^\pare{k})}{\z^\pare{k}} -
		\frac{L}{2}\gamma^2 D^2\quad (\text{shrunken LMO})\\  &
		\geq \gamma(f(\x^\pare{k}+\z^\pare{k}) - f(\x^\pare{k})) -
		\frac{L}{2}\gamma^2 D^2 \quad (\text{concave along
			$\z^\pare{k}$})\\ & = \gamma [f(\x^\pare{k}\vee \x^*) -
		f(\x^\pare{k})] - \frac{L}{2}\gamma^2 D^2\\ & \geq \gamma
		[(1-\frac{1}{\lambda'})f(\x^*) - f(\x^\pare{k})] -
		\frac{L}{2}\gamma^2 D^2 \quad (\text{\cref{lem_nonmonotone_fw}})
		\\ & =\gamma [ (1-\gamma)^{t^\pare{k}/\gamma} f(\x^*) -
		f(\x^\pare{k})] - \frac{L}{2}\gamma^2 D^2,
		\end{flalign}
		where the last equality comes from {setting }
		$\bmtheta : = \bar \u(1-(1-\gamma)^{t^\pare{k}/\gamma})$ {
			according to \cref{prop_non_fw}}, thus
		$\lambda' = \min_i \frac{\bar u_i}{\theta_i} =
		(1-(1-\gamma)^{t^\pare{k}/\gamma})^{-1}$.

		After rearrangement, we reach the claim.
	\end{proof}
	Then, let us prove \cref{thm-e} by \emph{induction}.

	First of all, it holds when $k = 0$ (notice that
	$t^\pare{0}=0$). Assume that it holds for $k$.

	Then for $k+1$,
	considering the fact $e^{-t} - O(\gamma)\leq (1-\gamma)^{t/\gamma}$
	when $0< \gamma\leq t \leq 1$ and \cref{claim3_1} we get,
	\begin{align}
	& f(\x^{\pare{k+1}})\\
	&  \geq (1-\gamma)  f(\x^{\pare{k}})   +
	\gamma(1-\gamma)^{t^\pare{k}/\gamma} f(\x^*) -\frac{L
		D^2}{2}\gamma^2\\
	& \geq  (1-\gamma)  f(\x^{\pare{k}})   + \gamma [e^{-t^\pare{k}} -
	O(\gamma)] f(\x^*) -\frac{L D^2}{2}\gamma^2\\\notag
	& \geq  (1-\gamma) [ t^\pare{k} e^{-t^\pare{k}}f(\x^*) - \frac{L
		D^2}{2}k\gamma^2 - O(\gamma^2)f(\x^*)]+ \gamma [e^{-t^\pare{k}}
	- O(\gamma)] f(\x^*) -\frac{L D^2}{2}\gamma^2\\\notag
	& = [(1-\gamma) t^\pare{k} e^{-t^\pare{k}} + \gamma
	e^{-t^\pare{k}}   ]f(\x^*)  - \frac{L D^2}{2}\gamma^2
	[(1-\gamma)k + 1] - [(1-\gamma) O(\gamma^2) + \gamma
	O(\gamma)]f(\x^*)\\\label{eq_30}
	& \geq  [(1-\gamma) t^\pare{k} e^{-t^\pare{k}} + \gamma
	e^{-t^\pare{k}}   ]f(\x^*) -  \frac{L D^2}{2}\gamma^2(k+1) -
	O(\gamma^2)f(\x^*).
	\end{align}
	Let us consider the term
	$ [(1-\gamma) t^\pare{k} e^{-t^\pare{k}} + \gamma e^{-t^\pare{k}}
	]f(\x^*)$.
	We know that the function $g(t) = te^{-t}$ is concave in $[0, 2]$,
	so
	$g(t^\pare{k}+\gamma) - g(t^\pare{k}) \leq \gamma g'(t^\pare{k})$,
	which amounts to,
	\begin{align}
	[(1-\gamma) t^\pare{k} e^{-t^\pare{k}} + \gamma e^{-t^\pare{k}}
	]f(\x^*) & \geq (t^\pare{k} +\gamma) e^{-(t^\pare{k} +\gamma)}
	f(\x^*)\\\label{eq_34}
	&= t^{\pare{k+1}} e^{-t^{\pare{k+1}}} f(\x^*).
	\end{align}
	Plugging \cref{eq_34} into \cref{eq_30} we get,
	\begin{align}
	f(\x^{\pare{k+1}})    \geq t^{\pare{k+1}} e^{-t^{\pare{k+1}}}
	f(\x^*) -  \frac{L D^2}{2}\gamma^2(k+1) - O(\gamma^2)f(\x^*).
	\end{align}
	Thus proving the induction, and proving the theorem as well.
\end{proof}

\section{Miscellaneous Results}

\subsection{Verifying DR-Submodularity of the Objectives}\label{appe_dr_soft}

\paragraph{Softmax extension.}
For softmax extension, the objective is,
\begin{flalign}\notag
f(\x) = \log\de{\diag(\x)(\bmL-\bmI) +\bmI }, \x\in [0,1]^n.
\end{flalign}
Its DR-submodularity can be established by directly applying
Lemma 3 in \citep{gillenwater2012near}:  \citet[Lemma 3]{gillenwater2012near} immediately implies
that all  entries of  $\nabla^2  f$ are non-positive, so $f(\x)$
is DR-submodular.

\paragraph{Multilinear extension.}
The DR-submodularity of  multilinear extension can be directly
recognized by considering the conclusion in Appendix A.2 of \citet{bach2015submodular}
and the fact that multilinear extension is coordinate-wise linear.

\paragraph{$\text{KL}(\x)$.}
The Kullback-Leibler divergence between  $q_{\x}$ and $p$, i.e., $ \sum_{S\subseteq \groundset} q_{\x}(S)
\log\frac{q_{\x}(S)}{p(S)}$ is,
\begin{align}\notag
\text{KL}(\x) =
-\sum_{S\subseteq \groundset}\prod_{i\in S}x_i \prod_{j\notin S}(1-x_j) F(S) + \sum\nolimits_{i=1}^{n} [x_i\log x_i + (1-x_i)\log(1-x_i)] + \log Z.
\end{align}
The first term is the negative of a multilinear extension, so it is DR-supermodular. The second term
is separable, and coordinate-wise convex, so it will not
affect the off-diagonal entries of $\nabla^2 \text{KL}(\x)$,
it will only contribute to  the diagonal entries.
Now, one can see that all entries of $\nabla^2 \text{KL}(\x)$  are non-negative, so $\text{KL}(\x)$ is DR-supermodular w.r.t. $\x$.

\if 0
\clearpage
\section{Letter to editor}

This journal paper is partially based on the conference papers:
\cite{bian2017guaranteed}, \cite{biannips2017nonmonotone} and
also the PhD thesis \cite{bian2019provable}.

Compared to the above papers, we have made the following new contributions:

\begin{enumerate}
\item  We present general rules of maintaining continuous (DR-)submodularity that enable: i) Convenient ways of recognizing new continuous (DR-)submodular objectives; ii) Generic rules to design new continuous or discrete submodular objectives, such as deep submodular functions (i.e., deep neural networks that are submodular as function of their inputs).

\item  We give a unified view of the local-global relation in both monotone and non-monotone settings. This also yields a new algorithm for monotone DR-submodular maximization: \nonconvexfw, which enjoys a  $1/2$ approximation guarantee.

\item We have formulated new applications with continuous (DR-)submodular objectives in Section 6.  For example, influence maximization with general marketing strategies, revenue maximization with the Influence-and-Exploit (IE) strategy.

\item In terms of experiments: 1)
We conducted new experiments on
the  problem of revenue maximization with continuous assignments on several large real-world datasets. 2) We extensively  experimented with new models of influence maximization with marketing strategies on real-world datasets.

\end{enumerate}
\fi

\if 0

\section{Additional Experimental Results}
\label{app_add_exp}

We generate the down-closed polytope constraints in the same form
 and same way as that for  DR-submodular quadratic functions.

 \cref{fig_softmax_exp} shows the function values returned by different solvers w.r.t. $n$, for which the random polytope
constraints were generated with exponential distribution.
Specifically, the random polytope is in the form of   $\P = \{\x\in \R_+^n \ |\  \bmA \x \leq \b, \x \leq \bar \u, \bmA\in \R_{++}^{m\times n}, \b\in \R_+^m \}$.
Each entry of $\bmA$ was
sampled from $\text{Exp}(1) + \nu$, where $\nu = 0.01$
is a small positive constant.
We set   $\b = 2*\mathbf{1}^m$, and set
$\bar \u$ to be the tightest upper bound of $\P$ by  $\bar u_j = \min_{i\in [m] }\frac{b_i}{A_{ij}}, \forall j\in [n]$.
One can see that the  \algname{two-phase Frank-Wolfe}
 has the best performance, while non-monotone
\algname{Frank-Wolfe}  and   \algname{ProjGrad} have comparable  performance.

\setkeys{Gin}{width=0.33\textwidth}
\begin{figure}[htbp]
       \center
      \includegraphics[width=0.58\textwidth]{legend_h.pdf}\\
      \vspace{-0.4cm}
              \subfloat[$m={\floor {0.5n}}$ \label{fig_softmax_exp1}]{
              \includegraphics[]{no_legend_softmax_exp_m-halfn-n_exp20-seed0}
              }
        \subfloat[$m=n$ \label{fig_softmax_exp2}]{
        \includegraphics{no_legend_softmax_exp_m-n-n_exp20-seed0.pdf}}
      \subfloat[$m=\floor {1.5n}$ \label{fig_softmax_exp3}]{
      \includegraphics{no_legend_softmax_exp_m-onehalfn-n_exp20-seed0.pdf}}
    \caption{Results  on   softmax instances with random polytope constraints generated  from exponential distribution.}
         \label{fig_softmax_exp}
    \end{figure}
\fi

\end{document}